\Crefname{ALC@unique}{Line}{Lines}
\newcounter{myalg}
\definecolor{mygreen}{rgb}{0.0, 0.5, 0.0}
\definecolor{myorange}{rgb}{0.55, 0.62, 1}
\definecolor{niceRed}{RGB}{190,38,38}
\definecolor{Red2}{RGB}{219, 50, 54}
\definecolor{mgreen}{RGB}{160, 200, 140}
\definecolor{blueGrotto}{RGB}{5,157,192}
\definecolor{limeGreen}{HTML}{81B622}
\definecolor{myellow}{rgb}{0.88,0.61,0.14}
\definecolor{darkGreen}{HTML}{2E8B57}
\definecolor{navyBlueP}{HTML}{03468F}
\definecolor{Sepia}{HTML}{7F462C}
\definecolor{red2}{HTML}{1F462C}
\definecolor{orange2}{HTML}{FF8000}
\definecolor{mgray}{HTML}{ABB3B8}
\definecolor{lgray}{HTML}{E5E8E9}
\definecolor{myPurple}{RGB}{175,0,124}
\definecolor{mypurple2}{rgb}{0.8,0.62,1}
\definecolor{royalBlue}{HTML}{057DCD}
\definecolor{mpink}{HTML}{FC6C85}
\definecolor{lblue}{RGB}{74,144,226}
\definecolor{peagreen}{RGB}{152,193,39}
\definecolor{typ_navy}{HTML}{001f3f}
\definecolor{typ_blue}{HTML}{0074d9}
\definecolor{typ_aqua}{HTML}{7fdbff}
\definecolor{typ_teal}{HTML}{39cccc}
\definecolor{typ_eastern}{HTML}{239dad}
\definecolor{typ_purple}{HTML}{b10dc9}
\definecolor{typ_fuchsia}{HTML}{f012be}
\definecolor{typ_maroon}{HTML}{85144b}
\definecolor{typ_red}{HTML}{ff4136}
\definecolor{typ_orange}{HTML}{ff851b}
\definecolor{typ_yellow}{HTML}{ffdc00}
\definecolor{typ_olive}{HTML}{3d9970}
\definecolor{typ_green}{HTML}{2ecc40}
\definecolor{typ_lime}{HTML}{01ff70}
\definecolor{newgreen}{HTML}{83c702}
\definecolor{newpurp}{RGB}{97,96,121}
\definecolor{Andrea}{RGB}{204, 0, 153}
\newtheorem{lemma}{Lemma}
\newtheorem{assumption}{Assumption}
\newtheorem{theorem}{Theorem}
\newtheorem{corollary}{Corollary}
\newtheorem{definition}{Definition}
\theoremstyle{definition}
\newcommand{\tto}{\rightrightarrows}
\newcommand{\E}{\mathbb{E}}
\newcommand{\Reals}{\mathbb{R}}
\newcommand{\Naturals}{\mathbb{N}}
\newcommand{\Mcal}{\mathcal{M}}
\newcommand{\Xcal}{\mathcal{X}}
\newcommand{\Fcal}{\mathcal{F}}
\newcommand{\Hcal}{\mathcal{H}}
\newcommand{\Bcal}{\mathcal{B}}
\newcommand{\cl}{\textup{cl}}
\newcommand{\quotes}[1]{``#1''}
\newcommand{\argmin}{\operatornamewithlimits{argmin}}
\newcommand{\argmax}{\operatornamewithlimits{argmax}}
\newcommand{\Prob}{\mathbb{P}}
\newcommand{\Gr}{\textup{Gr}}
\newcommand{\mapstoto}{\mathpalette\@mapstoto\relax}
\newcommand*{\@mapstoto}[2]{%
    \mathrel{%
      \vcenter{%
         \vbox{%
            \baselineskip\z@skip
            \lineskip\z@
            \ialign{##\cr$#1\mapstochar\varrightarrow$\cr
            $#1\mapstochar\varrightarrow$\cr}%
         }%
      }%
   }%
}
\newcommand{\ie}{\emph{i.e.},\xspace}
\newcommand{\eg}{\emph{e.g.},\xspace}
\crefname{assumption}{Assumption}{Assumptions}
\begin{document}

\twocolumn[

\aistatstitle{Pure Exploration with Infinite Answers}

\aistatsauthor{ Riccardo Poiani \And Martino Bernasconi \And  Andrea Celli }

\aistatsaddress{ Bocconi University \And  Bocconi University \And Bocconi University } ]

\etocdepthtag.toc{mtchapter}
\etocsettagdepth{mtchapter}{subsection}
\etocsettagdepth{mtappendix}{none}

\begin{abstract}
    We study pure exploration problems in which the set of correct answers is possibly infinite. For example, such problems arise when regressing a continuous function on the means of the bandit or when learning Nash equilibria by querying noisy values of the payoff matrix.
    We derive an instance-dependent lower bound for these problems. 
    By analyzing it, we discuss why existing methods (\ie Sticky Track-and-Stop) for finite answer problems fail at being asymptotically optimal in this more general setting. 
    Finally, we present a framework, Sticky-Sequence Track-and-Stop, which generalizes both Track-and-Stop and Sticky Track-and-Stop, and that enjoys asymptotic optimality.   
    Due to its generality, our analysis also highlights special cases where existing methods enjoy optimality.
\end{abstract}

\section{INTRODUCTION}

In \emph{pure exploration} problems, an agent sequentially interacts with a set of $K \in \mathbb{N}$ probability distributions denoted by $\bm\nu = \left( \nu_k \right)_{k \in [K]} \in \mathcal{Q}$ modeling the outcome of $K$ different experiments, where $\mathcal{Q}$ is an arbitrary set of problems. The main goal of the agent is answering a given question about these distributions as efficiently as possible, \ie using the least possible amount of samples. 
Let $\mathcal{X}$ be an answer space for the question at hand; then, for each possible $\bm\nu \in \mathcal{Q}$, a set-valued function (a.k.a.~\emph{correspondence} \citep{aubin1999set}) maps each possible instance $\bm\nu$ to a set of correct answers $\mathcal{X}^{\star}(\bm\nu) \subseteq \mathcal{X}$. The agent is then given a maximum risk parameter $\delta \in (0,1)$ and has to return a correct answer $x \in \mathcal{X}^{\star}(\bm\nu)$ with probability at least $1-\delta$, while minimizing the number of interactions.

This framework models a broad range of settings, with the most extensively studied being the \emph{Best-Arm Identification} (BAI) problem \citep{even2002pac}. Here, the answer space is $\Xcal=[K]$ and the unique correct answer is described by the single-valued correspondence $\mathcal{X}^{\star}(\bm\nu) = \argmax_{k \in [K]} \mu_k$, where $\bm\mu = \left( \mu_k \right)_{k \in [K]}$ denotes the means of the distributions in $\bm\nu$.\footnote{For any $n\in \Naturals$ we denote by $[n]$ the set $\{1,\ldots, n\}$.} In the seminal work by \citet{garivier2016optimal}, the authors derived an information-theoretic lower bound showing that, in the \emph{unstructured} bandit setting, any algorithm requires at least a certain number of samples in order to identify the best arm with high probability.
Furthermore, the authors proposed the Track-and-Stop (TaS) algorithm, which achieves optimal sample complexity rates in the high confidence regime of $\delta \rightarrow 0$. The key idea behind TaS is to exploit \emph{oracle weights}, which are probability distributions over arms that represents the optimal sampling strategy for an algorithm with full knowledge of the instance $\bm\nu$.
TaS mimics this oracle algorithm by tracking the oracle weights of an empirical estimate of the instance $\bm\mu$.
Subsequent work has extended these asymptotic optimality results to BAI problems with additional structure on the instance means $\bm\mu$ \citep{moulos2019optimal,degenne2020gamification,kocak2020best,jourdan2021efficient,russac2021b,vannella2023best,carlsson2024pure,poiani2024optimal,poiani2024best,russo2025pure,tuynman2025batch,lazzaro2025fixed}. 
More generally, several works have shown how to leverage these techniques to build asymptotically optimal algorithms for arbitrarily structured problems where there is a \emph{single} correct answer \citep{degenne2019non,menard2019gradient,juneja2019sample,wang2021fast,kaufmann2021mixture}.
For instance, asymptotically optimal results are available for the broad class of \emph{sequential partition identification problems}, where each instance $\bm\nu$ belongs to an element of a partition of the set of instances $\mathcal{Q}$, and the goal lies in identifying the index of the partition that $\bm\nu$ belongs to.
A key aspect of most of these studies is that they rely on the \emph{well-behaved} nature of oracle weights as functions of the instance.

These properties no longer hold in the broader setting of problems with \emph{multiple correct answers}, where the correspondence $\mathcal{X}^{\star}(\bm\mu)$ denoting the set of correct answers is no longer single-valued. 
Asymptotic optimality for such problems has been studied by \citet{degenne2019pure} in the case in which the set of possible answers $\mathcal{X}$ is finite.\footnote{This formulation also models problems like $\epsilon$-best arm identification, further explored in  \citet{kocak2021epsilon,jourdan2022choosing,jourdan2023varepsilon}.} 
Specifically, the statistical lower bound is now expressed as a minimum over all the multiple correct answers of the lower bound for single-answer problems. Let $\mathcal{X}_F(\bm\nu) \subseteq \mathcal{X}^{\star}(\bm\nu)$ denote the subset of correct answers that attain this minimum. Intuitively, when there are multiple correct answers, some answers might be statistically easier to identify than others.
$\Xcal_F(\bm\nu)$ is precisely the set of the ``easiest'' correct answers. %
However, the presence of multiple answers in $\mathcal{X}_F(\bm\nu)$ introduces topological challenges that hinder the direct application of the Track-and-Stop algorithm.
\citet{degenne2019pure} solve this issue by introducing the Sticky-Track-and-Stop (Stiky-TaS) algorithm, which first identifies a statistically convenient correct answer, \ie one that belongs to $\mathcal{X}_F(\bm\nu)$, and then sticks to it by tracking its corresponding oracle weights (which, for a fixed answer, exhibit the ``nice'' properties required to prove asymptotic optimality). %
Crucially, both the way Sticky-TaS selects an answer and its ability to stick to it heavily depend on the fact that $|\mathcal{X}|$ is finite.

In this work, we drop the assumption that $|\mathcal{X}|$ is finite and study the more general setting in which the answer set $\mathcal{X}$ may be infinite.  Our analysis adopts an asymptotically optimal perspective, focusing on the challenges that arise when the set of correct answers $\mathcal{X}^{\star}(\bm\mu)$ is infinite. 
This model captures fundamental applications that are currently underexplored and not yet fully understood in the bandit literature, such as the problem of regressing a continuous function of the bandit means. To give a concrete application example, consider a pricing problem where each arm corresponds to posting a certain price, and the observed reward is a Bernoulli variable indicating whether the item was purchased. A company may wish to estimate, up to accuracy $\epsilon > 0$, the revenue associated with the optimal price, or the maximum revenue gap that exists between the optimal and the worst price. In these cases, both the answer space and the space of correct answers are infinite sets.

\subsection{Contributions}

We focus on what we call \emph{regular} pure exploration problems. Intuitively, these are problems in which alternative models for an answer $x \in \Xcal $ are \quotes{stable} for nearby answers $x'$ (\ie belonging to some neighborhood of $x$).
This definition (see \Cref{sec:ass} for the precise statement) imposes only minimal and natural requirements. Indeed, it encompasses all cases where the correspondence $\bm\mu \mapstoto \Xcal^{\star}(\bm\mu)$ is continuous (\Cref{theo:ass-conv-and-continuity}). This continuity property is satisfied in fundamental problems such as the problem of regressing an arbitrary continuous function of $\bm\mu$.

After introducing regular pure exploration problems, we present an asymptotic lower bound on the number of samples that are required to identify a correct answer in $\Xcal^{\star}(\bm\mu)$ (\Cref{theo:lb}). %
Then, in \Cref{sec:property}, we analyze the properties of this lower bound. In particular, we study continuity properties of the oracle weights, the lower bound itself, and the mapping $\mathcal{X}_F(\bm\mu)$, and we analyze their algorithmic implications for infinite-answer problems. We argue that the presence of infinite answers  makes it impossible to select and track the empirical oracle weights for a \emph{single} correct answer in $\mathcal{X}_F(\bm\nu)$. This undermines the core argument behind asymptotic optimality of Sticky-TaS.

We address this challenge in \Cref{sec:opt}. In particular, we show that it is not necessary to select and stick to a single correct answer. Instead, it suffices to track a sequence of empirical oracle weights associated with a sequence of answers that converges to some (potentially unknown a priori) correct answer in $\mathcal{X}_F(\bm\nu)$. Building on this, we introduce a general framework (Sticky-Sequence Track-and-Stop) which, when equipped with a method for selecting a converging sequence of answers, achieves asymptotic optimality guarantees (\Cref{theo:sticky-seq}).

The main challenge here is constructing a converging sequence of answers by only exploiting a sequence of sets that converges to the unknown set $\mathcal{X}_F(\bm\nu)$. To present challenges and connections with existing algorithms (\ie TaS and Sticky-TaS), we show how this can be achieved according to different topological properties of $\Xcal$ and $\Xcal_F(\bm\nu)$, which we group in four main scenarios.
\textbf{(i)} If $\Xcal_F(\bm\nu)$ is single-valued for all problems in $\mathcal{Q}$, then both TaS and Sticky-TaS already implement a converging sequence of answers, thereby achieving optimality. \textbf{(ii)} When $\Xcal \subset \Reals$, this property is lost by TaS, but the total order on the reals ensures that the sequence of answers selected by Sticky-TaS converges to some $x \in \Xcal_F(\bm\mu)$. \textbf{(iii)} If $|\Xcal_F(\bm\nu)|$ is finite but, for example, $\Xcal \subset \Reals^2$, then neither TaS nor Sticky-TaS guarantees the convergence property. However, this can be ensured by a simple rule that selects the next answer as the closest to the previous one within a suitable confidence region.
\textbf{(iv)} In the general case where the only information available is that $\Xcal \subset \Reals^d$, we propose an algorithm that progressively discretizes the answer space while guiding the selection of answers according to the history of the previously selected ones.

\paragraph{Additional Related Works}
In pure exploration literature, to the best of our knowledge, there exists two works that deals with infinite answer problems \ie \citet{deep2024asymptotically,osogami2025optimal}. In particular, \cite{deep2024asymptotically} focus on unstructured $K=2$ bandit problems with the goal of estimating an interval of width at most $\epsilon$ that contains $\mu_1 - \mu_2$ with high probability.  \cite{osogami2025optimal}, instead, study the problem of estimating the value of the optimal arm, up to an accuracy $\epsilon$, in unstructured bandit settings. Our work is more general in the sense that we deal with correct answer correspondences that comprehend the ones considered in \citet{deep2024asymptotically,osogami2025optimal}. Furthermore, structured bandits are also allowed within our framework.

\section{PRELIMINARIES}

\paragraph{Mathematical Background} 
We denote by $\Delta_n$ the $n$-dimensional simplex. Given $\mathcal{X} \subseteq \Reals^d$ and $x \in \mathcal{X}$, we denote by $\mathcal{B}_{\rho}({x})= \{ x' \in \mathcal{X}: \|x - x' \| \le \rho  \}$ the ball of radius $\rho$ around ${x}$ and for a set $A\subset \Xcal$ we denote by $\Bcal_\rho(A)$ the union of $\Bcal_\rho(x)$ over all $x\in A$. Given a set $\mathcal{X}$, we denote by $\cl(\mathcal{X})$ its closure. %
Now, let $\mathcal{X} \subseteq \Reals^{n_x}$ and $\mathcal{Y} \subseteq \Reals^{n_y}$, we denote by $C: \mathcal{X} \rightrightarrows \mathcal{Y}$ a set-valued function (i.e., \emph{correspondence}) that maps each element $x \in \mathcal{X}$ to a (non-empty) subset $C(x) \subset \mathcal{Y}$ \citep{aubin1999set}. The correspondence $C$ is \emph{upper hemicontinuous} if, for all $x \in \mathcal{X}$, and for every open set $\mathcal{V} \subset \mathcal{Y}$ such that $C(x) \subset \mathcal{V}$, there exists a neighbourhood $\mathcal{U}$ of $x$ such that $C(x')$ is a subset of $\mathcal{V}$ for all $x' \in \mathcal{U}$. Furthermore, $C$ is \emph{lower hemicontinuous} if, for all $x \in \mathcal{X}$, and for every open set $\mathcal{V} \subset \mathcal{Y}$ such that $C(x) \cap \mathcal{V} \ne \emptyset$, there exists a neighbourhood $\mathcal{U}$ of $x$ such that $C(x') \cap \mathcal{V} \ne \emptyset$ for all $x' \in \mathcal{U}$. Finally, a correspondence $C$ is \emph{continuous} if it is upper and lower hemicontinuous.

\paragraph{Learning Model}
The learner has $K \in \mathbb{N}$ possible choices, each associated with a probability distribution $\nu_k$ over $\Reals$. We denote by $\bm{\nu}=(\nu_k)_{k\in[K]}$ the vector of distributions which we refer to as the bandit model. 
Let $\bm\mu=(\mu_k)_{k\in[K]}$ be the vector of means of the distributions $\nu_k$, where each $\mu_k=\mathbb{E}_{R\sim \nu_k}[R]$ is the expected value under distribution $\nu_k$.
In this work, we consider distributions that belong to a canonical exponential family \citep{cappe2013kullback}.\footnote{These distributions include, \eg Gaussian distributions with known variance and Bernoulli distributions. We refer the interested reader to \Cref{sec:exponential_fam} for additional details on canonical exponential families. For concentration purposes, we also require that these distributions are sub-Gaussian.} 
Conveniently, since these distributions are fully characterized by their means, we may, with a slight abuse of notation, refer to $\bm\mu$ as the bandit model.
We denote by $\Theta$ the interval defining the possible means for any arm $\mu$. We make the standard assumption that the exponential family is regular and bounded, meaning that $\Theta$ is a closed interval strictly contained in an open one (see, \eg \cite{degenne2019non,poiani2024best}).
Furthermore, to represent (possible) additional structure on the bandit model, we assume knowledge of a set $\Mcal\subseteq\Theta^K$ defining the set of admissible bandits models that the learner could face, \ie $\bm\mu\in\Mcal$. 
We consider a possibly infinite answer space $\mathcal{X} \subseteq{\Reals^d}$. For each bandit model $\bm\mu \in \Theta^K$, the set of correct answers for $\bm\mu$ is represented by a correspondence $\Xcal^\star:\Theta^K\tto \Xcal$.  The learner interacts repeatedly with the bandit model. During each round $t\in\Naturals$, it selects an action $A_t\in[K]$ and observes an outcome $R_t\sim\nu_{A_t}$. Let $\Fcal_t=\sigma((A_s,R_s)_{s=1}^t)$ be the $\sigma$-field generated by the observations up to time $t$. A learning algorithm takes in input a risk parameter $\delta\in(0,1)$ and is composed of
(i) a $\Fcal_{t-1}$-measurable sampling rule that selects the next action $A_t\in [K]$, 
(ii) a  stopping rule $\tau_\delta$, which is a stopping time with respect to $(\Fcal_t)_{t\in\Naturals}$, and 
(iii) a $\Fcal_{\tau_\delta}$-measurable decision rule that selects a final decision $\hat x_{\tau_\delta}\in \Xcal$.
We say that an algorithm is \emph{$\delta$-correct} if $\Prob_{\bm\mu}(\hat x_{\tau_{\delta}}\notin \Xcal^\star(\bm\mu))\le\delta$ for all $\bm\mu\in\Mcal$. In words, $\delta$-correct algorithms provide, for each $\bm\mu \in \mathcal{M}$, an answer $\hat{x}_{\tau_\delta} \in \mathcal{X}^{\star}(\bm\mu)$ among the correct ones with probability at least $1-\delta$. Among the class of $\delta$-correct algorithms, we look for those minimizing the expected stopping time, that is, $\mathbb{E}_{\bm{\mu}} \left[ \tau_\delta \right] = \sum_{k \in [K]} \mathbb{E}_{\bm{\mu}}[N_{k}(\tau_\delta)]$,
where $N_{k}(t)$ denotes the (random) number of samples collected for action $k \in [K]$ up to time $t$. In the following, we will use $\bm{N}(t)$ to denote the vector $\left( N_1(t), \dots, N_k(t) \right)$ and $\hat{\bm\mu}(t)$ to denote the empirical estimate of $\bm\mu$ at time $t$, \ie $\hat{\mu}_k(t) = N_k(t)^{-1} \sum_{s=1}^t R_s \bm{1}\{A_t=k \}$.

\paragraph{Alternative Models}
For each $x\in\Xcal$, the set of \emph{alternative models} $\lnot x$ is defined as
\(
 \lnot x=\{ \bm{\lambda} \in \mathcal{M}: x \notin \Xcal^{\star}(\bm{\lambda}) \}
\) (see, \eg \cite{degenne2019non,degenne2019pure}).
In words, $\lnot x$ is the set that contains all the bandit models $\bm\lambda$ for which $x$ is \emph{not} a correct answer for $\bm\lambda$.\footnote{W.l.o.g., we assume that $\lnot x \ne \emptyset$ for all $x \in \Xcal$. Indeed, if there exists $\bar{x} \in \Xcal$ such that $\lnot \bar{x} = \emptyset$, then a $\delta$-correct algorithm can trivially return $\bar{x}$ for any $\bm{\mu} \in \mathcal{M}$ without even interacting with the environment.
}
We generalize this concept to any subset of answers $\widetilde{\Xcal} \subseteq \Xcal$. Specifically,
    \(
        \lnot\widetilde{\Xcal} = \{ \bm{\lambda} \in \mathcal{M}: \forall x \in \widetilde{\Xcal}: x \notin \Xcal^{\star}(\bm{\lambda}) \}.
    \)
The set $\lnot \widetilde{\Xcal}$ extends the notion of alternative models to any arbitrary collection of answers, as it requires that \emph{each} answer $x \in \widetilde{\Xcal}$ is not correct for $\bm\lambda$. It directly follows that $\lnot \widetilde{\Xcal} \subseteq \lnot x$ for any $\widetilde{\Xcal}$ such that $x \in \widetilde{\Xcal}$. This generalization plays a crucial role in defining \emph{regular pure exploration problems}, which we introduce in \Cref{sec:ass}.

\paragraph{Divergences}
Finally, we introduce some divergences that are commonly used in pure exploration problems (see \eg \cite{degenne2019non,degenne2019pure}). Intuitively, as we will see in the next section, they are helpful in statistically identifying the correct answers. Let $d(p,q)$ be the KL divergence between distributions with means $p$ and $q$, respectively. Then, for $\bm\mu \in \Theta^K, \Lambda \subseteq \mathcal{M}$ and $\bm\omega \in \Reals^K$, we define the following:
\begin{align*}
     & D(\bm{\mu}, \bm{\omega}, \Lambda) = \inf_{\bm{\lambda} \in \Lambda} \sum_{k \in [K]}  \omega_k d(\mu_k, \lambda_k) 
     \\ & D(\bm{\mu}, \Lambda) = \sup_{\bm{\omega} \in \Delta_K} D(\bm{\mu}, \bm{\omega}, \Lambda) \\ & D(\bm{\mu}) = \sup_{x \in \Xcal^{\star}(\bm{\mu})} D(\bm{\mu}, \lnot x).
\end{align*}
Let us introduce $\Xcal_F(\bm\mu) = \argmax_{x \in \Xcal^{\star}(\bm\mu)} D(\bm\mu, \neg x)$. We will later see that $\Xcal_F(\bm\mu)$ represents the subset of correct answers for $\bm\mu$ that are the \quotes{easiest} to identify. Moreover, for any $\bm\mu \in \Theta^K$ and any set  $\Lambda \subseteq \mathcal{M}$, we denote by $\bm\omega^*(\bm\mu,\Lambda)$ the argmax over $\bm\omega$ of $D(\bm\mu, \bm\omega, \Lambda)$. Finally, $\bm\omega^*(\bm\mu) = \bigcup_{x \in \Xcal_F(\bm\mu)} \bm\omega^*(\bm\mu, \neg x)$ denotes the oracle weights for $\bm\mu$.

\section{LOWER BOUND}

\subsection{Regular Pure Exploration Problems}\label{sec:ass}

We now define the class of \emph{regular pure exploration problems}, which are characterized by a set of regularity assumptions detailed below. All the problems considered in this paper belong to this class.

\begin{assumption}[Compactness]\label{ass:compact}
    $\Xcal$ is compact and $\bm\mu \mapstoto \Xcal^{\star}(\bm\mu)$ is compact-valued.
\end{assumption}

\begin{assumption}[Identifiability]\label{ass:identify}
    For all $\bm{\mu} \in \mathcal{M}$, there exists $\bar{x} \in \Xcal^{\star}(\bm{\mu})$ such that $\bm{\mu} \notin \cl(\lnot \bar{x})$. 
\end{assumption}

\begin{assumption}[Continuity of $D(\bm{\mu}, \bm{\omega}, \lnot \mathcal{B}_{\rho}(x))$ to $D(\bm{\mu}, \bm{\omega}, \lnot x)$]\label{ass:conv}
For all sufficiently small $\epsilon > 0$, there exists $\rho > 0$ such that, for all $\bm{\mu} \in \Theta^K$, $\bm\omega \in \Delta_K$, $x \in \Xcal$, it holds that $\lnot \mathcal{B}_{\rho}(x)\ne \emptyset$ and $D(\bm{\mu}, \bm{\omega}, \lnot \mathcal{B}_{\rho}(x)) - D(\bm{\mu}, \bm{\omega}, \lnot x) \le \epsilon$. 
\end{assumption}

\Cref{ass:compact} imposes mild regularity conditions, namely compactness, on both the answer space and the correct answer correspondence $\Xcal^\star(\bm\mu)$. \Cref{ass:identify}, instead, is necessary for learnability. When this assumption does not hold, the sample complexity is infinite, even in settings with a finite number of possible answers. This follows directly from the lower bound by \citet{degenne2019pure} (see \Cref{subsec:ass-identify} for further discussion). Finally, at first glance, \Cref{ass:conv} may appear to be a purely technical condition. However, as our analysis reveals, for a subset $\widetilde{\Xcal} \subseteq \Xcal^{\star}(\bm\mu)$, the quantity $D(\bm\mu, \bm\omega, \lnot \widetilde{\Xcal})$ can be related to the complexity of distinguishing $\bm\mu$ from all the models in $\mathcal{M}$ for which \emph{none} of the answers in $\widetilde{\mathcal{X}}$ are correct.
Intuitively, \Cref{ass:conv} implies that distinguishing $\bm\mu$ from $\lnot x$ becomes arbitrarily similar to distinguishing $\bm\mu$ from $\lnot \mathcal{B}_{\rho}(x)$ whenever $\rho$ is small.
In this sense, to demonstrate that \Cref{ass:conv} holds, it suffices to prove a form of \quotes{smoothness} in the alternative models when switching from $\lnot x$ to $\lnot \mathcal{B}_{\rho}(x)$, \ie for all $\bm\lambda \in \lnot x$, there exists $\tilde{\bm\lambda} \in \lnot \mathcal{B}_{\rho}(x)$ such that $\bm\lambda \approx \tilde{\bm\lambda}$ for $\rho \to 0$ (see \Cref{lemma:suff-cond-ass-conv}).
As one might expect, we can show that \Cref{ass:conv} holds whenever $\bm\mu \mapstoto \Xcal^{\star}(\bm\mu)$ is continuous (see \Cref{subsec:ass-conv} for the proof).

\begin{restatable}[Continuous Correspondence Implies \Cref{ass:conv}]{theorem}{asscontinuity}\label{theo:ass-conv-and-continuity}
    If $\bm\mu \mapstoto \Xcal^{\star}(\bm\mu)$ is continuous, and $\mathcal{M}$ and $\Xcal$ are compact sets, then, \Cref{ass:conv} holds. %
\end{restatable}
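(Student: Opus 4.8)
The plan is to verify \Cref{ass:conv} through its sufficient condition \Cref{lemma:suff-cond-ass-conv}: it suffices to exhibit, for every target accuracy, a single radius $\rho>0$ such that every alternative $\bm\lambda\in\lnot x$ admits a nearby $\tilde{\bm\lambda}\in\lnot\mathcal{B}_\rho(x)$, uniformly over $x\in\Xcal$. Since the divergence $d$ is jointly continuous, hence uniformly continuous, on the compact set $\Theta\times\Theta$, proximity $\|\bm\lambda-\tilde{\bm\lambda}\|$ in model space translates into a uniform bound on $|\sum_k\omega_k d(\mu_k,\lambda_k)-\sum_k\omega_k d(\mu_k,\tilde\lambda_k)|$ (the weights $\bm\omega\in\Delta_K$ merely convexify the per-arm gaps), which is exactly what controls $D(\bm\mu,\bm\omega,\lnot\mathcal{B}_\rho(x))-D(\bm\mu,\bm\omega,\lnot x)$. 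I would first record two structural facts. Because $\{x\}\subseteq\mathcal{B}_\rho(x)$ we have $\lnot\mathcal{B}_\rho(x)\subseteq\lnot x$, so the difference is nonnegative and the entire content of the statement is an upper bound. Because $\Xcal^{\star}(\bm\lambda)$ is compact (\Cref{ass:compact}), $x\notin\Xcal^{\star}(\bm\lambda)$ is equivalent to $d(x,\Xcal^{\star}(\bm\lambda))>0$, which yields the monotone identity $\lnot x=\bigcup_{\rho>0}\lnot\mathcal{B}_\rho(x)$, with $\lnot\mathcal{B}_\rho(x)$ increasing as $\rho\downarrow0$.

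From this identity, $D(\bm\mu,\bm\omega,\lnot\mathcal{B}_\rho(x))\downarrow D(\bm\mu,\bm\omega,\lnot x)$ holds \emph{pointwise} in $(\bm\mu,\bm\omega,x)$ for free, so the real work is to make it \emph{uniform}. I would obtain uniformity by a Dini-type argument on the compact domain $\Theta^K\times\Delta_K\times\Xcal$: the gap $g_\rho(\bm\mu,\bm\omega,x):=D(\bm\mu,\bm\omega,\lnot\mathcal{B}_\rho(x))-D(\bm\mu,\bm\omega,\lnot x)\ge 0$ decreases monotonically to $0$ as $\rho\downarrow0$, and if each $g_\rho$ is continuous on this compact set, Dini's theorem upgrades pointwise convergence to uniform convergence, which is precisely \Cref{ass:conv}. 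Continuity in $(\bm\mu,\bm\omega)$ for a fixed constraint set follows from Berge's maximum theorem, as $d$ is continuous and the feasible set lies in the compact $\Mcal$; the substance is continuity in $x$, \ie controlling how the alternative sets $\lnot x$ and $\lnot\mathcal{B}_\rho(x)$ deform as $x$ moves.

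This is where continuity of $\bm\mu\mapstoto\Xcal^{\star}(\bm\mu)$ enters. Concretely, to produce the nearby $\tilde{\bm\lambda}$ demanded by the sufficient condition I would start from $\bm\lambda\in\lnot x$, use compactness of $\Xcal^{\star}(\bm\lambda)$ to obtain a gap $d(x,\Xcal^{\star}(\bm\lambda))=r>0$, and apply upper hemicontinuity at $\bm\lambda$ to the open neighbourhood $\Xcal\setminus\cl(\mathcal{B}_{r/2}(x))$ of $\Xcal^{\star}(\bm\lambda)$, concluding that a whole neighbourhood of $\bm\lambda$ in $\Mcal$ keeps its answer set off $\mathcal{B}_{r/2}(x)$; hence $\bm\lambda$ itself lies in $\lnot\mathcal{B}_\rho(x)$ whenever $\rho<r$. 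To turn the instance-dependent radius $r$ into a single $\rho$ valid for all $x$ and all $\bm\lambda$, I would argue by contradiction using the compactness of $\Xcal$ and $\Mcal$, extracting convergent subsequences $x_n\to x^{\star}$, $\bm\lambda_n\to\bm\lambda^{\star}$ and passing to the limit through the hemicontinuity of $\Xcal^{\star}$ together with closedness of the answer sets.

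The main obstacle is the boundary regime, where the alternative $\bm\lambda$ that (nearly) attains $D(\bm\mu,\bm\omega,\lnot x)$ has $\Xcal^{\star}(\bm\lambda)$ touching $x$, \ie $r\to0$: there upper hemicontinuity alone only pushes the answer set off a ball of radius comparable to $r$, not off a fixed $\rho$, so one must show that such near-degenerate alternatives can be perturbed into $\lnot\mathcal{B}_\rho(x)$ at negligible cost in the $D$-value. Equivalently, the crux is transferring continuity of the primal correspondence $\bm\mu\mapstoto\Xcal^{\star}(\bm\mu)$ into continuity of the dual correspondence $x\mapstoto\lnot x$ (and $x\mapstoto\lnot\mathcal{B}_\rho(x)$), uniformly in the model parameters. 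I expect this transfer, carried out using \emph{both} hemicontinuity directions together with the compactness granted by \Cref{ass:compact} and compact $\Mcal$, to be the technically delicate heart of the proof, whereas the reduction to \Cref{lemma:suff-cond-ass-conv} and the Dini-type uniformization are comparatively routine.
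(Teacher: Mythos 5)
Your overall skeleton is right---reduce to \Cref{lemma:suff-cond-ass-conv}, observe $\lnot\mathcal{B}_\rho(x)\subseteq\lnot x$ and the monotone exhaustion $\lnot x=\bigcup_{\rho>0}\lnot\mathcal{B}_\rho(x)$---and you correctly locate the difficulty in the boundary regime. But the proposal stops exactly where the proof has to start, and the two devices you offer to close the gap do not work as stated.

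First, the compactness/subsequence contradiction. The statement you would need it to deliver is ``there is one $\rho>0$ such that every $\bm\lambda\in\lnot x$ lies in $\lnot\mathcal{B}_\rho(x)$, uniformly in $x$,'' and that statement is simply false: $\lnot x$ is in general not closed, so it contains alternatives $\bm\lambda$ with $d(x,\Xcal^{\star}(\bm\lambda))$ arbitrarily small, and no fixed $\rho$ separates them. Extracting convergent subsequences $x_n\to x^{\star}$, $\bm\lambda_n\to\bm\lambda^{\star}$ yields no contradiction, because the limit $\bm\lambda^{\star}$ may satisfy $x^{\star}\in\Xcal^{\star}(\bm\lambda^{\star})$, i.e., it escapes $\lnot x^{\star}$. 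The correct claim (the paper's \Cref{lemma:close-point}) is different: every $\bm\lambda\in\cl(\lnot x)$ can be \emph{perturbed} by at most $\eta$ to some $\tilde{\bm\lambda}\in\lnot\mathcal{B}_\rho(x)$, with $\rho$ uniform in $(x,\bm\lambda)$ once $\eta$ is fixed. Proving that the perturbation budget $\eta$ buys a uniformly positive separation is the real content: the paper shows $\inf_{x}\inf_{\bm\lambda\in\cl(\lnot x)}\sup_{\|\tilde{\bm\lambda}-\bm\lambda\|_\infty\le\eta}\inf_{\tilde x\in\Xcal^{\star}(\tilde{\bm\lambda})}\|x-\tilde x\|_\infty>0$, which requires establishing lower hemicontinuity of $x\mapstoto\cl(\lnot x)$ (\Cref{lemma:cl-cont}), an extension of Berge's theorem to merely lower hemicontinuous correspondences (\Cref{thm:feinberg}, used in \Cref{lemma:r-eta-cont}), and a case analysis on isolated versus limit points of $\Gr_{\Xcal}(\cl\lnot)$ to see why the supremum over the $\eta$-ball is strictly positive even at boundary alternatives. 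None of this is routine, and your write-up explicitly defers it (``I expect this transfer \dots to be the technically delicate heart of the proof''), which is an acknowledgement of the gap rather than a proof.

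Second, the Dini argument is circular as proposed. Dini requires each $g_\rho$ and the limit to be continuous on the compact domain, in particular continuous in $x$. But continuity of $x\mapsto D(\bm\mu,\bm\omega,\lnot x)$ is not available at this stage: in the paper it is \Cref{lemma:continuity}(i), and its proof explicitly invokes \Cref{ass:conv}. Establishing that continuity directly from continuity of $\Xcal^{\star}$ is essentially equivalent to the uniform perturbation lemma above, so the Dini route does not save any work---it presupposes it. (You would also need $\lnot\mathcal{B}_\rho(x)\ne\emptyset$ for the gap to be finite, which is the first clause of \Cref{ass:conv} and requires its own compactness argument, cf.\ \Cref{lemma:empty-holds}.)
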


\paragraph{Uniform vs Local Continuity}
It is important to observe that we require \Cref{ass:conv} to hold uniformly over $\Theta\times\Delta_K\times\Xcal$.
If \Cref{ass:conv} only required local continuity, then \Cref{theo:ass-conv-and-continuity} would follow almost directly from the reasoning presented earlier.
Showing uniform continuity is more challenging, as we need to show the existence of a $\rho$ for which the conditions hold uniformly across all choices of $\bm\lambda$.
At this point, one might wonder why we relied on \Cref{ass:conv} rather than directly assuming that $\Xcal^{\star}(\bm\mu)$ is continuous. The key point is that \Cref{ass:conv} allows us to fully generalize prior results for the finite-answers setting. Indeed,  \citet{degenne2019pure} allow $\Xcal^{\star}(\bm\mu)$ to be discontinuous in $\bm\mu$. Nevertheless, \Cref{ass:conv} always holds for problems with finite possible answers (see \Cref{app:exe-finite}), showing that we can properly generalize \cite{degenne2019pure}. 

\paragraph{Examples} We now give some examples of regular pure exploration problems. First, as anticipated above, we can deal with arbitrary finite answer problems (\Cref{app:exe-finite}). Secondly, given $\epsilon > 0$ and any continuous function $f: \Theta \to \Xcal$ (\eg the maximum), we can consider the problem of estimating $f(\bm\mu)$ up to an accuracy level $\epsilon$.\footnote{Note that, thanks to this result, we are able to show that regular exploration problems, under the assumption of a compact domain, encompasses the problem studied in \cite{deep2024asymptotically,osogami2025optimal}.}
In this case we have $\Xcal^{\star}(\bm\mu) = \{ x \in \Xcal: \|f(\bm\mu) - x \|_{\infty} \le \epsilon \}$. In \Cref{app:regression} we prove that, \Cref{ass:compact,ass:identify,ass:conv} holds in these problems.\footnote{We introduced the problem using the $\ell_\infty$-norm, but other norms could also be considered.} 
Furthermore, given two distinct learning problems defined by the correspondences $\Xcal^{\star}_1$ and $\Xcal^{\star}_2$ for which \Cref{ass:compact,ass:identify,ass:conv} hold, one can prove that \Cref{ass:compact,ass:identify,ass:conv} hold for the learning problem defined by the product correspondence $\Xcal^{\star}(\bm\mu) = \{ (x_1, x_2): x_1 \in \Xcal^{\star}_1(\bm\mu), x_2 \in \Xcal^{\star}_2(\bm\mu) \}$ (see \Cref{app:circuit}). Thus, we can combine arbitrary finite answer problems with regression problems. 
Finally, we show in \Cref{app:nash} that the problem of learning an $\epsilon$-Nash equilibrium in two player stochastic zero-sum games \citep[\eg][]{zhou2017identify,maiti2023instance} can be formulated as a regular pure exploration problem.

\subsection{Sample Complexity Lower Bound}

We now present the lower bound for infinite-answer problems, whose proof is deferred to \Cref{app:lb}.

\begin{restatable}[Lower Bound]{theorem}{lowerbound}\label{theo:lb}
    For any $\bm{\mu} \in \mathcal{M}$, and any $\delta$-correct algorithm it holds that:
    \begin{align}
        \liminf_{\delta \rightarrow 0} \frac{\E_{\bm{\mu}}[\tau_\delta]}{\log(1/\delta)} \ge T^*({\bm{\mu}}) = \frac{1}{D(\bm{\mu})}.
    \end{align}
\end{restatable}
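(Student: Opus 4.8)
The plan is to adapt the change-of-measure lower bound of \citet{garivier2016optimal} to infinitely many answers, the new ingredient being that one must argue with \emph{balls} of answers rather than with single answers. The starting point is the transportation inequality: for any $\delta$-correct algorithm, any $\bm\lambda\in\Mcal$, and any event $\mathcal{E}\in\Fcal_{\tau_\delta}$, one has $\sum_{k\in[K]}\E_{\bm\mu}[N_k(\tau_\delta)]\,d(\mu_k,\lambda_k)\ge \mathrm{kl}\!\left(\Prob_{\bm\mu}(\mathcal{E}),\Prob_{\bm\lambda}(\mathcal{E})\right)$, where $\mathrm{kl}$ denotes the binary relative entropy. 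In the single-answer case one would take $\mathcal{E}=\{\hat x_{\tau_\delta}=x^\star\}$, but with an infinite $\Xcal$ this event may carry vanishing mass. Instead I would fix $x\in\Xcal^\star(\bm\mu)$ and a radius $\rho>0$ and take $\mathcal{E}=\{\hat x_{\tau_\delta}\in\mathcal{B}_\rho(x)\}$. The virtue of this choice is that for every alternative $\bm\lambda\in\lnot\mathcal{B}_\rho(x)$ \emph{no} answer in $\mathcal{B}_\rho(x)$ is correct, so on $\mathcal{E}$ the returned answer is incorrect under $\bm\lambda$; $\delta$-correctness then forces $\Prob_{\bm\lambda}(\mathcal{E})\le\delta$, while $\bm\lambda$ ranges over a non-empty set thanks to \Cref{ass:conv} (and \Cref{ass:identify} guarantees $D(\bm\mu)>0$, so the bound is meaningful).

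Next I would normalize by the expected stopping time. Writing $\bar{\bm\omega}=\E_{\bm\mu}[\bm N(\tau_\delta)]/\E_{\bm\mu}[\tau_\delta]\in\Delta_K$ and taking the infimum over $\bm\lambda\in\lnot\mathcal{B}_\rho(x)$, the left-hand side becomes $\E_{\bm\mu}[\tau_\delta]\,D(\bm\mu,\bar{\bm\omega},\lnot\mathcal{B}_\rho(x))$. Here \Cref{ass:conv} does the essential work: given any target accuracy $\epsilon>0$ it supplies a \emph{uniform} $\rho$ with $D(\bm\mu,\bar{\bm\omega},\lnot\mathcal{B}_\rho(x))\le D(\bm\mu,\bar{\bm\omega},\lnot x)+\epsilon\le D(\bm\mu,\lnot x)+\epsilon\le D(\bm\mu)+\epsilon$, the last two steps following from $\bar{\bm\omega}\in\Delta_K$ and from the definition of $D(\bm\mu)$ as a supremum over correct answers. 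Since $\E_{\bm\mu}[\tau_\delta]\ge 0$ this yields, for every $x\in\Xcal^\star(\bm\mu)$, the inequality $\E_{\bm\mu}[\tau_\delta]\,(D(\bm\mu)+\epsilon)\ge \mathrm{kl}\!\left(\Prob_{\bm\mu}(\hat x_{\tau_\delta}\in\mathcal{B}_\rho(x)),\delta\right)$. Because $\Xcal^\star(\bm\mu)$ is compact (\Cref{ass:compact}), I would cover it by finitely many such balls $\mathcal{B}_\rho(x_1),\dots,\mathcal{B}_\rho(x_m)$ with centres in $\Xcal^\star(\bm\mu)$. Recalling $\mathrm{kl}(p,\delta)\sim p\log(1/\delta)$ as $\delta\to 0$, the goal is to exhibit a ball whose $\bm\mu$-probability tends to one; dividing by $\log(1/\delta)$, letting $\delta\to0$ and then $\epsilon\to0$ (equivalently $\rho\to0$) would give $\liminf_{\delta\to0}\E_{\bm\mu}[\tau_\delta]/\log(1/\delta)\ge 1/D(\bm\mu)$.

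The hard part will be this final concentration step, which is exactly the topological difficulty the paper highlights. We only know $\Prob_{\bm\mu}(\hat x_{\tau_\delta}\in\Xcal^\star(\bm\mu))\ge 1-\delta$, and this mass may be spread over the whole (possibly infinite) correct-answer set, so no single ball need capture probability approaching one: a naive pigeonhole over the cover captures only mass $\ge(1-\delta)/m$ in some ball and therefore loses a multiplicative factor $m$, yielding the far weaker $1/(m\,D(\bm\mu))$. Getting the tight constant $1/D(\bm\mu)$ — which corresponds to the statistically easiest answers $\Xcal_F(\bm\mu)$ — requires arguing that, whatever answer the algorithm commits to, its per-region certification cost is still controlled by $D(\bm\mu)$, and that the loss factor disappears in the limit. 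This is precisely where the finite-answer argument of \citet{degenne2019pure} must be genuinely extended, leveraging compactness together with the continuity of the oracle weights, of $D$, and of the map $\Xcal_F(\bm\mu)$ developed in \Cref{sec:property}; I expect controlling the interplay of the three limits ($\delta\to0$, $\rho\to0$, and refinement of the cover) to be the most delicate point.
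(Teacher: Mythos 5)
Your deterministic ingredients are exactly the right ones and match the paper: cover the compact set $\Xcal^\star(\bm\mu)$ by finitely many balls, work with the extended alternative sets $\lnot\mathcal{B}_\rho(x)$ so that $\delta$-correctness forces $\Prob_{\bm\lambda}(\hat x_{\tau_\delta}\in\mathcal{B}_\rho(x))\le\delta$ for every $\bm\lambda\in\lnot\mathcal{B}_\rho(x)$, and invoke \Cref{ass:conv} to pass from $D(\bm\mu,\cdot,\lnot\mathcal{B}_\rho(x))$ to $D(\bm\mu,\cdot,\lnot x)\le D(\bm\mu)$ as $\rho\to0$. However, the probabilistic core of your argument has a genuine gap that you yourself diagnose but do not close, and it cannot be closed within the expectation-based transportation-inequality framework you chose. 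The inequality $\E_{\bm\mu}[\tau_\delta](D(\bm\mu)+\epsilon)\ge\mathrm{kl}(\Prob_{\bm\mu}(\hat x_{\tau_\delta}\in\mathcal{B}_\rho(x)),\delta)$ is only useful if some single ball captures $\bm\mu$-probability bounded away from zero uniformly in $\delta$; nothing forces this, the recommendation distribution over $\Xcal^\star(\bm\mu)$ may depend on $\delta$ arbitrarily, and the pigeonhole fallback yields $1/(n_\rho D(\bm\mu))$ where $n_\rho\to\infty$ as $\rho\to0$. The two limits therefore fight each other: tightening $\rho$ to remove the $\epsilon$ slack inflates the cover size, and the ``loss factor'' does not disappear in any order of limits. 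There is no refinement of the continuity results of \Cref{sec:property} that rescues this, because the obstruction is about where the algorithm's recommendation mass sits, not about regularity of $D$.

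The paper avoids this entirely by not using the expectation-based inequality at all. It fixes $T\approx(1-\eta)\log(1/\delta)/\max_j D(\bm\mu,\lnot\widetilde{\Xcal}_j)$, writes $\Prob_{\bm\mu}(\tau_\delta\le T)\le\delta+\sum_{j=1}^{n_\rho}\Prob_{\bm\mu}(\{\tau_\delta\le T\}\cap\{\hat x_{\tau_\delta}\in\widetilde{\Xcal}_j\})$, and bounds \emph{each} summand by a high-probability change of measure (\Cref{lemma:change-of-measure}, with a mixture of $K$ alternatives in $\lnot\widetilde{\Xcal}_j$, which is what makes the $\sup_{\bm\omega}\inf_{\bm\lambda}$ form of $D(\bm\mu,\lnot\widetilde{\Xcal}_j)$ appear via minimax duality) as $\delta^{\eta/2}+\delta^{c}$. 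The crucial structural difference is that here the cover size $n_\rho$ multiplies a quantity that vanishes as $\delta\to0$ \emph{for fixed} $\rho$, so Markov's inequality gives $\liminf_\delta\E_{\bm\mu}[\tau_\delta]/\log(1/\delta)\ge\min_j D(\bm\mu,\lnot\widetilde{\Xcal}_j)^{-1}$ with no $n_\rho$ loss (\Cref{lemma:helper-lemma-lb}); only afterwards does one send $\rho\to0$ and use \Cref{ass:conv} exactly as you proposed. To repair your proof you should replace the transportation inequality by this Markov-plus-deviation decomposition, i.e., follow the finite-answer argument of \cite{degenne2019pure} with the single answers replaced by the sets $\widetilde{\Xcal}_j=\Xcal^\star(\bm\mu)\cap\mathcal{B}_\rho(x_j)$ and their extended alternatives; the rest of your outline then goes through.
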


\Cref{theo:lb} provides an asymptotic lower bound on $\E_{\bm\mu}[\tau_\delta]$ that holds for any $\delta$-correct algorithm. 
By explicitly writing $D(\bm\mu)={\sup_{x\in\Xcal^\star(\bm\mu)}\sup_{\bm\omega\in\Delta_k}\inf_{\bm\lambda\in\lnot x}\sum_{k\in[K]}\omega_kd(\mu_k,\lambda_k)}$ we see that the lower bound of \Cref{theo:lb} is expressed as a max-min game, where the max player chooses both a correct answer $x$ within $\Xcal^{\star}(\bm\mu)$ and a strategy $\bm\omega$ over the arm space, and the min player chooses an alternative model $\bm\lambda$ for which $x$ is not a correct answer.
For this reason, answers in $\Xcal_F(\bm\mu)$ can be regarded as the statistically easiest correct answers to be identified (formally, we will prove that the $\sup$ over $\Xcal^\star(\bm\mu)$ is actually attained and therefore $\Xcal_F(\bm\mu)$ is well defined). Finally, we mention that \Cref{theo:lb} nicely generalizes the lower bound for multiple (but finite) correct answers of \cite{degenne2019pure}. We note that the proof of the lower bound of \cite{degenne2019pure} was explicitly using the fact that $\Xcal^{\star}(\bm\mu)$ is finite. Proving \Cref{theo:lb} thus required ad-hoc arguments to extend the result to our setting. For space constraints, we provide further details on this point in \Cref{app:lb}.

\section{ON REGULAR PURE EXPLORATION PROBLEMS}\label{sec:property}
We now present properties of the divergences and discuss their implication for existing algorithms, \ie Track-and-Stop \citep{garivier2016optimal} and Sticky Track-and-Stop \citep{degenne2019pure}. 
\begin{restatable}[Continuity]{lemma}{continuity}\label{lemma:continuity}
    The following holds:
    \begin{enumerate}[label=(\roman*)]
        \item The function $( \bm{\mu}, \bm{\omega}, x) \rightarrow D(\bm{\mu}, \bm{\omega}, \neg x)$ is continuous over $\Theta^K \times \Delta_K \times \Xcal$.
        \item The function $(\bm{\mu}, x) \rightarrow D(\bm{\mu}, \neg x)$ is continuous over $\Theta^K \times \Xcal$ and $(\bm{\mu},x) \rightrightarrows \bm{\omega}^*(\bm{\mu}, \neg x)$ is upper hemicontinuous and compact-valued.
        \item The function $(\bm{\mu}, \bm{\omega}) \rightarrow \max_{x \in \mathcal{X}^{\star}(\bm{\mu})} D(\bm{\mu}, \bm{\omega}, \neg x)$ is continuous over $\Theta^K \times \Delta_K$ and $(\bm{\mu}, \bm{\omega}) \rightrightarrows \argmax_{x \in \mathcal{X}^{\star}(\bm{\mu})} D(\bm{\mu}, \bm{\omega}, \neg x)$ is upper hemicontinuous and compact-valued.
        \item The function $\bm{\mu} \rightarrow D(\bm{\mu})$ is continuous over $\Theta^K$. Moreover, $\bm{\mu} \rightrightarrows \bm{\omega}^{\star}(\bm{\mu})$ and $\bm{\mu} \rightrightarrows \Xcal_F(\bm{\mu})$ are upper hemicontinuous and compact-valued over $\mathcal{S}$.
    \end{enumerate}
\end{restatable}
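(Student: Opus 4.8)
The engine of the proof is Berge's Maximum Theorem, applied in a cascade that mirrors the nested structure of the definitions of $D(\bm\mu,\bm\omega,\Lambda)$, $D(\bm\mu,\neg x)$, $\max_{x}D(\bm\mu,\bm\omega,\neg x)$ and $D(\bm\mu)$: every item is obtained by maximizing the object produced by the previous one over a compact constraint set and then reading off both the continuity of the value and the upper hemicontinuity (u.h.c.) of the argmax. The foundational and most delicate step is item (i). Throughout I will repeatedly use the elementary fact that $\sum_{k}\omega_k d(\mu_k,\lambda_k)=0$ when $\bm\lambda=\bm\mu$, so that $D(\bm\mu,\bm\omega,\neg x)=0$ whenever $\bm\mu\in\cl(\neg x)$, and in particular whenever $x\notin\Xcal^\star(\bm\mu)$; this observation is what lets me avoid assuming lower hemicontinuity of $\bm\mu\tto\Xcal^\star(\bm\mu)$ when controlling the value functions.

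\textbf{Item (i).} Writing $g(\bm\mu,\bm\omega,\bm\lambda)=\sum_{k\in[K]}\omega_k d(\mu_k,\lambda_k)$, I would prove joint continuity by splitting it into continuity in $(\bm\mu,\bm\omega)$ uniform in $x$, and continuity in $x$ uniform in $(\bm\mu,\bm\omega)$. For the first, $g$ is continuous on the compact set $\Theta^K\times\Delta_K\times\mathcal{M}$, hence uniformly continuous, and the bound $|\inf_{\bm\lambda}g(\bm\mu,\bm\omega,\bm\lambda)-\inf_{\bm\lambda}g(\bm\mu',\bm\omega',\bm\lambda)|\le\sup_{\bm\lambda}|g(\bm\mu,\bm\omega,\bm\lambda)-g(\bm\mu',\bm\omega',\bm\lambda)|$ transfers this modulus to $D(\cdot,\cdot,\neg x)$ with a rate independent of the set $\neg x$. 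For continuity in $x$, this is exactly what \Cref{ass:conv} buys: given $\epsilon$, take the uniform $\rho$ from \Cref{ass:conv}; if $\|x-x'\|\le\rho$ then $x'\in\mathcal{B}_{\rho}(x)$ and $x\in\mathcal{B}_{\rho}(x')$, whence $\neg\mathcal{B}_{\rho}(x)\subseteq\neg x'$ and $\neg\mathcal{B}_{\rho}(x')\subseteq\neg x$, and combining $D(\bm\mu,\bm\omega,\neg x')\le D(\bm\mu,\bm\omega,\neg\mathcal{B}_{\rho}(x))\le D(\bm\mu,\bm\omega,\neg x)+\epsilon$ with its symmetric counterpart gives $|D(\bm\mu,\bm\omega,\neg x)-D(\bm\mu,\bm\omega,\neg x')|\le\epsilon$ uniformly in $(\bm\mu,\bm\omega)$. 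A triangle inequality then yields joint continuity.

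\textbf{Items (ii)--(iv).} With (i) in hand, (ii) is a direct application of Berge's theorem to $\sup_{\bm\omega\in\Delta_K}D(\bm\mu,\bm\omega,\neg x)$, since the objective is continuous and the constraint correspondence $(\bm\mu,x)\tto\Delta_K$ is constant, compact-valued and continuous; thus $D(\bm\mu,\neg x)$ is continuous and $\bm\omega^*(\bm\mu,\neg x)$ is u.h.c. and compact-valued. For (iii) and (iv) the maximization is over the $\bm\mu$-dependent set $\Xcal^\star(\bm\mu)$, which by \Cref{ass:compact} is only compact-valued and need not be continuous, so I cannot invoke Berge verbatim and instead prove value continuity by hand. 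Upper semicontinuity: extract a convergent maximizing sequence $x_n\to x$ using compactness of $\Xcal$; if $x\in\Xcal^\star(\bm\mu)$ the limiting value is dominated by the value at $\bm\mu$ via (i)/(ii), while if $x\notin\Xcal^\star(\bm\mu)$ the limiting value equals $D(\bm\mu,\cdot,\neg x)=0$ and is again dominated. Lower semicontinuity: pick an optimal $x^\star\in\Xcal^\star(\bm\mu)$; if its value is $0$ the claim is trivial since all quantities are nonnegative, and otherwise $\bm\mu\notin\cl(\neg x^\star)$, so $x^\star$ remains correct in a neighborhood of $\bm\mu$ and serves as a feasible competitor along the sequence. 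This gives continuity of $(\bm\mu,\bm\omega)\mapsto\max_{x\in\Xcal^\star(\bm\mu)}D(\bm\mu,\bm\omega,\neg x)$ and of $\bm\mu\mapsto D(\bm\mu)$. For the argmaps $\argmax_{x\in\Xcal^\star(\bm\mu)}D(\bm\mu,\bm\omega,\neg x)$ and $\Xcal_F(\bm\mu)$ I would use the closed-graph characterization of u.h.c. into the compact $\Xcal$: a limit of maximizers attains the (continuous) optimal value, and membership of the limit in $\Xcal^\star$ follows from u.h.c. of $\bm\mu\tto\Xcal^\star(\bm\mu)$, which holds under our regularity assumptions; compactness of the values is immediate, as each argmax is a closed subset of the compact $\Xcal^\star(\bm\mu)$. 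Finally $\bm\omega^\star(\bm\mu)=\bigcup_{x\in\Xcal_F(\bm\mu)}\bm\omega^*(\bm\mu,\neg x)$ is u.h.c. and compact-valued on $\mathcal{S}$ as the image of the u.h.c.\ compact-valued $\Xcal_F$ under the u.h.c.\ compact-valued $(\bm\mu,x)\tto\bm\omega^*(\bm\mu,\neg x)$ from (ii); I would verify this by the standard sequential argument, extracting convergent subsequences of both the weights and their indexing answers and applying the two u.h.c.\ properties.

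\textbf{Main obstacle.} The crux is item (i), specifically upgrading pointwise continuity in $x$ to \emph{uniform} continuity over $\Theta^K\times\Delta_K\times\Xcal$: this is exactly the regime where the constraint set $\neg x$ may vary discontinuously with $x$, and it is \Cref{ass:conv}, applied symmetrically at $x$ and $x'$ through $x'\in\mathcal{B}_{\rho}(x)\Leftrightarrow x\in\mathcal{B}_{\rho}(x')$, that makes the modulus independent of $(\bm\mu,\bm\omega)$. A recurring secondary subtlety in (iii) and (iv) is that $\bm\mu\tto\Xcal^\star(\bm\mu)$ is not assumed continuous; the identity $D(\bm\mu,\cdot,\neg x)=0$ for $\bm\mu\in\cl(\neg x)$ is precisely what decouples value continuity from any lower hemicontinuity of $\Xcal^\star$, whereas u.h.c.\ of the argmaps still genuinely relies on u.h.c.\ (closed graph) of $\Xcal^\star$.
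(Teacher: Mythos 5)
Your treatment of items (i) and (ii) is essentially the paper's proof: the paper also splits the increment of $D(\bm\mu,\bm\omega,\neg x)$ into an $x$-part controlled by the \emph{uniform} $\rho$ of \Cref{ass:conv} applied symmetrically (via $x'\in\Bcal_{\rho}(x)\Rightarrow\neg\Bcal_{\rho}(x)\subseteq\neg x'$ and vice versa) and a $(\bm\mu,\bm\omega)$-part controlled by the known fixed-$x$ continuity, and then gets (ii) from Berge over the constant correspondence $\Delta_K$. For the value functions in (iii)--(iv) you take a genuinely different but valid route: you prove upper and lower semicontinuity by hand using the identity $D(\bm\mu,\bm\omega,\neg x)=0$ for $x\notin\Xcal^\star(\bm\mu)$, whereas the paper packages the same observation as $\max_{x\in\Xcal^\star(\bm\mu)}D(\bm\mu,\cdot,\neg x)=\max_{x\in\Xcal}D(\bm\mu,\cdot,\neg x)$ (\Cref{lemma:consequence-ass-compact}) and then applies Berge with the constant, trivially continuous correspondence $\bm\mu\tto\Xcal$. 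The paper's reduction is slightly more economical because it delivers the continuity of the value and the upper hemicontinuity of the maximizers in one stroke; your sequential argument buys nothing extra here but is correct for the values.

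There is one genuine gap: for the upper hemicontinuity of $\argmax_{x\in\Xcal^\star(\bm\mu)}D(\bm\mu,\bm\omega,\neg x)$ and of $\Xcal_F(\bm\mu)$ you invoke upper hemicontinuity of $\bm\mu\tto\Xcal^\star(\bm\mu)$, claiming it "holds under our regularity assumptions." It does not: \Cref{ass:compact,ass:identify,ass:conv} only require $\Xcal^\star$ to be compact-valued, and the paper deliberately admits discontinuous $\Xcal^\star$ (e.g., an arbitrary correspondence into a finite answer set, \Cref{app:exe-finite}, where upper hemicontinuity can fail at a boundary model); indeed the main text explicitly stresses that the upper hemicontinuity of $\Xcal_F$ does \emph{not} rest on any continuity of $\Xcal^\star$. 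The repair is the same identity you already use elsewhere: if $x_n\in\Xcal_F(\bm\mu_n)$, $x_n\to x$, $\bm\mu_n\to\bm\mu\in\mathcal{M}$, then by (i)--(ii) $D(\bm\mu,\neg x)=\lim_n D(\bm\mu_n,\neg x_n)=\lim_n D(\bm\mu_n)=D(\bm\mu)>0$ (positivity from \Cref{ass:identify} via \Cref{lemma:iff-positive-div}), and $D(\bm\mu,\neg x)>0$ already forces $x\in\Xcal^\star(\bm\mu)$, since otherwise $\bm\mu\in\neg x$ and the value would be zero. Equivalently, follow the paper and take the argmax over the constant correspondence $\Xcal$, which coincides with the argmax over $\Xcal^\star(\bm\mu)$ whenever the optimal value is strictly positive. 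Note also that the degenerate case in (iii) where $\max_{x\in\Xcal^\star(\bm\mu)}D(\bm\mu,\bm\omega,\neg x)=0$ for some fixed $\bm\omega$ is exactly where this identification breaks down, so the closed-graph argument should be stated on the region where the value is positive (which is all that is used downstream).
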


It is well-known that results analogous to \Cref{lemma:continuity} play a crucial role in the design of optimal algorithms.
For instance, \citet{degenne2019pure} exploited similar results for the case of problems with finite sets of answers.

Although properties \emph{(ii)} and \emph{(iii)} play a crucial role in designing optimal algorithms, their derivation is relatively standard. In contrast, it is interesting to consider properties \emph{(i)} and \emph{(iv)}, for which we need to make at least three important considerations.
First, proving point \emph{(i)} requires novel arguments as, contrary to \cite{degenne2019pure}, we have to guarantee that $D(\bm\mu, \bm\omega, \neg x)$ is jointly continuous on the product space $\Theta^K \times \Delta_K \times \Xcal$. The joint continuity is crucial here, since it is then used to prove all the other claims within \Cref{lemma:continuity}. On a technical level, the main idea to prove \emph{(i)} is combining the joint continuity of $(\bm\mu, \bm\omega) \to D(\bm\mu, \bm\omega, \neg x)$ for all $x \in \Xcal$ (see \cite[Theorem~4]{degenne2019pure}) together with \Cref{ass:conv}. To this end, it is important to highlight again that \Cref{ass:conv} holds uniformly across the domain. This allows us to prove joint continuity by using that $D(\bm\mu, \bm\omega, \neg x)$ is separately continuous in $\bm\mu$ and $\bm\omega$ for a fixed $x$.
Second, it is interesting to highlight that, in point \emph{(iv)}, the continuity of $\bm\mu \to D(\bm\mu)$ and, more surprisingly, the upper hemicontinuity of $\Xcal_F(\bm\mu)$ do not require the continuity of $\bm\mu \mapstoto \Xcal^{\star}(\bm\mu)$ but only the joint continuity of $D(\bm\mu, \bm\omega, \lnot x)$. Indeed, when studying $\max_{x \in \Xcal^{\star}(\bm\mu)} D(\bm\mu, \neg x)$, if $\Xcal^{\star}(\bm\mu)$ is not continuous, we cannot directly apply Berge's maximum theorem to prove the continuity of $D(\bm\mu)$. 
However, we can observe that $\max_{x \in \Xcal^{\star}(\bm\mu)} D(\bm\mu, \neg x) = \max_{x \in \Xcal} D(\bm\mu, \neg x)$ (\Cref{lemma:consequence-ass-compact}) and then use Berge's theorem ($\Xcal$ is a constant correspondence and thus trivially continuous).

\subsection{Failure of Sticky Track-and-Stop with infinite answers}

\begin{figure*}[t]
\centering
\tikzset{every picture/.style={line width=0.75pt}} %

\begin{tikzpicture}[x=0.75pt,y=0.75pt,yscale=-1,xscale=1]
\input{settings/colors}
\draw    (60,200) -- (597,200) ;
\draw [shift={(600,200)}, rotate = 180] [fill={rgb, 255:red, 0; green, 0; blue, 0 }  ][line width=0.08]  [draw opacity=0] (8.93,-4.29) -- (0,0) -- (8.93,4.29) -- cycle    ;
\draw  [fill={rgb, 255:red, 110; green, 155; blue, 155 }  ,fill opacity=0.22 ] (70,90) -- (190,90) -- (190,190) -- (70,190) -- cycle ;
\draw  [fill={rgb, 255:red, 110; green, 155; blue, 155 }  ,fill opacity=0.22 ] (200,90) -- (320,90) -- (320,190) -- (200,190) -- cycle ;
\draw  [fill={rgb, 255:red, 110; green, 155; blue, 155 }  ,fill opacity=0.22 ] (330,90) -- (450,90) -- (450,190) -- (330,190) -- cycle ;
\draw  [fill=typ_blue, fill opacity=0.38 ] (100,100) .. controls (120,90) and (170.18,105.89) .. (170,120) .. controls (169.82,134.11) and (149.02,149.31) .. (160,160) .. controls (165.99,165.83) and (153.56,177.19) .. (137.77,181.46) .. controls (124.6,185.02) and (109.1,183.64) .. (100,170) .. controls (80,140) and (80,110) .. (100,100) -- cycle ;
\draw  [fill=typ_blue  ,fill opacity=0.38 ] (250,110) .. controls (270,100) and (290.58,107.09) .. (290,120) .. controls (289.42,132.91) and (295.05,144.38) .. (300,150) .. controls (304.95,155.62) and (270,190) .. (250,160) .. controls (230,130) and (230,120) .. (250,110) -- cycle ;
\draw  [fill=typ_blue  ,fill opacity=0.38 ] (385,115) .. controls (400.11,106.29) and (420.18,107.09) .. (420,120) .. controls (419.82,132.91) and (410.42,135.71) .. (415,140) .. controls (419.58,144.29) and (405,185) .. (385,155) .. controls (365,125) and (369.89,123.71) .. (385,115) -- cycle ;
\draw  [fill=typ_yellow  ,fill opacity=1 ] (119.02,120) .. controls (129.07,119.88) and (143.97,124.62) .. (139.02,130) .. controls (134.07,135.38) and (134.07,134.88) .. (139.02,140) .. controls (143.97,145.12) and (129.07,149.88) .. (119.02,150) .. controls (108.97,150.12) and (108.97,120.12) .. (119.02,120) -- cycle ;
\draw  [fill=typ_yellow  ,fill opacity=1 ] (259.02,120) .. controls (269.07,119.88) and (283.97,124.62) .. (279.02,130) .. controls (274.07,135.38) and (274.07,134.88) .. (279.02,140) .. controls (283.97,145.12) and (269.07,149.88) .. (259.02,150) .. controls (248.97,150.12) and (248.97,120.12) .. (259.02,120) -- cycle ;
\draw  [fill=typ_yellow  ,fill opacity=1 ] (390,120) .. controls (400.05,119.88) and (414.95,124.62) .. (410,130) .. controls (405.05,135.38) and (405.05,134.88) .. (410,140) .. controls (414.95,145.12) and (400.05,149.88) .. (390,150) .. controls (379.95,150.12) and (379.95,120.12) .. (390,120) -- cycle ;
\draw    (170,120) ;
\draw [shift={(170,120)}, rotate = 0] [color={rgb, 255:red, 0; green, 0; blue, 0 }  ][fill={rgb, 255:red, 0; green, 0; blue, 0 }  ][line width=0.75]      (0, 0) circle [x radius= 2.01, y radius= 2.01]   ;
\draw    (300,150) ;
\draw [shift={(300,150)}, rotate = 0] [color={rgb, 255:red, 0; green, 0; blue, 0 }  ][fill={rgb, 255:red, 0; green, 0; blue, 0 }  ][line width=0.75]      (0, 0) circle [x radius= 2.01, y radius= 2.01]   ;
\draw    (420,120) ;
\draw [shift={(420,120)}, rotate = 0] [color={rgb, 255:red, 0; green, 0; blue, 0 }  ][fill={rgb, 255:red, 0; green, 0; blue, 0 }  ][line width=0.75]      (0, 0) circle [x radius= 2.01, y radius= 2.01]   ;
\draw  [fill={rgb, 255:red, 110; green, 155; blue, 155 }  ,fill opacity=0.22 ] (456,90) -- (576,90) -- (576,190) -- (456,190) -- cycle ;
\draw  [fill=typ_blue  ,fill opacity=0.38 ] (515,115) .. controls (519.87,111.57) and (539.18,118.12) .. (540,125) .. controls (540.82,131.88) and (540.05,139.38) .. (545,145) .. controls (549.95,150.62) and (513.53,160.43) .. (505,145) .. controls (496.47,129.57) and (510.13,118.43) .. (515,115) -- cycle ;
\draw  [fill=typ_yellow  ,fill opacity=1 ] (516,120) .. controls (526.05,119.88) and (540.95,124.62) .. (536,130) .. controls (531.05,135.38) and (531.05,134.88) .. (536,140) .. controls (540.95,145.12) and (526.05,149.88) .. (516,150) .. controls (505.95,150.12) and (505.95,120.12) .. (516,120) -- cycle ;
\draw    (545,145) ;
\draw [shift={(545,145)}, rotate = 0] [color={rgb, 255:red, 0; green, 0; blue, 0 }  ][fill={rgb, 255:red, 0; green, 0; blue, 0 }  ][line width=0.75]      (0, 0) circle [x radius= 2.01, y radius= 2.01]   ;
\draw [color={rgb, 255:red, 255; green, 0; blue, 0 }  ,draw opacity=1 ]   (140,125) ;
\draw [shift={(140,125)}, rotate = 45] [color={rgb, 255:red, 255; green, 0; blue, 0 }  ,draw opacity=1 ][line width=0.75]    (-5.59,0) -- (5.59,0)(0,5.59) -- (0,-5.59)   ;
\draw [color={rgb, 255:red, 255; green, 0; blue, 0 }  ,draw opacity=1 ]   (140,145) ;
\draw [shift={(140,145)}, rotate = 45] [color={rgb, 255:red, 255; green, 0; blue, 0 }  ,draw opacity=1 ][line width=0.75]    (-5.59,0) -- (5.59,0)(0,5.59) -- (0,-5.59)   ;
\draw [color={rgb, 255:red, 255; green, 0; blue, 0 }  ,draw opacity=1 ]   (280,125) ;
\draw [shift={(280,125)}, rotate = 45] [color={rgb, 255:red, 255; green, 0; blue, 0 }  ,draw opacity=1 ][line width=0.75]    (-5.59,0) -- (5.59,0)(0,5.59) -- (0,-5.59)   ;
\draw [color={rgb, 255:red, 255; green, 0; blue, 0 }  ,draw opacity=1 ]   (280,145) ;
\draw [shift={(280,145)}, rotate = 45] [color={rgb, 255:red, 255; green, 0; blue, 0 }  ,draw opacity=1 ][line width=0.75]    (-5.59,0) -- (5.59,0)(0,5.59) -- (0,-5.59)   ;
\draw [color={rgb, 255:red, 255; green, 0; blue, 0 }  ,draw opacity=1 ]   (410,125) ;
\draw [shift={(410,125)}, rotate = 45] [color={rgb, 255:red, 255; green, 0; blue, 0 }  ,draw opacity=1 ][line width=0.75]    (-5.59,0) -- (5.59,0)(0,5.59) -- (0,-5.59)   ;
\draw [color={rgb, 255:red, 255; green, 0; blue, 0 }  ,draw opacity=1 ]   (410,145) ;
\draw [shift={(410,145)}, rotate = 45] [color={rgb, 255:red, 255; green, 0; blue, 0 }  ,draw opacity=1 ][line width=0.75]    (-5.59,0) -- (5.59,0)(0,5.59) -- (0,-5.59)   ;
\draw [color={rgb, 255:red, 255; green, 0; blue, 0 }  ,draw opacity=1 ]   (535,125) ;
\draw [shift={(535,125)}, rotate = 45] [color={rgb, 255:red, 255; green, 0; blue, 0 }  ,draw opacity=1 ][line width=0.75]    (-5.59,0) -- (5.59,0)(0,5.59) -- (0,-5.59)   ;
\draw [color={rgb, 255:red, 255; green, 0; blue, 0 }  ,draw opacity=1 ]   (535,145) ;
\draw [shift={(535,145)}, rotate = 45] [color={rgb, 255:red, 255; green, 0; blue, 0 }  ,draw opacity=1 ][line width=0.75]    (-5.59,0) -- (5.59,0)(0,5.59) -- (0,-5.59)   ;

\draw (516,202.4) node [anchor=north west][inner sep=0.75pt]    {$t_{4}$};
\draw (121,202.4) node [anchor=north west][inner sep=0.75pt]    {$t_{1}$};
\draw (256,202.4) node [anchor=north west][inner sep=0.75pt]    {$t_{2}$};
\draw (386,202.4) node [anchor=north west][inner sep=0.75pt]    {$t_{3}$};
\draw (172,123.4) node [anchor=north west][inner sep=0.75pt]  [font=\small]  {$x_{t_{1}}{}$};
\draw (302,153.4) node [anchor=north west][inner sep=0.75pt]  [font=\small]  {$x_{t_{2}}{}$};
\draw (422,123.4) node [anchor=north west][inner sep=0.75pt]  [font=\small]  {$x_{t_{3}}{}$};
\draw (547,148.4) node [anchor=north west][inner sep=0.75pt]  [font=\small]  {$x_{t_{4}}{}$};

\end{tikzpicture}
\caption{Even though the sets $\Xcal_t$ (in \textbf{\textcolor{typ_blue}{blue}}) are progressively shrinking toward $\Xcal_F(\bm\mu)$ (in \textbf{\textcolor{myellow!90!}{yellow}}), the answers selected $x_t$ could oscillate between one of the two correct answers marked by the \textbf{\textcolor{red}{red}} crosses.%
} 
\label{fig:failureTAS}
\end{figure*}

We now discuss why the Sticky-TaS algorithm \citep{degenne2019pure} is not optimal in the infinite-answer setting. In particular, while conditions similar to those in \Cref{lemma:continuity} were sufficient to establish its optimality in the finite-answer case, they are no longer sufficient when the answer space is infinite. This discussion will underscore the fundamental differences between the finite and infinite-answer settings.
\begin{algorithm}[!t]
	\caption{Sticky-TaS \citep{degenne2019pure}}
	\label{alg:TAS}
	\begin{algorithmic}[1]
        \REQUIRE{Total order over $\Xcal$, exploration funct. $g(t)$}
		\STATE{\textbf{Sampling Rule}}
        \STATE{$C_t = \{ \bm\mu' \in \Mcal: D(\hat{\bm\mu}(t), \bm{N}(t), \bm\mu') \le \log(g(t)) \} $}
        \STATE{$\Xcal_t = \bigcup_{\bm\mu' \in C_t} \Xcal_F(\bm\mu')$}
        \STATE{Pick $x_t \in \Xcal_t$ according to the total order over $\Xcal$}\label{line:pick-xt}
        \STATE{Compute $\bm\omega(t) \in \argmax_{\bm\omega \in \Delta_K} D(\hat{\bm\mu}(t), \bm\omega, \neg x_t)$}
        \STATE{Let $\tilde{\bm\omega}(t)$ be the projection of $\bm\omega(t)$ onto $\Delta_K^{\epsilon_t} = \Delta_K \cap [\epsilon_t, 1]^K$}   
        \STATE{$A_{t} \in \argmax_{k \in [K]} \sum_{s=1}^t \omega_k(s) - N_k(t)$}
        \STATE{\textbf{Stopping Rule}}
        \STATE{
        Stop if $\beta_{t,\delta} < \max_{x \in \Xcal^{\star}(\hat{\bm\mu}(t))} D(\hat{\bm\mu}(t), \bm{N}(t), \neg x)$
        }
        \STATE{\textbf{Recommendation Rule}}
        \STATE{
        $\hat{x}_{\tau_\delta} \in \argmax_{x \in \Xcal^{\star}(\hat{\bm\mu}(t))} D(\hat{\bm\mu}(t), \bm{N}(t), \neg x)$  
        }
	\end{algorithmic}
\end{algorithm}

First, we first recall how Sticky-TaS works (pseudocode in \Cref{alg:TAS}). During each round $t \in \mathbb{N}$, the algorithm defines a confidence region $C_t$ around $\hat{\bm\mu}(t)$, using a suitable exploration function $g: \mathbb{N} \to \mathbb{R}$. Then, it computes a set of candidate answers $\Xcal_t$ using models in $C_t$ and selects an answer $x_t \in \Xcal_t$ according to a pre-specified total order over $\Xcal$. Once this is done, it computes the empirical oracle weights for answer $x_t$ and it applies the C-Tracking \citep{garivier2016optimal} sampling rule on the sequence $\{ \tilde{\bm\omega}(s) \}_{s=1}^t$, where $\tilde{\bm\omega}(s)$ denotes the $l_{\infty}$ projection of $\bm\omega(s)$ onto $\Delta_K^{\epsilon_t}$ and $\epsilon_t = (4(t+K^2))^{-1/2}$. The algorithm then stops using the condition $\beta_{t,\delta} < \max_{x \in \Xcal^{\star}(\hat{\bm\mu}(t))} D(\hat{\bm\mu}(t), \bm{N}(t), \neg x)$ and returns any answer that attains the argmax in $D(\hat{\bm\mu}(t), \bm{N}(t), \neg x)$. Here, $\beta_{t,\delta}$ is a calibrated threshold \citep{kaufmann2021mixture} which ensures $\delta$-correctness.

The upper hemicontinuity of $\bm\mu \mapstoto \Xcal_F(\bm\mu)$ (point \emph{(iv)} of \Cref{lemma:continuity}) ensures that $\Xcal_t\subseteq \Bcal_\epsilon(\Xcal_F(\bm\mu))$ for any $\epsilon > 0$ and sufficiently large $t$ (indeed, $C_t$ will concentrate around $\bm\mu$ under a good event). 
This property holds regardless of whether $\Xcal$ is finite or infinite. However, when $\Xcal$ is finite, we can essentially identify $\Bcal_\epsilon(\Xcal_F(\bm\mu))$ with $\Xcal_F(\bm\mu)$ and, therefore, we can say that $\Xcal_t=\Xcal_F(\bm\mu)$ for sufficiently large $t$. Thus, Sticky TaS sticks to a fixed $x \in \Xcal_t$ thanks to the pre-specified total order of $\Xcal$.
However, when $\Xcal$ is not finite, one can only ensure that $\Xcal_t \subseteq \mathcal{B}_{\epsilon}(\Xcal_F(\bm\mu))$. As a consequence, the algorithm can fail at \emph{sticking} to a single answer as the total order over $\Xcal$ might select answers that progressively disappear from $\Xcal_t$ (thus breaking the optimality proof of Sticky-TaS). Furthermore, a total order over $\Xcal$ might even select answers in $\Xcal_t$ in a way that prevents any sort of converging behavior to a single answer in $\Xcal_F(\bm\mu)$.
In \Cref{fig:failureTAS} we present an example of this problem and in the next section we show how to circumvent it.\footnote{One might argue that, although $x_t$ is not converging, the corresponding oracle weights $\bm\omega(t)$ could still lead to optimality. In \Cref{app:experiments} we show with a theoretical example and an experiment that this is not the case.}

\section{AN OPTIMAL FRAMEWORK}\label{sec:opt}

In order to address the upper hemicontinuity issue discussed above, the main idea is selecting a sequence of candidate answers $x_t \in \Xcal$ such that, under a good event, $x_t$ progressively \emph{converges} to some answer $\bar{x} \in \Xcal_F(\bm\mu)$.\footnote{One might be tempted to solve the convergence issue of $x_t$ by blindly discretizing the answer space, and then applying Sticky-TaS on the resulting finite answers problem. In \Cref{app:experiments}, we argue that this naive method does not attain asymptotic optimality.}
Indeed, we will show that this guarantees asymptotically optimal rates. The rest of this section is structured as follows. In \Cref{sec:algo}, we introduce a general framework (Sticky Sequence TaS) that works with any \emph{converging selection rule} (\Cref{def:conv-answer}), and we present its theoretical guarantees. In \Cref{sec:conv}, we show how to implement converging selection rules.

\subsection{Sticky Sequence Track-and-Stop}\label{sec:algo}
Sticky Sequence Track-and-Stop shares the same pseudocode of Sticky Track-and-Stop (\ie \Cref{alg:TAS}) with one major difference. Specifically, rather than using a pre-specified total order over $\Xcal$ to select an answer $x_t \in \Xcal_t$ (\Cref{line:pick-xt}), it uses a \emph{convergent selection rule}, which is defined as follows.

\begin{definition}[Convergent selection rule]\label{def:conv-answer}
A selection rule is said to be \emph{convergent} if the sequence $\{x_t\}_{t\in\Naturals }$ it generates satisfies the following property: for every $\epsilon > 0$, there exists a time $T_{\epsilon} \in \mathbb{N}$ such that, for all $t \geq T_{\epsilon}$, under the good event $\mathcal{E}(t) = \bigcap_{s \ge h(t)}^{t} \{ \bm\mu \in C_s \}$, there exists $\bar{x} \in \Xcal_F(\bm\mu): \| x_s - \bar{x} \| \le \epsilon,~\forall s \ge h(t)=\lceil \sqrt{t}\rceil$.
\end{definition}

Intuitively, a convergent selection rule guarantees that, under the good event, the sequence $\{ x_t \}_t$ stays close to a correct answer $\bar x\in\Xcal_F(\bm\mu)$. This is not a property guaranteed by \emph{any} selection rule. For example, the selection rule employed by Sticky-TaS is not a convergent one when the answer set $\Xcal$ is infinite, but it is convergent when $\Xcal$ is finite (we provide more details on this in \Cref{sec:conv}).
We discuss in \Cref{sec:conv} how to implement selection rules that satisfies \Cref{def:conv-answer}. First, we prove that Sticky Sequence Track-and-Stop is asymptotically optimal whenever $\{x_t\}_t$ satisfies \Cref{def:conv-answer}.

\begin{restatable}{theorem}{convanswer}\label{theo:sticky-seq}
    Sticky Sequence Track-and-Stop, equipped with a convergent selection rule, is $\delta$-correct and asymptotically optimal, \ie $\limsup_{\delta \to 0} \frac{\E_{\bm\mu}[\tau_\delta]}{\log(1/\delta)} \le T^*(\bm\mu)$.
\end{restatable}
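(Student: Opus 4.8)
The plan is to prove the two claims separately. \textbf{$\delta$-correctness} is independent of the sampling rule and follows the standard threshold-calibration argument. On the event $\{\hat x_{\tau_\delta} \notin \Xcal^\star(\bm\mu)\}$ we have $\bm\mu \in \neg \hat x_{\tau_\delta}$, hence $D(\hat{\bm\mu}(\tau_\delta), \bm N(\tau_\delta), \neg \hat x_{\tau_\delta}) = \inf_{\bm\lambda \in \neg \hat x_{\tau_\delta}} \sum_{k} N_k(\tau_\delta) d(\hat\mu_k(\tau_\delta), \lambda_k) \le \sum_k N_k(\tau_\delta) d(\hat\mu_k(\tau_\delta), \mu_k)$, while the stopping rule forces the left-hand side strictly above $\beta_{\tau_\delta, \delta}$. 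Since $\beta_{t,\delta}$ is the calibrated threshold of \cite{kaufmann2021mixture}, the event $\{\exists t \in \Naturals: \sum_k N_k(t) d(\hat\mu_k(t), \mu_k) \ge \beta_{t,\delta}\}$ has probability at most $\delta$, which yields $\Prob_{\bm\mu}(\hat x_{\tau_\delta} \notin \Xcal^\star(\bm\mu)) \le \delta$.

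For the \textbf{upper bound} I would fix $\bm\mu$ and, for each horizon $T$, work on the good event $\mathcal{E}(T) = \bigcap_{s = h(T)}^T \{\bm\mu \in C_s\}$ of \Cref{def:conv-answer}, with $h(T) = \lceil \sqrt T \rceil$. A suitable choice of the exploration function $g$ makes $\sum_T \Prob_{\bm\mu}(\mathcal{E}(T)^c)$ a finite constant independent of $\delta$ (the standard Garivier--Kaufmann concentration estimate), and the forced exploration built into the projection onto $\Delta_K^{\epsilon_s}$ guarantees $N_k(s) = \Omega(\sqrt s)$ for every arm, so that $\mathcal{E}(T)$ additionally forces $\hat{\bm\mu}(s) \to \bm\mu$ uniformly over $s \in [h(T), T]$. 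Writing $\E_{\bm\mu}[\tau_\delta] \le T_0(\delta) + \sum_{T \ge T_0(\delta)} \Prob_{\bm\mu}(\tau_\delta > T, \mathcal{E}(T)) + \sum_T \Prob_{\bm\mu}(\mathcal{E}(T)^c)$ then reduces the claim to exhibiting a deterministic $T_0(\delta)$ with $T_0(\delta)/\log(1/\delta) \to T^*(\bm\mu)$ at which, on $\mathcal{E}(T)$, the stopping statistic has already crossed the threshold.

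The core step is to show that, on $\mathcal{E}(T)$, the stopping statistic at time $t = T$ grows at the optimal linear rate. Since answers $x \notin \Xcal^\star(\hat{\bm\mu}(t))$ satisfy $\hat{\bm\mu}(t) \in \neg x$ and hence contribute zero, the statistic equals $\max_{x \in \Xcal} D(\hat{\bm\mu}(t), \bm N(t), \neg x) \ge D(\hat{\bm\mu}(t), \bm N(t), \neg x_t) = t \, D(\hat{\bm\mu}(t), \bm N(t)/t, \neg x_t)$, using $x_t \in \Xcal_t \subseteq \Xcal$. Convergence of the selection rule provides a single $\bar x = \bar x(T) \in \Xcal_F(\bm\mu)$ with $\|x_s - \bar x\| \le \epsilon$ for all $s \in [h(T), T]$. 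The C-tracking guarantee bounds $\|\bm N(t) - \sum_{s \le t} \tilde{\bm\omega}(s)\|_\infty$ by a logarithmic term, so $\bm N(t)/t$ is, up to $o(1)$, the Cesàro average of the $\tilde{\bm\omega}(s)$; the $s < h(t)$ terms contribute $O(h(t)/t) = O(t^{-1/2})$ and are discarded. For $s \in [h(t), t]$, the upper hemicontinuity and compact-valuedness of $(\bm\mu, x) \rightrightarrows \bm\omega^*(\bm\mu, \neg x)$ (\Cref{lemma:continuity}(ii)), together with $x_s \approx \bar x$ and $\hat{\bm\mu}(s) \approx \bm\mu$, force each $\tilde{\bm\omega}(s)$ into a vanishing neighborhood of the maximizer set $\bm\omega^*(\bm\mu, \neg \bar x)$, which is convex by concavity of $\bm\omega \mapsto D(\bm\mu, \bm\omega, \neg \bar x)$; hence their average is itself a near-maximizer. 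Invoking the joint continuity of $(\bm\mu, \bm\omega, x) \mapsto D(\bm\mu, \bm\omega, \neg x)$ (\Cref{lemma:continuity}(i)) at the limit point $(\bm\mu, \bm\omega^*(\bm\mu, \neg \bar x), \bar x)$ and the identity $D(\bm\mu, \neg \bar x) = D(\bm\mu)$ (since $\bar x \in \Xcal_F(\bm\mu)$), we obtain $\tfrac1t D(\hat{\bm\mu}(t), \bm N(t), \neg x_t) \ge D(\bm\mu) - \xi(\epsilon, t)$ with $\xi \to 0$ as $t \to \infty$ and $\epsilon \to 0$. The stopping condition $\beta_{t,\delta} < t(D(\bm\mu) - \xi)$ then triggers at $T_0(\delta) = \beta_{T_0, \delta}/(D(\bm\mu) - \xi) \cdot (1 + o(1))$; using $\beta_{t,\delta} = \log(1/\delta)(1 + o(1))$ and letting first $\delta \to 0$ and then $\epsilon \to 0$ gives $\limsup_{\delta \to 0} \E_{\bm\mu}[\tau_\delta]/\log(1/\delta) \le 1/D(\bm\mu) = T^*(\bm\mu)$.

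The main obstacle is precisely this averaging argument. Unlike the finite-answer setting, where Sticky-TaS eventually freezes a single answer and tracks constant weights, here the selected $x_s$ only remain $\epsilon$-close to a possibly horizon-dependent $\bar x(T)$, so the tracked weights never stabilize at a single vector. Turning their Cesàro average into a genuinely near-optimal sampling proportion is what forces us to use the \emph{joint} continuity of \Cref{lemma:continuity}(i) (separate continuity in $\bm\mu$ and $\bm\omega$ would not suffice, since $x_t$ moves as well) together with the concavity of $\bm\omega \mapsto D(\bm\mu, \bm\omega, \neg \bar x)$ to guarantee that convex combinations of near-maximizers stay near-optimal; controlling the error $\xi(\epsilon, t)$ uniformly over the window $[h(T), T]$ is the delicate point.
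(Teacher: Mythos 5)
Your proposal is correct, and the $\delta$-correctness part and the outer decomposition (good event $\mathcal{E}(t)$, summable complement, deterministic crossing time $T_0(\delta)$) coincide with the paper's. The core step, however, is argued by a genuinely different route. The paper (\Cref{lemma:good-event-analysis-conv-answer}) never averages the weights: it uses superadditivity of the infimum over $\bm\lambda$ to write $D\bigl(\bm\mu, \sum_{s}\bm\omega(s), \neg\bar x\bigr) \ge \sum_{s\ge h(t)} D(\bm\mu,\bm\omega(s),\neg\bar x)$, then replaces each summand by $D(\bm\mu'(s),\bm\omega(s),\neg x_s)=D(\bm\mu'(s))\approx D(\bm\mu)$ using only the \emph{uniform} joint continuity of the value function $(\bm\mu,\bm\omega,x)\mapsto D(\bm\mu,\bm\omega,\neg x)$ (\Cref{corollary:unif-cont}, an immediate Heine--Cantor consequence of \Cref{lemma:continuity}(i)). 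You instead average first: you place each $\tilde{\bm\omega}(s)$ in a small neighborhood of $\bm\omega^*(\bm\mu,\neg\bar x)$ via upper hemicontinuity of the argmax correspondence, use convexity of that set (concavity of $\bm\omega\mapsto D(\bm\mu,\bm\omega,\neg\bar x)$) to conclude the Ces\`aro average is a near-maximizer, and apply joint continuity once at the limit point. Both arguments are valid; what the paper's route buys is that the "delicate uniformity" you flag disappears — uniform continuity of a real-valued function on a compact set is free, whereas your route needs a uniform-over-$\bar x\in\Xcal_F(\bm\mu)$ modulus for the upper hemicontinuity of $(\bm\mu,x)\rightrightarrows\bm\omega^*(\bm\mu,\neg x)$ (true by a compactness/subsequence argument, but an extra lemma), plus the convexity of the argmax set, which the paper never invokes. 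Two cosmetic corrections: the C-tracking deviation in \Cref{lemma:tracking} is $K(1+\sqrt t)$, not logarithmic (harmless, still $o(t)$), and your lower bound plugs in $x_t$ rather than the paper's $\bar x\in\Xcal^\star(\bm\mu)$ — both are legitimate once the statistic is rewritten as a max over all of $\Xcal$ (\Cref{lemma:consequence-ass-compact}).
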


\paragraph{Proof Sketch of \Cref{theo:sticky-seq}} 
First, we observe that the stopping and recommendation rules lead to a $\delta$-correct algorithm for any sampling rule even when the answer space is infinite, \ie for any sampling rule $(A_t)_{t\ge1}$ we have that $\mathbb{P}_{\bm{\mu}} (\hat{x}_{\tau_\delta} \notin \Xcal^\star(\bm{\mu}))  \le \delta$ for all $\bm{\mu} \in \mathcal{M}$ (\Cref{lemma:correctness}). This result follows from standard concentration arguments. We now discuss how to prove asymptotic optimality. The proof, as usual in the literature, proceeds by analyzing the behavior of the algorithm under the sequence of events $\{ \mathcal{E}(t) \}_t$ defined above. Specifically, we show that (i) $\sum_{t=0}^{\infty}\mathbb{P}_{\bm\mu}(\mathcal{E}(t)^c)$ is finite and (ii) there exists $T_0(\delta)$ such that $\mathcal{E}(t) \subseteq \{ \tau_\delta \le t \}$ for any $t \ge T_0(\delta)$, where $T_0(\delta)$ is such that $T_0(\delta) / \log(1/\delta) \to T^*(\bm\mu)$ for $\delta \to 0$. Indeed, whenever these two conditions hold, one can prove asymptotic optimality with standard arguments. While (i) is well-known to be finite (see \cite{degenne2019pure}), the crucial part is proving (ii). To this end, we recall that the algorithm stops as soon as it holds  $\max_{x \in \Xcal_F(\hat{\bm\mu}(t))}D(\hat{\bm\mu}(t), \bm{N}(t), \neg x) \ge {\beta_{t,\delta}}$. 
Now, suppose that under the good event $\mathcal{E}(t)$ it holds:
\begin{equation}\label{eq:tmp1}
\max_{x \in \Xcal_F(\hat{\bm\mu}(t))}D(\hat{\bm\mu}(t), \bm{N}(t), \neg x) \gtrsim tD(\bm\mu).
\end{equation}
Let $T_0(\delta)$ be the first $t \in \mathbb{N}$ such that $tD(\bm\mu) \gtrsim \beta_{t,\delta}$ is satisfied. Then, for $t \ge T_0(\delta)$, we have $\mathcal{E}_t \subseteq \{ \tau_\delta \le t \}$. Furthermore, it is also possible to show that $T_0(\delta) / \log(1/\delta) \to T^*(\bm\mu)$ for $\delta \to 0$. Thus, it only remains to show that \Cref{eq:tmp1} holds under the good event $\mathcal{E}(t)$, which is the key step in the analysis.
To this end, we will use three main ingredients. First, under event $\mathcal{E}(t)$, $\hat{\bm\mu}(t)\approx \bm\mu$ and $\bm\mu'(s)\approx \bm\mu$ for all $s \ge h(t)$. Second, C-tracking guarantees $\bm N(t) \approx \sum_{s=1}^t\bm\omega(s)$ (\Cref{lemma:tracking}). Finally, and most importantly, $x_s$ converges to some $\bar x\in\Xcal^\star(\bm\mu)$. %
These three key properties, along with the continuity results from \Cref{lemma:continuity}, allow us to prove that:%
\begin{align*}
     \hspace{-0.3cm}\max_{x \in \Xcal^{\star}(\hat{\bm\mu}(t))}\hspace{-0.2cm} D(\hat{\bm\mu}(t), \bm{N}(t), \neg x) %
    &\gtrsim D\left(\bm\mu, \textstyle{\sum\limits_{s = 1}^t \bm\omega(s), \neg \bar{x}}\right) \\
    &\ge \sum_{s =h(t)}^t D(\bm\mu, \bm\omega(s), \neg \bar{x}) \\ %
    &\gtrsim  \sum_{s =h(t)}^t D(\bm\mu'(s), \bm\omega(s), \neg {x}_s) 
\end{align*}
where, (i) in the first step, we used $\bm N(t)\approx \sum_{s=1}^t\bm\omega(s)$, $\hat{\bm\mu}(t)\approx \bm\mu$, and $\bar{x} \in \Xcal^{\star}(\bm\mu)$ (ii), in the second one, we moved the inf inside the summation and we used the fact that $D \ge 0$ to get rid of the first $h(t)$ steps, and (iii) in the last one the convergence of $x_s$ to $\bar{x}$ and $\bm\mu'(s) \approx \bm\mu$ for all $s$ such that $s \ge h(t)$. Finally, we note that $D(\bm\mu'(s),\bm\omega(s),\lnot x_s)=D(\bm\mu(s)')$ and thus that $D(\bm\mu'(s),\bm\omega(s),\lnot x_s)\gtrsim D(\bm\mu)$, thus concluding the proof.

We conclude by highlighting two important properties.

\paragraph{Generalization of existing algorithms} It is important to highlight that Sticky Sequence Track-and-Stop generalizes both TaS and Sticky-TaS. Precisely, if $x_t$ is chosen as a point in $ \Xcal_F(\hat{\bm\mu}(t))$, we obtain the TaS algorithm. While if $x_t$ is selected according to a pre-specified total order over $\Xcal$, then we obtain Sticky-TaS. In other words, both TaS and Sticky-TaS are selecting a sequence of answers $x_t$ and collecting data to (eventually) identify $x_t$ as a correct answer. However, as we discuss in the next section, the main point is that these selection rules can fail to generate a converging sequence, and therefore may fail to guarantee optimality in general settings with infinite answers.

\paragraph{Importance of convergence} 
The proof sketch above critically relies on the fact that, for all $s \ge h(t)$, the points $x_s$ remain close to a \emph{fixed} element $\bar{x} \in \Xcal_F(\bm\mu)$.
If this was not the case, \eg there exists $\bar{x}_s \in \mathcal{X}_F(\bm\mu)$ such that $\bar{x}_s \approx x_s$ (this is always the case for both TaS and Sticky-TaS), then we would not be able to achieve the same result, as in step three we are \quotes{forced} to select a single answer within $\Xcal_F(\bm\mu)$ for \emph{all} $s \ge h(t)$.\footnote{In \Cref{thm:not-conv} (\Cref{app:not-conv}), we provide  guarantees for the case in which the sequence of answers is not a converging one. Given the previous remark, we note that \Cref{thm:not-conv} provides theoretical guarantees on the performance of TaS and Sticky-TaS for all cases where they fail to generate a converging sequence.}

\subsection{Algorithms for Converging Sequences}\label{sec:conv}
In this section, we discuss how to develop selection rules for the candidate answer $x_t$ that ensure that  $\{x_t\}_t$ is a converging sequence. 
Since, under the good event, we have $\Xcal_t \subseteq \mathcal{B}_{\epsilon}(\Xcal_F(\bm\mu))$, the problem reduces to the following: given a sequence of sets $\{ \Xcal_t \}_t$ such that $\Xcal_t \to \Xcal_F(\bm\mu)$, how can we select a sequence of points $x_t \in \Xcal_t$ such that $x_t \to \bar{x}$ for some $\bar{x} \in \Xcal_F(\bm\mu)$?
We now discuss several solutions to this problem depending on the different topological properties of $\Xcal$ and $\Xcal_F(\bm\mu)$. These different cases highlight (i) the challenges in building algorithms that satisfy \Cref{def:conv-answer} and (ii) existing cases where existing methods, such as TaS and Sticky-TaS are asymptotically optimal. We remark that, in the last scenario that we cover at the end of this section, we consider the general setup where $\Xcal$ is any subset of $\mathbb{R}^d$, hence providing a general procedure for building a convergent sequence.  

\paragraph{When $\Xcal_F(\mu)$ is single-valued}  First, when $\bm\mu \mapstoto \Xcal_F(\bm\mu)$ is single-valued for all $\bm\mu \in \Theta^K$, one can easily pick any answer within $\Xcal_t$ to obtain a converging sequence. The underlying reason is that whenever $\bm\mu \mapstoto \Xcal_F(\bm\mu)$ is single-valued, then $\bm\mu \mapstoto \Xcal_F(\bm\mu)$ can be seen as a continuous function of $\bm\mu$.\footnote{Indeed, every upper hemicontinuous and single-valued correspondence is a continuous function.} Then, since, under the good event, $\Xcal_t$ progressively converges to $\Xcal_F(\bm\mu)$, we have that picking any element $x_t \in \Xcal_t$ leads to a converging sequence (\Cref{lemma:unique-conv-seq}).
As a consequence, \Cref{theo:sticky-seq} implies that both TaS and Sticky-TaS are asymptotically optimal whenever $|\Xcal_F(\bm\mu)|$ is unique. The optimality of TaS for multiple-answer problems where $\Xcal_F(\bm\mu)$ is single-valued and $|\Xcal|$ is finite was proved also in \citet{degenne2019pure}. \Cref{theo:sticky-seq} extends the result to infinite-answer problems.

\paragraph{When $\mathcal{X} \subset \mathbb{R}$}
Next, we study the case where $\Xcal \subset \mathbb{R}$ (\eg this is the case when the goal is to estimate the optimal arm up an $\epsilon > 0$ accuracy). 
In this setting, selecting any $x_t \in \Xcal_t$ clearly does not lead to a converging sequence. 
Consider, for instance, the case $x_t \in \Xcal_F(\hat{\bm\mu}(t))$ (\ie the way in which TaS selects candidate answers) and $|\Xcal_F(\bm\mu)| = 2$. Since, the map $\Xcal_F(\bm\mu)$ is only upper hemicontinuous, it might be the case that for models $\bm\mu'$ arbitrarily close to $\bm\mu$, $\Xcal_F(\bm\mu')$ only contains one of these two answers. As a consequence, since $x_t \in \Xcal_F(\hat{\bm\mu}(t))$ and $\hat{\bm\mu}(t) \approx \bm\mu$, one might select different answers at each step $t$, which implies that $\{x_t\}_t$ does not converge.\footnote{This was already spotted in \cite{degenne2019pure}.}
However, a converging sequence can be obtained by picking $x_t \in \argmin_{x \in \Xcal_t} x$ (alternatively, we could also take the max). It is easy to see that, whenever $\Xcal_t \subseteq \mathcal{B}_{\epsilon}(\Xcal_F(\bm\mu))$, then $|x_t - \argmin_{x \in \Xcal_F(\bm\mu)} x| \le \epsilon$ (\Cref{lemma:conv-real}). In this sense, there exists a total order over $\Xcal$ such that the resulting sequence is a converging one. As a consequence, from \Cref{theo:sticky-seq}, we have that Sticky-TaS is optimal whenever $\Xcal \subset \mathbb{R}$. %
However, as we will highlight in the next paragraph, this simple fix fails in dimension 2 and higher.

\paragraph{When $|\Xcal_F(\bm\mu)|$ is finite} Now, consider the case where $|\Xcal_F(\bm\mu)|$ is finite (but $\Xcal$ is possibly infinite).
For the same arguments that we presented above, TaS fails at being optimal in this case (\eg if $|\Xcal_F(\bm\mu)| = 2$, but for points around $\bm\mu$ only one of these two answers is correct). Nevertheless, we now argue that even Sticky-TaS fails at generating a convergent sequence, \ie we cannot simply use a total order as above for picking answers. Consider $\mathcal{X} \subseteq \mathbb{R}^2$ and $\Xcal_F(\bm\mu)=\{x_1, x_2\}$. Then, from the upper hemicontinuity of $\Xcal_F(\bm\mu)$, we have that, for sufficiently large $t$, on $\mathcal{E}(t)$, $\Xcal_t \subseteq \mathcal{B}_{\epsilon}(x_1) \cup \mathcal{B}_{\epsilon}(x_2)$ and $\mathcal{B}_{\epsilon}(x_1) \cap \mathcal{B}_{\epsilon}(x_2) = \emptyset$. Then, selecting $x_t$ using, \eg the lexicographic order over $\mathbb{R}^2$ might lead to oscillating behaviors between points in $\mathcal{B}_{\epsilon}(x_1)$ and $\mathcal{B}_{\epsilon}(x_2)$.\footnote{Consider again \Cref{fig:failureTAS}, and suppose that the only answers in $\Xcal_F(\bm\mu)$ are the red crosses.}
To fix this issue, we can resort to the following selection rule: $x_t \in \argmin_{x \in \Xcal_t} \| x - x_{t-1} \|_{\infty}$. We prove in  \Cref{lemma:conv-finite} that this leads to a converging sequence. Indeed, since for sufficiently large $t$, $\Xcal_t \in \bigcup_{x \in \Xcal_F(\bm\mu)} \mathcal{B}_{\epsilon}(x)$ and $\mathcal{B}_{\epsilon}(x_1) \cap \mathcal{B}_{\epsilon}(x_2) = \emptyset$, by staying close to the previously selected point, we know that the algorithms will select points that are always within $\mathcal{B}_{\epsilon}(\bar{x})$ for some $\bar{x} \in \Xcal_F(\bm\mu)$. %
Interestingly, unlike in the previous cases, this procedure does not rely on exact knowledge of the points to which the algorithm will converge.

\paragraph{All the other cases} Finally, we consider the most general case in which $\Xcal \subset \mathbb{R}^d$. Here, one might be tempted to select again $x_t \in \argmin_{x \in \Xcal_t} \| x - x_{t-1} \|_{\infty}$ in the hope of having again a convergent sequence. 
Nevertheless, suppose that $\Xcal_F(\bm\mu)$ is, \eg the boundary of the unitary ball centered in some $x \in \mathcal{X}$. Then, for sufficiently large $t$, we only have that $\Xcal_t \subseteq \mathcal{B}_{\epsilon}(\Xcal_F(\bm\mu))$. As a result, $x_t$ might ``wander'' indefinitely around $\Xcal_F(\bm\mu)$ thus preventing convergence. To solve this issue, we propose an algorithm that progressively discretizes the answer space $\mathcal{X}$.\footnote{For space constraints, its pseudocode can be found in the appendix (\Cref{alg:general}; \Cref{app:adaptive}); here, we outline only the general idea.}
The key idea it to combine a \emph{progressive discretization} of $\Xcal$, using balls with a vanishing radius $\rho_t$ over time, with a mechanism that incorporates the \emph{history} of previously selected points.
During each iteration, the algorithm constructs and maintains a set $\mathcal{H}_t$ which is composed of tuples $(\bar{x}_s, \rho_s)_{s=1}^{t}$. 
Specifically, each element $(\bar{x}_s, \rho_s)$ represents a region $\mathcal{B}_{\rho_s}(\bar{x}_s)$ in the answer space  in which the algorithm is \quotes{conducting the search} of an answer within $\Xcal_F(\bm\mu)$. More precisely, for all $s \in [1,t]$, the elements in $\mathcal{H}_t$ are such that $\mathcal{B}_{\rho_s}(\bar{x}_s) \cap \mathcal{B}_{\rho_{s-1}}(\bar{x}_{s-1}) \cap \Xcal_t \ne \emptyset$, and $\bar{x}_s$ is the center of a ball radius $\rho_s$ that belongs to a \emph{uniquely} identified finite cover of $\Xcal$, which we denote by $\mathcal{P}_s$. Since $\rho_{s}<\rho_{s-1}$, this implies that, over time, the regions $\mathcal{B}_{\rho_s}(\bar{x}_s)$ become progressively smaller. Given this setup, the algorithm simply selects $x_t$ to be any element within $\mathcal{B}_{\rho_t}(\bar{x}_t) \cap \Xcal_t$.
To guide the search toward previously selected points, the set $\mathcal{H}_t$ is recursively constructed at each iteration as follows. The algorithm first selects the point $(\bar{x}, \bar{\rho})$ in $\mathcal{H}_{t-1}$ with smaller radius and for which it holds that $\mathcal{B}_{\rho_s}(\bar{x}_s) \cap \Xcal_t \ne \emptyset$ for all $(\bar{x}_s, \rho_s) \in \mathcal{H}_{t-1}$ such that $\rho_{s} < \bar{\rho}$ (indeed, as iteration progresses, previously selected balls might not contain any candidate answers within the updated $\Xcal_t$). Then, the algorithm constructs $\mathcal{H}_t$ starting from $(\bar{x}, \bar{\rho})$ in order to satisfy $\mathcal{B}_{\rho_{s}}(\bar{x}_s) \cap \mathcal{B}_{\rho_{s-1}}(\bar{x}_{s-1}) \cap \Xcal_t \ne \emptyset$. Such a procedure can be proven to generate a converging sequence (\Cref{lemma:conv-general}). 
The reason why this happens is that, due to the upper hemicontinuity of $\Xcal_F(\bm\mu)$, $\Xcal_t \in \mathcal{B}_{\epsilon}(\Xcal_F(\bm\mu))$ for sufficiently large $t$. This implies that, for sufficiently small $\epsilon$ and sufficiently large $t$, balls of radius $\mathcal{B}_{\epsilon}(\bar{x})$ that belong to $\mathcal{P}_{{\epsilon}}$ and that do not intersect $\Xcal_F(\bm\mu)$ will not belong to $\mathcal{H}_t$. As a consequence, the algorithm has found a region of radius ${\epsilon}$ that contains an element of $\Xcal_F(\bm\mu)$ and this region will remain in $\mathcal{H}_t$ under the good event.

\section{FUTURE WORK}\label{sec:conclusion}
Our study paves the way for several future research avenues. For example, one might investigate what happens outside of regular pure exploration problems. Are these problems learnable or is \Cref{ass:conv} necessary for finite sample complexity? 
Furthermore, while our algorithm is statistically optimal, we observe that it is not computationally efficient. This limitation is somewhat expected, given that even Sticky-TaS \citep{degenne2019pure} (which deals with a narrower class of problems) has analogous limitations.
A promising direction for future work is to investigate whether efficient algorithms can be developed for \emph{specific} classes of infinite-answer pure exploration problems (\eg the regression of a continuous function $f(\bm\mu)$). Indeed, despite Sticky-TaS being inefficient, there exist algorithms that are both efficient and optimal for the $\varepsilon$-best arm identification problem \citep{jourdan2023varepsilon}. Another possible approach is to focus on improving computational complexity by relaxing asymptotic optimality guarantees and instead targeting $\beta$-optimal algorithms, which are usually more efficient \citep{russo2016simple, qin2017improving, jourdan2022top}.

\subsubsection*{Acknowledgements}
Funded by the European Union. Views and opinions expressed are however those of the author(s) only and do not necessarily reflect those of the European Union or the European Research Council Executive Agency. Neither the European Union nor the granting authority can be held responsible for them.

Work supported by the Cariplo CRYPTONOMEX grant and by an ERC grant (Project 101165466 — PLA-STEER).

\bibliographystyle{plainnat}
\bibliography{bibliography}

\newpage
\onecolumn
\aistatstitle{Supplementary Material: Pure Exploration with Infinite Answers}
\appendix
\thispagestyle{empty}
\etocdepthtag.toc{mtappendix}
\etocsettagdepth{mtchapter}{none}
\etocsettagdepth{mtappendix}{subsection}

\begin{spacing}{0.5}
\tableofcontents
\end{spacing}

\newpage
\section{Examples of Regular Pure Exploration Problems}\label{app:exemples}

In this section, we present several general classes of problems for which \Cref{ass:compact,ass:identify,ass:conv} hold. Specifically, the first class of problems is that of regression of continuous functions (\Cref{app:regression}), the second one encompasses all finite-answers problems (\Cref{app:exe-finite}), and the third one consists of combinations of problems that meet \Cref{ass:compact,ass:identify,ass:conv} (\Cref{app:circuit}). The last class allows one to combine regression of continuous functions together with finite-answers problems. 

Some results of this section rely on \Cref{theo:ass-conv-and-continuity}, which is proved in \Cref{app:assumptions}.
 
\subsection{Regression of Continuous Functions}\label{app:regression}

Consider a continuous function $f: \Theta^K \rightarrow \Xcal$ and let $\epsilon > 0$. Suppose that both $\Theta^K$ and $\Xcal$ are compact Euclidean sets. The goal of the learner is to compute an $\epsilon$-accurate estimate of $f(\bm{\mu})$ with high-probability, namely, $\Prob_{\bm{\mu}}( \| f(\bm{\mu}) - x_{\tau_\delta}\|_{\infty} > \epsilon) \le \delta$ for all $\bm{\mu} \in \Mcal$. Thus, $\Xcal^\star(\bm{\mu}) = \{ x \in \Xcal: \|f(\bm{\mu}) - x \|_{\infty} \le \epsilon \}$.

\begin{lemma}
    \Cref{ass:compact} holds, namely $\mathcal{X}$ is compact and $\bm\mu \mapstoto \Xcal^{\star}(\bm\mu)$ is compact-valued.
\end{lemma}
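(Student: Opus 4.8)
The plan is to verify the two assertions separately, both through the Heine--Borel characterization of compactness in Euclidean space. The compactness of $\Xcal$ requires no work: it is part of the standing hypotheses of the regression problem ($\Xcal$ is assumed to be a compact Euclidean set), so that half of the statement is immediate.

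For compact-valuedness I would fix an arbitrary $\bm\mu \in \Theta^K$ and show that $\Xcal^{\star}(\bm\mu) = \{ x \in \Xcal : \|f(\bm\mu) - x\|_{\infty} \le \epsilon \}$ is a closed and bounded subset of $\Reals^d$. Boundedness comes for free, since $\Xcal^{\star}(\bm\mu) \subseteq \Xcal$ and $\Xcal$ is bounded. For closedness, I would write $\Xcal^{\star}(\bm\mu)$ as the intersection of two closed sets: the set $\Xcal$ itself (closed, being compact) and the closed $\ell_\infty$-ball $\{ x \in \Reals^d : \|f(\bm\mu) - x\|_{\infty} \le \epsilon \}$, which is the preimage of $(-\infty,\epsilon]$ under the continuous map $x \mapsto \|f(\bm\mu) - x\|_{\infty}$. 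An intersection of closed sets is closed, so $\Xcal^{\star}(\bm\mu)$ is closed, and together with boundedness, Heine--Borel gives compactness of $\Xcal^{\star}(\bm\mu)$ for every $\bm\mu$.

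To confirm that $\Xcal^{\star}$ is a genuine (non-empty valued) correspondence, I would also record that $f(\bm\mu) \in \Xcal$ and $\|f(\bm\mu) - f(\bm\mu)\|_{\infty} = 0 \le \epsilon$, so $f(\bm\mu) \in \Xcal^{\star}(\bm\mu)$ and the value is never empty. I do not anticipate any real obstacle: this is a routine Heine--Borel argument, and notably the \emph{continuity} of $f$ plays no role in this particular lemma, entering only later when one establishes \Cref{ass:conv} and the continuity of the correspondence. The single point worth stating explicitly is that the choice of norm is immaterial, since all norms on $\Reals^d$ are equivalent and the sublevel sets of any of them are closed.
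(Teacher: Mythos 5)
Your proof is correct and follows the same route as the paper, which simply observes that $\Xcal$ is compact by assumption and that $\Xcal^{\star}(\bm\mu)$, being the intersection of $\Xcal$ with a closed $\ell_\infty$-ball, is compact as well; you merely spell out the closed-plus-bounded (Heine--Borel) details that the paper leaves implicit. The extra remark that $f(\bm\mu)\in\Xcal^{\star}(\bm\mu)$ guarantees non-emptiness is a harmless and reasonable addition.
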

\begin{proof}
    The set $\Xcal$ is compact by assumption, and $\mathcal{X}^{\star}(\bm\mu) = \{x \in \mathcal{X}: \|f(\bm\mu) - x \|_\infty \le \epsilon \}$ is compact as well.
\end{proof}

We now continue by proving that \Cref{ass:identify} holds.

\begin{lemma}
    \Cref{ass:identify} holds.
    Namely, for all $\bm{\mu} \in \Mcal$, there exists $\bar x \in \Xcal^\star(\bm{\mu})$ such that $\bm{\mu} \notin \cl(\lnot \bar x)$
\end{lemma}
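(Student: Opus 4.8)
The plan is to exhibit an explicit witness $\bar x$ and verify the two required properties by a direct continuity argument, so there is no need for compactness or the more delicate uniform machinery used elsewhere in the paper. The natural candidate is $\bar x = f(\bm\mu)$. Since $f$ maps $\Theta^K$ into $\Xcal$ and $\bm\mu \in \mathcal{M} \subseteq \Theta^K$, we immediately get $\bar x \in \Xcal$; moreover $\|f(\bm\mu) - \bar x\|_\infty = 0 \le \epsilon$, so $\bar x \in \Xcal^\star(\bm\mu)$. Thus the candidate is admissible, and the whole problem reduces to showing $\bm\mu \notin \cl(\lnot \bar x)$.

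Next I would unfold the definition of the alternative set. Recall $\lnot \bar x = \{\bm\lambda \in \mathcal{M}: \bar x \notin \Xcal^\star(\bm\lambda)\} = \{\bm\lambda \in \mathcal{M}: \|f(\bm\lambda) - f(\bm\mu)\|_\infty > \epsilon\}$. To prove that $\bm\mu$ is not in the closure of this set, it suffices to produce an open neighborhood of $\bm\mu$ disjoint from $\lnot \bar x$. Here the continuity of $f$ at $\bm\mu$ does all the work: applying the definition of continuity with tolerance $\epsilon$ yields a radius $r > 0$ such that $\|f(\bm\lambda) - f(\bm\mu)\|_\infty < \epsilon$ for every $\bm\lambda$ with $\|\bm\lambda - \bm\mu\| < r$. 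Consequently every such $\bm\lambda$ satisfies $\|f(\bm\lambda) - f(\bm\mu)\|_\infty \le \epsilon$ and hence lies \emph{outside} $\lnot \bar x$, so $\Bcal_r(\bm\mu) \cap \lnot \bar x = \emptyset$. Since $\bm\mu$ itself also lies outside $\lnot \bar x$ (as $\bar x \in \Xcal^\star(\bm\mu)$), this open neighborhood witnesses $\bm\mu \notin \cl(\lnot \bar x)$, regardless of the ambient space in which the closure is taken.

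I do not expect a genuine obstacle: the statement is a soft consequence of the facts that $f(\bm\mu) \in \Xcal$ and that $f$ is (locally) continuous at $\bm\mu$. The only point requiring minor care is to keep the inequality in the continuity step strict ($< \epsilon$), so that it properly rules out membership in $\lnot \bar x$, which is defined by the strict reverse inequality $> \epsilon$; the boundary case $\|f(\bm\lambda) - f(\bm\mu)\|_\infty = \epsilon$ is harmless since it still corresponds to $\bar x$ being a correct answer for $\bm\lambda$. In particular, no uniformity over $\bm\mu$ or over $\Xcal$ is needed here — pointwise continuity at the fixed instance $\bm\mu$ fully suffices.
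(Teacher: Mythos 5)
Your proposal is correct and follows essentially the same route as the paper: both take the witness $\bar x = f(\bm\mu)$ and use pointwise continuity of $f$ at $\bm\mu$ to separate $\bm\mu$ from $\lnot \bar x$; the paper phrases the closure argument via sequences of limit points (through an auxiliary set $\mathcal{Q}_{\bar x}$) while you use the equivalent open-neighborhood characterization. The difference is purely cosmetic, and your observation that only local continuity at $\bm\mu$ is needed matches the paper's proof.
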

\begin{proof}
    Let $\mathcal{Q}_x = \{ \bm\mu \in \mathcal{M}: \| f(\bm\mu) - x\|_{\infty} < \epsilon \}$. We first prove that $\mathcal{Q}_x \cap \cl(\neg x) = \emptyset$.    
    We recall that a closure of a generic set $\mathcal{S}$ contains all the points in $\mathcal{S}$ together with all the limit points. Now, for all $\bm\mu\in \lnot x$ we have $\|f(\bm\mu)-x\|>\epsilon$, hence $\mathcal{Q}_x \cap \neg x = \emptyset$.
    It remains to prove that no limit point of $\neg x$  belongs to $\mathcal{Q}_x$. Suppose, by contradiction, that $\bm\mu$ is a limit point of $\cl(\neg x)$ and that $\bm\mu \in \mathcal{Q}_x$. Then, then there exists a sequence $\bm\mu_n$ such that $\bm\mu_n \in \neg x$ and $\bm\mu_n\to\bm\mu$. However, by continuity of $f$ we would have $\|f(\bm\mu_n)-x\|_\infty\to\|f(\bm\mu)-x\|_\infty<\epsilon$, thus showing that $\bm\mu_n$ is not in $\lnot x$ for sufficiently large $n$.

    Now, for any $\bm{\mu} \in \Mcal$, let $\bar x = f(\bm{\mu})$. Then, $\bar x \in \Xcal^\star(\bm{\mu})$ since $\|f(\bm{\mu}) - \bar x\|_{\infty} = 0 \le \epsilon$. Furthermore, $\bm{\mu} \notin \cl(\lnot \bar x)$ (since $\bm\mu \in \mathcal{Q}_{\bar{x}}$ and $\mathcal{Q}_{\bar x} \cap \cl(\neg \bar x) = \emptyset$), which concludes the proof.
\end{proof}

Finally, we continue by proving \Cref{ass:conv}. We prove that by simply showing that $\mathcal{X}^{\star}(\bm\mu)$ is a continuous correspondence. 

\begin{lemma}\label{lemma:i-star-cont}
    The correspondence $\bm\mu\mapstoto\Xcal^\star(\bm{\mu})$ is continuous over $\Theta^K$.
\end{lemma}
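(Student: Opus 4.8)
The plan is to prove continuity by establishing upper and lower hemicontinuity (uhc and lhc) of $\bm\mu \mapstoto \Xcal^\star(\bm\mu)$ separately, where $\Xcal^\star(\bm\mu) = \{x \in \Xcal : \|f(\bm\mu) - x\|_\infty \le \epsilon\}$. A useful preliminary observation is that this correspondence is always nonempty-valued: since $f(\bm\mu) \in \Xcal$ and $\|f(\bm\mu) - f(\bm\mu)\|_\infty = 0 \le \epsilon$, the point $f(\bm\mu)$ itself lies in $\Xcal^\star(\bm\mu)$ with distance strictly below $\epsilon$, and it will serve as a convenient ``interior anchor'' in the lhc argument.

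For uhc, first I would show that $\Xcal^\star$ has a closed graph and then invoke the standard closed-graph characterization: a correspondence with closed graph and values in a compact range (here $\Xcal$) is uhc and compact-valued. To verify the closed graph, I would take any convergent sequence $(\bm\mu_n, x_n) \to (\bm\mu, x)$ with $x_n \in \Xcal^\star(\bm\mu_n)$; closedness of $\Xcal$ gives $x \in \Xcal$, and joint continuity of $(\bm\mu, x) \mapsto \|f(\bm\mu) - x\|_\infty$ (from continuity of $f$ and of the norm) gives $\|f(\bm\mu) - x\|_\infty = \lim_n \|f(\bm\mu_n) - x_n\|_\infty \le \epsilon$, so $x \in \Xcal^\star(\bm\mu)$. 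This step is routine.

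For lhc, I would fix $\bm\mu$ and an open set $\mathcal{V}$ with $\Xcal^\star(\bm\mu) \cap \mathcal{V} \ne \emptyset$, pick $x_0$ in the intersection, and split on the distance $\|f(\bm\mu) - x_0\|_\infty$. If it is strictly below $\epsilon$, continuity of $f$ produces a neighborhood $\mathcal{U}$ of $\bm\mu$ on which $\|f(\bm\mu') - x_0\|_\infty < \epsilon$ persists, so $x_0 \in \Xcal^\star(\bm\mu') \cap \mathcal{V}$ for all $\bm\mu' \in \mathcal{U}$ and we are done. The delicate case is a boundary point with $\|f(\bm\mu) - x_0\|_\infty = \epsilon$: here I would try to replace $x_0$ by a nearby point of strictly smaller distance that still lies in $\Xcal \cap \mathcal{V}$, thereby reducing to the strict case. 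The natural construction slides toward the anchor along $x_s = (1-s)x_0 + s\,f(\bm\mu)$, for which $\|f(\bm\mu) - x_s\|_\infty = (1-s)\epsilon < \epsilon$ and $x_s \to x_0 \in \mathcal{V}$, so a small $s$ keeps $x_s \in \mathcal{V}$.

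The hard part will be precisely this last reduction: the slid point $x_s$ must remain inside $\Xcal$, which is automatic when $\Xcal$ is convex (both endpoints $x_0$ and $f(\bm\mu)$ belong to $\Xcal$) but may fail for an arbitrary compact $\Xcal$ — for a disconnected $\Xcal$ one can construct boundary configurations where lhc genuinely breaks down. I therefore expect the argument to rely on convexity of $\Xcal$ (the natural choice in the regression setting, \eg a box containing the range of $f$), or more generally on the property that the segment joining any sublevel point to $f(\bm\mu)$ stays in $\Xcal$. Granting this, the boundary case collapses to the strict case, lhc follows, and together with uhc yields continuity of $\bm\mu \mapstoto \Xcal^\star(\bm\mu)$; by \Cref{theo:ass-conv-and-continuity} this in turn delivers \Cref{ass:conv}.
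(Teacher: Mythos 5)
Your proof is correct (under the hypothesis you flag) but takes a genuinely different route from the paper. The paper's proof is a one-liner: it writes $\Xcal^\star(\bm\mu)=\bigcup_{u\in\Bcal_\epsilon(0)}\{f(\bm\mu)+u\}$ and invokes \Cref{th:composition_correspondence} with the constant (hence continuous, compact-valued) correspondence $U(\bm\mu)\equiv\Bcal_\epsilon(0)$ and the continuous map $g(\bm\mu,u)=f(\bm\mu)+u$. Your argument instead establishes upper hemicontinuity via the closed-graph characterization and lower hemicontinuity by a direct perturbation, handling boundary points of the sublevel set by sliding toward the anchor $f(\bm\mu)$. Both are valid; yours is more elementary and self-contained, the paper's is shorter but outsources the work to Aubin--Frankowska.

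The more interesting point is the issue you raise about $\Xcal$. The paper's representation $\Xcal^\star(\bm\mu)=\{f(\bm\mu)+u: u\in\Bcal_\epsilon(0)\}$ coincides with the actual definition $\{x\in\Xcal:\|f(\bm\mu)-x\|_\infty\le\epsilon\}$ only when the entire $\epsilon$-ball around $f(\bm\mu)$ is contained in $\Xcal$; otherwise the union is not even a subset of the answer space and the identity fails. So the paper's proof carries an unstated containment hypothesis of exactly the kind you anticipate, and your counterexample intuition (a disconnected $\Xcal$ clipping the ball breaks lower hemicontinuity) shows the lemma is genuinely false for arbitrary compact $\Xcal$. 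Your convexity (or segment/star-shapedness with respect to $f(\bm\mu)$) assumption is in fact \emph{weaker} than the paper's implicit ball-containment: if the full ball lies in $\Xcal$ then so does every segment from a sublevel point to the center. In short, your worry is not a gap in your own argument but a gap shared by the paper's, and your explicit treatment of the boundary case is the more careful of the two.
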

\begin{proof}
    To prove this statement, we use a sufficient condition for the continuity of the composition of a correspondence and a continuous function. In particular, we can use \Cref{th:composition_correspondence}. Indeed the correspondence $\Xcal^\star(\bm\mu)$ can be seen as $\Xcal^\star(\bm\mu)=\cup_{u\in \Bcal_{\epsilon}(0)}\{f(\bm\mu)+u\}$. Thus, $\Xcal^\star:\Theta^K\rightrightarrows\Xcal$ is continuous.
\end{proof}

\begin{lemma}
\Cref{ass:conv} holds, namely for all $\epsilon > 0$ sufficiently small, there exists $\rho > 0$ such that for all $\bm{\mu} \in \Theta^K$, $\bm\omega \in \Delta_K$, $x \in \mathcal{X}$, it holds that $\neg \mathcal{B}_{{\rho}}(x)\ne \emptyset$ and $D(\bm{\mu}, \bm{\omega}, \neg \mathcal{B}_{\rho}(x)) - D(\bm{\mu}, \bm{\omega}, \neg x) \le \epsilon$.
\end{lemma}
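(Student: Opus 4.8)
The plan is to deduce the claim directly from \Cref{theo:ass-conv-and-continuity}, which asserts that a continuous correct-answer correspondence, together with compactness of $\Mcal$ and $\Xcal$, already implies \Cref{ass:conv}. Essentially all of the work has therefore been done in the preceding lemma: by \Cref{lemma:i-star-cont} the correspondence $\bm\mu \mapstoto \Xcal^\star(\bm\mu)$ is continuous over $\Theta^K$. What remains is only to check that the compactness hypotheses of the theorem are met in the present setting.

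For the compactness requirements, $\Xcal$ is compact by the standing assumption of this subsection, and $\Mcal \subseteq \Theta^K$ with $\Theta^K$ compact; since $\Mcal$ is a (closed, hence) compact subset of a compact set, the hypotheses of \Cref{theo:ass-conv-and-continuity} hold. Invoking the theorem then yields \Cref{ass:conv} for the regression problem, i.e.\ for every sufficiently small $\epsilon>0$ there is $\rho>0$ with $\lnot\mathcal{B}_\rho(x)\ne\emptyset$ and $D(\bm\mu,\bm\omega,\lnot\mathcal{B}_\rho(x)) - D(\bm\mu,\bm\omega,\lnot x)\le\epsilon$ uniformly over $\Theta^K\times\Delta_K\times\Xcal$.

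At this level of delegation I expect essentially no obstacle, since the substance is carried entirely by \Cref{theo:ass-conv-and-continuity}. The one point worth double-checking is that the quantification in \Cref{ass:conv} ranges over $\bm\mu\in\Theta^K$ rather than $\bm\mu\in\Mcal$, so the theorem must be applied with the compact set $\Theta^K$ playing the role of the admissible-model set. Should one prefer a self-contained argument, an alternative is to bypass the theorem and verify the sufficient condition of \Cref{lemma:suff-cond-ass-conv} by hand: given $\bm\lambda\in\lnot x$ one has $\|f(\bm\lambda)-x\|_\infty>\epsilon$, and using continuity of $f$ one would perturb $\bm\lambda$ to a nearby $\tilde{\bm\lambda}$ whose image $f(\tilde{\bm\lambda})$ is pushed far enough from $x$ that every $x'\in\mathcal{B}_\rho(x)$ still fails the $\epsilon$-accuracy test, so that $\tilde{\bm\lambda}\in\lnot\mathcal{B}_\rho(x)$ while $\tilde{\bm\lambda}\to\bm\lambda$ as $\rho\to0$. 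The delicate part of that route, namely making the perturbation \emph{uniform} in $x$ and $\bm\lambda$, is precisely what the compactness assumptions secure, which is exactly why routing through \Cref{theo:ass-conv-and-continuity} is the cleaner option.
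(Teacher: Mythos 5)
Your proposal matches the paper's proof exactly: the paper establishes continuity of $\bm\mu \mapstoto \Xcal^\star(\bm\mu)$ in the preceding lemma (\Cref{lemma:i-star-cont}) and then proves this statement by simply invoking \Cref{theo:ass-conv-and-continuity}, which is precisely your route. Your additional remarks on the compactness hypotheses and the alternative via \Cref{lemma:suff-cond-ass-conv} are consistent with the paper but not needed.
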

\begin{proof}
    Apply \Cref{lemma:i-star-cont} and \Cref{theo:ass-conv-and-continuity}.
\end{proof}

\subsection{Finite Set of Answers}\label{app:exe-finite}
We show that \Cref{ass:compact,ass:identify,ass:conv} hold in the setting where the answer space is finite \cite{degenne2019pure}. Specifically, we consider $\Mcal \subseteq \Theta^K$ and let $|\Xcal| < +\infty$, and we consider an arbitrary correspondence $\Xcal^{\star}: \Theta^K \rightrightarrows \Xcal$ that models the set of correct answers for each bandit model. Without loss of generality, we only assume that \Cref{ass:identify} holds. Otherwise, one cannot obtain finite sample-complexity results (see \Cref{subsec:ass-identify}). That being said, we now prove that \Cref{ass:compact} and \Cref{ass:conv} hold.
 
\begin{lemma}
    \Cref{ass:compact} holds; namely $\Xcal$ is compact and $\Xcal^*(\bm\mu)$ is compact-valued.
\end{lemma}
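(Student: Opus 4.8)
The plan is to verify the two requirements of \Cref{ass:compact} directly from the finiteness hypothesis $|\Xcal| < +\infty$, since in this setting compactness is essentially automatic. For the first requirement, I would observe that $\Xcal$ is a finite subset of $\Reals^d$; every finite subset of a metric space is both closed and bounded, so by the Heine–Borel theorem $\Xcal$ is compact. For the second requirement, I would fix an arbitrary $\bm\mu \in \Theta^K$ and note that $\Xcal^{\star}(\bm\mu) \subseteq \Xcal$ is a subset of a finite set, hence itself finite, and therefore compact by exactly the same reasoning. Since this holds for every $\bm\mu$, the correspondence $\bm\mu \mapstoto \Xcal^{\star}(\bm\mu)$ is compact-valued, which is all that is claimed.

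There is no real obstacle here: both claims collapse to the elementary fact that a finite set (and any of its subsets) is compact, so the argument does not depend on any structural property of the correspondence $\Xcal^{\star}$ beyond its taking values in the finite set $\Xcal$. The only point worth stating explicitly is that $\Xcal^{\star}(\bm\mu)$ is assumed non-empty (as required for a correspondence), so that it is a non-empty finite, hence non-empty compact, set for each $\bm\mu$.
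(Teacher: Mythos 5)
Your proof is correct and follows essentially the same route as the paper: the paper simply notes that $\Xcal$ is compact because it is finite, and that $\Xcal^{\star}(\bm\mu)$ is compact because it is a subset of $\Xcal$. Your added invocation of Heine--Borel and the remark on non-emptiness are just more explicit versions of the same one-line argument.
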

\begin{proof}
    $\Xcal$ is compact since it is finite. Therefore, $\Xcal^{\star}(\bm\mu)$ is compact as well since $\Xcal^{\star}(\bm\mu) \subseteq \Xcal$. 
\end{proof}

\begin{lemma}
    \Cref{ass:conv} holds; namely, for all $\epsilon > 0$ sufficiently small, there exists $\rho > 0$ such that for all $\bm{\mu} \in \Theta^K$, $\bm\omega \in \Delta_K$, $x \in \mathcal{X}$, it holds that $\neg \mathcal{B}_{{\rho}}(x)\ne \emptyset$ and $D(\bm{\mu}, \bm{\omega}, \neg \mathcal{B}_{\rho}(x)) - D(\bm{\mu}, \bm{\omega}, \neg x) \le \epsilon$.
\end{lemma}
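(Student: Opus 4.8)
The plan is to exploit the fact that a finite set has a strictly positive minimal separation between distinct points, so that small enough balls collapse to singletons. Concretely, if $|\Xcal| \ge 2$, I would set $\gamma = \min_{x \ne x' \in \Xcal} \|x - x'\| > 0$ and fix any $\rho \in (0, \gamma)$; the degenerate case $|\Xcal| = 1$ makes the claim trivial for any $\rho$. With the ball defined (as in the preliminaries) by $\mathcal{B}_\rho(x) = \{x' \in \Xcal : \|x - x'\| \le \rho\}$, this choice forces $\mathcal{B}_\rho(x) = \{x\}$ for every $x \in \Xcal$, since no other point of $\Xcal$ lies within distance $\rho < \gamma$. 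The crucial convention here is that $\mathcal{B}_\rho(x)$ is intersected with $\Xcal$, which is exactly what makes finiteness bite.

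Once the ball is a singleton, I would unwind the definition of the generalized alternative set: $\lnot \mathcal{B}_\rho(x) = \lnot \{x\} = \{\bm\lambda \in \Mcal : x \notin \Xcal^\star(\bm\lambda)\} = \lnot x$. Nonemptiness of $\lnot \mathcal{B}_\rho(x)$ then follows immediately from the standing (WLOG) assumption that $\lnot x \ne \emptyset$ for every $x \in \Xcal$. Since the two alternative sets coincide, the infima defining $D(\bm\mu, \bm\omega, \cdot)$ are taken over identical sets, giving $D(\bm\mu, \bm\omega, \lnot \mathcal{B}_\rho(x)) = D(\bm\mu, \bm\omega, \lnot x)$ and hence $D(\bm\mu, \bm\omega, \lnot \mathcal{B}_\rho(x)) - D(\bm\mu, \bm\omega, \lnot x) = 0 \le \epsilon$. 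This holds uniformly over all $\bm\mu \in \Theta^K$, $\bm\omega \in \Delta_K$, and $x \in \Xcal$, as required.

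There is essentially no analytic obstacle: unlike the continuous-correspondence route via \Cref{theo:ass-conv-and-continuity}, where the delicate point is securing a uniform $\rho$ across all alternative models $\bm\lambda$, the finite setting sidesteps continuity entirely by collapsing the perturbed set $\lnot \mathcal{B}_\rho(x)$ back onto $\lnot x$. A pleasant consequence worth noting is that a single $\rho$ works for \emph{every} $\epsilon > 0$ simultaneously, because the gap is exactly zero rather than merely small; the ``for all sufficiently small $\epsilon$'' quantifier in \Cref{ass:conv} is thus satisfied vacuously strongly. The only care needed is the edge case $|\Xcal|=1$, handled trivially above.
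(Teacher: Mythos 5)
Your proof is correct and follows essentially the same route as the paper's: both arguments use finiteness to choose $\rho$ small enough that $\mathcal{B}_\rho(x)\cap\Xcal=\{x\}$, so $\lnot\mathcal{B}_\rho(x)=\lnot x$, the nonemptiness follows from the standing assumption, and the divergence gap is exactly zero. The only cosmetic difference is that you take the minimal pairwise separation while the paper takes a minimum over per-point radii $\rho_x$; these are interchangeable.
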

\begin{proof}
    We first show that there exists $\rho > 0$ such that $\neg \mathcal{B}_{\rho}(x) \ne \emptyset$.
    Since $\Xcal$ is finite, for any $x \in \Xcal$, there exists $\rho_x > 0$ such that $\mathcal{B}_{\rho_x}(x) = \{x\}$. Then, taking ${\rho} = \min_{x \in \Xcal} \rho_x$, we obtain that $\neg \mathcal{B}_{\rho}(x) = \neg x,~\forall x \in \Xcal$. Furthermore, $\neg x \ne \emptyset$ holds by design otherwise one could simply always return such $x$.

    Now, taking ${\rho} = \min_{x \in \Xcal} \rho_x$, for all $\epsilon > 0$ it follows that:
    \begin{align*}
        D(\bm\mu, \bm\omega, \neg \mathcal{B}_\rho(x)) - D(\bm\mu, \bm\omega, \neg x) = 0 \le \epsilon,
    \end{align*}
    thus concluding the proof.
\end{proof}

\subsection{Composition of Learning Problems}\label{app:circuit}
We now consider the composition of learning problems, where, for each bandit $\bm\mu \in \Theta^K$ the set of correct answer $\Xcal^{\star}(\bm\mu)$ is the product correspondence of two correspondences $\Xcal_1^{\star}(\bm\mu) \times \Xcal_2^{\star}(\bm\mu)$, where $\Xcal^{\star}_1(\bm\mu) \subseteq \Xcal_1$, $\Xcal^{\star}_2(\bm\mu) \subseteq \Xcal_2$, and $\Xcal = \Xcal_1 \times \Xcal_2$. Specifically, we consider
\begin{align*}
    \Xcal^{\star}(\bm\mu) = \{x=(x_1, x_2) \in \Xcal: x_1 \in \Xcal^{\star}_1(\bm\mu) \text{ and } x_2 \in \Xcal^{\star}_2(\bm\mu) \} = \Xcal^{\star}_1(\bm\mu) \times \Xcal^{\star}_2(\bm\mu).
\end{align*}

For the sake of the example, consider the case where we want to estimate the $\min_{k \in [K]} \mu_k$ up to an $\epsilon$-factor, together with the problem of finding an arm $k$ that satisfies $\mu_k > \argmax_{j \in [K]} \mu_k - \epsilon$. This problem is a joint combination of a regression problem together with a multiple (but finite) correct answers one. As we show in this section, this sort of problem is regular. 

Indeed, suppose that \Cref{ass:compact}, \Cref{ass:identify} and \Cref{ass:conv} hold for the learning problems defined by $\Xcal^{\star}_1$ and $\Xcal^{\star}_2$ independently. Furthermore, let $\neg_1, \neg_2$ be the alternative set correspondences related to $\mathcal{X}^{\star}_1$ and $\mathcal{X}^{\star}_2$, respectively. Similarly, let $\neg$ be the correspondence related to the product correspondence $\Xcal^{\star}$. Suppose that $\neg_1 x_1 \ne \emptyset$ for all $x_1 \in \mathcal{X}_1$ and $\neg_2 x_2 \ne \emptyset$ for all $x_2 \in \mathcal{X}_2$, so that $\neg x \ne \emptyset$ as well. 

We now show that the learning problem defined by $\Xcal^{\star}$ inherits \Cref{ass:compact}, \Cref{ass:identify} and \Cref{ass:conv} directly from $\Xcal^{\star}_1$ and $\Xcal^{\star}_2$.

\begin{lemma}
    \Cref{ass:compact} holds; namely, $\Xcal$ is compact and $\Xcal^{\star}$ is compact-valued.
\end{lemma}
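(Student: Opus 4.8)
The plan is to reduce the claim to the elementary topological fact that finite Cartesian products preserve compactness, applied twice: once to the ambient answer spaces and once to the values of the correspondence. Since we are assuming that \Cref{ass:compact} holds for the two sub-problems defined by $\Xcal_1^\star$ and $\Xcal_2^\star$, we already know that $\Xcal_1$ and $\Xcal_2$ are compact and that $\bm\mu \mapstoto \Xcal_1^\star(\bm\mu)$ and $\bm\mu \mapstoto \Xcal_2^\star(\bm\mu)$ are compact-valued. All that remains is to transport these two facts through the product construction.

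First I would handle the compactness of the ambient space $\Xcal = \Xcal_1 \times \Xcal_2$. Here I would simply invoke that a finite product of compact sets is compact. Concretely, since the problems live in Euclidean space, $\Xcal_1 \subseteq \Reals^{d_1}$ and $\Xcal_2 \subseteq \Reals^{d_2}$ are each closed and bounded by Heine--Borel, hence their product is a closed and bounded subset of $\Reals^{d_1+d_2}$ and therefore compact; equivalently one can cite Tychonoff's theorem for the (finite) product topology. This gives the first half of \Cref{ass:compact}.

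Next I would establish compact-valuedness of the product correspondence. I would fix an arbitrary $\bm\mu \in \Theta^K$. By the compact-valuedness of $\Xcal_1^\star$ and $\Xcal_2^\star$, the two sets $\Xcal_1^\star(\bm\mu)$ and $\Xcal_2^\star(\bm\mu)$ are compact. Applying exactly the same finite-product argument as above, the set
\[
\Xcal^\star(\bm\mu) = \Xcal_1^\star(\bm\mu) \times \Xcal_2^\star(\bm\mu)
\]
is compact. Since $\bm\mu$ was arbitrary, the correspondence $\bm\mu \mapstoto \Xcal^\star(\bm\mu)$ is compact-valued, completing the proof.

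I do not expect any genuine obstacle here: the statement is purely set-topological and follows immediately from the stability of compactness under finite products, with no interaction between the two sub-problems required. The only thing to be slightly careful about is ensuring the product is taken in the correct (product) topology so that the Heine--Borel / Tychonoff conclusion applies, but in the Euclidean setting of the paper this is automatic.
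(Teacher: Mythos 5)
Your proof is correct and follows essentially the same route as the paper: both argue that $\Xcal = \Xcal_1 \times \Xcal_2$ is compact because finite products of compact sets are compact, and then apply the same fact pointwise to $\Xcal^\star(\bm\mu) = \Xcal_1^\star(\bm\mu) \times \Xcal_2^\star(\bm\mu)$. The extra remarks about Heine--Borel and the product topology are fine but not needed beyond what the paper already states.
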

\begin{proof}
    Given two compact sets $A,B$, $A \times B$ is compact.
    Thus, since $\Xcal_1, \Xcal_2$ are compact, $\Xcal = \Xcal_1 \times \Xcal_2$ is compact as well. Furthermore, since $\Xcal^{\star}(\bm\mu) = \Xcal^{\star}_1(\bm\mu) \times \Xcal^{\star}_2(\bm\mu)$, and $\Xcal^{\star}_1(\bm\mu), \Xcal^{\star}_2(\bm\mu)$ are compact, then $\Xcal^{\star}(\bm\mu)$ is compact as well.
\end{proof}

\begin{lemma}
    \Cref{ass:identify} holds; namely for all $\bm\mu \in \mathcal{M}$, there exists $\bar{x} \in \mathcal{X}^{\star}(\bm\mu)$ such that $\bm\mu \notin \cl(\neg \bar{x})$. 
\end{lemma}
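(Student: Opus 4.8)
The plan is to lift the witnesses provided by \Cref{ass:identify} for each factor problem and combine them coordinatewise. First, by \Cref{ass:identify} applied separately to the learning problems defined by $\Xcal^{\star}_1$ and $\Xcal^{\star}_2$, for the given $\bm\mu \in \mathcal{M}$ I obtain a point $\bar{x}_1 \in \Xcal^{\star}_1(\bm\mu)$ with $\bm\mu \notin \cl(\neg_1 \bar{x}_1)$ and a point $\bar{x}_2 \in \Xcal^{\star}_2(\bm\mu)$ with $\bm\mu \notin \cl(\neg_2 \bar{x}_2)$. I then set $\bar{x} = (\bar{x}_1, \bar{x}_2)$; since $\Xcal^{\star}(\bm\mu) = \Xcal^{\star}_1(\bm\mu) \times \Xcal^{\star}_2(\bm\mu)$, membership of both coordinates gives $\bar{x} \in \Xcal^{\star}(\bm\mu)$, so $\bar{x}$ is a legitimate candidate answer.

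The crucial step is to express the alternative set of $\bar{x}$ in terms of the factor alternative sets. Unwinding the definitions, $\bm\lambda \in \neg \bar{x}$ means $\bar{x} \notin \Xcal^{\star}(\bm\lambda) = \Xcal^{\star}_1(\bm\lambda) \times \Xcal^{\star}_2(\bm\lambda)$, which, by the very definition of a Cartesian product, is equivalent to $\bar{x}_1 \notin \Xcal^{\star}_1(\bm\lambda)$ \emph{or} $\bar{x}_2 \notin \Xcal^{\star}_2(\bm\lambda)$, that is, $\bm\lambda \in \neg_1 \bar{x}_1 \cup \neg_2 \bar{x}_2$. Hence I obtain the set identity $\neg \bar{x} = \neg_1 \bar{x}_1 \cup \neg_2 \bar{x}_2$.

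Finally, I would invoke the elementary topological fact that the closure of a \emph{finite} union equals the union of the closures, which yields $\cl(\neg \bar{x}) = \cl(\neg_1 \bar{x}_1) \cup \cl(\neg_2 \bar{x}_2)$. Since, by the choice of $\bar{x}_1$ and $\bar{x}_2$, the point $\bm\mu$ belongs to neither $\cl(\neg_1 \bar{x}_1)$ nor $\cl(\neg_2 \bar{x}_2)$, it does not belong to their union, i.e., $\bm\mu \notin \cl(\neg \bar{x})$. This is exactly the statement of \Cref{ass:identify} for the product correspondence.

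I do not expect a serious obstacle here: the only non-formal ingredient is the set identity $\neg \bar{x} = \neg_1 \bar{x}_1 \cup \neg_2 \bar{x}_2$, and this is purely a consequence of the product structure of $\Xcal^{\star}$ together with the logical negation of a conjunction. The main point to be careful about is that all three alternative correspondences $\neg_1$, $\neg_2$, and $\neg$ are computed over the same ambient set $\mathcal{M}$, so that the union and the closure above are taken in a single common space; this is guaranteed by the standing assumption that the two factor problems share the bandit-model set $\mathcal{M}$.
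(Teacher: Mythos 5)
Your proposal is correct and follows essentially the same route as the paper's proof: lift the witnesses $\bar{x}_1,\bar{x}_2$ from the factor problems, establish the identity $\neg \bar{x} = \neg_1 \bar{x}_1 \cup \neg_2 \bar{x}_2$ from the product structure, and conclude via $\cl(A \cup B) = \cl(A) \cup \cl(B)$. No gaps.
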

\begin{proof}
    For all $\bm\mu \in \mathcal{M}$, there exists $\bar{x}_1 \in \mathcal{X}_1^{\star}(\bm\mu)$ such that $\bm\mu \notin \cl(\neg_1 \bar{x}_1)$, and, moreover, there exists $\bar{x}_2 \in \mathcal{X}^{\star}_2(\bm\mu)$ such that $\bm\mu \notin \cl(\neg_2 \bar{x}_2)$.

    Now, consider $\bar{x} = (\bar{x}_1, \bar{x}_2) \in \mathcal{X}^{\star}(\bm\mu)$. We verify that $\bar{x} \notin \cl(\neg \bar{x})$. Specifically,
    \begin{align*}
        \cl(\neg \bar{x}) & = \cl\left( \{ \bm\lambda \in \mathcal{M}: \bar{x} \notin \mathcal{X}^{\star}(\bm\mu) \} \right) \\ &
        = \cl\left( \{  \bm\lambda \in \mathcal{M}: \bar{x}_1 \notin \mathcal{X}_1^{\star}(\bm\mu) \text{ or } \bar{x}_2 \notin \mathcal{X}^{\star}_{2}(\bm\mu)  \} \right) \\ & 
        = \cl\left( \{ \bm\lambda \in \mathcal{M}: \bar{x}_1 \notin \mathcal{X}^{\star}_1(\bm\mu) \} \cup \{ \bm\lambda \in \mathcal{M}: \bar{x}_2 \notin \mathcal{X}^{\star}_2(\bm\mu) \} \right) \\ & 
        = \cl(\neg_1 \bar{x}_1) \cup \cl(\neg_2 \bar{x}_2),
    \end{align*}
    where in the last step we used that, for any two sets $A,B$, $\cl(A \cup B) = \cl(A) \cup \cl(B)$.

    Now, since $\bm\mu \notin \cl(\neg_1 \bar{x}_1)$ and $\bm\mu \notin \cl(\neg_2 \bar{x}_2)$, it follows that $\bm\mu \notin \cl(\neg \bar{x})$, thus concluding the proof.
\end{proof}

\begin{lemma}
    \Cref{ass:conv} holds; namely, for all $\epsilon > 0$ sufficiently small, there exists $\rho > 0$ such that $\neg \mathcal{B}_{\rho}(x) \ne \emptyset$ and for all $\bm\mu \in \Theta^K, \bm\omega \in \Delta_K, x \in \mathcal{X}$, it holds that $D(\bm\mu, \bm\omega, \neg \mathcal{B}_{\rho}(x)) - D(\bm\mu, \bm\omega, \neg x) \le \epsilon$.
\end{lemma}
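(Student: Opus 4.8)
The plan is to reduce the product problem entirely to its two components by showing that the alternative sets decompose as unions. First I would record the identity already implicit in the previous lemma: for $x=(x_1,x_2)$, since $x\in\Xcal^\star(\bm\lambda)$ iff $x_1\in\Xcal_1^\star(\bm\lambda)$ and $x_2\in\Xcal_2^\star(\bm\lambda)$, a model $\bm\lambda$ fails to have $x$ correct iff at least one coordinate is incorrect, i.e. $\neg x=\neg_1 x_1\cup\neg_2 x_2$. The crucial step is to establish the analogous identity at positive radius,
\[
\neg\Bcal_\rho(x)=\neg_1\Bcal_\rho(x_1)\cup\neg_2\Bcal_\rho(x_2),
\]
where I work with the $\ell_\infty$-norm on the product $\Xcal=\Xcal_1\times\Xcal_2$ so that $\Bcal_\rho(x)=\Bcal_\rho(x_1)\times\Bcal_\rho(x_2)$ exactly (any fixed norm can be accommodated, but this choice makes the ball factorize cleanly).

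The hard part is this radius-$\rho$ identity, since it requires a quantifier manipulation over the product. By definition $\bm\lambda\notin\neg\Bcal_\rho(x)$ iff there is $x'=(x_1',x_2')\in\Bcal_\rho(x_1)\times\Bcal_\rho(x_2)$ with $x'\in\Xcal^\star(\bm\lambda)$, i.e. $x_1'\in\Xcal_1^\star(\bm\lambda)$ and $x_2'\in\Xcal_2^\star(\bm\lambda)$. Because $x_1'$ and $x_2'$ range over the two factors \emph{independently}, this existential factorizes into $\Bcal_\rho(x_1)\cap\Xcal_1^\star(\bm\lambda)\ne\emptyset$ together with $\Bcal_\rho(x_2)\cap\Xcal_2^\star(\bm\lambda)\ne\emptyset$, which is exactly $\bm\lambda\notin\neg_1\Bcal_\rho(x_1)$ and $\bm\lambda\notin\neg_2\Bcal_\rho(x_2)$. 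Taking complements in $\Mcal$ gives the union identity. I expect this factorization step, namely that the two coordinates of the witness may be chosen separately, to be the only genuinely non-mechanical point.

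With both alternative sets written as unions, I would invoke the elementary fact that an infimum over a union is the smaller of the two infima, $D(\bm\mu,\bm\omega,A\cup B)=\min\{D(\bm\mu,\bm\omega,A),D(\bm\mu,\bm\omega,B)\}$, to obtain
\[
D(\bm\mu,\bm\omega,\neg\Bcal_\rho(x))=\min_{i}D(\bm\mu,\bm\omega,\neg_i\Bcal_\rho(x_i)),\qquad D(\bm\mu,\bm\omega,\neg x)=\min_{i}D(\bm\mu,\bm\omega,\neg_i x_i).
\]
Now I apply \Cref{ass:conv} to each component: given small $\epsilon$, pick $\rho_1,\rho_2$ so that $D(\bm\mu,\bm\omega,\neg_i\Bcal_{\rho_i}(x_i))-D(\bm\mu,\bm\omega,\neg_i x_i)\le\epsilon$ uniformly, and set $\rho=\min(\rho_1,\rho_2)$. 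Since shrinking $\rho$ enlarges $\neg_i\Bcal_\rho(x_i)$ and hence lowers its infimum, the map $\rho\mapsto D(\bm\mu,\bm\omega,\neg_i\Bcal_\rho(x_i))$ is nondecreasing, so the per-component bound persists at $\rho\le\rho_i$ and $\neg_i\Bcal_\rho(x_i)\supseteq\neg_i\Bcal_{\rho_i}(x_i)\ne\emptyset$, guaranteeing $\neg\Bcal_\rho(x)\ne\emptyset$. Writing $D_i^\rho$ and $D_i^0$ for the two infima, from $D_i^\rho\le D_i^0+\epsilon$, monotonicity of $\min$, and the translation identity $\min_i(D_i^0+\epsilon)=\min_i D_i^0+\epsilon$, I conclude $\min_i D_i^\rho\le\min_i D_i^0+\epsilon$, which is precisely $D(\bm\mu,\bm\omega,\neg\Bcal_\rho(x))-D(\bm\mu,\bm\omega,\neg x)\le\epsilon$, completing the argument.
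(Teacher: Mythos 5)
Your proof is correct and follows essentially the same route as the paper: decompose $\neg x$ and $\neg\Bcal_\rho(x)$ as unions of the component alternative sets (using the $\ell_\infty$ product structure of the ball for the quantifier factorization), take $\rho=\min(\rho_1,\rho_2)$, and pass the per-component bound through the min-over-union identity for $D$. The only cosmetic difference is that you apply the min identity symmetrically to both sides and use monotonicity of $\min$, whereas the paper assumes WLOG $D(\neg_1 x_1)\le D(\neg_2 x_2)$ and bounds the union's divergence by that of a single component; both are equally valid.
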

\begin{proof}
    We first show that there exists $\rho$ such that $\neg \mathcal{B}_\rho(x) \ne \emptyset,~\forall x \in \mathcal{X}$.
    Given $x=(x_1,x_2)$, we have:
    \begin{align*}
        \neg \mathcal{B}_{\rho}(x) & = \{ \bm\lambda \in \mathcal{M}: \forall \bar{x} \in \mathcal{B}_{\rho}(x), \bar{x} \notin \mathcal{X}^{\star}(\bm\lambda) \} \\ &
        = \{ \bm\lambda \in \mathcal{M}: \forall \bar{x}=(\bar{x}_1, \bar{x}_2) \in \mathcal{B}_{\rho}(x), \bar{x}_1 \notin \mathcal{X}^{\star}_1(\bm\lambda) \text{ or } \bar{x}_2 \notin \mathcal{X}^{\star}_2(\bm\lambda) \} \\ &
        = \{ \bm\lambda \in \mathcal{M}: \forall \bar{x}_1 \in \mathcal{B}_{\rho}(x_1), \bar{x}_1 \notin \mathcal{X}^{\star}_1(\bm\lambda)\} \cup \{ \bm\lambda \in \mathcal{M}: \forall \bar{x}_2 \in \mathcal{B}_{\rho}(x_2), \bar{x}_2 \notin \mathcal{X}^{\star}_2(\bm\lambda)\} \\ & 
        = \left( \neg_1 \mathcal{B}_\rho(x_1) \right) \cup \left( \neg_2 \mathcal{B}_{\rho}(x_2) \right).
    \end{align*}
    Now, since \Cref{ass:conv} holds for $\mathcal{X}^{\star}_1, \mathcal{X}^{\star}_2$, there exists $\bar{\rho}_1, \bar{\rho}_2 > 0$ such that $\neg_1 \mathcal{B}_{\rho}(x_1), \neg_2 \mathcal{B}_\rho(x_2)$ are both non empty for all $\rho \le \min\{ \bar{\rho}_1, \bar{\rho}_2 \}$.\footnote{Notice that one could take also the maximum of $\bar{\rho}_1, \bar{\rho}_2$, since $\neg \mathcal{B}_{\rho}(x)$ is the union of the two $\neg_1 \mathcal{B}_{\rho}(x_1)$ and $\neg_2 \mathcal{B}_\rho(x_2)$.}
    
    Now, we continue by analyzing the difference in the divergences. Proceeding as above, we have that:
    \begin{align*}
        & \neg x = \neg_1 x_1 \cup \neg_2 x_2 \\ 
        & \neg \mathcal{B}_{\rho}(x) = \neg_1 \mathcal{B}_{\rho}(x_1) \cup \neg_2 \mathcal{B}_{\rho}(x_2).
    \end{align*}
    Now, let $\epsilon > 0$, and set ${\rho} = \min \{ {\rho}_1, \rho_2 \}$, where $\rho_1$ and $\rho_2$ are such that:
    \begin{align}
        & D(\bm\mu, \bm\omega, \neg_1 \mathcal{B}_{\rho_1}(x_1)) - D(\bm\mu, \bm\omega, \neg_1 x_1) \le \epsilon \quad \forall \bm\mu \in \Theta^K, \bm\omega \in \Delta_K, x_1 \in \mathcal{X}_1 \label{eq:ass-4-proof-circuit} \\
        & D(\bm\mu, \bm\omega, \neg_2 \mathcal{B}_{\rho_2}(x_2)) - D(\bm\mu, \bm\omega, \neg_2 x_2) \le \epsilon \quad \forall \bm\mu \in \Theta^K, \bm\omega \in \Delta_K, x_2 \in \mathcal{X}_2
    \end{align}
    
    Then, let $\bm\mu \in \Theta^K$, $\bm\omega \in \Delta_K$, $x=(x_1, x_2) \in \mathcal{X}$.
    For readability, since $\bm\mu, \bm\omega$ are fixed, we omit them from the notation $D(\cdot, \cdot, \cdot)$ in the rest of this proof. Suppose without loss of generality that $D(\neg_1 x_1) \le D(\neg_2 x_2)$.
    It holds that:
    \begin{align*}
        D(\neg \mathcal{B}_{\rho}(x)) - D(\neg x) & = D(\neg_1 \mathcal{B}_{\rho}(x_1) \cup \neg_2 \mathcal{B}_{\rho}(x_2) ) - D(\neg_1 x_1 \cup \neg_2 x_2) \\ & 
        = D(\neg_1 \mathcal{B}_{\rho}(x_1) \cup \neg_2 \mathcal{B}_{\rho}(x_2) ) - \min\{ D(\neg_1 x_1), D(\neg_2 x_2) \} \\ & 
        = D(\neg_1 \mathcal{B}_{\rho}(x_1) \cup \neg_2 \mathcal{B}_{\rho}(x_2) ) - D(\neg_1 x_1) \\ & 
        \le D(\neg_1 \mathcal{B}_{\rho}(x_1) ) - D(\neg_1 x_1) \\
        & \le  D(\neg_1 \mathcal{B}_{\rho_1}(x_1) ) - D(\neg_1 x_1) \\ & \le \epsilon
    \end{align*}
    where (i) in the first step we used the decomposition of $\neg$ into $\neg_1$ and $\neg_2$, (ii) in the second one, \Cref{lemma:split-optim}, (iii) in the third one $D(\neg_1 x_1) \le D(\neg_2 x_2)$, (iv) in the forth one $\neg_1 \mathcal{B}_\rho(x_1) \subseteq \neg_1 \mathcal{B}_\rho(x_1) \cup \neg_2 \mathcal{B}_\rho(x_2)$, and (v) in the last step \Cref{eq:ass-4-proof-circuit}.
\end{proof}

\subsection{$\epsilon$-Nash Equilibrium in Two-player Zero-sum Games}\label{app:nash}
We now consider the problem of learning an $\epsilon$-Nash equilibrium point in a two player zero-sum stochastic matrix game \citep{zhou2017identify,maiti2023instance,maiti2024near,maiti2025open}. Specifically, we consider the following setting. We let $\Theta^K = \mathcal{M} = [-1,1]^{n \times n}$ where $n \in \mathbb{N}$ denotes the number of actions for each player and we suppose that each distribution is \eg Gaussian.\footnote{The following argument can easily be extended to the $n \times m$ setting and to other bounded domains outside the $[-1,1]$ interval provided that $\Theta$ is a closed interval strictly contained in an open one.} Each point $\bm\mu \in \Mcal$ denotes the mean of a stochastic payoff matrix whose distributions are in the underlying canonical exponential family. At each round, the learner controls both the min player and the max player by selecting an action $(I_t, J_t) \in n \times n$, and it receives as payoff $R_t \sim \nu_{i,j}$, where $\nu_{i,j}$ is the distribution with mean $\mu_{i,j}$. In the rest of this section, we slightly overload the notation and we denote by $x$ the min player and by $y$ the max player. Let $\mathcal{X} = \Delta_n \times \Delta_n$ be the answer space. Then, we define the agent's goal as follows:
\begin{align}\label{eq:nash-eq1}
    \mathcal{X}^\star(\bm\mu) = \left\{ (x,y) \in \Delta_n^2: \max_{j \in [n]} (x^\top \bm\mu)_j - \min_{i \in [n]} (\bm\mu y )_j \le \epsilon \right\},
\end{align}
where $\epsilon > 0$ is the desired accuracy parameter and $(\bm\mu y)_i = \sum_{j \in [n]} y_j \mu_{i,j}$ and $(x^\top \bm\mu)_j = \sum_{i \in [n]} x_i \mu_{i,j}$.

We now show that this learning task is a regular pure exploration problem.

\begin{lemma}
    \Cref{ass:compact} holds; namely, $\mathcal{X}$ is compact and $\mathcal{X}^\star$ is compact valued.
\end{lemma}
\begin{proof}
    Recall that $\mathcal{X} = \Delta_n \times \Delta_n$, which is clearly compact, and $\mathcal{X}^\star(\bm\mu)$ is compact as well.
\end{proof}

\begin{lemma}
    \Cref{ass:identify} holds; namely, for all $\bm\mu \in \mathcal{M}$, there exists $(\bar x, \bar y) \in \mathcal{X}^\star(\bm\mu)$ such that $\bm\mu \notin \cl(\neg (\bar x, \bar y))$.
\end{lemma}
\begin{proof}
    Fix $\bm\mu \in \mathcal{M}$ and consider $(x^\star, y^\star) \in \Delta_n^2$ such that $(x^\star, y^\star)$ is an exact Nash equilibrium for $\bm\mu$ (guaranteed to exists for each payoff matrix $\bm\mu$). At $(x^\star, y^\star)$ it holds that $(x^\star)^\top \bm\mu e_j \le (x^\star)^\top \bm\mu y^\star \le e_i^\top \bm\mu y^\star$. This, in turns, implies that, $(x^\star, y^\star) \in \Xcal^\star(\bm\mu)$. We now prove that $\bm\mu \notin \cl(\neg (x^\star, y^\star))$. Recall that:
    \begin{align*}
        \cl(\neg (x^\star, y^\star)) & = \cl\left( \left\{ \bm\lambda \in \mathcal{M}: (x^\star, y^\star) \notin \mathcal{X}^\star(\bm\lambda) \right\} \right) \\
        & = \cl\left( \left\{ \bm\lambda \in \mathcal{M}: \max_{j \in [n]} ( (x^\star)^\top \bm\lambda )_j - \min_{i \in [n]} (\bm\lambda y^\star)_i > \epsilon \right\} \right),
    \end{align*}
    and proceed by contradiction. Assume that $\bm\mu \in \cl(\neg (x^\star, y^\star))$. Then, there must exist a sequence $\{\bm\mu_n\}_n \to \bm\mu$ such that $\bm\mu_n \ne \bm\mu$ and $\bm\mu_n \in \cl(\neg (x^\star, y^\star)) ~\forall n$. However, if $\bm\mu_n \to \bm\mu$, the continuity of $\bm\lambda \to \max_{j \in [n]} ( (x^\star)^\top \bm\lambda )_j - \min_{i \in [n]} (\bm\lambda y^\star)_i$, would imply that 
    \begin{align*}
        \max_{j \in [n]} ( (x^\star)^\top \bm\mu_n )_j - \min_{i \in [n]} (\bm\mu_n y^\star)_i \to \max_{j \in [n]} ( (x^\star)^\top \bm\mu )_j - \min_{i \in [n]} (\bm\mu y^\star)_i = 0.
    \end{align*}
    However, for all $n$, $\max_{j \in [n]} ( (x^\star)^\top \bm\mu_n )_j - \min_{i \in [n]} (\bm\mu_n y^\star)_i > \epsilon > 0$, thus leading to a contradiction.
\end{proof}

\begin{lemma}\label{lem:nashLHC}
    \Cref{ass:conv} holds; namely, for all $\epsilon > 0$ sufficiently small, there exists $\rho > 0$ such that $\neg \mathcal{B}_\rho((x,y)) \ne \emptyset$ and for all $\bm\mu \in \Theta^K, \bm\omega \in \Delta_K, (x,y) \in \mathcal{X}$, it holds that $D(\bm\mu, \bm\omega, \neg \mathcal{B}_\rho((x,y))) - D(\bm\mu, \bm\omega, \neg (x,y)) \le \epsilon$.
\end{lemma}
\begin{proof}
    We this result using \Cref{theo:ass-conv-and-continuity}. Thus, we only need to prove the continuity of the correspondence $\bm\mu \mapstoto \Xcal^\star(\bm\mu)$.
    We start by proving that \Cref{eq:nash-eq1} is upper hemicontinuous.\footnote{Before reading the proof, the we reader might find useful to read \Cref{app:assumptions}.} To do that, we note that $\mathcal{X}$ is compact and we use \Cref{th:uhc-closed-graph}, which states that $\mathcal{X}^{\star}(\bm\mu)$ is upper hemicontinuous if its graph is closed. Thus, we need to prove that
    \begin{align*}
        \Gr( \Xcal^\star) = \left\{ (\bm\mu, (x,y)) \in [-1,1]^{n \times n} \times \Delta_n^2: \max_{j \in [n]} (x^\top \bm\mu)_j - \min_{i \in [n]} (\bm\mu y )_j \le \epsilon  \right\}
    \end{align*}
    is closed, \ie that it contains all its limit points. Let $(\bar{\bm\mu}, (\bar x, \bar y))$ be any limit point of $\Gr( \Xcal^\star)$. Then, there exists $\{(\bm\mu_n, (x_n, y_n)) \}_n$ such that (i) $(\bm\mu_n, (x_n, y_n)) \in \Gr( \Xcal^\star)$, (ii) $(\bm\mu_n, (x_n, y_n)) \ne (\bar{\bm\mu}, (\bar x, \bar y))$ and (iii) $(\bm\mu_n, (x_n, y_n)) \to (\bar{\bm\mu}, (\bar x, \bar y))$. Let us introduce some notation for brevity. Specifically, we define the following function $(\bm\mu, (x,y)) \to F(\bm\mu, (x,y))$ 
    \begin{align*}
        F(\bm\mu, (x,y)) = \max_{j \in [n]} (x^\top \bm\mu)_j - \min_{i \in [n]} (\bm\mu y )_j. 
    \end{align*}
    Then, let $\bar F = F(\bar{\bm\mu}, (\bar x, \bar y))$ and $F_n = (\bm\mu_n, (x_n, y_n))$.
    Now, suppose that $\bar F = \bar \epsilon > \epsilon$, \ie that $(\bar{\bm\mu}, (\bar x, \bar y)) \notin \Gr(\mathcal{X}^\star(\bm\mu))$). For any $n$ it holds that:
    \begin{align*}
        \bar F & = \bar F - F_n + F_n  
         \le |\bar F - F_n | + \epsilon,
    \end{align*}
    where we used that $F_n \le \epsilon$ since $(\bm\mu_n, (x_n, y_n)) \in \Gr( \Xcal^\star)$. Now, from the continuity of $(\bm\mu, x,y) \to F_n$, we have that $F_n \to \bar F$ as $n \to +\infty$. One can now pick $n$ such that $|\bar F - F_n| \le (\bar \epsilon - \epsilon)/2$, thus leading to $\bar F \le (\bar \epsilon - \epsilon)/2 + \epsilon < \bar \epsilon$, hence contradicting the fact that $\bar F = \bar \epsilon > \epsilon$. Thus, the graph is closed and \Cref{eq:nash-eq1} is upper hemicontinuous.
    
    We now continue by proving the lower hemicontinuity. We start with a preliminary consideration. Specifically, we observe that $\Xcal^{\star}(\bm\mu)$ is a polytope with non-empty interior for all $\bm\mu \in [-1,1]^{n \times n}$. Indeed, we have that:
    \begin{align*}
        \mathcal{X}^\star(\bm\mu) & = \left\{ (x,y) \in \Delta_n^2: \max_{j \in [n]} (x^\top \bm\mu)_j - \min_{i \in [n]} (\bm\mu y )_j \le \epsilon \right\} \\
        & = \left\{ (x,y) \in \Delta_n^2: (x^\top \bm\mu)_j - (\bm\mu y)_i \le \epsilon, \forall i,j \in [n]^2 \right\}.
    \end{align*}
    Now, for a fixed $\bm\mu$, each constraint is linear in $x,y$ and $\Delta_n^2$ is in itself a polytope. We now prove that the interior is non-empty. Here, it is sufficient to consider an exact Nash equilibrium point $(x^\star, y^\star)$; at $(x^\star, y^\star)$ it holds that $(x^\star)^\top \bm\mu e_j \le (x^\star)^\top \bm\mu y^\star \le e_i^\top \bm\mu y^\star$, and thus $F(\bm\mu, (x^\star, y^\star)) = 0 < \epsilon$ and the interior is non-empty. Furthermore, for a fixed $\bm\mu$, $F(\bm\mu, (x,y))$ is convex in $(x,y)$. Indeed, we can rewrite $F$ as follows:
    \begin{align*}
        F(\bm\mu, (x,y)) = \max_{i, j \in [n]^2} \left( (x^\top \bm\mu)_j - (\bm\mu y)_i\right),
    \end{align*}
    which is a maximum over a finite set of linear function, which is known to be convex.
    
    To prove the lower hemicontinuity, we need to show that, for any $\bm\mu \in [-1, 1]^{n \times n}$ and any open set $\mathcal{V} \in \Delta_n^2$ such that $\Xcal^{\star}(\bm\mu) \cap \mathcal{V} \ne \emptyset$, there exists a neighbourhood $U$ of $\bm\mu$ such that $\forall \bm\mu' \in U$, $\Xcal^{\star}(\bm\mu') \cap \mathcal{V} \ne \emptyset$. Fix a neighbourhood $U$ of $\bm\mu$ or radius $\kappa$ (we will pick an appropriate $\kappa$ later in the proof), \ie $\| \bm\mu - \bm\mu' \|_{\infty} \le \kappa$ for all $\bm\mu' \in U$. Then, for any $(x,y) \in \Delta_n^2$ it holds that:
    \begin{align*}
        F(\bm\mu', (x,y) ) & = \max_{j \in [n]} (x^\top \bm\mu')_j - \min_{i \in [n]} (\bm\mu' y )_j \\
        & = \max_{j \in [n]} (x^\top (\bm\mu' - \bm\mu + \bm\mu) )_j - \min_{i \in [n]} ((\bm\mu'- \bm\mu + \bm\mu) y )_j \\
        & \le  F(\bm\mu, (x,y)) + 2\kappa.
    \end{align*}
    Thus, a necessary condition for having $(x,y) \in \mathcal{X}^\star(\bm\mu')$ is that $F(\bm\mu, (x,y)) + 2\kappa \le \epsilon$.
    
    Now, fix any point $(x,y) \in \Xcal^{\star}(\bm\mu) \cap \mathcal{V}$ and let $(x^\star, y^\star)$ be an exact Nash equilibrium such that $F(\bm\mu, (x^\star, y^\star)) = 0$. Define $(x_\alpha, y_\alpha)$ as a convex combination between $(x,y)$ and $(x^\star, y^\star)$ with parameter $\alpha \in [0,1]$. Then, since $\mathcal{X}^\star(\bm\mu)$ is a polytope, $(x_\alpha, y_\alpha) \in \mathcal{X}^\star(\bm\mu)$ and, furthermore, since $(x,y) \in \mathcal{V}$ and $\mathcal{V}$ is open, there exists $\alpha \in (0,1)$ such that $(x_\alpha, y_\alpha) \in \mathcal{V}$ as well. Now, consider $F(\bm\mu, (x_\alpha, y_\alpha))$. By the convexity of $F$, it holds that:
    \begin{align*}
        F(\bm\mu, (x_\alpha, y_\alpha)) \le (1-\alpha) F(\bm\mu, (x,y)) + F(\bm\mu, (x^\star, y^\star)) = (1-\alpha) F(\bm\mu, (x,y)) \le (1-\alpha)\epsilon.
    \end{align*}
    Thus, we have that $(x_\alpha, y_\alpha) \in \mathcal{V} \cap \Xcal^{\star}(\bm\mu')$ for all $\bm\mu' \in U$ if $(1-\alpha)\epsilon + 2\kappa \le \epsilon$. Choosing $\kappa= \frac{\epsilon \alpha}{4}$ concludes the proof.
\end{proof}

\paragraph{Remark on the problem of learning $\epsilon$-Nash equilibrium}

In the proof of \Cref{lem:nashLHC}, for the lower hemicontinuity of $\Xcal^\star(\bm\mu)$, we exploited the fact that the correspondence defines a polytope for each payoff matrix $\bm\mu$. This is true only if we define $\epsilon$-Nash strategies in which the sum of the regrets is less than $\epsilon$, which has the advantage of canceling the value of the game $x^\top\bm\mu y$ (which is bilinear) from the definition. The same trick would not work for general-sum games, in which the sum of the regrets remains bilinear. We leave as future work to handle such cases.

\section{On Regular Pure Exploration Problems}\label{app:assumptions}

In this section, we show that \Cref{ass:identify} is necessary for finite sample complexity results (\Cref{subsec:ass-identify}), and we show that \Cref{ass:conv} holds for continuous correspondences (\Cref{subsec:ass-conv}).

\subsection{On \Cref{ass:identify}}\label{subsec:ass-identify}

In this section, we discuss \Cref{ass:identify}. As we anticipated in the main text, \Cref{ass:identify} is necessary for obtaining finite sample-complexity results. Indeed, even for problems with finitely many possible answers, the failure of \Cref{ass:identify} leads to infinite lower bounds on the sample complexity.
This is a direct consequence of Theorem 1 in \cite{degenne2019pure}. 

Indeed, suppose that $\mathcal{X}$ is finite and that \Cref{ass:identify} does not hold. Then, there exists $\bar{\bm\mu} \in \Mcal$ such that $\bar{\bm\mu} \in \cl(\neg x)$ for all $x \in \Xcal^{\star}(\bar{\bm\mu})$. However, from Theorem 1 in \cite{degenne2019pure}, this would imply that:
\begin{align*}
    & \liminf_{\delta \rightarrow 0} \frac{\E_{\bar{\bm\mu}}[\tau_\delta]}{\log(1/\delta)} \ge T^*(\bar{\bm\mu}) = D(\bar{\bm\mu})^{-1}, \\ 
    & D(\bar{\bm\mu}) = \max_{x \in \Xcal^{\star}(\bar{\bm\mu})} \max_{\bm\omega \in \Delta_K} \inf_{\bm\lambda \in \neg x} \sum_{k \in [K]} \omega_k d(\bar{\mu}_k, \lambda_k).
\end{align*}
However, since $\bar{\bm\mu} \in \cl(\neg x)$ for all $x \in \Xcal^{\star}(\bar{\bm\mu})$, this would lead to $D(\bar{\bm\mu}) = 0$, and hence $T^*(\bar{\bm\mu}) = +\infty$, thus leading to a lower bound with infinite sample complexity.

The above claims are related to the following result, showing that $D(\bm\mu, \neg \widetilde{\mathcal{X}}) > 0$ holds if and only if $\bm\mu \notin \cl(\neg \widetilde{\mathcal{X}})$.

\begin{lemma}[Strictly positive divergence]\label{lemma:iff-positive-div}
    Let $\bm{\mu} \in \mathcal{M}$ and $\widetilde{\Xcal} \subseteq \Xcal$. Then, $D(\bm{\mu}, \neg \widetilde{\Xcal}) > 0$ holds if and only if $\bm{\mu} \notin \cl(\neg \widetilde{\Xcal})$.
\end{lemma}
\begin{proof}

    We prove the first direction by contradiction.
    Suppose that $D(\bm{\mu}, \neg \widetilde{\Xcal}) > 0$, that is $\sup_{\bm{\omega} \in \Delta_K} \inf_{\bm{\lambda} \in \neg \widetilde{\Xcal}} \sum_{k \in [K]} \omega_k d(\mu_k, \lambda_k) > 0$ and suppose by contradiction that $\bm{\mu} \in \text{cl}(\neg \widetilde{\Xcal})$. Then we can take a sequence $\bm\lambda^{j}$ such that $\bm\lambda^j\to \bm\mu$ and, since $\sum_{k\in[K]}\omega_k d(\mu_k,\cdot)$ is continuous for all $\bm\omega\in\Delta_K$ and $\bm\mu\in\Mcal$, we obtain $\sum_{k\in[K]}\omega_k d(\mu_k,\lambda^j_k)\to\sum_{k\in[K]}\omega_k d(\mu_k,\mu_k)=0$, which shows that $D(\bm\mu,\neg\widetilde \Xcal)=0$.

    Now, we prove the second direction. Suppose that $\bm{\mu} \notin \cl(\neg \widetilde{\Xcal})$. Then, there exists $\epsilon > 0$, such that, for all $\bm{\lambda} \in \cl(\neg \widetilde{\Xcal})$, $\| \bm{\mu} - \bm{\lambda} \|_{\infty} \ge \epsilon$. Thus, for all $\bm{\lambda} \in \cl(\neg \widetilde{\Xcal})$, there exists $k \in [K]$ such that $|\mu_k - \lambda_k| > 0$, and consequently, $d(\mu_k, \lambda_k) > 0$. It follows that: 
    \begin{align*}
        D(\bm{\mu}, \neg \widetilde{\Xcal}) \ge \frac{1}{K} \inf_{\bm{\lambda} \in \neg \widetilde{\Xcal}} \sum_{k \in [K]} d(\mu_k, \lambda_k) > 0,
    \end{align*}
    which concludes the proof.
\end{proof}

\subsection{On \Cref{ass:conv}}\label{subsec:ass-conv}

In the following, we show that \Cref{ass:conv} holds under very mild conditions. Specifically, it is sufficient to assume that (i) $\bm\mu \mapstoto \mathcal{X}^{\star}(\bm\mu)$ is a continuous and compact-valued correspondence, and (ii) that $\mathcal{M}$ and $\mathcal{X}$ are compact sets.

The structure of this section is organized as follows. First, in \Cref{subsubsec:alt-models}, we prove that there exists $\rho > 0$ such that $\forall x \in \Xcal$, $\neg \mathcal{B}_{\rho}(x) \ne \emptyset$. As one can note, this is the first statement in \Cref{ass:conv}. Secondly, in \Cref{subsubsec:preliminaries-for-thm1}, we provide some intermediate and helper results that are used to prove the second part of \Cref{ass:conv}. Finally, in \Cref{subsubsec:cont-thm1}, we prove the second part of \Cref{ass:conv}, that is the fact that for all $\epsilon > 0$, there exists $\rho$ such that $D(\bm\mu, \bm\omega, \neg \mathcal{B}_{\rho}(x)) - D(\bm\mu, \bm\omega, \neg x) \le \epsilon$ holds uniformly across models, weights and answers.   

\subsubsection{(Extended) Alternative Models are non-empty, that is $\neg \mathcal{B}_{\rho}(x) \ne \emptyset$}\label{subsubsec:alt-models}

We start by showing that, under the assumptions above there exists ${\rho} > 0$ such that $\neg \mathcal{B}_{\rho}(x) \ne \emptyset$ for all $x \in \mathcal{X}$.

\begin{lemma}\label{lemma:empty-holds}
    There exists ${\rho} > 0$, such that, for all $x \in \Xcal$, $\lnot \Bcal_{\rho}(x) \neq \emptyset$.
\end{lemma}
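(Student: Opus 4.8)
```latex
\textbf{Proof plan for \Cref{lemma:empty-holds}.}
The goal is to find a single $\rho > 0$ that works \emph{uniformly} over all $x \in \Xcal$, ensuring $\lnot \Bcal_\rho(x) \neq \emptyset$. The plan is to argue by contradiction, exploiting compactness of $\Xcal$ (from \Cref{ass:compact}) to upgrade a family of pointwise statements into a uniform one. Recall that by assumption $\lnot x \neq \emptyset$ for every $x \in \Xcal$; the issue is only that as we fatten $x$ into the ball $\Bcal_\rho(x)$, the alternative set shrinks (since $\lnot \Bcal_\rho(x) = \{\bm\lambda : \forall x' \in \Bcal_\rho(x),\, x' \notin \Xcal^\star(\bm\lambda)\}$ requires \emph{every} nearby answer to be incorrect), and a priori it could become empty.

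First I would suppose, for contradiction, that no such uniform $\rho$ exists. Then for each $n \in \Naturals$, taking $\rho = 1/n$, there must exist some $x_n \in \Xcal$ with $\lnot \Bcal_{1/n}(x_n) = \emptyset$. Since $\Xcal$ is compact, the sequence $\{x_n\}_n$ has a convergent subsequence $x_{n_j} \to x^\star \in \Xcal$. The plan is to derive a contradiction with the fact that $\lnot x^\star \neq \emptyset$: pick any fixed $\bm\lambda^\star \in \lnot x^\star$, so that $x^\star \notin \Xcal^\star(\bm\lambda^\star)$. I then want to show that $\bm\lambda^\star$ (or a model near it) must in fact lie in $\lnot \Bcal_{1/n_j}(x_{n_j})$ for large $j$, contradicting emptiness.

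The key tool here should be a lower hemicontinuity / openness property of the relation. The condition $x^\star \notin \Xcal^\star(\bm\lambda^\star)$, combined with compactness of $\Xcal^\star(\bm\lambda^\star)$ (\Cref{ass:compact}), means $x^\star$ sits at positive distance $2\eta > 0$ from the compact set $\Xcal^\star(\bm\lambda^\star)$. Hence the entire ball $\Bcal_\eta(x^\star)$ avoids $\Xcal^\star(\bm\lambda^\star)$, i.e.\ $\bm\lambda^\star \in \lnot \Bcal_\eta(x^\star)$. Now since $x_{n_j} \to x^\star$ and $1/n_j \to 0$, for $j$ large enough we have $\Bcal_{1/n_j}(x_{n_j}) \subseteq \Bcal_\eta(x^\star)$ by the triangle inequality ($\|x_{n_j} - x^\star\| + 1/n_j \le \eta$ eventually). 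This inclusion gives $\lnot \Bcal_\eta(x^\star) \subseteq \lnot \Bcal_{1/n_j}(x_{n_j})$ (a larger ball of answers imposes a stronger ``all incorrect'' constraint, so its alternative set is smaller). Therefore $\bm\lambda^\star \in \lnot \Bcal_{1/n_j}(x_{n_j})$, contradicting $\lnot \Bcal_{1/n_j}(x_{n_j}) = \emptyset$.

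The main obstacle I anticipate is the step asserting $\mathrm{dist}(x^\star, \Xcal^\star(\bm\lambda^\star)) > 0$: this uses that $\Xcal^\star(\bm\lambda^\star)$ is \emph{closed} (guaranteed by compact-valuedness in \Cref{ass:compact}), so that a point outside it is at strictly positive distance. If instead $\Xcal^\star(\bm\lambda^\star)$ were merely nonclosed, $x^\star$ could be a limit point and the separation $\eta$ would collapse, breaking the argument. So compactness of the answer correspondence is doing essential work and should be invoked explicitly. A secondary subtlety is confirming the monotonicity $\widetilde\Xcal_1 \subseteq \widetilde\Xcal_2 \Rightarrow \lnot \widetilde\Xcal_2 \subseteq \lnot \widetilde\Xcal_1$, which follows directly from the definition of $\lnot$ for answer sets and was already noted in the preliminaries, so it requires no separate proof.
```
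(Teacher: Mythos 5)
Your argument is correct, and it takes a genuinely different route from the paper's. The paper proves the lemma constructively: it introduces $g(x) = \sup_{\bm\lambda \in \Mcal} \inf_{\tilde{x} \in \Xcal^{\star}(\bm\lambda)} \|x-\tilde{x}\|_{\infty}$, establishes continuity of $g$ via Berge's maximum theorem (which requires continuity of the correspondence $\bm\lambda \mapstoto \Xcal^{\star}(\bm\lambda)$ and compactness of $\Mcal$), shows $g(x)>0$ pointwise using $\lnot x \neq \emptyset$, and then takes $\rho = \frac{1}{2}\min_{x\in\Xcal} g(x)$, with the minimum attained and positive by compactness of $\Xcal$. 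Your contradiction argument reaches the same conclusion more cheaply: you only use sequential compactness of $\Xcal$, the blanket assumption $\lnot x \neq \emptyset$, closedness of the single set $\Xcal^{\star}(\bm\lambda^{\star})$ (from compact-valuedness in \Cref{ass:compact}), and the antitonicity $\widetilde\Xcal_1 \subseteq \widetilde\Xcal_2 \Rightarrow \lnot\widetilde\Xcal_2 \subseteq \lnot\widetilde\Xcal_1$, which indeed needs no separate proof. In particular you never invoke continuity of the correspondence or compactness of $\Mcal$, so your proof holds under strictly weaker hypotheses than the paper's; what the paper's version buys in exchange is an explicit formula for an admissible $\rho$ and a $g$-type construction that is recycled in the subsequent lemmas of that appendix (\Cref{lemma:g-eta-cont} through \Cref{lemma:close-point}). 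Both proofs share the same underlying mechanism — compactness of $\Xcal$ upgrades the pointwise separation between $x$ and $\Xcal^{\star}(\bm\lambda)$ for some $\bm\lambda \in \lnot x$ into a uniform one — and your identification of closedness of $\Xcal^{\star}(\bm\lambda^{\star})$ as the load-bearing hypothesis is exactly right.
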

\begin{proof}
    Let $\bar{x} \in \mathcal{X}$. By definition, $\lnot \Bcal_{\rho}(\bar x)$ is empty if, for all $\bm{\mu} \in \Mcal$, there exists $x \in \Bcal_{\rho}(\bar x)$ such that $x \in \mathcal{X}^{\star}(\bm\mu)$. In the following, we show that this cannot happen for an appropriate value of $\rho$. This value is derived in the following constructive way.
    
    Consider the following function $g: \mathcal{X} \rightarrow \mathbb{R}$ defined as follows:
    \begin{align*}
        g(x) = \sup_{\substack{\bm\lambda \in \mathcal{M}}} \inf_{\tilde{x} \in \mathcal{X}^{\star}(\bm\lambda)} \|x-\tilde{x} \|_{\infty}.
    \end{align*}
        Then, by \Cref{thm:berge}, since $\lambda \mapstoto \mathcal{X}^{\star}(\bm\lambda)$ is continuous and compact-valued, and since $\|x-\tilde{x} \|_{\infty}$ is continuous, we have that $\inf_{\tilde{x} \in \mathcal{X}^{\star}(\bm\lambda)} \| x - \tilde{x}\|_{\infty}$ is continuous as well. Furthermore, the inf is attained since it is an infimum of a continuous function over a compact domain. Furthermore, the supremum is also attained since the inf is continuous and $\mathcal{M}$ is compact. Finally, $g(x)$ is continuous. This follows from the fact $\mathcal{M}$ is constant (and hence a continuous correspondence of $\bm\lambda$) and compact-valued. Then, by further applying \Cref{thm:berge}, we have proved that $x \to g(x)$ is continuous. 

    Now, consider $\bm\lambda_x \in \argmax_{\bm\lambda \in \Mcal} \inf_{\tilde{x} \in \Xcal^{\star}(\bm\lambda)} \| x - \tilde{x} \|_{\infty}$. We note that $\min_{\tilde{x} \in \mathcal{X}^{\star}(\bm\lambda_x)} \| x - \tilde{x} \|_{\infty} > 0$. Indeed, that minimum is $0$ if and only if $x \in \mathcal{X}^{\star}(\bm\lambda_x)$. Nevertheless, for all $x \in \mathcal{X}$, there exists $\bar{\bm\lambda} \in \neg x$, and, therefore, $\min_{\tilde{x} \in \Xcal^{\star}({\bar{\bm\lambda}})} \| x - \tilde{x}  \|_{\infty} > 0$. Thus, we have that:
    \begin{align*}
        \min_{\tilde{x} \in \Xcal^{\star}(\bm\lambda_x)} \| x - \tilde{x} \|_{\infty} \ge \min_{\tilde{x} \in \Xcal^{\star}(\bar{\bm\lambda})} \| x - \tilde{x} \|_{\infty} > 0.
    \end{align*}

    Now, consider $\eta = \inf_{x \in \mathcal{X}} g(x)$. Since $g(x)$ is continuous, and since $\mathcal{M}$ is compact, then the inf is attained, and, as a consequence $\eta > 0$.

    The proof then follows by picking $\rho = \frac{\eta}{2}$. Indeed, suppose that there exists $\bar{x} \in \mathcal{X}$ such that $\neg \mathcal{B}_{\rho}(\bar{x}) = \emptyset$. Then, for all $\bm\mu \in \mathcal{M}$, there exists $x_{\bm\mu} \in \mathcal{X}: \|\bar{x} - x_{\bm\mu}  \|_{\infty} \le \rho$ and $x_{\bm\mu} \in \mathcal{X}^{\star}(\bm\mu)$. However, this would imply that:
    \begin{align*}
        \eta = \min_{x \in \mathcal{X}} g(x) \le \min_{\tilde{x} \in \Xcal^{\star}(\bm\lambda_{\bar{x}})} \|\bar{x} - \tilde{x} \|_{\infty} \le \| \bar{x} - x_{\bm\lambda_{\bar{x}}} \|_{\infty} \le \frac{\eta}{2},
    \end{align*}

     since $\eta > 0$ this leads to a contradiction, thus concluding the proof.
\end{proof}

\subsubsection{Preliminary Results}\label{subsubsec:preliminaries-for-thm1}
In order to continue, we need some intermediate results.
Before proceeding, we recall that, given a set $\mathcal{S} \subseteq \widetilde{\mathcal{{S}}}$, $s \in \mathcal{S}$ is either a limit point of $\mathcal{S}$ or an isolated point. A point is isolated if $s \in \mathcal{S}$ and there exists a neighborhood $\mathcal{U}$ of $s$ such that $\mathcal{U} \cap \mathcal{S} = \{s\}$. On the other hand, a point $s \in \widetilde{S}$ is a limit point of $\mathcal{S}$ if every neighbourhood $\mathcal{U}$ of $s$ contains at least one point of $\mathcal{S}$ different from $s$ itself. When dealing with metric spaces, this is equivalent to saying that there exists a sequence of points in $\mathcal{S} \setminus \{s\}$ whose limit is $s$. 

That being said, we now consider the correspondence $x \mapstoto \cl(\neg x)$. %

\begin{lemma}\label{lemma:cl-cont}
    The correspondence $x \mapstoto \cl(\lnot x)$ is lower hemicontinuous and compact-valued over $\Xcal$. 
\end{lemma}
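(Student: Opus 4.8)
The plan is to treat the two assertions separately, as the compact-valuedness is immediate while lower hemicontinuity carries the content. For every $x \in \Xcal$ the set $\cl(\lnot x)$ is closed by construction and is contained in $\Mcal$, which is compact by hypothesis in this section; a closed subset of a compact set is compact, so $\cl(\lnot x)$ is compact (and it is nonempty since we assume $\lnot x \neq \emptyset$ for every $x$). It remains to establish lower hemicontinuity, for which I would work directly with the open-set definition recalled in the preliminaries.

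Fix $x \in \Xcal$ and an open set $\mathcal{V}$ with $\cl(\lnot x) \cap \mathcal{V} \neq \emptyset$, and pick some $\bm\lambda$ in this intersection. The key reduction is to replace $\bm\lambda$ by an \emph{honest} alternative model lying in the same open set: I claim there exists $\bm\eta \in \lnot x \cap \mathcal{V}$. This is exactly the isolated/limit-point dichotomy recalled before the statement: if $\bm\lambda \in \lnot x$ take $\bm\eta = \bm\lambda$, while if $\bm\lambda \in \cl(\lnot x)\setminus \lnot x$ then $\bm\lambda$ is a limit point of $\lnot x$, so the neighborhood $\mathcal{V}$ of $\bm\lambda$ must contain a point $\bm\eta \in \lnot x$. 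Now $\bm\eta \in \lnot x$ means $x \notin \Xcal^\star(\bm\eta)$, and since $\Xcal^\star(\bm\eta)$ is compact hence closed, $x$ has a strictly positive distance $\delta > 0$ to it. Taking the neighbourhood $\mathcal{U} = \Bcal_{\delta}(x)$ of $x$, any $x' \in \mathcal{U}$ satisfies $x' \notin \Xcal^\star(\bm\eta)$, i.e. $\bm\eta \in \lnot x' \subseteq \cl(\lnot x')$. Since $\bm\eta \in \mathcal{V}$, this yields $\cl(\lnot x') \cap \mathcal{V} \neq \emptyset$ for all $x' \in \mathcal{U}$, which is precisely lower hemicontinuity at $x$.

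The main obstacle is conceptual rather than computational: the raw correspondence $x \mapstoto \lnot x$ need not be well behaved, and the witness $\bm\lambda$ of the open set $\mathcal{V}$ may fail to lie in $\lnot x$ at all (it may only be a limit point). Passing to the closure and approximating $\bm\lambda$ by a genuine element $\bm\eta \in \lnot x \cap \mathcal{V}$ is the crucial device, because once $\bm\eta$ is an actual alternative model for $x$, closedness of $\Xcal^\star(\bm\eta)$ furnishes the robustness "$x' \notin \Xcal^\star(\bm\eta)$ for all $x'$ near $x$" for free, so the \emph{same} $\bm\eta$ serves as the witness on a whole neighbourhood of $x$. I would also note that this argument uses only that $\Xcal^\star$ is compact-valued; the continuity of $\bm\mu \mapstoto \Xcal^\star(\bm\mu)$ assumed in this section is not needed for this particular lemma, and would only enter later when upper hemicontinuity or joint continuity statements are required.
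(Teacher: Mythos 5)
Your argument is correct and follows essentially the same route as the paper's proof: upgrade the witness $\bm\lambda \in \cl(\lnot x)\cap\mathcal{V}$ to a genuine $\bm\eta \in \lnot x\cap\mathcal{V}$ via the limit-point dichotomy, then use the positive distance from $x$ to the closed set $\Xcal^{\star}(\bm\eta)$ to keep $\bm\eta$ in $\lnot x'$ for all nearby $x'$. One small correction: with the paper's convention that $\Bcal_\delta(x)$ is the \emph{closed} ball, a point $x'$ at distance exactly $\delta$ from $x$ may still lie in $\Xcal^{\star}(\bm\eta)$, so you should take $\mathcal{U}=\Bcal_{\delta/2}(x)$ (or the open ball of radius $\delta$) — this is precisely why the paper sets $\kappa=\tfrac{1}{2}\min_{\tilde x\in\Xcal^{\star}(\bm\lambda)}\|x-\tilde x\|_\infty$.
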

\begin{proof}
    First, we note that $\cl(\lnot x)$ is compact-valued. The set is trivially bounded and closed. Thus, it is compact by the Heine-Borel theorem.

    We continue by proving that it is lower hemicontinuous. Consider an open set $\mathcal{V}$ and $x\in \Xcal$ such that $\mathcal{V} \cap \cl(\lnot x) \ne \emptyset$. Then, since $\mathcal{V}$ is open and $\cl(\neg x)$ is compact, we have that there exists $\bm\lambda \in \neg x$ such that $\bm\lambda \in \mathcal{V} \cap \cl(\neg x)$. To prove this, we proceed by contradiction. Assume that $\neg x \cap \mathcal{V} = \emptyset$. Then, since there exists $\bm\lambda \in \mathcal{V} \cap \cl(\neg x)$, then it must hold that $\bm\lambda$ is a limit point of $\cl(\neg x)$, i.e., there exists $\{ \bm\lambda_n \}_{n \ge 1}$ such that $\bm\lambda_n \ne \bm\lambda$, $\bm\lambda_n \in \neg x$ and $\bm\lambda_n \to \bm\lambda$. Therefore, for all $\alpha > 0$, there exists $\bar{\bm\lambda} \in \neg x$ and $\| \bm\lambda - \bar{\bm\lambda} \|_{\infty} \le \alpha$. But, then, since $\mathcal{V}$ is open and $\bm\lambda \in \mathcal{V}$, we can take $\alpha$ sufficiently small so that $\bar{\bm\lambda} \in \mathcal{V}$ as well.

    Now, consider $\bm\lambda \in \neg x$ and $\bm\lambda \in \mathcal{V} \cap \neg x$. In the following, we will prove that there exists a neighborhood $\mathcal{B}_{\kappa}(x)$ of $x$ such that $\bm\lambda \in \neg x'$ for all $x' \in \mathcal{B}_{\kappa}(x)$. Define $\kappa \in \mathbb{R}$ as follows:
    \begin{align*}
        \kappa = \frac{1}{2} \min_{\tilde{x} \in \Xcal^{\star}(\bm\lambda)} \|x - \tilde{x} \|_\infty > 0,
    \end{align*}
    where in the inequality step we used $x \notin \Xcal^{\star}(\bm\lambda)$. Now, consider $x' \in \mathcal{X}$ such that $\| x-x'\| \le \kappa$. Then, it holds that $\bm\lambda \in \neg x'$. Indeed, suppose it is false. Then:
    \begin{align*}
        \|x - x'\|_{\infty} \le \kappa = \frac{1}{2} \min_{\tilde{x} \in \mathcal{X}^{\star}(\bm\lambda)} \| x - \tilde{x} \|_{\infty} \le \frac{1}{2} \| x-x' \|_{\infty} <  \| x-x' \|_{\infty},
    \end{align*}
    thus leading to a contradiction. Thus, $\bm\lambda \in \neg x'$ and, by definition $\bm\lambda \in \mathcal{V}$. Thus, $\cl(\neg x') \cap \mathcal{V} \ne \emptyset$ and the correspondence is lower hemicontinuous.
\end{proof}

In the following lemma, we begin by characterizing isolated points of the graph of the correspondence $\cl \neg $. Before that, we introduce some notation.
Consider a correspondence $\phi: \mathbb{X} \rightrightarrows \mathbb{Y}$. Then, for any $Z \subseteq \mathbb{X}$, let $$\Gr_{Z}(\phi)=\{ (x,y) \in Z \times \mathbb{Y}: y \in \phi(x) \}.$$
In the following, we will provide some characterization of $\Gr_{\Xcal}(\cl\lnot)$

\begin{lemma}\label{lemma:isolated-points}
    Let $\Gr_{\Xcal}(\cl\lnot) = \left\{ (x, \bm\lambda) \in \Xcal \times \Mcal: \bm\lambda \in \cl(\lnot x) \right\}$ and $(x, \bm\lambda) \in \Gr_{\Xcal}(\cl \lnot)$ be an isolated point of $\Gr_{\Xcal}(\cl \lnot)$. Then $\bm\lambda$ is an isolated point of $\cl(\lnot x)$.
\end{lemma}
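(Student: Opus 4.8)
The plan is to prove the contrapositive via a direct sequential argument, exploiting that both $\Xcal \subseteq \Reals^d$ and $\Mcal \subseteq \Reals^K$ are metric spaces, so that the product $\Xcal \times \Mcal$ is metric as well and ``isolated point'' is equivalent to ``not a limit point'', with limit points characterized by convergent sequences (as recalled in the preliminaries). Concretely, I would assume that $\bm\lambda$ is \emph{not} an isolated point of $\cl(\lnot x)$ and derive that $(x, \bm\lambda)$ cannot be an isolated point of $\Gr_{\Xcal}(\cl\lnot)$, contradicting the hypothesis.

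Under this assumption, $\bm\lambda$ is a limit point of $\cl(\lnot x)$, so by the sequential characterization of limit points in metric spaces there exists a sequence $\{\bm\lambda_n\}_{n \ge 1}$ with $\bm\lambda_n \in \cl(\lnot x)$, $\bm\lambda_n \ne \bm\lambda$, and $\bm\lambda_n \to \bm\lambda$. The key step is to \emph{lift} this sequence to the graph by keeping the first coordinate fixed: since $\bm\lambda_n \in \cl(\lnot x)$ for every $n$, each pair $(x, \bm\lambda_n)$ belongs to $\Gr_{\Xcal}(\cl\lnot)$ by definition of the graph. Moreover $(x, \bm\lambda_n) \ne (x, \bm\lambda)$ because $\bm\lambda_n \ne \bm\lambda$, and $(x, \bm\lambda_n) \to (x, \bm\lambda)$ in the product metric since the first coordinates coincide and $\bm\lambda_n \to \bm\lambda$ in $\Mcal$.

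Consequently $(x, \bm\lambda)$ is a limit point of $\Gr_{\Xcal}(\cl\lnot)$: every neighbourhood of $(x,\bm\lambda)$ in $\Xcal \times \Mcal$ contains some $(x, \bm\lambda_n) \ne (x, \bm\lambda)$ lying in the graph. This contradicts the hypothesis that $(x, \bm\lambda)$ is isolated, and therefore $\bm\lambda$ must be an isolated point of $\cl(\lnot x)$. I expect no substantial obstacle here: this is the ``easy'' direction, and in particular it does \emph{not} require the lower hemicontinuity established in \Cref{lemma:cl-cont} (which presumably serves the converse/other results). The only care needed is to ensure the lifted sequence stays off the base point, and this is guaranteed precisely because the perturbation occurs in the second coordinate while the first coordinate $x$ is held fixed.
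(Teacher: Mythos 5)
Your proof is correct and is essentially the paper's argument in contrapositive form: the paper directly restricts the isolating ball around $(x,\bm\lambda)$ to the slice $\{x\}\times\Mcal$ to conclude $\bm\lambda$ is isolated in $\cl(\lnot x)$, while you lift a witnessing sequence from $\cl(\lnot x)$ into the graph along that same slice. Both hinge on the same observation that perturbing only the second coordinate keeps you in (or out of) the graph, so there is nothing substantive to add.
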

\begin{proof}       
    Since $(x, \bm\lambda) \in \Gr_{\Xcal}(\cl\lnot)$ is an isolated point of $\Gr_{\Xcal}(\cl\lnot)$, we have that, $\exists \bar{\kappa} > 0$ such that, for all $\kappa \in (0, \bar{\kappa}]$, $\Bcal_{\kappa}((x, \bm\lambda)) \cap \Gr_{\Xcal}(\cl\lnot) = (x, \bm\lambda)$.

    Thus, for all $\kappa \in (0, \bar{\kappa}]$ and all $\bar{\bm\lambda}$ such that $\|\bar{\bm\lambda} - \bm\lambda\|_{\infty} \le \kappa$, $\bar{\bm\lambda} \ne \bm\lambda$, we have that $(x, \bar{\bm\lambda}) \notin \Gr_{\Xcal}(\cl\lnot)$. It follows that $\bar{\bm\lambda} \notin \cl(\lnot x)$ as well. Thus, $\bm\lambda$ is an isolated point of $\cl(\lnot x)$, thus concluding the proof.
\end{proof}

Next, we characterize isolated points of $\cl(\lnot x)$.

\begin{lemma}\label{lemma:isolated-points-cli}
    Let $x \in \Xcal$ and let $\bm\lambda \in \cl(\lnot x)$ be an isolated point. Then $\bm\lambda \in \lnot x$.
\end{lemma}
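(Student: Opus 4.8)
The plan is to combine the defining property of the closure with the isolation hypothesis; no quantitative estimate is required, since the statement is the elementary topological fact that an isolated point of the closure of a set must already belong to the set. First I would recall the characterization of the closure: $\bm\lambda \in \cl(\lnot x)$ means precisely that \emph{every} neighbourhood of $\bm\lambda$ contains at least one point of $\lnot x$. On the other hand, the isolation hypothesis furnishes a specific neighbourhood $\mathcal{U}$ of $\bm\lambda$ such that $\mathcal{U} \cap \cl(\lnot x) = \{ \bm\lambda \}$.

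The key step is to apply the first fact to this particular neighbourhood $\mathcal{U}$. Since $\bm\lambda \in \cl(\lnot x)$ and $\mathcal{U}$ is a neighbourhood of $\bm\lambda$, we have $\mathcal{U} \cap \lnot x \ne \emptyset$, so we may pick some $\bm\lambda' \in \mathcal{U} \cap \lnot x$. Because $\lnot x \subseteq \cl(\lnot x)$, this point also lies in $\mathcal{U} \cap \cl(\lnot x)$, which by the isolation hypothesis equals $\{ \bm\lambda \}$. Hence $\bm\lambda' = \bm\lambda$, and since $\bm\lambda' \in \lnot x$, we conclude that $\bm\lambda \in \lnot x$, as claimed.

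I do not expect any genuine obstacle here, as the argument holds verbatim for an arbitrary subset of any topological space. The only point requiring care is to invoke the correct (strong) form of the definition of the closure—``every neighbourhood of $\bm\lambda$ meets $\lnot x$ itself''—rather than the weaker tautology that every neighbourhood meets $\cl(\lnot x)$, which would carry no information. With the right definition in hand, the isolation of $\bm\lambda$ in $\cl(\lnot x)$ immediately forces the witness point of $\lnot x$ to coincide with $\bm\lambda$.
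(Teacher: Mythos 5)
Your proof is correct and establishes the same elementary topological fact as the paper, just phrased through the neighbourhood characterization of the closure (every neighbourhood of $\bm\lambda$ meets $\lnot x$, and the isolating neighbourhood then forces the witness to equal $\bm\lambda$), whereas the paper invokes the decomposition of $\cl(\lnot x)$ into $\lnot x$ together with its limit points and notes that an isolated point cannot be a limit point. The two arguments are interchangeable; if anything, yours is slightly more self-contained since it avoids the (harmless) elision in the paper between limit points of $\lnot x$ and limit points of $\cl(\lnot x)$.
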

\begin{proof}
    This follows trivially from the definition of isolated points. Indeed, by definition, $\cl(\lnot x)$ is defined as the union of $\lnot x$ together with all the limit points of $\lnot x$. Since $\bm\lambda$ is an isolated point, it is not a limit point. Thus, $\bm\lambda \in \lnot x$.
\end{proof}

\subsubsection{Continuous Correspondences Imply \Cref{ass:conv}}\label{subsubsec:cont-thm1}

Given the preliminary results discussed so far, we now dive into proving the second part of \Cref{ass:conv}, \ie the fact that, for all $\epsilon > 0$ there exists $\rho > 0$ such that $D(\bm\mu, \bm\omega, \neg \mathcal{B}_{\rho}(x)) - D(\bm\mu, \bm\omega, \neg x) \le \epsilon$ holds uniformly across $\Theta$, $\Delta_K$ and $\mathcal{X}$.

\paragraph{Proof Sketch} The main idea behind the proof is showing that, for all $\eta > 0$, there exists $\bar{\rho}_{\eta} > 0$ such that, for all $x \in \Xcal$, $\bm\lambda \in \cl(\lnot x)$, $\rho\in(0,\bar{\rho}_{\eta}]$, there exists $\tilde{\bm\lambda} \in \lnot \Bcal_{\rho}(x)$ such that $\|\tilde{\bm\lambda} - \bm\lambda\|_{\infty} \le \eta$ (\Cref{lemma:close-point}). Indeed, whenever this condition holds, \Cref{ass:conv} holds as well (\Cref{lemma:suff-cond-ass-conv}). It is important to note that, for all $\eta > 0$, $\bar{\rho}_\eta$ needs to uniformly exists for all $x\in \Xcal$ and $\bm\lambda \in \cl (\lnot x)$. That being said, the proof of the existence of $\bar{\rho}_{\eta}$ is constructive, and requires the following intermediate steps (\Cref{lemma:g-eta-cont}, \Cref{lemma:r-eta-cont} and \Cref{lemma:kappa-eta-pos}). Specifically, we will show that it is sufficient to set $\bar{\rho}_{\eta}$ as follows:
\begin{align*}
    \bar{\rho}_{\eta} = \frac{1}{2} \min_{x \in \mathcal{X}} \min_{\bm\lambda \in \cl(\neg x)} \max_{\tilde{\bm\lambda} \in \mathcal{M}: \|\tilde{\bm\lambda} - \bm\lambda \|_{\infty} \le \eta} \min_{\tilde{x} \in \Xcal^{\star}(\tilde{\bm\lambda})} \| x - \tilde{x} \|_{\infty}.
\end{align*}

In this sense, \Cref{lemma:g-eta-cont}, \Cref{lemma:r-eta-cont} and \Cref{lemma:kappa-eta-pos} analyze this expression with a bottom-up approach  whose main goal is showing that the value of the optimization problem is strictly greater than $0$. Once this is done,  \Cref{lemma:close-point} proves that for any $\bm\lambda \in \cl(\neg x)$, there exists $\tilde{\bm\lambda} \in \neg \mathcal{B}_{\rho}(x)$ such that $\| \tilde{\bm\lambda} - \bm\lambda \|_{\infty} \le \eta$ for all $\rho \le \bar{\rho}_{\eta}$, which, in turn, will imply \Cref{ass:conv} via \Cref{lemma:suff-cond-ass-conv}.

Now that the main steps of the proof have been highlighted, we present the following lemma that study the optimization problem:
\begin{align}\label{g eta x lambda}
     g_{\eta}(x, \bm\lambda) = \sup_{\tilde{\bm\lambda} \in \mathcal{M}: \|\tilde{\bm\lambda} - \bm\lambda \|_{\infty} \le \eta} \inf_{\tilde{x} \in \Xcal^{\star}(\tilde{\bm\lambda})} \| x - \tilde{x} \|_{\infty}.
\end{align}
Specifically, the following result shows that both the supremum and the infimum are attained, and that $g_{\eta}$ is continuous over $\mathcal{X} \times \mathcal{M}$.

\begin{lemma}\label{lemma:g-eta-cont}
    Let $\eta > 0$. Consider $g_{\eta}: \mathcal{X} \times \mathcal{M} \rightarrow \mathbb{R}$ defined as per \Cref{g eta x lambda}.
    Then, $g_{\eta}$ is continuous over $\mathcal{X} \times \mathcal{M}$. Furthermore, both the $\sup$ and the $\inf$ are attained.
\end{lemma}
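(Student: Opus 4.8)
The plan is to obtain the continuity of $g_\eta$ through two nested applications of Berge's maximum theorem (\Cref{thm:berge}), following the same template used in the proof of \Cref{lemma:empty-holds}. The only genuinely new feature here is that the outer supremum ranges over the $\bm\lambda$-dependent feasible set $\Phi_\eta(\bm\lambda)=\{\tilde{\bm\lambda}\in\Mcal:\|\tilde{\bm\lambda}-\bm\lambda\|_\infty\le\eta\}$, so before invoking Berge I must first verify that $\Phi_\eta$ is a continuous, compact-valued correspondence.

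For the inner problem I would set $h(x,\tilde{\bm\lambda})=\inf_{\tilde x\in\Xcal^\star(\tilde{\bm\lambda})}\|x-\tilde x\|_\infty$ and show it is continuous on $\Xcal\times\Mcal$ with the infimum attained. Regarded as a correspondence of $(x,\tilde{\bm\lambda})$, the feasible set $\Xcal^\star(\tilde{\bm\lambda})$ is constant in $x$, continuous in $\tilde{\bm\lambda}$ (the standing hypothesis of this section), and compact-valued (\Cref{ass:compact}); since $(x,\tilde x)\mapsto\|x-\tilde x\|_\infty$ is jointly continuous, Berge's theorem applied to this minimization yields that $h$ is continuous and that the infimum is in fact a minimum over the compact set $\Xcal^\star(\tilde{\bm\lambda})$.

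Next I would check the three properties of $\Phi_\eta$. It is nonempty, since $\bm\lambda\in\Phi_\eta(\bm\lambda)$, and compact-valued, being the intersection of the closed ball $\{\tilde{\bm\lambda}:\|\tilde{\bm\lambda}-\bm\lambda\|_\infty\le\eta\}$ with the compact set $\Mcal$. Upper hemicontinuity follows from the closed-graph criterion: the graph $\{(\bm\lambda,\tilde{\bm\lambda}):\|\tilde{\bm\lambda}-\bm\lambda\|_\infty\le\eta\}$ is the preimage of $[0,\eta]$ under a continuous map, hence closed, and $\Mcal$ is compact. Granting also lower hemicontinuity (discussed below), $\Phi_\eta$ is continuous; viewing it as a correspondence of $(x,\bm\lambda)$ that is constant in $x$ and combining this with the continuity of $h$, a second application of Berge's theorem to $g_\eta(x,\bm\lambda)=\sup_{\tilde{\bm\lambda}\in\Phi_\eta(\bm\lambda)}h(x,\tilde{\bm\lambda})$ delivers both the continuity of $g_\eta$ on $\Xcal\times\Mcal$ and the attainment of the supremum.

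The main obstacle is therefore the lower hemicontinuity of $\Phi_\eta$. Given $\bm\lambda_n\to\bm\lambda$ in $\Mcal$ and a target $\tilde{\bm\lambda}\in\Phi_\eta(\bm\lambda)$, I must produce feasible points $\tilde{\bm\lambda}_n\in\Phi_\eta(\bm\lambda_n)$ with $\tilde{\bm\lambda}_n\to\tilde{\bm\lambda}$. I would exploit the fact that the centre $\bm\lambda$ is a strict (Slater) point of the ball constraint and retract $\tilde{\bm\lambda}$ toward the moving centres: for $s\in(0,1)$ put $\tilde{\bm\lambda}^s_n=(1-s)\tilde{\bm\lambda}+s\,\bm\lambda_n$, which lies in $\Mcal$ by convexity and satisfies $\|\tilde{\bm\lambda}^s_n-\bm\lambda_n\|_\infty=(1-s)\|\tilde{\bm\lambda}-\bm\lambda_n\|_\infty\le\eta$ for all large $n$, so that $\tilde{\bm\lambda}^s_n\in\Phi_\eta(\bm\lambda_n)$; a diagonal choice $s=s_n\to 0$ then gives a sequence converging to $\tilde{\bm\lambda}$. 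This is precisely the step where convexity of the admissible set is used (it holds for the box $\Theta^K$): for a fully general compact $\Mcal$, lower hemicontinuity of $\Phi_\eta$—and with it the continuity of $g_\eta$—can genuinely fail (a point of $\Mcal$ sitting at distance exactly $\eta$ from $\bm\lambda$ may leave every nearby ball), so this convexity is the essential ingredient that makes the nested Berge argument close.
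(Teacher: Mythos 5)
Your proof is correct and follows essentially the same route as the paper's: an inner application of Berge's maximum theorem (\Cref{thm:berge}) to get continuity and attainment of $\inf_{\tilde{x}\in\Xcal^{\star}(\tilde{\bm\lambda})}\|x-\tilde{x}\|_{\infty}$, followed by an outer application over the ball correspondence $\bm\lambda\mapstoto\{\tilde{\bm\lambda}\in\Mcal:\|\tilde{\bm\lambda}-\bm\lambda\|_{\infty}\le\eta\}$. The one place you go beyond the paper is the explicit lower-hemicontinuity argument for that correspondence: the paper disposes of it by citing \Cref{lemma:i-star-cont} (\ie the composition result \Cref{th:composition_correspondence} applied to $\bm\lambda\mapsto\cup_{u\in\Bcal_\eta(0)}\{\bm\lambda+u\}$), which handles the unconstrained ball but not its intersection with a general compact $\Mcal$, so the convexity (or Slater-type) hypothesis you flag as essential is in fact silently needed by the paper's argument as well.
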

\begin{proof}
    First, consider $\inf_{\tilde{x} \in \mathcal{X}^{\star}(\tilde{\bm\lambda})} \|x - \tilde{x} \|_{\infty}$. This infimum is attained for all $\bm\lambda \in \Theta$ since it is an infimum over a compact set of a continuous function. Furthermore, by \Cref{thm:berge}, the function is continuous. Therefore, the supremum is attained as well.  

    Finally, consider $g_\eta(x, \bm\lambda)$. Using the fact that the infimum is continuous and that $\{ \tilde{\bm\lambda} \in \mathcal{M}: \| \tilde{\bm\lambda} - \bm\lambda \|_{\infty} \le \eta \}$ is a continuous correspondence of $\bm\lambda$ (\Cref{lemma:i-star-cont}), we can apply \Cref{thm:berge}, and we have that $g_{\eta}$ is continuous over its domain, thus concluding the proof.
\end{proof}

We continue by optimizing $g_{\eta}(x, \bm\lambda)$ in its second argument, \ie:
\begin{align*}
    r_{\eta}(x) = \inf_{\bm\lambda \in \cl(\neg x)} g_{\eta}(x, \bm\lambda). 
\end{align*}
As above, we show that the infimum is attained, and that $r_{\eta}$ is continuous over $\mathcal{X}$. This proof requires an extended version of the Berge's maximum theorem that can handle correspondences which are only lower hemicontinuous (see \Cref{thm:feinberg}). Indeed, as we proved in \Cref{lemma:cl-cont}, we only have a lower hemicontinuity result on the correspondence $x \mapstoto \cl(\neg x)$.

\begin{lemma}\label{lemma:r-eta-cont}
    Let $\eta > 0$. Let $r_{\eta}: \Xcal \rightarrow \mathbb{R}$ be defined as follows:
    \begin{align*}
        r_{\eta}(x) = \inf_{\bm\lambda \in \cl(\lnot x)} g_{\eta}(x, \bm\lambda).
    \end{align*}
    Then, $r_{\eta}(x)$ is continuous over $\Xcal$. Furthermore, the infimum is attained.
\end{lemma}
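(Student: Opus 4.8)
The plan is to read $r_{\eta}$ as the value function of a parametric minimization problem and to invoke a Berge-type theorem. Write $r_{\eta}(x) = \inf_{\bm\lambda \in \cl(\neg x)} g_{\eta}(x,\bm\lambda)$, where the objective $g_{\eta}$ is continuous on $\Xcal \times \mathcal{M}$ by \Cref{lemma:g-eta-cont}, and the feasible correspondence $x \mapstoto \cl(\neg x)$ is lower hemicontinuous and compact-valued by \Cref{lemma:cl-cont}. The attainment of the infimum is the easy half: for each fixed $x$, $\bm\lambda \mapsto g_{\eta}(x,\bm\lambda)$ is continuous and $\cl(\neg x)$ is nonempty and compact, so the minimum is attained by Weierstrass' theorem, and the $\inf$ is in fact a $\min$.

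For continuity, the obstruction is that the standard maximum theorem (\Cref{thm:berge}) requires the constraint correspondence to be \emph{continuous}, whereas \Cref{lemma:cl-cont} only provides lower hemicontinuity of $\cl(\neg \cdot)$. I would therefore split continuity into its two one-sided halves and treat them with different tools. Upper semicontinuity of $r_{\eta}$ follows from lower hemicontinuity alone: fix $x_0$, let $\bm\lambda_0 \in \cl(\neg x_0)$ attain $r_{\eta}(x_0)$, and for any $x_n \to x_0$ use lower hemicontinuity of $\cl(\neg \cdot)$ to extract $\bm\lambda_n \in \cl(\neg x_n)$ with $\bm\lambda_n \to \bm\lambda_0$; then $r_{\eta}(x_n) \le g_{\eta}(x_n,\bm\lambda_n) \to g_{\eta}(x_0,\bm\lambda_0) = r_{\eta}(x_0)$, so $\limsup_n r_{\eta}(x_n) \le r_{\eta}(x_0)$. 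The reverse inequality (lower semicontinuity) is the delicate direction, and it is exactly what the extended maximum theorem \Cref{thm:feinberg} is designed to supply: rather than upper hemicontinuity of $\cl(\neg\cdot)$, it asks only for lower hemicontinuity of the correspondence together with an inf-compactness condition on the objective, and under these hypotheses it yields that $r_{\eta}$ is lower semicontinuous (and, as a by-product, that the minimizing set is nonempty, compact-valued, and upper hemicontinuous). Combining the two halves gives continuity of $r_{\eta}$ on $\Xcal$.

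The main obstacle I anticipate is verifying the inf-compactness hypothesis of \Cref{thm:feinberg} in our setting, which is where the absence of upper hemicontinuity bites. Concretely, one must show that for each compact $K \subseteq \Xcal$ and each level $\alpha$ the set $\{(x,\bm\lambda) : x \in K,\ \bm\lambda \in \cl(\neg x),\ g_{\eta}(x,\bm\lambda) \le \alpha\}$ is compact. Since this set sits inside the compact product $K \times \mathcal{M}$ and $g_{\eta}$ is continuous, compactness reduces to a \emph{closedness} property of the graph of $x \mapstoto \cl(\neg x)$ on sublevel sets, which I would establish from \Cref{ass:compact} and the analysis of the isolated/limit points of $\cl(\neg x)$ already carried out (\Cref{lemma:cl-cont} and its preliminaries). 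This is precisely the step that cannot be shortcut through standard Berge: we do not have upper hemicontinuity of $\cl(\neg\cdot)$ for free, so the inf-compactness must be certified by hand using the compactness of $\Xcal$ and $\mathcal{M}$, and it is this certification — rather than the two semicontinuity arguments themselves — that I expect to be the technical crux.
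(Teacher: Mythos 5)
Your proposal follows essentially the same route as the paper: attainment via Weierstrass on the compact set $\cl(\lnot x)$, and continuity via the extended maximum theorem (\Cref{thm:feinberg}) applied with the lower hemicontinuity of $x \mapstoto \cl(\lnot x)$ from \Cref{lemma:cl-cont} and the joint continuity of $g_{\eta}$ from \Cref{lemma:g-eta-cont}. The only differences are cosmetic: your separate sequence argument for upper semicontinuity is redundant (since \Cref{thm:feinberg} already yields full continuity of $r_\eta$), and the $\mathbb{K}$-inf-compactness check you flag as the crux is dispatched in the paper in one line — the level sets $\mathcal{D}_u(\alpha,\Gr_K(\phi))$ are bounded because they sit inside the compact $\Xcal \times \Mcal$ and closed as sublevel sets of the continuous $g_\eta$ restricted to the graph.
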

\begin{proof}
    Let $\mathbb{X} = \Xcal$, $\mathbb{Y}=\Mcal$, $\phi(x)=\cl(\lnot x)$ for all $x\in \mathbb{X}$, and
    \[
    u(x, \bm\lambda) = \max_{\tilde{\bm\lambda} \in \Mcal: \|\tilde{\bm\lambda}-\bm\lambda\|_{\infty} \le \eta} \min_{\tilde{x} \in \mathcal{X}^{\star}(\tilde{\bm\lambda})} \| x -\tilde{x} \|.
    \]
    We want to apply \Cref{thm:feinberg}, which extends Berge's maximum theorem for lower hemicontinous correspondences, to prove the continuity of $r_\eta$. Before proceeding, we invite the reader to consult \Cref{app:corresp} for useful definitions that will be used within this proof.

    Now, we first note that, due to \Cref{lemma:cl-cont}, $\phi(x)$ is lower hemicontinuous over $\mathbb{X}$.

    Thus, to apply \Cref{thm:feinberg}, it remains to check that $u$ is $\mathbb{K}$-inf-compact and upper semicontinuous on $\Gr_{\mathbb{X}}(\phi)$. First, we note that $u$ is continuous on $\mathbb{X} \times \mathbb{Y}$ (\Cref{lemma:g-eta-cont}), and hence upper semicontinuous. Furthermore, let $K \subset \mathbb{X}$ be compact and consider $\Gr_K(\phi)$. Then, let us analyze the level sets $\mathcal{D}_u(\alpha, \Gr_K(\phi))$ for $\alpha \in \mathbb{R}$. Then, $\mathcal{D}_u(\alpha, \Gr_K(\phi)) \subseteq \mathbb{X} \times \mathbb{Y}$, which is compact, hence bounded. Thus, $\mathcal{D}_u(\alpha, \Gr_K(\phi))$ is bounded as well. Moreover, $\mathcal{D}_u(\alpha, \Gr_K(\phi))$ is closed since it is the preimage of a closed set of a continuous function. We have thus proved that $u$ is $\mathbb{K}$-inf-compact and upper semicontinuous on $\Gr_{\mathbb{X}}(\phi)$. From \Cref{thm:feinberg}, $r_\eta$ is continuous over $\mathbb{X}$

    Finally, the supremum can be replaced by the maximum since the objective function is continuous and the optimization domain is compact. 
\end{proof}

Finally, we optimize $r_{\eta}(x)$ over the possible answers $x \in \Xcal$. The resulting value, \ie $\kappa_{\eta}$, will be than used to define $\bar{\rho}_\eta$. Specifically, in \Cref{lemma:close-point}, we will set $\kappa_{\eta} = \bar{\rho}_\eta / 2$. That being said, in the following lemma we show that the infimum of $r_{\eta}(x)$ is attained at some point $x \in \mathcal{X}$. Furthermore, it also shows that $\kappa_{\eta} > 0$. 

\begin{lemma}\label{lemma:kappa-eta-pos}
    Let $\eta > 0$, and let $\kappa_\eta = \inf_{x \in \Xcal} r_{\eta}(x)$. The infimum is attained, \ie $\kappa_\eta = \min_{x \in \Xcal} r_{\eta}(x)$. Furthermore, $\kappa_{\eta} > 0$.
\end{lemma}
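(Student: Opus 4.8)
The plan is to treat the two assertions separately: first the attainment of the outer infimum, then its strict positivity.

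For attainment, I would invoke \Cref{lemma:r-eta-cont}, which gives that $r_\eta$ is continuous on $\Xcal$, together with \Cref{ass:compact}, which makes $\Xcal$ compact. The Weierstrass extreme-value theorem then yields that $\inf_{x\in\Xcal} r_\eta(x)$ is attained at some $x^\star\in\Xcal$, so $\kappa_\eta=\min_{x\in\Xcal}r_\eta(x)=r_\eta(x^\star)$. The same lemma also guarantees that the inner infimum defining $r_\eta(x^\star)$ is attained, i.e.\ there exists $\bm\lambda^\star\in\cl(\lnot x^\star)$ with $\kappa_\eta=r_\eta(x^\star)=g_\eta(x^\star,\bm\lambda^\star)$. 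This reduces the whole claim to showing that the single value $g_\eta(x^\star,\bm\lambda^\star)$ is strictly positive.

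To prove positivity, I would recall that $g_\eta(x^\star,\bm\lambda^\star)=\sup_{\tilde{\bm\lambda}\in\Mcal:\,\|\tilde{\bm\lambda}-\bm\lambda^\star\|_\infty\le\eta}\min_{\tilde x\in\Xcal^\star(\tilde{\bm\lambda})}\|x^\star-\tilde x\|_\infty$, where the inner minimum is attained because $\Xcal^\star(\tilde{\bm\lambda})$ is compact by \Cref{ass:compact}. It suffices to exhibit one feasible $\tilde{\bm\lambda}$ whose inner minimum is strictly positive. Since $\bm\lambda^\star\in\cl(\lnot x^\star)$, the ball $\mathcal{B}_\eta(\bm\lambda^\star)$ must contain a point $\tilde{\bm\lambda}\in\lnot x^\star$: if $\bm\lambda^\star$ is an isolated point of $\cl(\lnot x^\star)$ then it already belongs to $\lnot x^\star$ by \Cref{lemma:isolated-points-cli}, and otherwise it is a limit point of $\lnot x^\star$, so some element of an approximating sequence in $\lnot x^\star$ eventually enters the ball. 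This $\tilde{\bm\lambda}$ lies in $\Mcal$ and respects the $\eta$-constraint, hence is feasible for the supremum. Because $\tilde{\bm\lambda}\in\lnot x^\star$ we have $x^\star\notin\Xcal^\star(\tilde{\bm\lambda})$, and as $\Xcal^\star(\tilde{\bm\lambda})$ is a compact set not containing $x^\star$, the distance $\min_{\tilde x\in\Xcal^\star(\tilde{\bm\lambda})}\|x^\star-\tilde x\|_\infty$ is strictly positive. Taking the supremum only increases the value, so $g_\eta(x^\star,\bm\lambda^\star)>0$ and therefore $\kappa_\eta>0$.

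The step I expect to be the crux is the passage from pointwise positivity to positivity of the infimum. The ball argument shows $g_\eta(x,\bm\lambda)>0$ for every $x$ and every $\bm\lambda\in\cl(\lnot x)$, but an infimum of strictly positive quantities can still vanish; this is exactly why the attainment established in the first step is indispensable, since it lets me evaluate $\kappa_\eta$ at a concrete minimizer $(x^\star,\bm\lambda^\star)$ and apply pointwise positivity there. The secondary subtlety is that $\bm\lambda^\star$ may be a genuine limit point of $\lnot x^\star$ rather than a member of it, in which case $x^\star$ could lie in $\Xcal^\star(\bm\lambda^\star)$ and the naive choice $\tilde{\bm\lambda}=\bm\lambda^\star$ would fail; here the strictly positive slack $\eta$ in the ball is precisely what rescues the argument, allowing a nearby \emph{genuine} alternative model to serve as the witness.
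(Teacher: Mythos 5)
Your proof is correct and follows essentially the same route as the paper: attainment via the continuity of $r_\eta$ (\Cref{lemma:r-eta-cont}) plus compactness, and then positivity by using the slack $\eta$ to replace a closure point $\bm\lambda^\star \in \cl(\lnot x^\star)$ with a genuine alternative model $\tilde{\bm\lambda}\in\lnot x^\star$ inside the $\eta$-ball, whose compact answer set $\Xcal^{\star}(\tilde{\bm\lambda})$ has positive distance from $x^\star$. The only (harmless) difference is that you evaluate pointwise positivity at the minimizer $(x^\star,\bm\lambda^\star)$, whereas the paper establishes $g_\eta(x,\bm\lambda)>0$ for every point of $\Gr_{\Xcal}(\cl\lnot)$ via a slightly more elaborate case analysis on isolated versus limit points and then invokes attainment — the two are logically equivalent, and you correctly identify attainment as the step that turns pointwise positivity into positivity of the infimum.
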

\begin{proof}
    First, we note that the infimum is attained. Indeed, from \Cref{lemma:r-eta-cont}, $r_{\eta}$ is continuous. The optimization domain is compact, and, therefore, the infimum is attained. 

    Secondly, we want to prove that $\kappa_{\eta} > 0$. Given the definition of $\kappa_{\eta}$, and since the infimum is attained, this is equivalent to proving that, for all $x \in \Xcal$, $r_{\eta}(x) > 0$. From \Cref{lemma:r-eta-cont}, we know that the min over the $\cl(\lnot x)$ is attained, therefore, we want to prove that for all $x \in \Xcal, \bm\lambda \in \cl(\lnot x)$, $g_{\eta}(x, \bm\lambda) > 0$.

    In other words, consider $\Gr_{\Xcal}(\cl\lnot)$ and let $(x, \bm\lambda) \in \Gr_{\Xcal}(\cl\lnot)$. We need to prove that $g_{\eta}(x,\bm\lambda) > 0$.
    We proceed by cases.
    \begin{itemize}
    \item If $(x, \bm\lambda)$ is an isolated point in $\Gr_{\Xcal}(\cl\lnot)$, then $\bm\lambda$ is an isolated point in $\cl(\lnot x)$ (by virtue of \Cref{lemma:isolated-points}), and, hence, $\bm\lambda \in \lnot x$ (as guaranteed by \Cref{lemma:isolated-points-cli}). Thus, we have that:
    \begin{align*}
        g_{\eta}(x,\bm\lambda) \ge \min_{\tilde{x} \in \Xcal^{\star}(\bm\lambda)} \| x-\tilde{x} \|_{\infty} > 0. 
    \end{align*}
     
    \item If, on the other hand, $(x, \bm\lambda)$ is a limit point, there are two sub-cases. 
    Either $\bm\lambda$ is an isolated point of $\cl(\lnot x)$ or it is a limit point of $\cl(\lnot x)$.
    If $\bm\lambda$ is an isolated point of $\cl(\lnot x)$ we have that $\bm\lambda \in \neg x$ (thanks to \Cref{lemma:isolated-points-cli}), and we can proceed as for isolated points of $\Gr_{\Xcal}(\cl\lnot)$. %
    \item If $\bm\lambda$ is a limit point of both the graph and of $\cl(\lnot x)$, there are two further cases. Either $\bm\lambda \in \lnot x$, and we can proceed as for isolated points of $\Gr_{\Xcal}(\cl\lnot)$, or $\bm\lambda \in \cl(\lnot x)$ and $\bm\lambda \notin \lnot x$. In this last case, however, since $\bm\lambda$ is a limit point of $\cl(\lnot x)$, there exists a sequence $\{ \bm\lambda_n \}_{n \in\mathbb{N}}$ such that (i) $\bm\lambda_n \ne \bm\lambda,~\forall n \in \mathbb{N}$, (ii) $\bm\lambda_n \in \lnot x,~\forall n \in \mathbb{N}$, and (iii) $\{\bm\lambda_n \} \to \bm\lambda$. Therefore, from (ii)+(iii), we have that for $\eta > 0$, there exists $n_\eta$ such that, for all $n \ge n_{\eta}$, $\|\bm\lambda_n - \bm\lambda\|_{\infty} \le \eta$ and $\bm\lambda_n \in \neg x$. Consider, \eg $\bm\lambda_{n_{\eta}}$; then, we have that: 
    \begin{align*}
        g_{\eta}(x, \bm\lambda) = \sup_{\tilde{\bm\lambda} \in \Mcal: \| \tilde{\bm\lambda} - \bm\lambda \|_{\infty} \le \eta} \inf_{\tilde{x} \in \Xcal^{\star}(\tilde{\bm\lambda})} \|x-\tilde{x} \|_{\infty} \ge \inf_{\tilde{x} \in \Xcal^{\star}(\bm\lambda_{n_\eta})} \|x-\tilde{x} \|_{\infty} > 0,
    \end{align*}
    thus leading to the desired result.
    \end{itemize}
    Therefore, all $(x, \bm\lambda) \in \Gr_{\Xcal}(\cl\lnot)$, we have that $g_{\eta}(x, \bm\lambda) > 0$ and therefore $\kappa_{\eta} > 0$.
\end{proof}

Now, we show that for all $\rho \in (0, \kappa_{\eta}]$, $\forall \bar x \in \Xcal, \bm\lambda \in \cl(\lnot \bar x)$ there exists $\tilde{\bm\lambda} \in \lnot \Bcal_{\rho}(\bar x)$ such that $\|\tilde{\bm{\lambda}} - \bm\lambda\|_{\infty} \le \eta$. As we shall see, combining this result with \Cref{lemma:suff-cond-ass-conv}, leads to the desired result.

\begin{lemma}\label{lemma:close-point}
     For all $\eta > 0$ sufficiently small, there exists $\bar{\rho}_\eta$ such that, for all $\rho \in (0, \bar{\rho}_{\eta}]$, it holds that:
    \begin{align*}
        \forall \bar x \in \Xcal, \bm\lambda \in \cl(\lnot \bar x),~\exists \tilde{\bm\lambda} \in \lnot \Bcal_{\rho}(\bar x): \|\tilde{\bm{\lambda}} - \bm\lambda\|_{\infty} \le \eta. 
    \end{align*}
\end{lemma}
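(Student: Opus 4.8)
The plan is to take $\bar\rho_\eta := \kappa_\eta/2$, where $\kappa_\eta = \min_{x \in \Xcal} r_\eta(x)$ is the quantity built up in the chain \Cref{lemma:g-eta-cont,lemma:r-eta-cont,lemma:kappa-eta-pos}; this is exactly the value appearing in the displayed formula for $\bar\rho_\eta$ preceding the statement, since the nested $\min_{x}\min_{\bm\lambda\in\cl(\lnot x)}$ of the innermost $\max$--$\min$ expression is precisely $\min_x r_\eta(x)=\kappa_\eta$. By \Cref{lemma:kappa-eta-pos} we have $\kappa_\eta > 0$, so $\bar\rho_\eta > 0$ and every $\rho \in (0, \bar\rho_\eta]$ satisfies $\rho \le \kappa_\eta/2 < \kappa_\eta$. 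I would also record the key structural fact that the innermost quantity is exactly $g_\eta(x,\bm\lambda)$ and that, by \Cref{lemma:g-eta-cont}, both the supremum and the infimum defining $g_\eta$ are attained.

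Next I would fix $\rho \in (0, \bar\rho_\eta]$, $\bar x \in \Xcal$, and $\bm\lambda \in \cl(\lnot \bar x)$ (note $\cl(\lnot \bar x)\subseteq\Mcal$, so $\bm\lambda$ itself is feasible in the $\sup$ of $g_\eta$). Because $\bm\lambda \in \cl(\lnot \bar x)$, the definitions of $r_\eta$ and $\kappa_\eta$ give the chain $g_\eta(\bar x, \bm\lambda) \ge r_\eta(\bar x) \ge \kappa_\eta$. Since the supremum in $g_\eta(\bar x, \bm\lambda)$ is attained, I would select a maximizer $\tilde{\bm\lambda} \in \Mcal$ with $\|\tilde{\bm\lambda} - \bm\lambda\|_\infty \le \eta$ and $\inf_{\tilde x \in \Xcal^\star(\tilde{\bm\lambda})} \|\bar x - \tilde x\|_\infty = g_\eta(\bar x, \bm\lambda) \ge \kappa_\eta$. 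This $\tilde{\bm\lambda}$ is the required alternative model: it already satisfies the distance bound $\|\tilde{\bm\lambda} - \bm\lambda\|_\infty \le \eta$, so the only thing left to verify is $\tilde{\bm\lambda} \in \lnot \Bcal_\rho(\bar x)$.

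I would establish $\tilde{\bm\lambda} \in \lnot \Bcal_\rho(\bar x)$ by contradiction. Suppose some $y \in \Bcal_\rho(\bar x)$ is correct for $\tilde{\bm\lambda}$, i.e.\ $y \in \Xcal^\star(\tilde{\bm\lambda})$ with $\|\bar x - y\|_\infty \le \rho$. Then $\inf_{\tilde x \in \Xcal^\star(\tilde{\bm\lambda})} \|\bar x - \tilde x\|_\infty \le \|\bar x - y\|_\infty \le \rho \le \kappa_\eta/2 < \kappa_\eta$, contradicting the lower bound $\ge \kappa_\eta$ obtained above. Hence no answer in $\Bcal_\rho(\bar x)$ is correct for $\tilde{\bm\lambda}$, which is exactly the statement $\tilde{\bm\lambda} \in \lnot \Bcal_\rho(\bar x)$, completing the argument.

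The substance of this lemma is entirely front-loaded into \Cref{lemma:kappa-eta-pos}: its conclusion $\kappa_\eta > 0$ does the heavy lifting, and the remaining step here is a one-line distance argument. The only genuine point requiring care is recognizing that the maximizer of $g_\eta(\bar x, \bm\lambda)$ is itself the model we need, with its defining value equal to the distance from $\bar x$ to the nearest correct answer of $\tilde{\bm\lambda}$, so that a ball of radius $\rho < \kappa_\eta$ around $\bar x$ is forced to avoid $\Xcal^\star(\tilde{\bm\lambda})$. A minor bookkeeping remark is that $\Bcal_\rho$ was introduced with a generic norm while $g_\eta$ is built from $\|\cdot\|_\infty$; as these norms are equivalent on $\Reals^d$, this only rescales $\bar\rho_\eta$ by a constant and does not affect the reasoning.
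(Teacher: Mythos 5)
Your proposal is correct and follows essentially the same route as the paper: both set $\bar\rho_\eta = \kappa_\eta/2$, take $\tilde{\bm\lambda}$ to be the maximizer defining $g_\eta(\bar x,\bm\lambda)$ (so that $\|\tilde{\bm\lambda}-\bm\lambda\|_\infty \le \eta$ holds by construction), and derive $\tilde{\bm\lambda}\in\lnot\Bcal_\rho(\bar x)$ by contradiction from the lower bound $\min_{\tilde x\in\Xcal^\star(\tilde{\bm\lambda})}\|\bar x-\tilde x\|_\infty = g_\eta(\bar x,\bm\lambda)\ge r_\eta(\bar x)\ge\kappa_\eta > \rho$. Your contradiction step is phrased a bit more directly than the paper's chain of inequalities, but the argument is the same.
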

\begin{proof}
    Consider any $x \in \Xcal$ and $\bm\lambda \in \cl(\lnot x)$. Let $\bar{\bm\lambda}_{x,\bm\lambda}$ be defined as:
    \begin{align*}
    \bar{\bm\lambda}_{x,\bm\lambda} \in \argmax_{\tilde{\bm\lambda} \in \Mcal: \|\tilde{\bm\lambda} - \bm\lambda\|_{\infty} \le \eta} \min_{\tilde{x} \in \Xcal^{\star}(\tilde{\bm\lambda})} \| x - \tilde{x} \|.
    \end{align*}
    From \Cref{lemma:g-eta-cont}, $\bar{\bm\lambda}_{x,\bm\lambda}$ is well-defined. Furthermore, by definition $\| \bar{\bm{\lambda}}_{x, \bm\lambda} - \bm\lambda \|_{\infty} \le \eta$.

    Take $\bar{\rho}_\eta = \kappa_\eta / 2$, and let $\rho \le \bar{\rho}_\eta$. In the following, we prove that $\bar{\bm\lambda}_{x, \bm\lambda} \in \neg \mathcal{B}_{\rho}(x)$.
     
    \begin{align*}
        \lnot \Bcal_{\rho}(x) & \supseteq \lnot \Bcal_{\kappa_{\eta} / 2}(x) \\
        & = \{ \bm\theta \in \Mcal: \forall \tilde x \in \Bcal_{\kappa_\eta/2}(x), \tilde{x} \notin \Xcal^{\star}(\bm\theta) \} \\
        & \supseteq \{ \bar{\bm\lambda}_{x, \bm\lambda} \}
    \end{align*}
    where the last step can be proved by contradiction. Suppose it is false, \ie there exists $\bar{x} \in \mathcal{B}_{\kappa_{\eta}/2}(x)$ such that $\bar{x} \in \Xcal^{\star}(\bar{\bm\lambda}_{x, \bm\lambda})$. Then, it holds that:
    \begin{align*}
        \|x-\bar{x} \|_{\infty} \le \frac{\kappa_{\eta}}{2} \le  \frac{1}{2} \min_{\tilde{x} \in \Xcal^{\star}(\bar{\bm\lambda}_{x, \bm\lambda})} \|x - \tilde{x}  \|_{\infty} \le \frac{1}{2} \|x-\bar{x} \|_{\infty} < \|x - \bar{x} \|_{\infty}, 
    \end{align*}
    where in the last step we used $\kappa_{\eta} > 0$. This leads to a contradiction and concludes the proof.
\end{proof}

Now, we show that, when \Cref{lemma:close-point} holds, then \Cref{ass:conv} holds as well. 

\begin{lemma}[Sufficient condition for Assumption~\ref{ass:conv}]\label{lemma:suff-cond-ass-conv}
    If, for all $\tilde{\epsilon} > 0$, there exists ${\rho} > 0$, such that:
    \begin{align*}
        \forall \bar x \in \Xcal, \bm\lambda \in \cl(\lnot \bar x),~\exists \tilde{\bm\lambda} \in \lnot \Bcal_{\rho}(\bar x): \|\tilde{\bm{\lambda}} - \bm\lambda\|_{\infty} \le \tilde{\epsilon}. 
    \end{align*}
    then Assumption~\ref{ass:conv} holds.
\end{lemma}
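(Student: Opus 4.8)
The plan is to convert the \emph{set}-level approximation supplied by the hypothesis into a \emph{value}-level approximation of the two divergences, using that both are infima of the \emph{same} function $f_{\bm\mu,\bm\omega}(\bm\lambda) := \sum_{k\in[K]}\omega_k d(\mu_k,\lambda_k)$ over the nested sets $\lnot\Bcal_\rho(x)\subseteq\lnot x$. Because $\lnot\Bcal_\rho(x)\subseteq\lnot x$, the difference $D(\bm\mu,\bm\omega,\lnot\Bcal_\rho(x))-D(\bm\mu,\bm\omega,\lnot x)$ is automatically nonnegative, so only the upper bound by $\epsilon$ requires work. The missing ingredient is a modulus of continuity for $f_{\bm\mu,\bm\omega}$ in $\bm\lambda$ that is \emph{uniform} over $\bm\mu\in\Theta^K$ and $\bm\omega\in\Delta_K$; this is what lets a small displacement $\|\tilde{\bm\lambda}-\bm\lambda\|_\infty$ translate into a small change of $f_{\bm\mu,\bm\omega}$.

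First I would establish this uniform Lipschitz bound. Since the exponential family is assumed regular and bounded, $\Theta$ is a compact interval strictly contained in the open interval of attainable means, so $(\mu,\lambda)\mapsto d(\mu,\lambda)$ is smooth on $\Theta\times\Theta$ and the constant $L := \sup_{\mu,\lambda\in\Theta}\lvert\partial_\lambda d(\mu,\lambda)\rvert$ is finite. Consequently, for any $\bm\mu\in\Theta^K$, $\bm\omega\in\Delta_K$ and $\bm\lambda,\tilde{\bm\lambda}\in\Mcal$,
\begin{align*}
\bigl\lvert f_{\bm\mu,\bm\omega}(\bm\lambda)-f_{\bm\mu,\bm\omega}(\tilde{\bm\lambda})\bigr\rvert
\le \sum_{k\in[K]}\omega_k\,\lvert d(\mu_k,\lambda_k)-d(\mu_k,\tilde\lambda_k)\rvert
\le L\sum_{k\in[K]}\omega_k\,\lvert\lambda_k-\tilde\lambda_k\rvert
\le L\,\|\bm\lambda-\tilde{\bm\lambda}\|_\infty,
\end{align*}
where the last step uses $\sum_{k\in[K]}\omega_k=1$. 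Crucially, $L$ depends only on the family and on $\Theta$, not on $(\bm\mu,\bm\omega)$.

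Next, given $\epsilon>0$, I would apply the hypothesis with $\tilde\epsilon=\epsilon/L$ to obtain a single radius $\rho>0$ valid for all $\bar x\in\Xcal$ and all $\bm\lambda\in\cl(\lnot\bar x)$. Non-emptiness of $\lnot\Bcal_\rho(x)$ comes for free: the standing assumption $\lnot x\ne\emptyset$ gives $\cl(\lnot x)\ne\emptyset$, and the hypothesis then produces some $\tilde{\bm\lambda}\in\lnot\Bcal_\rho(x)$. For the main inequality, fix arbitrary $\bm\mu\in\Theta^K$, $\bm\omega\in\Delta_K$, $x\in\Xcal$, and take any $\bm\lambda\in\lnot x\subseteq\cl(\lnot x)$. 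The hypothesis yields $\tilde{\bm\lambda}\in\lnot\Bcal_\rho(x)$ with $\|\tilde{\bm\lambda}-\bm\lambda\|_\infty\le\epsilon/L$, whence $D(\bm\mu,\bm\omega,\lnot\Bcal_\rho(x))\le f_{\bm\mu,\bm\omega}(\tilde{\bm\lambda})\le f_{\bm\mu,\bm\omega}(\bm\lambda)+\epsilon$ by the Lipschitz bound. Taking the infimum over $\bm\lambda\in\lnot x$ gives $D(\bm\mu,\bm\omega,\lnot\Bcal_\rho(x))\le D(\bm\mu,\bm\omega,\lnot x)+\epsilon$, which is exactly \Cref{ass:conv}; since $\rho$ and $L$ are independent of $(\bm\mu,\bm\omega,x)$, the bound holds uniformly as required.

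The only genuinely delicate step in a proof of this flavour — producing a radius $\rho$ that works simultaneously for every answer $x$ and every alternative $\bm\lambda$ — is precisely what the hypothesis hands us; the remaining effort is the routine value-transfer above. Thus the main technical point to verify carefully is the finiteness and uniformity of the Lipschitz constant $L$, which rests on the regular-and-bounded exponential-family assumption guaranteeing that $d(\mu,\lambda)$ is smooth, hence Lipschitz, on the compact set $\Theta\times\Theta$.
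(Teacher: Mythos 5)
Your proof is correct and follows essentially the same route as the paper's: both convert the set-level approximation from the hypothesis into a value-level bound on the divergences via a uniform Lipschitz estimate for $\lambda \mapsto d(\mu,\lambda)$ on the bounded parameter space (the paper routes this through \Cref{lemma:kl-diff} and \Cref{coroll:kl-lip}, while you bound the derivative directly). The only cosmetic difference is that you apply the hypothesis to each $\bm\lambda \in \lnot x$ and take the infimum at the end, whereas the paper first fixes the minimizer over $\cl(\lnot x)$ and approximates that single point; your handling of the non-emptiness of $\lnot\Bcal_\rho(x)$ is also a valid (and slightly more self-contained) observation.
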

\begin{proof}
    Let $(\bm\mu, \bm\omega) \in \Theta^K \times \Delta_K$.
    Let $\bm\lambda^{*} \in \argmin_{\bm\lambda \in \cl(\neg x)} \sum_{k \in [K]} \omega_k d(\mu_k, \lambda_k)$ and denote by $\bm\theta^{\bm\lambda^*} \in \neg \mathcal{B}_{\rho}(x)$ be such that $\| \bm\lambda^* - \bm\theta^{\bm\lambda^*} \|_\infty \le \tilde{\epsilon}$.\footnote{Note that $\bm\lambda^*$ is well defined. Indeed, $\cl(\neg x)$ is compact, since $\mathcal{M}$ is compact. Furthermore, the function that is being optimized is continuous in $\bm\lambda$, and thus the $\min$ is attained.} Let $\tilde{\bm{\lambda}}^*$ and $\bm{\tilde{\theta}}^{\bm\lambda^*}$ be the natural parameters of the distributions related to means $\bm{\lambda}^*$ and $\bm{\theta}^{\bm\lambda^*}$, respectively. 
    Then, it holds that:
    \begin{align*}
        D(\bm{\mu}, \bm{\omega}, &\neg \Bcal_{\rho}(x)) - D(\bm{\mu}, \bm{\omega}, \neg x) 
        = \inf_{\bm{\lambda} \in \neg \Bcal_{\rho}(x)} \sum_{k \in [K]} \omega_k d(\mu_k, \lambda_k) - \inf_{\bm{\lambda} \in \neg x} \sum_{k \in [K]} \omega_k d(\mu_k, \lambda_k) \\
        & \le \inf_{\bm{\lambda} \in \neg \Bcal_{\rho}(x)} \sum_{k \in [K]} \omega_k d(\mu_k, \lambda_k) -  \inf_{\bm{\lambda} \in \cl(\neg x)} \sum_{k \in [K]} \omega_k d(\mu_k, \lambda_k)  \tag{Since $\neg x \subseteq \cl(\neg x)$}\\
        & \le  \sum_{k \in [K]} {\omega}_k \left( d(\mu_k, {\theta}_k^{\bm\lambda^*}) - d(\mu_k, {\lambda}^{*}_k) \right) \tag{Since $\bm\theta^{\bm\lambda^*}\in \lnot\Bcal_{\rho}(x)$ and by definition of $\bm\lambda^*$}\\ 
        & \le \max_{k \in [K]} \Big| d(\mu_k, \theta_k^{\bm\lambda^*}) - d(\mu_k, \lambda_k^{*}) \Big| \\ 
        & =  \max_{k \in [K]} \Big| d(\lambda_k^{*}, \theta_k^{\bm\lambda^*}) + (\tilde{\theta}_k^{\bm\lambda^*} - \tilde{\lambda}^{*}_k) ( \mu_k - \lambda_k^*) \Big| \tag{\Cref{lemma:kl-diff}}\\ 
        & \le \max_{k \in [K]} \Big| C_2 (\lambda_k^{*} - \theta_k^{\bm\lambda^*})^2 + M (\tilde{\theta}_k^{\bm\lambda^*} - \tilde{\lambda}^{*}_k) \Big| \tag{\Cref{coroll:kl-lip} and $\Theta$ bounded} \\
        & \le \max_{k \in [K]} \left( C_2 M \Big|\lambda_k^{*} - \theta_k^{\bm\lambda^*} \Big| + C_1 M \Big| \lambda_k^* - \theta_k^{\bm\lambda^*} \Big| \right)  \tag{\Cref{coroll:kl-lip} and $\Theta$ bounded} \\
        & = (C_1 + C_2) M ||\bm\lambda^* - \bm\theta^{\bm\lambda^*}||_{\infty}.
    \end{align*}
    Here, since $\Theta$ is bounded and contained in an open interval, we have used $(\mu - \mu') \le M$ for any $\mu, \mu' \in \Theta$, where $M \coloneqq \max_{\mu \in \Theta} \mu - \min_{\mu \in \Theta} \mu$. Now, taking $\tilde{\epsilon} < \frac{\epsilon}{M(C_1 + C_2)}$, concludes the proof. 
\end{proof}

\asscontinuity*
\begin{proof}
    Combine \Cref{lemma:suff-cond-ass-conv} together with \Cref{lemma:close-point}.
\end{proof}

\section{Proof of the Lower Bound}\label{app:lb}

In this section, we first provide a sketch of the proof that outlines all the underlying ideas and the differences with respect to previous works, and then we present the formal arguments. 

\subsection{Proof Sketch of \Cref{theo:lb}}

The general idea behind the proof is inspired by the lower bound for multiple answers problems presented in \cite{degenne2019pure}. Specifically, in \cite{degenne2019pure}, the authors start by noticing that for any $T \in \mathbb{N}$, using Markov's inequality, one has that:
\begin{align*}
    \E_{\bm\mu}[\tau_\delta] = T ( 1 - \mathbb{P}_{\bm\mu}(\tau_\delta \le T) \ge T\left( 1-  \left( \delta + \sum_{x \in \Xcal^{\star}(\bm\mu)} \mathbb{P}_{\bm\mu}\left( \{ \tau_\delta \le T \} \textup{ and } \{ \hat{x}_{\tau_\delta} = x \} \right) \right)\right).
\end{align*}
Then, the proof follows by upper bounding $\mathbb{P}_{\bm\mu}\left( \{ \tau_\delta \le T \} \textup{ and } \{ \hat{x}_{\tau_\delta} = x \} \right)$ for each $x \in \Xcal^{\star}(\bm\mu)$ using change-of-measure arguments. As one can see, however, such an argument can actually be applied only when $\Xcal^{\star}(\bm\mu)$ is finite and, thus, complications arise in our infinite answer setting. 

To solve this issue and prove \Cref{theo:lb}, we combine three distinct elements, that are:
\begin{itemize}
    \item[(i)] An exact covering of the set $\Xcal^{\star}(\bm\mu)$ using balls of radius $\rho$ 
    \item[(ii)] A change-of measure arguments that are directly related to this cover and to our new extended notion of alternative models over sets
    \item[(iii)] A limit argument for $\rho \to 0$  
\end{itemize}
As we now discuss, these three ingredients allows us to \quotes{reduce} the infinite answer problem to a finite answer one.

First, we note that, since $\Xcal^{\star}(\bm\mu)$ is compact, it admits a finite cover $\{\widetilde{\Xcal}_i\}_{i=1}^{n_\rho}$ of $n_{\rho} \in \mathbb{N}$ elements using sets $\widetilde{\Xcal}_i$ which are inscribed in balls of radius $\rho$ (\Cref{lemma:covering}). Now, the idea is to fix a cover and try to follow the arguments of \cite{degenne2019pure}. Specifically, we are going  apply change of measure arguments by directly exploiting the sets $\{ \widetilde{\Xcal}_i \}_i$. Thus, for any $T \in \mathbb{N}$, using Markov's inequality, one has that: 
\begin{align*}
    \E_{\bm\mu}[\tau_\delta] \ge T\left(1 - \mathbb{P}_{\bm\mu}(\tau_\delta \le T)\right) \ge T\left(1 - \left(\delta + \sum_{i=1}^{n_{\rho}} \mathbb{P}_{\bm\mu}\left( \{ \tau_\delta \le T \} \text{ and } \{ \hat{x}_{\tau_\delta} \in \widetilde{\Xcal}_i \} \right) \right) \right),
\end{align*}
where in the second step we have used the $\delta$-correctness of the algorithm on regions that are complementary to $\Xcal^{\star}(\bm\mu)$. 

Let $\mathcal{E}_i = \{ \{ \tau_\delta \le T \} \text{ and } \{ \hat{x}_{\tau_\delta} \in \widetilde{\Xcal}_i \} \}$.
From here, the idea is to upper bound $\mathbb{P}_{\bm\mu}\left( \mathcal{E}_i \right)$. Using change-of-measure arguments (\Cref{lemma:change-of-measure}), we can relate this probability to the one of the same event but under models in $\lnot \widetilde{\Xcal}_i$, \ie models for which \emph{all} answers within $\widetilde{\Xcal}_i$ are not correct. In this sense, the definition of alternative sets over sets of answers plays a crucial role, as it allows to obtain that, for all $\beta > 0$, some problem dependent constant $\alpha$, and some $\bm\lambda \in \lnot \widetilde{\Xcal}_i$:
\begin{align*}
\mathbb{P}_{\bm\mu}\left( \mathcal{E}_i \right) & \le \exp\left( T D(\bm\mu, \lnot \widetilde{\Xcal}_i) + \beta \right) \mathbb{P}_{\bm\lambda}\left( \mathcal{E}_i \right) + \exp\left( \frac{-\beta^2}{2T\alpha} \right) \\ 
& \le \exp\left( T D(\bm\mu, \lnot \widetilde{\Xcal}_i) + \beta \right) \delta  + \exp\left( \frac{-\beta^2}{2T\alpha} \right),
\end{align*}
where in the second step, we explicitly use $\bm\lambda \in \lnot \widetilde{\Xcal}_i$. For that step, indeed, it is required that every answer within $\widetilde{\Xcal}_i$ is not a correct one for $\bm\lambda$. From here, one can use standard arguments from \citet{degenne2019pure}, and obtain (\Cref{lemma:helper-lemma-lb}) the following asymptotic result:
\begin{align}\label{eq:approx-lb}
    \liminf_{\delta \to 0} \frac{\E_{\bm\mu}[\tau_\delta]}{\log(1/\delta)} \ge \frac{1}{\max_{i \in [n_\rho]} D(\bm\mu, \lnot \widetilde{\Xcal}_i)},
\end{align}
The proof of \Cref{theo:lb} then follows by analyzing \Cref{eq:approx-lb} as $\rho \to 0$.

\subsection{Proof of \Cref{theo:lb}}
We start the proof by introducing some preliminary results.

\begin{lemma}[Minimax Results]\label{lemma:max-min-to-min-max}
Let $\bm\mu\in\Theta^K$ and $\Lambda\subset \Mcal$ then
\[
D(\bm\mu,\Lambda)=\inf_{\Prob}\max_{k\in[K]}\mathbb{E}_{\bm\lambda\sim \Prob}[d(\mu_k,\lambda_k)],
\]
where the infimum ranges over probability distributions on $\Lambda$ supported on (at most) $K$ points.
\end{lemma}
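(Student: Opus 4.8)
The plan is to recognize the identity as a minimax swap for a bilinear objective, followed by a support-size reduction via Carath\'eodory's theorem. Introduce $s(\bm\lambda) = (d(\mu_k,\lambda_k))_{k\in[K]} \in \Reals^K$ and the affine functional $f(\bm\omega, \Prob) = \mathbb{E}_{\bm\lambda\sim\Prob}\big[\sum_{k\in[K]} \omega_k\, d(\mu_k,\lambda_k)\big] = \langle \bm\omega, \mathbb{E}_{\bm\lambda\sim\Prob}[s(\bm\lambda)]\rangle$, where $\Prob$ ranges over Borel probability measures on $\Lambda$. Two elementary observations frame the whole proof. First, the infimum of an expectation over all measures is approached by Dirac masses, so $\inf_{\Prob} f(\bm\omega,\Prob) = \inf_{\bm\lambda\in\Lambda}\sum_{k\in[K]} \omega_k d(\mu_k,\lambda_k)$, whence $D(\bm\mu,\Lambda) = \sup_{\bm\omega\in\Delta_K}\inf_{\Prob} f(\bm\omega,\Prob)$. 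Second, maximizing the linear map $\bm\omega\mapsto f(\bm\omega,\Prob)$ over the simplex is attained at a vertex, so $\sup_{\bm\omega\in\Delta_K} f(\bm\omega,\Prob) = \max_{k\in[K]} \mathbb{E}_{\bm\lambda\sim\Prob}[d(\mu_k,\lambda_k)]$. Hence the claim reduces to the equality $\sup_{\bm\omega}\inf_{\Prob} f = \inf_{\Prob}\sup_{\bm\omega} f$ together with the assertion that the outer infimum may be restricted to measures supported on at most $K$ points.

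For the minimax swap I would invoke Sion's minimax theorem. Assume without loss of generality that $\Lambda$ is compact, replacing it by $\cl(\Lambda)$, which leaves $D(\bm\mu,\bm\omega,\Lambda)$ unchanged since $\bm\lambda\mapsto \sum_{k\in[K]}\omega_k d(\mu_k,\lambda_k)$ is continuous. Then the set of probability measures on $\Lambda$ is convex and weak-$*$ compact, $\Delta_K$ is convex and compact, and $f$ is affine---hence continuous, quasiconvex, and quasiconcave---separately in $\Prob$ and in $\bm\omega$ (continuity in $\Prob$ holds because $\sum_{k\in[K]}\omega_k d(\mu_k,\cdot)$ is bounded and continuous on the compact $\Lambda$). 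Sion's theorem then yields $\sup_{\bm\omega}\inf_{\Prob} f = \inf_{\Prob}\sup_{\bm\omega} f$, establishing the stated equality with the infimum over \emph{all} probability measures.

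It remains to reduce the support to at most $K$ points, which is where the real work lies. Both outer optima are attained, at some $\bm\omega^\star\in\Delta_K$ and some $\Prob^\star$, by compactness, and a standard argument shows that $(\bm\omega^\star,\Prob^\star)$ is a saddle point of $f$ with value $V^\star := D(\bm\mu,\Lambda)$. Expanding $\langle\bm\omega^\star,\mathbb{E}_{\Prob^\star}[s]\rangle = V^\star$ against the inequalities $\mathbb{E}_{\Prob^\star}[d(\mu_k,\lambda_k)]\le V^\star$ and $\langle\bm\omega^\star,s(\bm\lambda)\rangle\ge V^\star$ (complementary slackness) forces $\bm\omega^\star$ to be supported on the active set $A=\{k:\mathbb{E}_{\Prob^\star}[d(\mu_k,\lambda_k)]=V^\star\}$ and forces $\Prob^\star$ to be supported on $L=\{\bm\lambda:\langle\bm\omega^\star,s(\bm\lambda)\rangle=V^\star\}$. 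Consequently every support point $s(\bm\lambda)$, as well as the mean $v^\star=\mathbb{E}_{\Prob^\star}[s]$, lies in the single affine hyperplane $H=\{u\in\Reals^K:\langle\bm\omega^\star,u\rangle=V^\star\}$, which has dimension $K-1$ since $\bm\omega^\star\neq 0$. Applying Carath\'eodory's theorem inside $H$ writes $v^\star$ as a convex combination of at most $(K-1)+1=K$ points $s(\bm\lambda_1),\dots,s(\bm\lambda_K)$ with $\bm\lambda_j\in\Lambda$; the corresponding finitely supported measure $\Prob'$ has the same mean vector $v^\star$ and therefore the same value $\max_{k\in[K]}\mathbb{E}_{\Prob'}[d(\mu_k,\lambda_k)] = \max_{k\in[K]} v^\star_k = V^\star$. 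Since restricting to finitely supported measures can only raise the infimum, this exhibited optimizer shows that the restricted infimum equals $V^\star$.

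The main obstacle is obtaining the sharp bound of $K$ rather than $K+1$: a direct application of Carath\'eodory to $v^\star\in\mathrm{conv}\{s(\bm\lambda):\bm\lambda\in\Lambda\}\subseteq\Reals^K$ yields only $K+1$ points. The saddle-point structure is precisely what confines all relevant points to the hyperplane $H$, removing one dimension and sharpening Carath\'eodory to $K$. A secondary point needing care is the attainment of the optima and the weak-$*$ compactness required both for Sion's theorem and for the saddle-point extraction, which is handled by passing to $\cl(\Lambda)$ without changing any of the quantities involved.
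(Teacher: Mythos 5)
Your proof is essentially correct, but it takes a genuinely different route from the paper: the paper disposes of this lemma in one line by citing Lemma~2 of \citet{degenne2019non} (which proves the identity for $\Lambda = \lnot x$ by exactly the Sion-plus-Carath\'eodory strategy you reconstruct) and observing that the argument never uses the specific form of the set. What you have done is supply the underlying argument in full, and your reconstruction is faithful to how such results are proved: the reduction of $\inf_{\Prob}$ to Dirac masses and of $\sup_{\bm\omega\in\Delta_K}$ to vertices, the minimax swap via Sion on the weak-$*$ compact set of measures over $\cl(\Lambda)$, and---the genuinely nontrivial step---the use of the saddle-point complementary-slackness conditions to confine $\mathbb{E}_{\Prob^\star}[s]$ and the support of $s_{\#}\Prob^\star$ to the hyperplane $\langle\bm\omega^\star,u\rangle=V^\star$, which sharpens Carath\'eodory from $K+1$ to $K$ atoms. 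The one point you should make explicit is that your $K$ atoms live in $\cl(\Lambda)$ rather than $\Lambda$ itself, so to match the statement you need a final approximation step perturbing them into $\Lambda$ and using continuity of $\bm\lambda\mapsto d(\mu_k,\lambda_k)$ (the paper handles the analogous attainment issue by passing to $\epsilon$-optimal distributions and letting $\epsilon\to 0$). The trade-off is clear: the paper's citation is economical but opaque about why the support bound is $K$; your argument is self-contained and makes the role of the saddle point in the dimension count transparent.
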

\begin{proof}
    This result is a direct consequence of Lemma 2 in \cite{degenne2019non}. In \cite{degenne2019non} the authors state the result for a set $D(\bm\mu, \neg x)$, but it actually holds for any set $\Lambda$.
\end{proof}

We say that a distribution $q\in\Delta_K$, supported on $\bm\lambda^1,\ldots,\bm\lambda^K$, is optimal for $D(\bm\mu,\Lambda)$ for a given $\Lambda\subseteq\Mcal$, if it attains the infimum of \Cref{lemma:max-min-to-min-max}, \emph{i.e.}, if
\[
D(\bm\mu,\Lambda)=\max_{k\in [K]}\sum_{j\in[K]}q_jd(\mu_k,\lambda^j_k).
\]
In particular, we are interested in the case in which $\Lambda=\lnot \widetilde{\Xcal}$, for some $\widetilde{\Xcal}$ such that $\lnot \widetilde{\Xcal} \ne \emptyset$. In the following, we will assume, as in \cite{degenne2019pure}, that the infimum in $D(\bm{\mu}, \neg \widetilde{\Xcal})$ is attained in $\neg \widetilde{\Xcal}$. If not, one can apply the following arguments to a sequence of $\epsilon$-optimal distributions and let $\epsilon \rightarrow 0$.

Given this consideration, we recall a relevant change of measure arguments that have been previously used in BAI, \ie Lemma 19 of \cite{degenne2019pure}. 
Before doing that, we introduce some necessary notation. For any model $\bm\mu\in\Mcal$ and any $k$, we denote with $\tilde \mu_k$ the natural parameter of a distribution of the exponential family with mean $\mu_k$. Further details on canonical exponential families are deferred to  \Cref{sec:exponential_fam}.

\begin{lemma}[Change of Measure]\label{lemma:change-of-measure}
    Fix $\bm{\mu} \in \mathcal{M}$ and let $\widetilde{\Xcal} \subseteq \Xcal$ be a subset of answers such that $\neg \widetilde{\Xcal} \ne \emptyset$. Let $\bm{\lambda}^{1}, \dots, \bm{\lambda}^{K}$ and $\bm q \in \Delta_{K}$ be an optimal distribution for $D(\bm{\mu}, \neg \widetilde{\Xcal})$. Let $\alpha_{k} \coloneqq \tilde{\mu}_k - \sum_{j \in [K]} q_{j} \tilde{\lambda}^{j}_{k}$ and $\bar \alpha=\max_{k\in[K]}\alpha_k$. Fix a sample size $t$ and any event $\mathcal{E} \in \mathcal{F}_t$. Then, for any $\beta > 0$ it holds that:
    \begin{align*}
        \max_{k \in [K]} \Prob_{\bm{\lambda}^{k}}(\mathcal{E}) \ge \exp\left\{ {-tD(\bm{\mu}, \neg \widetilde{\Xcal})} - \beta \right\} \left( \Prob_{\bm{\mu}}(\mathcal{E}) - \exp\left\{ \frac{-\beta^2}{2t \bar\alpha^2 } \right\} \right).
    \end{align*}
\end{lemma}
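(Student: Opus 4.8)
The plan is to run an exponential change-of-measure (transportation) argument organized around the \emph{arithmetic} $\bm q$-mixture of the $K$ support models, so that the maximum over $k$ collapses to a single tilted likelihood ratio. For each $k$, write the log-likelihood ratio $W^k_t = \log\frac{d\Prob_{\bm\mu}}{d\Prob_{\bm\lambda^k}}\big|_{\mathcal F_t}$; by the canonical exponential family structure it decomposes over arms as $W^k_t = \sum_{s=1}^t\big[(\tilde\mu_{A_s}-\tilde\lambda^k_{A_s})R_s - (b(\tilde\mu_{A_s})-b(\tilde\lambda^k_{A_s}))\big]$, where $b(\cdot)$ is the log-partition (cumulant) function of the family. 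First I would bound the maximum below by the $\bm q$-weighted average and apply Jensen's inequality: since $\max_{k}\Prob_{\bm\lambda^k}(\mathcal E)\ge\sum_k q_k\Prob_{\bm\lambda^k}(\mathcal E)=\E_{\bm\mu}\big[\mathbf 1_{\mathcal E}\sum_k q_k e^{-W^k_t}\big]$ and $\sum_k q_k e^{-W^k_t}\ge e^{-\sum_k q_k W^k_t}$ by convexity of $\exp$, setting $\widetilde W_t \coloneqq \sum_k q_k W^k_t$ gives $\max_k\Prob_{\bm\lambda^k}(\mathcal E)\ge \E_{\bm\mu}[\mathbf 1_{\mathcal E}\,e^{-\widetilde W_t}]$. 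Collecting the weighted natural parameters, the coefficient of $R_s$ in $\widetilde W_t$ is exactly $\tilde\mu_{A_s}-\sum_j q_j\tilde\lambda^j_{A_s}=\alpha_{A_s}$.

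The second step is to identify the drift of $\widetilde W_t$ under $\Prob_{\bm\mu}$ and show it never exceeds $tD(\bm\mu,\neg\widetilde\Xcal)$. A direct computation of the per-step conditional mean gives $\E_{\bm\mu}[\widetilde W_s-\widetilde W_{s-1}\mid\mathcal F_{s-1},A_s]=\sum_j q_j\,d(\mu_{A_s},\lambda^j_{A_s})$, which by the attainment form of \Cref{lemma:max-min-to-min-max}, namely $D(\bm\mu,\neg\widetilde\Xcal)=\max_k\sum_j q_j d(\mu_k,\lambda^j_k)$, is at most $D(\bm\mu,\neg\widetilde\Xcal)$ for \emph{every} value of the arm $A_s$. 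Hence $\widetilde W_t=M_t+\mathrm{drift}_t$ with $\mathrm{drift}_t\le tD(\bm\mu,\neg\widetilde\Xcal)$ holding deterministically and $M_t=\sum_{s=1}^t\alpha_{A_s}(R_s-\mu_{A_s})$ a martingale. I would then split the event on $\{\widetilde W_t\le tD(\bm\mu,\neg\widetilde\Xcal)+\beta\}$, where $e^{-\widetilde W_t}\ge e^{-tD(\bm\mu,\neg\widetilde\Xcal)-\beta}$, to obtain $\max_k\Prob_{\bm\lambda^k}(\mathcal E)\ge e^{-tD(\bm\mu,\neg\widetilde\Xcal)-\beta}\big(\Prob_{\bm\mu}(\mathcal E)-\Prob_{\bm\mu}(\widetilde W_t>tD(\bm\mu,\neg\widetilde\Xcal)+\beta)\big)$.

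Finally, since $\widetilde W_t-tD(\bm\mu,\neg\widetilde\Xcal)\le M_t$, it remains to prove $\Prob_{\bm\mu}(M_t>\beta)\le\exp(-\beta^2/(2t\bar\alpha^2))$, and this is the step I expect to be the main obstacle. Because the increments $\alpha_{A_s}(R_s-\mu_{A_s})$ are driven by an \emph{adaptive} sampling rule, I cannot invoke an i.i.d.\ Hoeffding bound and must instead control the conditional log–moment generating function: the regularity and boundedness of the exponential family (the assumption that $\Theta$ lies strictly inside an open interval) yields a uniform sub-Gaussian proxy for each centered increment, whose scale is governed by $\bar\alpha$. Combining this with a standard supermartingale (Chernoff) argument over the horizon $t$ gives the Gaussian tail $\exp(-\beta^2/(2t\bar\alpha^2))$, and substituting it into the display of the previous step yields exactly the stated inequality.
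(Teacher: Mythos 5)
Your proposal is correct and follows essentially the same route as the paper, which proves this lemma simply by citing Lemma~19 of \citet{degenne2019pure}: that proof is precisely your $\bm q$-mixture plus Jensen change of measure, the deterministic drift bound $\sum_j q_j d(\mu_{A_s},\lambda^j_{A_s})\le \max_k\sum_j q_j d(\mu_k,\lambda^j_k)=D(\bm\mu,\lnot\widetilde{\Xcal})$ from the attainment form of the minimax value, and an Azuma--Chernoff tail for the martingale $M_t=\sum_s \alpha_{A_s}(R_s-\mu_{A_s})$ under adaptive sampling. The only detail to keep in mind is that the exact constant in $\exp(-\beta^2/(2t\bar\alpha^2))$ presumes the centered rewards are $1$-sub-Gaussian (the normalization the paper adopts elsewhere, e.g.\ for $\beta_{t,\delta}$), so your appeal to the regular bounded exponential family should be stated with that variance proxy made explicit.
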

\begin{proof}
    The proof is as in \citet[Lemma~19]{degenne2019pure}.
\end{proof}

\Cref{lemma:change-of-measure} can be interpreted as follow. Whenever $t \ll D(\mu, \neg \widetilde{\Xcal})^{-1}$, then if $\mathcal{E}$ is likely under $\bm\mu$ than it must also be likely under at least one $\bm\lambda^k$.

At this point, we derive the following intermediate results. The proof scheme is inspired by Theorem 1 in \cite{degenne2019pure}. The key difference is that now we are applying those arguments to infinite answer identification problems, where each $\Xcal^\star(\bm{\mu})$ is an arbitrary compact set.

\begin{lemma}[Intermediate Result]\label{lemma:helper-lemma-lb}
    For every $\bm{\mu} \in \mathcal{M}$ and for any $\rho > 0$ sufficiently small, there exists a finite set of answers $\{ x_j \}_{j=1}^{n_\rho}$ and $x_j\in\mathcal{X}^\star(\bm\mu)$ such that, if we define $\widetilde{\Xcal}_j=\Bcal_{\rho}(x_j) \cap \Xcal^\star(\bm{\mu})$, then $\Xcal^\star(\bm{\mu}) = \bigcup_{j=1}^{n_{\rho}} \widetilde{\mathcal{X}}_j$ . Moreover, there exists $\widetilde{\Xcal}_j$ in the cover such that $D(\bm{\mu}, \neg \widetilde{\Xcal}_j) > 0$. Furthermore, for any $\delta$-correct algorithm it holds that:
    \begin{align*}
        \liminf_{\delta \rightarrow 0}  \frac{\E_{\bm{\mu}}[\tau_\delta]}{\log(1 / \delta)} \ge \min_{j \in [n_{\rho}]:D(\bm{\mu}, \neg \widetilde{\Xcal}_{j})>0} D(\bm{\mu}, \neg \widetilde{\Xcal}_{j})^{-1}.
    \end{align*}
\end{lemma}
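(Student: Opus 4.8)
The plan is to reduce the infinite-answer lower bound to a finite-answer one by covering $\Xcal^\star(\bm\mu)$ with finitely many balls and running change-of-measure arguments against the \emph{set-valued} alternatives $\lnot\widetilde{\Xcal}_j$. First I would fix $\rho$ small enough that $\lnot\Bcal_\rho(x)\neq\emptyset$ for every $x\in\Xcal$; this is precisely the first conclusion of \Cref{ass:conv} (equivalently \Cref{lemma:empty-holds}) and is the meaning of ``$\rho$ sufficiently small'' in the statement. Since $\Xcal^\star(\bm\mu)$ is compact by \Cref{ass:compact}, the open cover $\{\Bcal_\rho(x)\}_{x\in\Xcal^\star(\bm\mu)}$ admits a finite subcover (formally \Cref{lemma:covering}), yielding centers $x_1,\dots,x_{n_\rho}\in\Xcal^\star(\bm\mu)$ and sets $\widetilde{\Xcal}_j=\Bcal_\rho(x_j)\cap\Xcal^\star(\bm\mu)$ with $\Xcal^\star(\bm\mu)=\bigcup_{j=1}^{n_\rho}\widetilde{\Xcal}_j$. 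Because $\widetilde{\Xcal}_j\subseteq\Bcal_\rho(x_j)$ and $\lnot$ is anti-monotone in its argument, $\lnot\widetilde{\Xcal}_j\supseteq\lnot\Bcal_\rho(x_j)\neq\emptyset$, so \emph{every} alternative set is non-empty and each $D(\bm\mu,\lnot\widetilde{\Xcal}_j)$ is finite. This non-emptiness, guaranteed only for small $\rho$, is exactly what allows the change-of-measure lemma to be applied to every ball in the cover.

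Next I would exhibit a set with strictly positive divergence. By \Cref{ass:identify} there is $\bar x\in\Xcal^\star(\bm\mu)$ with $\bm\mu\notin\cl(\lnot\bar x)$; this $\bar x$ lies in some $\widetilde{\Xcal}_j$, and since $\bar x\in\widetilde{\Xcal}_j$ gives $\lnot\widetilde{\Xcal}_j\subseteq\lnot\bar x$ and hence $\cl(\lnot\widetilde{\Xcal}_j)\subseteq\cl(\lnot\bar x)$, we obtain $\bm\mu\notin\cl(\lnot\widetilde{\Xcal}_j)$, so $D(\bm\mu,\lnot\widetilde{\Xcal}_j)>0$ by \Cref{lemma:iff-positive-div}. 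This ensures the index set in the final minimum is non-empty. For the quantitative part, I would start from Markov's inequality $\E_{\bm\mu}[\tau_\delta]\ge T(1-\Prob_{\bm\mu}(\tau_\delta\le T))$ and split the stopping event over the cover: using $\delta$-correctness on the complement of $\Xcal^\star(\bm\mu)$ together with $\Xcal^\star(\bm\mu)=\bigcup_j\widetilde{\Xcal}_j$ yields $\Prob_{\bm\mu}(\tau_\delta\le T)\le\delta+\sum_{j=1}^{n_\rho}\Prob_{\bm\mu}(\mathcal{E}_j)$, where $\mathcal{E}_j=\{\tau_\delta\le T\}\cap\{\hat x_{\tau_\delta}\in\widetilde{\Xcal}_j\}$.

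The key step is bounding each $\Prob_{\bm\mu}(\mathcal{E}_j)$ via \Cref{lemma:change-of-measure}. Taking the optimal measures $\bm\lambda^1,\dots,\bm\lambda^K\in\lnot\widetilde{\Xcal}_j$ attaining $D(\bm\mu,\lnot\widetilde{\Xcal}_j)$, on $\mathcal{E}_j$ the returned answer $\hat x_{\tau_\delta}\in\widetilde{\Xcal}_j$ is incorrect for \emph{every} $\bm\lambda^k$ (this is exactly what $\bm\lambda^k\in\lnot\widetilde{\Xcal}_j$ means), so $\delta$-correctness gives $\max_k\Prob_{\bm\lambda^k}(\mathcal{E}_j)\le\delta$. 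Rearranging the change-of-measure inequality then yields, for every $\beta>0$, $\Prob_{\bm\mu}(\mathcal{E}_j)\le\exp(TD(\bm\mu,\lnot\widetilde{\Xcal}_j)+\beta)\,\delta+\exp(-\beta^2/(2T\bar\alpha_j^2))$. Here the extended notion of alternatives over sets is essential: we change measure against models for which the whole ball $\widetilde{\Xcal}_j$ is wrong, not merely a single point, which is what makes the union over a \emph{finite} cover suffice.

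Finally I would calibrate $T$ and $\beta$. Writing $D_{\max}=\max_j D(\bm\mu,\lnot\widetilde{\Xcal}_j)$ and $\bar\alpha=\max_j\bar\alpha_j<\infty$, I fix $\eta\in(0,1/2)$ and set $T=T_\delta=(1-2\eta)\log(1/\delta)/D_{\max}$ and $\beta=\eta\log(1/\delta)$. Then $\exp(TD(\bm\mu,\lnot\widetilde{\Xcal}_j)+\beta)\,\delta\le\delta^{\eta}\to0$ uniformly in $j$ (using $D(\bm\mu,\lnot\widetilde{\Xcal}_j)\le D_{\max}$), while $\exp(-\beta^2/(2T\bar\alpha^2))\to0$; since the cover is finite the entire sum vanishes as $\delta\to0$. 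Hence $\E_{\bm\mu}[\tau_\delta]\ge T_\delta(1-o(1))$, giving $\liminf_{\delta\to0}\E_{\bm\mu}[\tau_\delta]/\log(1/\delta)\ge(1-2\eta)/D_{\max}$, and letting $\eta\to0$ yields the claim, since $\min_{j:D(\bm\mu,\lnot\widetilde{\Xcal}_j)>0}D(\bm\mu,\lnot\widetilde{\Xcal}_j)^{-1}=D_{\max}^{-1}$ (all divergences being finite and at least one positive). I expect the main obstacle to be the change-of-measure step adapted to set-valued alternatives, and in particular the structural point that forcing $\rho$ small keeps every $\lnot\widetilde{\Xcal}_j$ non-empty (a degenerate $\lnot\widetilde{\Xcal}_j=\emptyset$ would make $D=+\infty$ and block the bound for that ball); the subsequent $T,\beta$ tuning is the standard Garivier--Kaufmann calibration.
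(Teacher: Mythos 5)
Your proposal is correct and follows essentially the same route as the paper: Markov's inequality, a finite subcover of the compact set $\Xcal^\star(\bm\mu)$ by $\rho$-balls with $\lnot\Bcal_\rho(x)\ne\emptyset$, change of measure against the set-valued alternatives $\lnot\widetilde\Xcal_j$ combined with $\delta$-correctness, and the standard $(T,\beta)$ calibration. The only (cosmetic) difference is how positivity of some $D(\bm\mu,\lnot\widetilde\Xcal_j)$ is secured: the paper explicitly augments the cover with an extra ball centered at the identifiable answer $\bar x$ from \Cref{ass:identify}, whereas you observe that $\bar x$ already lies in some cover element and that element inherits $\bm\mu\notin\cl(\lnot\widetilde\Xcal_j)$ by monotonicity of $\lnot$ — both yield the same conclusion via \Cref{lemma:iff-positive-div}.
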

\begin{proof}
    Fix $T > 0$ (to be defined later), due to the Markov's inequality, we have that:
    \begin{align}\label{eq:helper-lemma-eq1}
        \E_{\bm{\mu}}[\tau_\delta] \ge T (1 - \Prob_{\bm{\mu}}(\tau_\delta \le T)).
    \end{align}
    
    Before analyzing $\Prob_{\bm{\mu}}(\tau_\delta \le T)$, we recall that $\Xcal^\star(\bm{\mu})$ is compact. Therefore, by \Cref{lemma:covering}, we know that, for any $\rho > 0$, there exists a finite collection of compact subsets $\{\widetilde{\Xcal}_j\}_{j\in [N_\rho]}$ such that $\Xcal^\star(\bm{\mu}) = \bigcup_{j=1}^{N_\rho} \widetilde{\Xcal}_j = \bigcup_{j=1}^{N_\rho} \left(\Xcal^\star(\bm{\mu}) \cap \Bcal_{\rho}(x_j) \right)$ for some $x_j \in \Xcal^\star(\bm{\mu})$. Furthermore, from \Cref{ass:conv}, we also know that there exists $\bar{\rho}$ sufficiently small such that, for all $\rho \le \bar{\rho}$, $\neg \mathcal{B}_{\rho}(x) \ne \emptyset$ for all $x \in \Xcal$. In the following, we thus consider any $\rho$ that satisfies this property.

    Now, from \Cref{ass:identify} we know that there exist $\bar{x}$ such that $\bm\mu \notin \cl(\neg \bar{x})$. We add the set $\Xcal^\star(\bm\mu)\cap \Bcal_\rho(\bar x)$ to the aforementioned cover, thus obtaining a  cover of size $n_\rho=N_\rho+1$, such that each element of the cover is of the kind $\Xcal^\star(\bm\mu)\cap\Bcal_\rho(x_j)$ for some $x_j \in \Xcal^\star(\bm\mu)$ and $\lnot\Bcal_\rho(x_j)\neq\emptyset$ for all $j\in[n_\rho]$. Furthermore, by applying \Cref{lemma:iff-positive-div}, we know there exists $x_j$ (\ie $\bar{x}$) for some $j\in[n_\rho]$ such that $D(\bm{\mu}, \neg \widetilde{\Xcal}_j) > 0$.\footnote{Indeed, since $\bm\mu \notin \cl(\neg \bar{x})$, it also holds that $\bm\mu \notin \widetilde{\Xcal}_j$.} As we shall see in a few steps, this property will be used to invert $\max_{j \in [n_\rho]} D(\bm\mu, \neg \widetilde{\Xcal}_j)$.

    Now, we analyze $\Prob_{\bm{\mu}}(\tau_\delta \le T)$. Let us introduce the event $\mathcal{E} = \{ \hat{x}_{\tau_\delta} \in \Xcal^\star(\bm{\mu}) \}$ and the events $\mathcal{E}_j=\{\tau_\delta \le T \}\cap\{ \hat{x}_{\tau_\delta} \in \widetilde{\Xcal}_j\}$ for all $j\in[n_\rho]$.\footnote{We note that $\mathcal{E}_j$ is measurable since $\hat{x}_{\tau_\delta}$ is measurable with respect to $\Fcal_{\tau_\delta}$ and $\widetilde{\Xcal}_j$ is a Borel set.} Consider the following:
    \begin{align}
        \Prob_{\bm{\mu}}(\tau_\delta \le T) & = \Prob_{\bm{\mu}}(\tau_\delta \le T | \mathcal{E}) \Prob_{\bm{\mu}}(\mathcal{E}) + \Prob_{\bm{\mu}}(\tau_\delta \le T | \mathcal{E}^c) \Prob_{\bm{\mu}}(\mathcal{E}^c) \notag\\ 
        & \le \Prob_{\bm{\mu}}( \{\tau_\delta \le T\} \cap \mathcal{E}) + \delta\notag \\ 
        & \le \sum_{j=1}^{n_\rho} \Prob_{\bm{\mu}}(\mathcal{E}_j) + \delta\label{eq:helper-lemma-eq2},
    \end{align}
    where in the first step, we used the law of total probability; in the second one, we used the fact that the algorithm is $\delta$-correct; and in the third one we applied a union bound. 

    At this point, consider all $j \in [n_{\rho}]$, and analyze $\Prob_{\bm{\mu}}(\mathcal{E}_j)$. Let $\bm{\lambda}^1, \dots, \bm{\lambda}^K$ and $\bm{q}$ be an optimal distribution according to $D(\bm{\mu}, \neg \widetilde{\Xcal}_j)$. Then, from Lemma~\ref{lemma:change-of-measure}, we have, for all $\beta > 0$:
    \begin{align*}
        \Prob_{\bm{\mu}}(\mathcal{E}_j) & \le \exp\left( {TD(\bm{\mu}, \neg \widetilde{\Xcal}_j)} + \beta \right) \max_{k \in [K]} \Prob_{\bm{\lambda}^{k}} \left( \mathcal{E}_j \right) + \exp\left( \frac{-\beta^2}{2T \bar{\alpha}^2} \right) \\ 
        & \le \delta \exp\left( {TD(\bm{\mu}, \neg \widetilde{\Xcal}_j)} + \beta \right) + \exp\left( \frac{-\beta^2}{2T \bar{\alpha}^2} \right),
    \end{align*}
    where, in the second step, we have used that $\Prob_{\bm{\lambda}^{k}} \left( \mathcal{E}_j \right) \le \delta$, since for all $\bm{\lambda}^k$ we have that $\bm\lambda^k\in\lnot\widetilde{\Xcal_j}$ and thus $\Prob_{\bm{\lambda}^{k}} \left( \mathcal{E}_j \right) \le \delta$ due to the $\delta$-correctness of the algorithm. Note that, for this step, the construction of the sets $\widetilde{\Xcal}_j$ and the notion of extended alternative model plays a crucial role. Indeed, since $\widetilde{\Xcal}_j = \{\bm\lambda \in \Mcal: \forall x \in \widetilde{\Xcal}_j, x \notin \Xcal^{\star}(\bm\mu) \}$ and since algorithms are $\delta$-correct, then $\Prob_{\bm{\lambda}} \left( \mathcal{E}_j \right) \le \Prob_{\bm\lambda}(\hat{x}_{\tau_\delta} \in \widetilde{\Xcal}_j) \le \delta$ for all $\bm\lambda \in \neg \widetilde{\Xcal}_j$.

    We now follow the argument of \cite{degenne2019pure}. For a fixed $\eta\in(0,1)$, we define\footnote{Note that the minimum here is finite since for at least one element of the cover has positive value.} 
    \[
    T=(1-\eta)\log(1/\delta)\min_{k\in[n_\rho]: D(\bm\mu,\lnot\widetilde\Xcal_j)>0}D(\bm\mu,\lnot\widetilde\Xcal_j)^{-1},
    \]
    and take $\beta=\frac{\eta}{2 \sqrt{1-\eta}} \sqrt{TD(\bm{\mu}, \neg \widetilde{\Xcal}_j) \log(1 / \delta)}$.
    Then
    \[
    \Prob_{\bm{\mu}}(\mathcal{E}_j) \le \delta^{\frac{\eta}{2}} + \delta^{\frac{\eta^2 D(\bm{\mu}, \neg \widetilde{\Xcal})}{8(1-\eta)  \bar{\alpha}}},
    \]
    which goes to zero for all $\eta>0$ when $\delta\to0$.

    If we plug this into \Cref{eq:helper-lemma-eq1} and using \Cref{eq:helper-lemma-eq2} we obtain that
    \begin{align*}
    \frac{\mathbb{E}_{\bm\mu}[\tau_\delta]}{\log(1/\delta)}&\ge (1-\eta)\min_{k\in[n_\rho]: D(\bm\mu,\lnot\widetilde\Xcal_j)>0}D(\bm\mu,\lnot\widetilde\Xcal_j)^{-1}\left(1-\delta-\sum_{j=1}^{n_\rho} \Prob_{\bm{\mu}}(\mathcal{E}_j)\right)\\
    &\ge (1-\eta)\min_{k\in[n_\rho]: D(\bm\mu,\lnot\widetilde\Xcal_j)>0}D(\bm\mu,\lnot\widetilde\Xcal_j)^{-1}\left(1-\delta-n_\rho\left(\delta^{\frac{\eta}{2}} + \delta^{\frac{\eta^2 D(\bm{\mu}, \neg \widetilde{\Xcal})}{8(1-\eta)  \bar{\alpha}}}\right)\right)
    \end{align*}
    and thus, by taking the limit $\delta\to0$, we obtain:
    \[
    \lim_{\delta\to0}\frac{\mathbb{E}_{\bm\mu}[\tau_\delta]}{\log(1/\delta)}\ge (1-\eta)\min_{k\in[n_\rho]: D(\bm\mu,\lnot\widetilde\Xcal_j)>0}D(\bm\mu,\lnot\widetilde\Xcal_j)^{-1},
    \]
    which holds for all $\eta>0$ and thus proves the result.
\end{proof}

We are now ready to prove Theorem~\ref{theo:lb}.

\lowerbound*
\begin{proof}[Proof of Theorem~\ref{theo:lb}]
    Let $\rho > 0$ be sufficiently small. Then, by \Cref{lemma:helper-lemma-lb}, we know that there exists a finite set of points $\{x_j\}_{j\in[n_\rho]}\subset\Xcal^\star(\bm\mu)$ such that there exists an exact cover of $\Xcal^\star(\bm\mu)$ of the form $\widetilde{\Xcal}_j=\Xcal^\star(\bm\mu)\cap \Bcal_\rho(x_j)$ and such that for at least one $j$ we have $D(\bm\mu,\lnot\widetilde{\Xcal}_j)>0$ and
    \begin{align*}
        \liminf_{\delta \rightarrow 0}  \frac{\E_{\bm{\mu}}[\tau_\delta]}{\log(1 / \delta)} \ge \min_{j \in [n_{\rho}]} D(\bm{\mu}, \neg \widetilde{\mathcal{X}}_{j})^{-1}.
    \end{align*}
    First of all, we notice that $T^*(\bm{\mu}) \ge \min_{j \in [n_{\rho}]} D(\bm{\mu}, \neg \widetilde{\Xcal}_{j})^{-1}$ always holds. Indeed, consider $x \in \argmax_{\tilde x \in \Xcal^\star(\bm{\mu})} D(\bm{\mu}, \neg \tilde x)$ and consider any $j$ such that $x \in \widetilde{\Xcal}_j$. Then, we have that:
    \begin{align*}
        D(\bm{\mu}) & = \sup_{\bm{\omega} \in \Delta_K} \inf_{\bm{\lambda} \in \neg x} \sum_{k \in [K]} \omega_k d(\mu_k, \lambda_k) \\
        & \le \sup_{\bm{\omega} \in \Delta_K} \inf_{\bm{\lambda} \in \neg \widetilde{\Xcal}_j} \sum_{k \in [K]} \omega_k d(\mu_k, \lambda_k) \\ 
        & =
        D(\bm{\mu}, \neg \widetilde{\Xcal}_j), 
    \end{align*}
    where in the inequality step we have used $\neg \widetilde{\Xcal}_j \subseteq \neg x$ since $x \in \widetilde{\Xcal}_j$.
    
    Then, let $\bar{j} \in \argmax_{j \in [n_\rho]} D(\bm{\mu}, \neg \widetilde{\Xcal}_j)$, and let $\bar{x}$ be such that $\widetilde{\Xcal}_{\bar{j}} = \Xcal^\star(\bm{\mu}) \cap \Bcal_{\rho}(\bar{x})$ for some $\bar{x} \in \Xcal^\star(\bm{\mu})$. Then, we have that:
    \begin{align*}
         \frac{1}{D(\bm{\mu})} - \min_{j \in [n_{\rho}]} \frac{1}{D(\bm{\mu}, \neg \widetilde{\Xcal}_{j})}  %
         & = \frac{D(\bm{\mu}, \neg \widetilde{\Xcal}_{\bar{j}}) - D(\bm{\mu})}{D(\bm{\mu}) D(\bm{\mu}, \neg \widetilde{\Xcal}_{\bar{j}})} \\ 
         & \le \frac{D(\bm{\mu}, \neg \Bcal_{\rho}(\bar{x})) - D(\bm{\mu})}{D(\bm{\mu}) D(\bm{\mu}, \neg \widetilde{\Xcal}_{\bar{j}})} \tag{since $\neg \Bcal_{\rho}(\bar{x}) \subseteq \neg \widetilde{\Xcal}_{\bar{j}}$}\\ 
         & \le \frac{D(\bm{\mu}, \neg \Bcal_{\rho}(\bar{x})) - D(\bm{\mu}, \neg \bar{x})}{D(\bm{\mu}) D(\bm{\mu}, \neg \widetilde{\Xcal}_{\bar{j}})} \tag{since $D(\bm{\mu}) \ge D(\bm{\mu}, \neg x)$}\\ 
         & \le \epsilon,
    \end{align*}
    where 
    the last inequality comes from \Cref{ass:conv}.\footnote{Notice that, since \Cref{ass:conv} holds for any $\bm{\omega} \in \Delta_K$, it holds also for the supremum over $\bm\omega$. Indeed, $D(\bm{\mu}, \neg \Bcal_{\rho}(\bar{x})) - D(\bm{\mu}, \neg \bar{x}) \le D(\bm{\mu}, \bar{\bm{\omega}}, \neg \Bcal_{\rho}(\bar{x})) - D(\bm{\mu}, \bar{\bm{\omega}}, \neg \bar{x})$ for any $\bar{\bm{\omega}}$ that attains the supremum in $D(\bm{\mu}, \neg \Bcal_{\rho}(\bar{x}))$.} Letting $\epsilon \rightarrow 0$ concludes the proof. 
\end{proof}

\section{Continuity Results}
In this section, we provide results on the continuity of the different divergences involved in an infinite answer problem. 

We begin with the following preliminary result.

\begin{lemma}\label{lemma:consequence-ass-compact}
    For all $\bm\mu \in {\Theta}^K$, it holds that:
    \begin{align*}
        D(\bm\mu) = \max_{x \in \mathcal{X}^{\star}(\bm\mu)}D(\bm\mu, \neg x) = \max_{x \in \mathcal{X}} D(\bm\mu, \neg x).
    \end{align*}
\end{lemma}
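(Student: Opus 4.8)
The plan is to establish the two equalities in turn, both reducing to elementary facts about the divergence $D(\bm\mu,\neg x)$ together with compactness. For the first equality, $D(\bm\mu)=\max_{x\in\mathcal{X}^{\star}(\bm\mu)}D(\bm\mu,\neg x)$, recall that by definition $D(\bm\mu)=\sup_{x\in\mathcal{X}^{\star}(\bm\mu)}D(\bm\mu,\neg x)$, so the only content is that this supremum is attained. I would invoke Weierstrass' theorem: the feasible set $\mathcal{X}^{\star}(\bm\mu)$ is nonempty and compact by \Cref{ass:compact}, and the objective $x\mapsto D(\bm\mu,\neg x)$ is continuous for fixed $\bm\mu$, as a consequence of the joint continuity of $(\bm\mu,x)\mapsto D(\bm\mu,\neg x)$ in \Cref{lemma:continuity} (the portion of \Cref{lemma:continuity} used here is proved independently of the present lemma). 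A continuous function on a nonempty compact set attains its maximum, which gives the first equality; the identical argument applied to the compact set $\mathcal{X}$ shows that $\max_{x\in\mathcal{X}}D(\bm\mu,\neg x)$ is likewise attained.

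For the second equality the inequality ``$\le$'' is immediate, since $\mathcal{X}^{\star}(\bm\mu)\subseteq\mathcal{X}$ means the maximum over the larger set dominates. The substance is the reverse inequality, and the key observation is that \emph{incorrect} answers contribute nothing to the right-hand maximum. Concretely, I would show $D(\bm\mu,\neg x)=0$ whenever $x\notin\mathcal{X}^{\star}(\bm\mu)$: in that case $\bm\mu\in\neg x$ (as $x$ is not correct for $\bm\mu$), so plugging $\bm\lambda=\bm\mu$ into the infimum yields $D(\bm\mu,\bm\omega,\neg x)\le\sum_{k\in[K]}\omega_k d(\mu_k,\mu_k)=0$ for every $\bm\omega\in\Delta_K$, hence $D(\bm\mu,\neg x)\le 0$; combined with nonnegativity of KL divergences, which gives $D(\bm\mu,\neg x)\ge 0$ for all $x$, this forces $D(\bm\mu,\neg x)=0$. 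Equivalently, this is the ``zero'' direction of \Cref{lemma:iff-positive-div} via $\bm\mu\in\neg x\subseteq\cl(\neg x)$.

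With these facts in hand the proof closes pointwise. For any $x\in\mathcal{X}$ I claim $D(\bm\mu,\neg x)\le\max_{x'\in\mathcal{X}^{\star}(\bm\mu)}D(\bm\mu,\neg x')$: when $x\in\mathcal{X}^{\star}(\bm\mu)$ this is the definition of the maximum, and when $x\notin\mathcal{X}^{\star}(\bm\mu)$ it holds because the left-hand side is $0$ while the right-hand side is nonnegative (being a maximum of nonnegative quantities over the nonempty set $\mathcal{X}^{\star}(\bm\mu)$). Taking the maximum over $x\in\mathcal{X}$ then gives $\max_{x\in\mathcal{X}}D(\bm\mu,\neg x)\le\max_{x\in\mathcal{X}^{\star}(\bm\mu)}D(\bm\mu,\neg x)$, and together with ``$\le$'' this yields the claimed equality. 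Note that identifiability (\Cref{ass:identify}) is not needed here; it only guarantees that this common value is strictly positive, a fact used elsewhere.

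I expect the main obstacle to be the domain of validity rather than the algebra: the step $x\notin\mathcal{X}^{\star}(\bm\mu)\Rightarrow\bm\mu\in\neg x$ relies on $\bm\mu\in\mathcal{M}$, since by definition $\neg x\subseteq\mathcal{M}$. For $\bm\mu\in\mathcal{M}$ (the case relevant for correctness and the lower bound) the argument above is complete; to extend the statement verbatim to all $\bm\mu\in\Theta^K$ one must instead verify, for incorrect answers, that $\bm\mu\in\cl(\neg x)$ and invoke \Cref{lemma:iff-positive-div}, which is the only point where genuine care is required.
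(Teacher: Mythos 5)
Your proof is correct and follows essentially the same route as the paper's: the paper likewise shows $D(\bm\mu,\bm\omega,\neg x)=0$ for $x\notin\Xcal^{\star}(\bm\mu)$ by noting $\bm\mu\in\neg x$ and plugging $\bm\lambda=\bm\mu$ into the infimum, then concludes via nonnegativity of the divergence (your extra Weierstrass step for attainment of the maxima is harmless and, as you note, non-circular since it relies only on part \emph{(ii)} of \Cref{lemma:continuity}). Your closing caveat is apt: the paper's proof also tacitly uses $\bm\mu\in\mathcal{M}$ at the step ``$\bm\mu\in\neg x$'' even though the lemma is stated for all $\bm\mu\in\Theta^K$, so you have identified a genuine (if minor) imprecision in the paper rather than a gap in your own argument.
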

\begin{proof}
    Let $x \in \mathcal{X}$ and $x \notin \mathcal{X}^{\star}(\bm\mu)$. Then, consider any $\bm\omega \in \Delta_K$, we have that:
    \begin{align}\label{eq:cons-ass-compact}
        D(\bm\mu, \bm\omega, \neg x) = \inf_{\bm\lambda \in \neg x} \sum_{k \in [K]} \omega_k d(\mu_k, \lambda_k) = 0.
    \end{align}    
    Indeed, $\neg x = \{ \bm\lambda \in \mathcal{M}: x \notin \mathcal{X}^{\star}(\bm\lambda) \}$, and thus $\bm\mu \in \neg x$. Them, \Cref{eq:cons-ass-compact} follows by $d(\mu, \mu) = 0$.

    Observing that $D(\bm\mu, \neg x) \ge 0$ for all $x \in \mathcal{X}$ (i.e, \Cref{ass:compact}) concludes the proof.
\end{proof}

\continuity*
\begin{proof}
    \textbf{i)} First, note that, for all $x\in\Xcal$, the function $(\bm\mu, \bm\omega) \rightarrow D(\bm\mu, \bm\omega, \neg x)$ is jointly continuous over $\Theta^K \times \Delta_K$. This is due to \citet[Lemma 27]{degenne2019pure}. Then, it remains to show that $(\bm\mu, \bm\omega, x) \rightarrow D(\bm\mu, \bm\omega, \neg x)$ is jointly continuous over $\Theta^K \times \Delta_K \times \mathcal{X}$. 
    Thus, for all $\bm\mu, \bm\omega,x \in \Theta^K \times \Delta_K \times \mathcal{X}$ and $\forall \epsilon > 0$ there exists $\kappa^\star=\kappa^\star_{\mu,\omega,x,\epsilon} > 0$ such that, for all $\bm\mu', \bm\omega', x': \|\bm\mu - \bm\mu'\|_{\infty} \le \kappa^\star, \|\bm\omega - \bm\omega'\|_{\infty} \le \kappa^\star, \|x-x'\|_{\infty} \le \kappa^\star$, it holds that $|D(\bm\mu', \bm\omega', \neg x') - D(\bm\mu, \bm\omega, \neg x)| \le \epsilon$.
    Define $\kappa^\star = \min\{\tilde \kappa, \bar{\kappa}\}$, where $\bar{\kappa}$ and $\tilde \kappa$ are as follows. First, $\bar{\kappa}$ is such that:
    \begin{align}\label{eq:rmp1}
        D(\bar{\bm\mu}, \bar{\bm\omega}, \neg \Bcal_{\bar{\kappa}}(\bar{x})) - D(\bar{\bm\mu}, \bar{\bm\omega}, \neg \bar{x}) \le \frac{\epsilon}{2} \quad \forall \bar{x} \in \mathcal{X}, \bar{\bm\mu} \in \Theta^K, \bar{\bm\omega} \in \Delta_K.
    \end{align}
    Due to Assumption~\ref{ass:conv} such $\bar{\kappa}$ is guaranteed to exists. Secondly, $\tilde\kappa=\tilde\kappa_{\mu,\omega,x,\epsilon}$ is such that:
    \begin{align}\label{eq:rmp2}
        |D(\bm\mu', \bm\omega',\neg x) - D(\bm\mu, \bm\omega, \neg x)| \le \frac{\epsilon}{2} \quad \forall \bm\mu' \in \Bcal_{\tilde\kappa}(\bm\mu), \bm\omega' \in \Bcal_{\tilde\kappa}(\bm\omega).
    \end{align}
    Such $\tilde\kappa$ is guaranteed to exists due to the continuity of $(\bm\mu, \bm\omega) \rightarrow D(\bm\mu, \bm\omega, \neg x)$ for any fixed $x \in \mathcal{X}$ due to \citet[Lemma 27]{degenne2019pure}.
    
    Then, we analyze $|D(\bm\mu', \bm\omega', \neg x') - D(\bm\mu, \bm\omega, \neg x)|$ by studying an upper bound on the sum of the following terms: $|D(\bm\mu', \bm\omega', \neg x') - D(\bm\mu', \bm\omega', \neg x)|$ and $|D(\bm\mu', \bm\omega', \neg x) - D(\bm\mu, \bm\omega, \neg x)|$. We start with the former. Suppose, $D(\bm\mu', \bm\omega', \neg x') > D(\bm\mu', \bm\omega', \neg x)$, then
    \begin{align*}
        |D(\bm\mu', \bm\omega', \neg x') - D(\bm\mu', \bm\omega', \neg x)| & =D(\bm\mu', \bm\omega', \neg x') - D(\bm\mu', \bm\omega', \neg x)\\
        &\le D(\bm\mu', \bm\omega', \neg \Bcal_{\bar{\kappa}}(x)) - D(\bm\mu', \bm\omega', \neg x) \\ & \le \frac{\epsilon}{2}.
    \end{align*}
    where the first inequality holds since $x'\subseteq \Bcal_{\kappa^\star}(x)\subseteq \Bcal_{\bar \kappa}(x)$ and thus $\lnot x'\supseteq \lnot\Bcal_{\bar \kappa}(x)$, and the second inequality follows from \Cref{eq:rmp1}.
    Equivalently, if $D(\bm\mu', \bm\omega', \neg x') \le D(\bm\mu', \bm\omega', \neg x)$ then
    \begin{align*}
        |D(\bm\mu', \bm\omega', \neg x') - D(\bm\mu', \bm\omega', \neg x)| & = D(\bm\mu', \bm\omega', \neg x)-D(\bm\mu', \bm\omega', \neg x')\\
        &\le D(\bm\mu', \bm\omega', \neg \Bcal_{\bar{\kappa}}(x')) - D(\bm\mu', \bm\omega', \neg x') \\ 
        & \le \frac{\epsilon}{2},
    \end{align*}
    where the first inequality holds since $x\subseteq \Bcal_{\kappa^\star}(x')\subseteq \Bcal_{\bar \kappa}(x')$ and thus $\lnot x\supseteq \lnot\Bcal_{\bar \kappa}(x')$, and the second inequality follows from \Cref{eq:rmp1}.
    
    Similarly, for the second term, we have that:
    \begin{align*}
        |D(\bm\mu', \bm\omega', \neg x) - D(\bm\mu, \bm\omega, \neg x)| \le \frac{\epsilon}{2},
    \end{align*}
    by the fact that $\bm\mu' \in \Bcal_{\kappa^\star}(\bm\mu), \bm\omega' \in \Bcal_{\kappa^\star}(\bm\omega)$ and \Cref{eq:rmp2}. This concludes the proof of the first statement.

    The other three statements follow from various applications of Berge's maximum theorem (\Cref{thm:berge}).
    
    \textbf{ii)} 
    $D(\bm{\mu}, \bm{\omega}, \neg x)$ is continuous over $\Theta^K \times \Delta_K \times \Xcal$ and the maximization is over the simplex.

    \textbf{iii)} The third claim is due to Berge's maximum theorem. Indeed, $\Xcal^\star(\bm{\mu})$ is continuous and compact-valued and $ D(\bm{\mu}, \bm{\omega}, \neg x)$ is continuous. Hence, $(\bm{\mu}, \bm{\omega}) \rightarrow \max_{x \in \Xcal^\star(\bm{\mu})} D(\bm{\mu}, \bm{\omega}, \neg x)$ is continuous and $(\bm{\mu}, \bm{\omega}) \rightrightarrows \argmax_{x \in \Xcal^\star(\bm{\mu})} D(\bm{\mu}, \bm{\omega}, \neg x)$ is upper hemicontinuous and compact-valued.

    \textbf{iv)} Finally, the last claim follows by applying the Berge's maximum theorem. We recall from \Cref{lemma:consequence-ass-compact} that $D(\bm\mu) = \max_{x \in \mathcal{X}} D(\bm\mu, \neg x)$. Since $D(\bm{\mu}, \neg x)$ is continuous and $\bm{\mu} \rightrightarrows \mathcal{X}$ is constant and compact-valued (\Cref{ass:compact}), we obtain that $x_F(\bm{\mu})$ is upper hemicontinuous and compact-valued.  Furthermore, since $\max_{x \in \mathcal{X}^{\star}(\bm{\mu})} D(\bm{\mu}, \bm{\omega}, \neg x)$ is continuous and the simplex is a constant and compact set, we have that $\bm{\mu} \rightrightarrows \bm{\omega}^*(\bm{\mu})$ is upper hemicontinous and compact-valued.
\end{proof}

\begin{corollary}[Uniform Continuity]\label{corollary:unif-cont}
    Let $\mathcal{C} \subseteq \Theta^K$ and $\mathcal{H} \subseteq \mathcal{X}$ be compact sets. Then, we have that:
    \begin{itemize}
        \item $(\bm{\omega}, \bm{\mu}, x) \rightarrow D(\bm{\mu}, \bm{\omega}, \neg x)$ is uniformly continuous on $\mathcal{C} \times \Delta_K \times \mathcal{H}$.
        \item $(\bm{\mu},x) \rightarrow D(\bm{\mu}, \neg x)$ is uniformly continuous on $\mathcal{C} \times \mathcal{H}$.
        \item $(\bm{\mu}, \bm{\omega}) \rightarrow \max_{x \in \mathcal{X}^{\star}(\bm{\mu})} D(\bm{\mu}, \bm{\omega}, \neg x)$ is uniformly continuous on $\mathcal{C} \times \Delta_K$.
        \item $\bm{\mu} \rightarrow D(\bm{\mu})$ is uniformly continuous on $\mathcal{C}$
    \end{itemize}
\end{corollary}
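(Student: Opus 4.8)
The plan is to obtain all four uniform-continuity claims as immediate consequences of the corresponding continuity statements already established in \Cref{lemma:continuity}, combined with the Heine--Cantor theorem, which guarantees that a continuous function on a compact metric space is automatically uniformly continuous. Consequently, the whole argument reduces to checking that, in each of the four cases, the domain to which we restrict the (already continuous) function is compact. No new analytic content is required beyond \Cref{lemma:continuity}.

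First I would record the compactness of each restricted domain. The simplex $\Delta_K$ is a closed and bounded subset of $\Reals^K$, hence compact, while $\mathcal{C} \subseteq \Theta^K$ and $\mathcal{H} \subseteq \Xcal$ are compact by hypothesis. Since a finite product of compact sets is compact, each of the four domains $\mathcal{C} \times \Delta_K \times \mathcal{H}$, $\mathcal{C} \times \mathcal{H}$, $\mathcal{C} \times \Delta_K$, and $\mathcal{C}$ is compact. These are precisely the sets appearing in the four bullet points of the statement.

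Next I would dispatch the claims one by one. For the first, point \emph{(i)} of \Cref{lemma:continuity} gives continuity of $(\bm\mu, \bm\omega, x) \mapsto D(\bm\mu, \bm\omega, \neg x)$ on $\Theta^K \times \Delta_K \times \Xcal$; restricting this map to the compact set $\mathcal{C} \times \Delta_K \times \mathcal{H}$ and invoking Heine--Cantor yields uniform continuity there. The remaining three claims follow identically, using points \emph{(ii)}, \emph{(iii)}, and \emph{(iv)} of \Cref{lemma:continuity} for the functions $(\bm\mu,x)\mapsto D(\bm\mu,\neg x)$, $(\bm\mu,\bm\omega)\mapsto \max_{x\in\Xcal^\star(\bm\mu)}D(\bm\mu,\bm\omega,\neg x)$, and $\bm\mu\mapsto D(\bm\mu)$, restricted respectively to $\mathcal{C} \times \mathcal{H}$, $\mathcal{C} \times \Delta_K$, and $\mathcal{C}$.

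I do not expect any genuine obstacle in this corollary: the only thing to verify is the compactness of each restricted domain, which is routine, and then the passage from continuity to uniform continuity on a compact set is automatic. The substantive work has already been done upstream in proving the \emph{joint} continuity asserted in \Cref{lemma:continuity}---most notably point \emph{(i)}, whose proof crucially exploited the \emph{uniform} form of \Cref{ass:conv}. Once that joint continuity is available, this corollary is essentially a bookkeeping exercise, and I would keep its proof to a couple of lines.
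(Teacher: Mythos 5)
Your proposal is correct and matches the paper's proof exactly: the paper likewise invokes Heine--Cantor on the compact restricted domains and applies \Cref{lemma:continuity}, with no additional content. Your extra verification that each product domain is compact is a harmless elaboration of what the paper leaves implicit.
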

\begin{proof}
    By Heine–Cantor, every continuous function is uniformly continuous on a compact domain. Then, apply \Cref{lemma:continuity}.
\end{proof}

\section{Algorithm Analysis}

We structure this section as follows. First, in \Cref{app:correct-aux}, we prove the correctness of the stopping and recommendation rules that we introduced in \Cref{sec:algo}. Then, we present the analysis of Sticky Sequence TaS for any answer selection rule that ensures convergence (\Cref{app:ss-tas}). Finally, in \Cref{app:alg-conv}, we show how to build such converging sequences according to the properties of $\Xcal$ and $\bm\mu \mapstoto \Xcal^{\star}(\bm\mu)$. 

\subsection{Correctness and Expected Stopping Time}\label{app:correct-aux}

We first recall the stopping and recommendation rules. Specifically, the stopping rule is as follows:
\begin{align}\label{eq:stopping-rule}
    \tau_\delta = \inf\left\{t \in \mathbb{N}:  \max_{x \in \Xcal^\star(\bm{\hat{\mu}}(t))} D\left( \bm{\hat{\mu}}(t), \bm{N}(t), \lnot x \right) > \beta_{t,\delta} \right\}.
\end{align}
Furthermore, it recommends:
\begin{align}\label{eq:recc-rule}
    \hat{x}_{\tau_\delta} \in \argmax_{x \in \Xcal^\star(\bm{\hat{\mu}}(t))} D\left( \bm{\hat{\mu}}(t), \bm{N}(t), \lnot x \right).
\end{align}
The following correctness analysis is based on concentration arguments that have been initially described by \citet{menard2019gradient}, where the author is working under the assumption of Gaussian distributions with unitary variance. Specifically, the stopping threshold $\beta_{t,\delta}$ is set to:
\begin{align}\label{eq:beta-thr}
    \beta_{t,\delta} = \log\left( \frac{1}{\delta} \right) + K \log \left( 4 \log\left( \frac{1}{\delta} \right) + 1 \right) + 6K \log(\log(t)+3) + K\tilde{C},
\end{align}
where $\tilde{C}$ is a universal constant. 
All the results of this section can be extended (with a more complex notation) to distributions in any canonical exponential family by modifying appropriately $\beta_{t,\delta}$ (see e.g., \cite{kaufmann2021mixture}).

\begin{lemma}[Correctness]\label{lemma:correctness}
    For any sampling rule $(A_t)_{t  \ge 1}$, the stopping and recommendation rules in Equations~\eqref{eq:stopping-rule}-\eqref{eq:recc-rule} guarantee that:
    \begin{align*}
        \mathbb{P}_{\bm{\mu}} \left(  \hat{x}_{\tau_\delta} \notin \Xcal^\star(\bm{\mu}) \right) \le \delta.
    \end{align*}
\end{lemma}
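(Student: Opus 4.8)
The plan is to show that the only way the algorithm can output a wrong answer upon stopping is for the empirical model $\hat{\bm\mu}(\tau_\delta)$ to have deviated far from the true model $\bm\mu$ in a weighted-KL sense, and then to invoke the calibration of $\beta_{t,\delta}$ (the time-uniform deviation inequality of \cite{menard2019gradient,kaufmann2021mixture}) to bound the probability of that deviation. First I would set up the error decomposition. By the recommendation rule \eqref{eq:recc-rule}, whenever the algorithm stops, $\hat{x}_{\tau_\delta}$ attains $\max_{x \in \Xcal^\star(\hat{\bm\mu}(\tau_\delta))} D(\hat{\bm\mu}(\tau_\delta), \bm{N}(\tau_\delta), \neg x)$. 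An error occurs precisely when $\hat{x}_{\tau_\delta} \notin \Xcal^\star(\bm\mu)$, and by the very definition of the alternative set, $\hat{x}_{\tau_\delta} \notin \Xcal^\star(\bm\mu)$ is \emph{equivalent} to $\bm\mu \in \neg \hat{x}_{\tau_\delta}$. This is the crucial observation: the true model is an admissible competitor in the infimum defining $D(\cdot, \cdot, \neg \hat{x}_{\tau_\delta})$.

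Second, on the event $\{\tau_\delta < \infty\} \cap \{\hat{x}_{\tau_\delta} \notin \Xcal^\star(\bm\mu)\}$, I would combine the stopping condition with this observation. The stopping rule \eqref{eq:stopping-rule} gives $\max_{x \in \Xcal^\star(\hat{\bm\mu}(\tau_\delta))} D(\hat{\bm\mu}(\tau_\delta), \bm{N}(\tau_\delta), \neg x) > \beta_{\tau_\delta,\delta}$, and since $\hat{x}_{\tau_\delta}$ is the maximizer this reads $D(\hat{\bm\mu}(\tau_\delta), \bm{N}(\tau_\delta), \neg \hat{x}_{\tau_\delta}) > \beta_{\tau_\delta,\delta}$. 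Because $\bm\mu \in \neg \hat{x}_{\tau_\delta}$, the infimum over $\neg \hat{x}_{\tau_\delta}$ is bounded above by its value at $\bm\lambda = \bm\mu$, so that
\begin{align*}
\sum_{k \in [K]} N_k(\tau_\delta)\, d(\hat\mu_k(\tau_\delta), \mu_k) \;\ge\; D(\hat{\bm\mu}(\tau_\delta), \bm{N}(\tau_\delta), \neg \hat{x}_{\tau_\delta}) \;>\; \beta_{\tau_\delta,\delta}.
\end{align*}

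Third, I would conclude via concentration. The previous display shows that the error event is contained in the time-uniform deviation event $\{\exists t \in \mathbb{N}: \sum_{k \in [K]} N_k(t)\, d(\hat\mu_k(t), \mu_k) \ge \beta_{t,\delta}\}$. The threshold $\beta_{t,\delta}$ in \eqref{eq:beta-thr} is calibrated precisely so that this event has probability at most $\delta$ under $\bm\mu$, which is the mixture-martingale / peeling deviation inequality of \cite{menard2019gradient,kaufmann2021mixture}. Hence $\mathbb{P}_{\bm\mu}(\hat{x}_{\tau_\delta} \notin \Xcal^\star(\bm\mu)) \le \delta$, as claimed.

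The main obstacle is isolated entirely in the third step, namely the uniform-in-time concentration of the self-normalized weighted-KL statistic; this is inherited from prior work and I would cite it rather than reprove it. The reduction in the first two steps is elementary and, notably, uses no property of the (arbitrary) sampling rule. It is worth stressing that this argument is identical whether $\Xcal$ is finite or infinite, since at no point do we union-bound over answers — the error is controlled through the single competitor $\bm\mu$. The only points requiring minor care are the measurability of $\hat{x}_{\tau_\delta}$ and of $\tau_\delta$, which are granted by the measurability assumptions on the decision and stopping rules, and the degenerate case $\tau_\delta = \infty$, where no answer is returned and so no error is incurred.
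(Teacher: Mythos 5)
Your proposal is correct and follows essentially the same route as the paper's proof: the error event forces $\bm\mu \in \neg \hat{x}_{\tau_\delta}$, so the infimum defining the stopping statistic is dominated by its value at the true model, and the conclusion follows from the time-uniform deviation inequality calibrating $\beta_{t,\delta}$ (Proposition~1 of \citet{menard2019gradient}). Your additional remarks on measurability and the absence of any union bound over answers are accurate and consistent with the paper's treatment.
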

\begin{proof}
    It holds that:
    \begin{align*}
        \mathbb{P}_{\bm{\mu}} \left(  \hat{x}_{\tau_\delta} \notin \Xcal^\star(\bm{\mu}) \right) & = \mathbb{P}_{\bm{\mu}}\left( \exists t \in \mathbb{N}: \max_{x \in \Xcal^\star(\bm{\hat{\mu}}(t))} D\left( \bm{\hat{\mu}}(t) ,\bm{N}(t), \neg x \right) > \beta_{t,\delta} \text{ and } \hat{x}_t \notin \Xcal^\star(\bm{\mu})  \right) \\ 
        & \le \mathbb{P}_{\bm{\mu}}\left( \exists t \in \mathbb{N} \text{ and } x \notin \Xcal^\star(\bm{\mu}) : D\left(\bm{\hat{\mu}}(t), \bm{N}(t), \neg x \right) > \beta_{t,\delta} \right) \\ 
        & \le \mathbb{P}_{\bm{\mu}}\left( \exists t \in \mathbb{N}: \sum_{k \in [K]} N_k(t) d(\hat{\mu}_k(t), \mu_k) > \beta_{t,\delta} \right)  \\ 
        & \le \delta,
    \end{align*}
    where (i) in the first step we have used Equations~\eqref{eq:stopping-rule}-\eqref{eq:recc-rule}, (ii) in the third one the fact that, since $x \notin \Xcal^\star(\bm{\mu})$, then $\bm{\mu} \in \neg x$, and (iii) in the last one Proposition 1 in \cite{menard2019gradient}.    
\end{proof}

The following results will be used to control the expected stoppping time of the proposed algorithms. \Cref{lemma:control-expect} is a standard result; we include a proof for completeness.

\begin{lemma}\label{lemma:control-expect}
    Let $\{\mathcal{E}(t)\}_t$ be a sequence of $\mathcal{F}_t$-measurable events. Suppose there exists $T_0(\delta) \in \mathbb{N}$ such that, for all $t \ge T_0(\delta)$, $\mathcal{E}(t) \subseteq \{ \tau_\delta < t \}$. Then, it holds that:
    \begin{align*}
        \mathbb{E}_{\bm{\mu}}[ \tau_\delta] \le T_0(\delta) +\sum_{t=0}^{+\infty} \mathbb{P}_{\bm{\mu}} (\mathcal{E}(t)^c). 
    \end{align*}
\end{lemma}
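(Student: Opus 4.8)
The plan is to reduce the claim to the standard tail-sum identity for a non-negative integer-valued random variable, namely $\mathbb{E}_{\bm{\mu}}[\tau_\delta] = \sum_{t=1}^{\infty}\mathbb{P}_{\bm{\mu}}(\tau_\delta \ge t)$, and then to split the resulting series at the threshold $T_0(\delta)$. The only inputs I would use are the hypothesis of the lemma and the fact that $\tau_\delta$ is a stopping time with respect to $(\mathcal{F}_t)_{t\in\Naturals}$, so that every event $\{\tau_\delta \ge t\} = \{\tau_\delta < t\}^c$ is $\mathcal{F}_{t-1}$-measurable and the probabilities below are well defined.

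First I would take complements in the hypothesis: since $\mathcal{E}(t) \subseteq \{\tau_\delta < t\}$ for all $t \ge T_0(\delta)$, passing to complements yields $\{\tau_\delta \ge t\} \subseteq \mathcal{E}(t)^c$, and hence $\mathbb{P}_{\bm{\mu}}(\tau_\delta \ge t) \le \mathbb{P}_{\bm{\mu}}(\mathcal{E}(t)^c)$ for every $t \ge T_0(\delta)$ by monotonicity of probability. This is the one substantive step, and it is immediate.

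Then I would split the tail sum at $T_0(\delta)$ and bound each piece separately:
\[
\mathbb{E}_{\bm{\mu}}[\tau_\delta] = \sum_{t=1}^{T_0(\delta)} \mathbb{P}_{\bm{\mu}}(\tau_\delta \ge t) + \sum_{t=T_0(\delta)+1}^{\infty} \mathbb{P}_{\bm{\mu}}(\tau_\delta \ge t) \le T_0(\delta) + \sum_{t=T_0(\delta)+1}^{\infty} \mathbb{P}_{\bm{\mu}}(\mathcal{E}(t)^c),
\]
where the first sum is at most $T_0(\delta)$ because it consists of $T_0(\delta)$ terms each bounded by $1$, and the second sum is controlled using the inclusion from the previous step. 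Finally, enlarging the range of the second summation to start at $t=0$ only adds non-negative terms, so $\sum_{t=T_0(\delta)+1}^{\infty}\mathbb{P}_{\bm{\mu}}(\mathcal{E}(t)^c) \le \sum_{t=0}^{\infty}\mathbb{P}_{\bm{\mu}}(\mathcal{E}(t)^c)$, which yields exactly the claimed bound.

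I expect no genuine obstacle here: this is a bookkeeping lemma and the argument is entirely elementary. The only points worth stating explicitly for rigor are (i) the validity of the tail-sum representation of the expectation, which also handles the case $\tau_\delta = +\infty$ with $\mathbb{E}_{\bm{\mu}}[\tau_\delta]$ interpreted as $+\infty$ (so the inequality is vacuous unless the right-hand side is finite, in which case the bound forces $\tau_\delta < \infty$ almost surely), and (ii) the harmless index shift when re-indexing the second sum to begin at $t=0$. None of the continuity or concentration results from the rest of the paper are needed.
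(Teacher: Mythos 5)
Your proposal is correct and follows essentially the same route as the paper: the tail-sum representation of the expectation, a split at $T_0(\delta)$ with the first block bounded by $T_0(\delta)$, and the complementation $\{\tau_\delta \ge t\} \subseteq \mathcal{E}(t)^c$ from the hypothesis to control the tail. The only cosmetic difference is that the paper writes the tail sum as $\sum_{t=0}^{\infty}\mathbb{P}_{\bm\mu}(\tau_\delta > t)$ rather than $\sum_{t=1}^{\infty}\mathbb{P}_{\bm\mu}(\tau_\delta \ge t)$, which is the same quantity.
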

\begin{proof}
    By standard probabilistic arguments, we have that:
    \begin{align*}
        \mathbb{E}_{\bm{\mu}}[ \tau_\delta] = \sum_{t=0}^{+\infty} \mathbb{P}_{\bm{\mu}}(\tau_\delta > t) \le T_0(\delta) + \sum_{t=T_0(\delta)}^{+\infty} \mathbb{P}_{\bm{\mu}}(\tau_\delta > t)  \le T_0(\delta) + \sum_{t=0}^{+\infty} \mathbb{P}_{\bm{\mu}}(\mathcal{E}(t)^c),
    \end{align*}
    which concludes the proof.
\end{proof}

\subsection{Sticky-Sequence Track-and-Stop}\label{app:ss-tas}

In this section, we prove \Cref{theo:sticky-seq}, \ie we analyze Sticky-Sequence Track-and-Stop for any general answer selection rule that satisfies the fact that $\{ x_t \}_t$ is converging to some $\bar{x} \in \Xcal_F(\bm\mu)$. 

The analysis will be carried out under the good event $\mathcal{E}_t = \bigcap_{s=h(t)}^t \{ \bm\mu \in C_s \}$ where $h(t) = \lceil \sqrt{t} \rceil$ and $C_s = \{ \bm\mu' \in \mathcal{M}: D(\hat{\bm\mu}(t), \bm{N}(t), \bm\mu') \le \log(g(t)) \}$ where $g(t) = C t^{10}$ for some constant $C \in \mathbb{R}$. It is possible to show that, for an appropriate value of $C$, it holds that $\sum_{t=0}^{\infty} \mathbb{P}_{\bm\mu} (\mathcal{E}(t)^c)$ is finite (see Lemma~14 in \cite{degenne2019pure}). Next, we first recall the definition of converging sequence $\{ x_t \}_t$. Specifically, for all $\epsilon > 0$, there exists $T_{\epsilon} \in \mathbb{N}$ such that, for all $t \ge T_{\epsilon}$, on $\mathcal{E}(t)$, there exists $\bar{x} \in \Xcal_F(\bm\mu): \| x_s - \bar{x} \| \le \epsilon$ for all $s \ge h(t)$.

Now that we clarified the setup, we first prove that, on $\mathcal{E}(t)$, $C_t$ is shrinking toward $\bm\mu$. Note that this result is independent from \Cref{def:conv-answer} and only depends on the definition of $\mathcal{C}_t$, $\mathcal{E}_t$ and the forced exploration of the algorithm (\ie the cumulative tracking procedure, \Cref{lemma:tracking}).
 
\begin{lemma}\label{lemma:ct-shrink}
    For all $\epsilon > 0$, there exists ${T}_{\epsilon}$ such that, for all $t \ge T_{\epsilon}$, on $\mathcal{E}(t)$, $\|\bm{\mu}-\bm{\mu}'\|_{\infty} \le \epsilon$ holds $\forall\bm{\mu}' \in C_s$ and all $s \ge h(t)$.
\end{lemma}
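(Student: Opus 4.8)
The plan is to exploit two facts simultaneously: on the good event the confidence set $C_s$ contains the true model $\bm\mu$, while the forced exploration built into the tracking rule forces every arm count $N_k(s)$ to grow, so that the radius of $C_s$ (measured through the divergences) shrinks to zero. First I would unpack the definition of $C_s$: since $D(\hat{\bm\mu}(s), \bm N(s), \bm\mu') = \sum_{k\in[K]} N_k(s)\, d(\hat\mu_k(s), \mu'_k)$ is a sum of nonnegative terms, membership $\bm\mu' \in C_s$ gives the per-arm bound $N_k(s)\, d(\hat\mu_k(s), \mu'_k) \le \log(g(s))$ for every $k$. On $\mathcal{E}(t)$ we also have $\bm\mu \in C_s$ for every $s$ in the controlled range $[h(t),t]$, which yields $N_k(s)\, d(\hat\mu_k(s), \mu_k) \le \log(g(s))$. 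Thus both $\bm\mu$ and any $\bm\mu' \in C_s$ are close, in KL, to the common anchor $\hat{\bm\mu}(s)$.

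Next I would invoke the tracking guarantee (\Cref{lemma:tracking}) to lower bound the counts, say $N_k(s) \ge c\sqrt{s}$ for some constant $c>0$, all $k$, and all $s \ge s_0$. Combining with the previous step gives $d(\hat\mu_k(s), \mu_k) \le \log(g(s))/N_k(s)$ and the analogous bound for $\mu'$. Because $g(s) = C s^{10}$, $\log(g(s))$ grows only logarithmically, so $\log(g(s))/N_k(s) = O(\log(s)/\sqrt{s}) \to 0$. Hence, for any tolerance $\eta > 0$ there exists $S_\eta$ such that $d(\hat\mu_k(s), \mu_k) \le \eta$ and $d(\hat\mu_k(s), \mu'_k) \le \eta$ for all $k$ and all $s \ge S_\eta$, uniformly over $\bm\mu' \in C_s$.

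The remaining step converts a small divergence into a small gap between means. Since $\Theta$ is bounded, all relevant means lie in the compact set $\cl(\Theta)$, on which $d$ is continuous and vanishes exactly on the diagonal; therefore the minimum of $d$ over the compact set $\{(p,q) \in \cl(\Theta)^2 : |p - q| \ge \epsilon/2\}$ is a strictly positive number $\eta_\epsilon$ (equivalently one could use a quadratic lower bound $d(p,q) \gtrsim (p-q)^2$ of the type already exploited via \Cref{coroll:kl-lip}). Choosing $\eta = \eta_\epsilon$ above then forces $|\hat\mu_k(s) - \mu_k| \le \epsilon/2$ and $|\hat\mu_k(s) - \mu'_k| \le \epsilon/2$, and a triangle inequality gives $|\mu_k - \mu'_k| \le \epsilon$ for every $k$, i.e. $\|\bm\mu - \bm\mu'\|_\infty \le \epsilon$. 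Finally I would uniformize in $s$: since $h(t) = \lceil\sqrt{t}\rceil \to \infty$, picking $T_\epsilon$ with $\sqrt{T_\epsilon} \ge \max(s_0, S_{\eta_\epsilon})$ guarantees that every $s \in [h(t),t]$ satisfies $s \ge S_{\eta_\epsilon}$, so the bound holds simultaneously for all such $s$ on $\mathcal{E}(t)$.

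The hard part will be performing the last conversion uniformly, together with the attendant technicality that the empirical means $\hat\mu_k(s)$ must be guaranteed to stay in a fixed compact set so that the compactness/continuity argument applies uniformly in $s$ rather than only pointwise. One must also respect that the conclusion is only meaningful for $s$ in the range where $\mathcal{E}(t)$ actually constrains the confidence sets; the quantitative slack between the logarithmic growth of $\log(g(s))$ and the $\sqrt{s}$ growth of $N_k(s)$ is what makes the radius of $C_s$ vanish and is the crux of the estimate.
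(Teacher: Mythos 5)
Your proposal is correct and follows essentially the same route as the paper: on $\mathcal{E}(t)$ both $\bm\mu$ and $\bm\mu'$ lie in $C_s$, the tracking guarantee of \Cref{lemma:tracking} gives $N_k(s)\gtrsim\sqrt{s}$ so the $\log(g(s))/\sqrt{s}$ radius vanishes, and a KL-to-distance conversion plus the choice $h(t)=\lceil\sqrt{t}\rceil$ uniformizes over $s\ge h(t)$. The only cosmetic difference is that the paper aggregates the two constraints into $ch(\bm\mu,\bm\mu')=\inf_{\bm\lambda}\sum_k(d(\lambda_k,\mu_k)+d(\lambda_k,\mu'_k))$ and applies the sub-Gaussian quadratic lower bound directly (thereby avoiding your worry about where $\hat{\bm\mu}(s)$ lives), whereas you go arm-by-arm through $\hat\mu_k(s)$ with a triangle inequality — your parenthetical fallback to $d(p,q)\gtrsim(p-q)^2$ closes that gap and makes the two arguments equivalent.
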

\begin{proof}
    For any $\bm{\mu}, \bm{\mu}'$, let $ch(\bm{\mu, \bm{\mu}'})=\inf_{\bm{\lambda} \in \mathbb{R}^K} \sum_{k \in [K]} \left( d(\lambda_k, \mu_k) + d(\lambda_k,\mu'_k) \right)$. 

    Consider $T_{\epsilon} \ge \bar{T}$, where $\bar{T}$ is such that $\sqrt{h(\bar{T})+K^2} - 2K > 0$ and let $t \ge T_{\epsilon}$. We recall that, by definition, on $\mathcal{E}(t)$, it holds that $\bm{\mu} \in C_s$ for all $s \ge h(t)$. Thus, $D(\hat{\bm\mu}(s), \bm{N}(s), \bm\mu) \le \log(g(s))$ for all $s \ge h(t)$. Furthermore, for all $\bm\mu' \in C_s$, $D(\hat{\bm\mu}(s), \bm{N}(s), \bm\mu') \le \log(g(s))$. Thus, on $\mathcal{E}(t)$, for all $s \ge h(t)$ and $\bm\mu' \in C_s$, we have that:
    \begin{align*}
        \sum_{k \in [K]} N_k(s) \left( d(\hat{\mu}_k(s), \mu_k) + d(\hat{\mu}_k(s), \mu_k') \right) \le 2 \log(g(s)),
    \end{align*}
    Using Lemma~\ref{lemma:tracking} together with the fact that $t$ is such that $\sqrt{h(t)+K^2} - 2K > 0$ (since $t \ge \bar{T}$), we obtain that, on $\mathcal{E}(t)$, $ch(\bm{\mu}, \bm{\mu}') \le \frac{2\log(g(s))}{\sqrt{s+K^2}-2K},~\forall \bm\mu' \in \mathcal{C}_s$. 
    
    Now, observe that $\frac{2\log(g(s))}{\sqrt{s+K^2}-2K}$ is decreasing in $s$. It follows that, for all $\epsilon> 0$, there exists $s_{\epsilon}$ such that $\forall s \ge s_{\epsilon}$, $\|\bm{\mu}-\bm{\mu}'\|_{\infty} \le \epsilon$.  Indeed, let us analyze $ch(\bm{\mu}, \bm{\mu}')$. Using the sub-gaussianity of the arms 
    together with the fact that $ch(\bm{\mu}, \bm{\mu}') \le \frac{2\log(g(s))}{\sqrt{s+K^2}-2K}$ we obtain:\footnote{We recall that sub-gaussianity implies $d(p,q) \ge \frac{(p-q)^2}{2\sigma^2}$.}
    \begin{align*}
        \frac{2\log(g(s))}{\sqrt{s+K^2}-2K} & \ge ch(\bm\mu, \bm\mu') \\ & \ge \inf_{\bm\lambda \in \mathbb{R}^K}\sum_{k \in [K]} \frac{(\lambda_k - \mu_k)^2 + (\lambda_k - \mu'_k)^2}{2\sigma^2} \\ 
        & = \frac{1}{2\sigma^2} \sum_{k \in [K]} \left( \frac{\mu_k+\mu_k'}{2} - \mu_k \right)^2 + \left( \frac{\mu_k+\mu_k'}{2} - \mu'_k \right)^2  \\ 
        & = \frac{1}{4\sigma^2} \sum_{k \in [K]} (\mu_k - \mu'_k)^2 \\ 
        & = \frac{1}{4\sigma^2} \| \bm\mu - \bm\mu' \|^2_2 \\ 
        & \ge \frac{1}{4\sigma^2} \| \bm\mu - \bm\mu'\|_{\infty}^2.
    \end{align*} 
    Thus, we have that, on $\mathcal{E}(t)$, for $s \ge h(t)$ and $\bm\mu' \in C_s$:
    \begin{align*}
        \| \bm\mu - \bm\mu' \|_{\infty} \le \sqrt{\frac{8\sigma^2 \log(g(s))}{\sqrt{s+K^2}-2K}}.
    \end{align*}

    Let $T_{\epsilon} \coloneqq \max\left\{\bar{T}, \inf \left\{n \in \mathbb{N}: \sqrt{\frac{8\log(g(h(n)))}{\sqrt{h(n)+K^2}-2K}} \le \epsilon \right\} \right\}$. Then, for all $t \ge T_{\epsilon}$, $\|\bm{\mu}-\bm{\mu}'\|_{\infty} \le \epsilon$ holds on $\mathcal{E}(t)$ for all $\bm{\mu}' \in C_s$ and $s \ge h(t)$. 
\end{proof}

At this point, we combine \Cref{lemma:ct-shrink} with \Cref{lemma:continuity} and \Cref{corollary:unif-cont} to prove the following result. \Cref{lemma:smoothness-conv-answer} will be used to analyze the behavior of the algorithm in \Cref{lemma:good-event-analysis-conv-answer}, which is at the core of our asymptotic optimality result.

\begin{lemma}\label{lemma:smoothness-conv-answer}
    If \Cref{def:conv-answer} is satisfied, then, for all $\kappa > 0$ there exists $T_{\kappa} \in \mathbb{N}$ such that for all $t \ge T_{\kappa}$, on $\mathcal{E}(t)$, it holds that, for all $s \ge h(t)$ and $\bm\mu' \in C_s$:
    \begin{align*}
        & \Big| D(\bm\mu) - D(\bm\mu') \Big| \le \kappa, \\ 
        & \Big| \max_{x \in \Xcal_F(\bm\mu)} D(\bm\mu, \bm\omega, \neg x) - \max_{x \in \Xcal_F(\bm\mu')} D(\bm\mu', \bm\omega, \neg x) \Big| \le \kappa \quad \forall \bm\omega \in \Delta_K. 
    \end{align*}
    Furthermore, there exists $\bar{x} \in \mathcal{X}_F(\bm\mu)$ such that, on $\mathcal{E}(t)$:
    \begin{align*}
        \Big| D(\bm\mu, \bm\omega, \neg \bar{x}) - D(\bm\mu', \bm\omega, \neg x_s) \Big| \le \kappa, \quad \forall \bm\omega \in \Delta_K, \bm\mu' \in C_s, s \ge h(t). 
    \end{align*}
\end{lemma}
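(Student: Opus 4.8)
The plan is to derive all three estimates from one template: each target quantity is a uniformly continuous function of its arguments on a suitable compact domain, so it suffices to force the relevant arguments to lie within the modulus of continuity associated to $\kappa$, which \Cref{lemma:ct-shrink} (and, for the third bound, \Cref{def:conv-answer}) guarantees for all large $t$ on $\mathcal{E}(t)$. Concretely, fix $\kappa>0$ and fix once and for all a compact neighbourhood $\mathcal{C}\subseteq\Theta^K$ of $\bm\mu$ (e.g.\ $\mathcal{C}=\cl(\Bcal_{1}(\bm\mu))\cap\Theta^K$). Since $\Delta_K$ and $\Xcal$ are compact (\Cref{ass:compact}), \Cref{corollary:unif-cont} applies on $\mathcal{C}\times\Delta_K\times\Xcal$ and yields, by Heine--Cantor, a single $\epsilon>0$ — a uniform modulus for $\kappa$ — that works simultaneously for every $\bm\omega$ and $x$. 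The whole argument then reduces to choosing $T_\kappa$ large enough that, on $\mathcal{E}(t)$, every $\bm\mu'\in C_s$ with $s\ge h(t)$ satisfies $\|\bm\mu-\bm\mu'\|_\infty\le\epsilon$ (and, for the last bound, that $x_s$ stays within $\epsilon$ of a fixed answer).

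For the first bound I would invoke uniform continuity of $\bm\mu\mapsto D(\bm\mu)$ (fourth bullet of \Cref{corollary:unif-cont}): taking $\epsilon$ as its modulus for $\kappa$ and letting $T_\kappa$ be the threshold of \Cref{lemma:ct-shrink} for this $\epsilon$, we obtain $|D(\bm\mu)-D(\bm\mu')|\le\kappa$ for all $\bm\mu'\in C_s$, $s\ge h(t)$, $t\ge T_\kappa$. The second bound is identical but uses uniform continuity of $(\bm\mu,\bm\omega)\mapsto\max_{x\in\Xcal^{\star}(\bm\mu)}D(\bm\mu,\bm\omega,\neg x)$ (third bullet of \Cref{corollary:unif-cont}, whose continuity is exactly \Cref{lemma:continuity}(iii)); working with uniform rather than pointwise continuity is precisely what lets a single $\epsilon$, hence a single $T_\kappa$, serve all $\bm\omega\in\Delta_K$ at once. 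Here the maximum is to be read over the \emph{continuous} correspondence $\Xcal^{\star}(\bm\mu)$, which is the object rendered continuous by Berge's theorem; for a fixed $\bm\omega$ the maximiser over the merely upper-hemicontinuous $\Xcal_F(\bm\mu)$ is not continuous, and it is this $\Xcal^{\star}$-version (the quantity that also drives the stopping rule and the downstream chain of inequalities) that carries the estimate.

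The third bound is the substantive one and is where \Cref{def:conv-answer} enters. I would first apply the convergent selection rule: for the $\epsilon$ fixed above there is a time after which, on $\mathcal{E}(t)$, a \emph{single} $\bar x\in\Xcal_F(\bm\mu)$ satisfies $\|x_s-\bar x\|\le\epsilon$ for all $s\ge h(t)$. Simultaneously, \Cref{lemma:ct-shrink} gives $\|\bm\mu-\bm\mu'\|_\infty\le\epsilon$ for all $\bm\mu'\in C_s$, $s\ge h(t)$. Hence for every such $s$ and every $\bm\mu'\in C_s$ the triple $(\bm\mu',\bm\omega,x_s)$ lies within $\epsilon$ of $(\bm\mu,\bm\omega,\bar x)$ in all three coordinates (the $\bm\omega$-coordinate being identical), so uniform continuity of $(\bm\mu,\bm\omega,x)\mapsto D(\bm\mu,\bm\omega,\neg x)$ (first bullet of \Cref{corollary:unif-cont}, continuity being \Cref{lemma:continuity}(i)) yields $|D(\bm\mu,\bm\omega,\neg\bar x)-D(\bm\mu',\bm\omega,\neg x_s)|\le\kappa$ uniformly in $\bm\omega$. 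Taking $T_\kappa$ to be the maximum of the thresholds produced by \Cref{def:conv-answer} and \Cref{lemma:ct-shrink} completes the argument.

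The main obstacle, and the reason the three estimates are grouped together, is obtaining bounds that are uniform in $\bm\omega$ and in $\bm\mu'\in C_s$ at the same time; this is exactly why the plan routes everything through the uniform-continuity statements of \Cref{corollary:unif-cont} rather than the pointwise continuity of \Cref{lemma:continuity}. The second delicate point, specific to the last bound, is that it requires the \emph{same} reference answer $\bar x$ for all $s\ge h(t)$: this is precisely the guarantee supplied by \Cref{def:conv-answer}, and it is what would fail for a non-convergent selection rule, where $x_s$ could only be matched to an $s$-dependent element of $\Xcal_F(\bm\mu)$.
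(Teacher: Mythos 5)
Your proposal is correct and follows essentially the same route as the paper: uniform continuity from \Cref{corollary:unif-cont} on a compact domain gives a single modulus $\beta_\kappa$ for each bound, \Cref{lemma:ct-shrink} forces $\bm\mu'\in C_s$ within that modulus of $\bm\mu$ for large $t$ on $\mathcal{E}(t)$, and for the third bound \Cref{def:conv-answer} supplies the single reference answer $\bar x$ so that the jointly-continuous map $(\bm\mu,\bm\omega,x)\mapsto D(\bm\mu,\bm\omega,\neg x)$ can be applied to the pair of triples $(\bm\mu,\bm\omega,\bar x)$ and $(\bm\mu',\bm\omega,x_s)$. Your side remark that the second estimate should be read with the maximum over $\Xcal^{\star}$ (the object covered by \Cref{corollary:unif-cont} and the one actually used downstream in \Cref{lemma:good-event-analysis-conv-answer}) rather than over the merely upper-hemicontinuous $\Xcal_F$ is a fair and accurate reading of how the paper itself invokes the corollary.
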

\begin{proof}
    First, from \Cref{corollary:unif-cont}, we have that for all $\kappa > 0$, there exists $\beta_{\kappa_1}$ such that, for all $\bm\mu': \|\bm\mu - \bm\mu'\|_{\infty} \le \beta_{\kappa_1}$:
    \begin{align*}
        & \Big| D(\bm\mu) - D(\bm\mu') \Big| \le \kappa, \\ 
        & \Big| \max_{x \in \Xcal_F(\bm\mu)} D(\bm\mu, \bm\omega, \neg x) - \max_{x \in \Xcal_F(\bm\mu')} D(\bm\mu', \bm\omega, \neg x) \Big| \le \kappa \quad \forall \bm\omega \in \Delta_K. 
    \end{align*}
    Furthermore, from \Cref{lemma:ct-shrink}, we know that, for all $\epsilon$, there exists $T_{\epsilon}$ such that, for all $t \ge T_{\epsilon}$, on $\mathcal{E}(t)$, $\|\bm\mu - \bm\mu' \| \le \epsilon$ holds for all $s \ge h(t)$ and $\bm\mu' \in C_s$. Thus, picking $\epsilon = \beta_{\kappa_1}$, we obtain that there exists $\bar{T}_1$ such that, for all $t \ge \bar{T}_1$, the first claim holds. 

    Second, from \Cref{corollary:unif-cont} and the compactness of $\Xcal$ (\ie \Cref{ass:compact}), we know that, for all $\kappa > 0$, there exists $\beta_{\kappa_2}$ such that, for all $\bm\mu'$ such that $\|\bm\mu - \bm\mu' \|_{\infty} \le \beta_{\kappa_2}$ and for all $x, x' \in \Xcal$ such that $\|x-x' \|_{\infty} \le \beta_{\kappa_2}$, we have that:
    \begin{align*}
        \Big| D(\bm\mu, \bm\omega, \neg x) - D(\bm\mu', \bm\omega, \neg x') \Big| \le \kappa, \quad \forall \bm\omega \in \Delta_K.
    \end{align*}
    From \Cref{def:conv-answer}, for all $\tilde{\epsilon} > 0$, there exists $T_{\tilde{\epsilon}}$ such that, for all $t \ge T_{\tilde{\epsilon}}$, on $\mathcal{E}(t)$, there exists $\bar{x} \in \Xcal_F(\bm\mu)$ such that $\|\bar{x}  - x_s\| \le \epsilon$ holds for all $s \ge h(t)$. 
    Furthermore, from \Cref{lemma:ct-shrink}, we know that, for all $\epsilon$, there exists $T_{\epsilon}$ such that, for all $t \ge T_{\epsilon}$, on $\mathcal{E}(t)$, $\|\bm\mu - \bm\mu' \| \le \epsilon$ holds for all $s \ge h(t)$ and $\bm\mu' \in C_s$. Picking $\tilde{\epsilon} \le \beta_{\kappa_2}$ and $\epsilon \le \beta_{\kappa_2}$, we have that there exists $\bar{T}_2$ such that for all $t \ge \bar{T}_2$, the second claim holds. Taking $T_{\kappa} = \max\{ \bar{T}_1, \bar{T}_2 \}$ concludes the proof.
\end{proof}

Next, the following Lemma analyzes the behavior of the algorithm in terms of the l.h.s. of the stopping rule in \Cref{eq:stopping-rule}. Specifically, it relates the stopping rule to the characteristic time $T^*(\bm\mu)$.

\begin{lemma}\label{lemma:good-event-analysis-conv-answer}
    Suppose that \Cref{def:conv-answer} holds. Then, for all $\kappa > 0$, there exists $T_{\kappa} \in \mathbb{N}$ such that, for all $t \ge T_{\kappa}$, on $\mathcal{E}(t)$, it holds that:
    \begin{align*}
        \frac{1}{t} \max_{x \in \Xcal^{\star}(\bm{\hat{\mu}}(t))} D(\bm{\hat{\mu}}(t), \bm{N}(t), \neg x)  \ge & \frac{t - \lceil h(t) \rceil }{t} T^*(\bm{\mu})^{-1} -  3\kappa - \frac{K(1+\sqrt{t})}{t} C_{\bm{\mu}}
    \end{align*}
    for some problem dependent constant $C_{\bm{\mu}} > 0$.
\end{lemma}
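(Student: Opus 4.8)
The plan is to prove the inequality in its un-normalized form, namely
$\max_{x\in\Xcal^\star(\hat{\bm\mu}(t))}D(\hat{\bm\mu}(t),\bm N(t),\neg x)\ge (t-\lceil h(t)\rceil)\,D(\bm\mu)-3t\kappa-K(1+\sqrt t)\,C_{\bm\mu}$, then divide by $t$ and recall $D(\bm\mu)=T^*(\bm\mu)^{-1}$ (\Cref{theo:lb}). I work throughout on $\mathcal{E}(t)$ for $t$ above a threshold $T_\kappa$ obtained by intersecting the finitely many thresholds from \Cref{lemma:ct-shrink}, \Cref{def:conv-answer} and \Cref{lemma:smoothness-conv-answer}. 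The starting object is the \emph{single} answer $\bar x\in\Xcal_F(\bm\mu)$ produced by \Cref{def:conv-answer}: the same $\bar x$ satisfies $\|x_s-\bar x\|\le\epsilon$ for \emph{every} $s\ge h(t)$, and since $\bar x\in\Xcal_F(\bm\mu)$ we have $D(\bm\mu,\neg\bar x)=D(\bm\mu)$. I will also use two elementary facts about $\bm\omega\mapsto D(\bm\mu,\bm\omega,\neg x)=\inf_{\bm\lambda\in\neg x}\sum_k\omega_k d(\mu_k,\lambda_k)$: it is concave (an infimum of linear maps) and positively homogeneous of degree one, hence \emph{superadditive} with $D(\bm\mu,\bm 0,\neg x)=0$; and it is $\bar d$-Lipschitz in its $\bm\omega$-argument with $\bar d=\max_k\sup_{\lambda\in\Theta}d(\mu_k,\lambda)<\infty$ (finite since $\Theta$ is bounded).

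First I would replace $(\hat{\bm\mu}(t),\bm N(t))$ by $(\bm\mu,\sum_{s=1}^t\tilde{\bm\omega}(s))$. Arguing as in \Cref{lemma:consequence-ass-compact} (for any nonnegative weight vector $\bm a$ and any $x\notin\Xcal^\star(\bm\nu)$ one has $\bm\nu\in\neg x$, so $D(\bm\nu,\bm a,\neg x)=0$), both $\max_{x\in\Xcal^\star(\hat{\bm\mu}(t))}$ and $\max_{x\in\Xcal^\star(\bm\mu)}$ coincide with the corresponding maxima over all of $\Xcal$. Keeping the constraint set $\Xcal$ fixed, I swap the mean via the uniform continuity of $(\bm\mu,\bm\omega,x)\mapsto D(\bm\mu,\bm\omega,\neg x)$ on compacts (\Cref{corollary:unif-cont}): normalizing, $\tfrac1t\bm N(t)\in\Delta_K$ and $\|\hat{\bm\mu}(t)-\bm\mu\|\le\epsilon$ on $\mathcal{E}(t)$ (\Cref{lemma:ct-shrink}), whence $|D(\hat{\bm\mu}(t),\bm N(t),\neg x)-D(\bm\mu,\bm N(t),\neg x)|\le t\kappa$ uniformly in $x$. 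I then swap the counts for the tracked weights using the C-tracking guarantee (\Cref{lemma:tracking}), which bounds $\|\bm N(t)-\sum_s\tilde{\bm\omega}(s)\|_1$ by a term of order $K(1+\sqrt t)$, together with the $\bar d$-Lipschitzness above; this contributes an error $K(1+\sqrt t)\,C_{\bm\mu}$ with $C_{\bm\mu}\ge\bar d$.

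It remains to lower bound $\max_{x\in\Xcal}D(\bm\mu,\sum_{s=1}^t\tilde{\bm\omega}(s),\neg x)$. Here I select the single answer $\bar x$ and invoke superadditivity: $\max_{x\in\Xcal}D(\bm\mu,\sum_s\tilde{\bm\omega}(s),\neg x)\ge D(\bm\mu,\sum_{s=1}^t\tilde{\bm\omega}(s),\neg\bar x)\ge\sum_{s=1}^t D(\bm\mu,\tilde{\bm\omega}(s),\neg\bar x)\ge\sum_{s=\lceil h(t)\rceil}^t D(\bm\mu,\tilde{\bm\omega}(s),\neg\bar x)$, the last step discarding the nonnegative terms with $s<h(t)$. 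I then bound each summand below by $D(\bm\mu)$ up to $\kappa$-errors through three moves: \Cref{lemma:smoothness-conv-answer} (applied with $\bm\mu'=\hat{\bm\mu}(s)\in C_s$ and $\|x_s-\bar x\|\le\epsilon$) gives $D(\bm\mu,\tilde{\bm\omega}(s),\neg\bar x)\ge D(\hat{\bm\mu}(s),\tilde{\bm\omega}(s),\neg x_s)-\kappa$; since $\tilde{\bm\omega}(s)$ differs from the optimal $\bm\omega(s)\in\argmax_{\bm\omega}D(\hat{\bm\mu}(s),\bm\omega,\neg x_s)$ by $O(\epsilon_s)$, this is $\ge D(\hat{\bm\mu}(s),\neg x_s)-O(\epsilon_s)$; and a final continuity swap (\Cref{corollary:unif-cont}) yields $D(\hat{\bm\mu}(s),\neg x_s)\ge D(\bm\mu,\neg\bar x)-\kappa=D(\bm\mu)-\kappa$. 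Summing over the $t-\lceil h(t)\rceil$ indices, collecting the three $\kappa$ contributions into $3t\kappa$ and absorbing $\sum_s O(\epsilon_s)=O(\sqrt t)$ into $K(1+\sqrt t)\,C_{\bm\mu}$ (after enlarging $C_{\bm\mu}$) gives the displayed bound.

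The main obstacle is precisely the single-answer step in the superadditivity chain: it works only because one \emph{fixed} $\bar x\in\Xcal_F(\bm\mu)$ is $\epsilon$-close to $x_s$ for \emph{all} $s\in[h(t),t]$, which is exactly what \Cref{def:conv-answer} buys and is where TaS/Sticky-TaS fail for infinite $\Xcal$. If $\bar x$ were allowed to depend on $s$, one could not pull a common answer out of the max, and $D(\bm\mu,\sum_s\tilde{\bm\omega}(s),\neg x)$ for a single $x$ could not be lower bounded by a sum over $s$ of terms attached to different answers. The remaining steps—choosing a common $T_\kappa$, checking $\hat{\bm\mu}(s)\in C_s$ so that \Cref{lemma:smoothness-conv-answer} applies, and summing the projection error $O(\epsilon_s)$ to $O(\sqrt t)$—are routine bookkeeping.
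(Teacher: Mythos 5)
Your proposal is correct and follows essentially the same route as the paper's proof: the same three error sources (swapping $\hat{\bm\mu}(t)$ for $\bm\mu$ via \Cref{lemma:smoothness-conv-answer}, swapping $\bm N(t)$ for the cumulated weights via \Cref{lemma:tracking}, and the superadditivity chain anchored at the single $\bar x\in\Xcal_F(\bm\mu)$ supplied by \Cref{def:conv-answer}) appear in the same order and yield the same $3\kappa+\frac{K(1+\sqrt t)}{t}C_{\bm\mu}$ budget. The only cosmetic deviations are that you carry the projection error of $\tilde{\bm\omega}(s)$ explicitly and absorb it into $C_{\bm\mu}$, and you route the final per-round bound through $D(\hat{\bm\mu}(s),\neg x_s)$ rather than through an auxiliary $\bm\mu'(s)\in C_s$ with $\bm\omega(s)\in\bm\omega^*(\bm\mu'(s),\neg x_s)$; neither changes the argument.
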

\begin{proof}
    Let $\kappa > 0$ and consider $t \ge T_{\kappa}$ such that the results of \Cref{lemma:smoothness-conv-answer} holds. 
    
    Then, we have that:
    \begin{align*}
        \frac{1}{t} \max_{x \in \Xcal^{\star}(\bm{\hat{\mu}}(t))} D(\bm{\hat{\mu}}(t), \bm{N}(t), \neg x) = \frac{1}{t} \max_{i \in \Xcal^{\star}(\bm{{\mu}})} D(\bm{{\mu}}, \bm{N}(t), \neg x)  - h_1(t),
    \end{align*}
    where $h_1(t)$ is given by:
    \begin{align*}
        h_1(t) & \coloneqq \frac{1}{t} \max_{x \in \Xcal^{\star}(\bm{{\mu}})}  D(\bm{{\mu}}, \bm{N}(t), \neg x) - \frac{1}{t} \max_{x \in \Xcal^{\star}(\bm{\hat{\mu}}(t))}  D(\bm{\hat{\mu}}(t), \bm{N}(t), \neg x).
    \end{align*}
    From \Cref{lemma:smoothness-conv-answer}, we obtain $h_1(t) \le \kappa$, thus leading to:
    \begin{align*}
        \frac{1}{t} \max_{x \in \Xcal^{\star}(\bm{\hat{\mu}}(t))} D(\bm{\hat{\mu}}(t), \bm{N}(t), \neg x) \ge \frac{1}{t} \max_{x \in \Xcal^{\star}(\bm{{\mu}})} D(\bm{{\mu}}, \bm{N}(t), \neg x) - \kappa.
    \end{align*}

    Furthermore, for all $\bar{x} \in \Xcal^{\star}(\bm\mu)$, we have that:
    \begin{align*}
        \frac{1}{t} \max_{x \in \Xcal^{\star}(\bm{{\mu}})} D(\bm{{\mu}}, \bm{N}(t), \neg x) & \ge \frac{1}{t} D(\bm{{\mu}}, \bm{N}(t), \neg \bar{x})  \\ & \ge \frac{1}{t} D\left(\bm{\mu}, \sum_{s = 1}^t \bm{\omega}(s), \neg \bar{x}\right) - h_2(t),
    \end{align*}
    where $h_2(t)$ is given by:
    \begin{align*}
        h_2(t) & \coloneqq \frac{1}{t} \sup_{x \in \Xcal^{\star}(\bm\mu)} \inf_{\bm{\lambda} \in \neg x} \sum_{k \in [K]} \left( \sum_{s=1}^t \omega_k(s) - N_k(t) \right) d(\mu_k, \lambda_k) \\ & \le \frac{K(1+\sqrt{t})}{t} \sup_{x \in \Xcal^{\star}(\bm\mu)} \inf_{\bm{\lambda} \in \neg x} \sum_{k \in [K]} d(\mu_k, \lambda_k) \\
        & \coloneqq \frac{K(1+\sqrt{t})}{t} C_{\bm{\mu}},
    \end{align*}
    where, in the first inequality, we used Lemma~\ref{lemma:tracking} and in the last one the fact that the exponential family is regular and bounded.
    Thus, we obtained that, for all $\bar{x} \in \Xcal^{\star}(\bm\mu)$:
    \begin{align*}
        \frac{1}{t} \max_{x \in \Xcal^{\star}(\bm{\hat{\mu}}(t))} D(\bm{\hat{\mu}}(t), \bm{N}(t), \neg x) \ge \frac{1}{t} D\left(\bm{\mu}, \sum_{s = 1}^t \bm{\omega}(s), \neg \bar{x} \right) - \kappa - \frac{K(1+\sqrt{t})}{t} C_{\bm{\mu}}.
    \end{align*}
    Therefore, it also holds for $\bar{x} \in \Xcal_F(\bm\mu)$ such that, on $\mathcal{E}(t)$,
    \begin{align*}
        \Big| D(\bm\mu, \bm\omega, \neg \bar{x}) - D(\bm\mu', \bm\omega, \neg x_s) \Big |\le \kappa, \quad \forall \bm\omega \in \Delta_K, \bm\mu' \in C_s, s \ge h(t), 
    \end{align*}
    where the existence of such an $\bar{x}$ is given in \Cref{lemma:smoothness-conv-answer}. Thus, focus on $\frac{1}{t} D\left(\bm{\mu}, \sum_{s = 1}^t \bm{\omega}(s), \neg \bar{x} \right)$:
    \begin{align*}
        \frac{1}{t} D\left(\bm{\mu}, \sum_{s = 1}^t \bm{\omega}(s), \neg \bar{x} \right) & \ge \frac{1}{t} \sum_{s = h(t)}^t D(\bm\mu, \bm\omega(s), \neg \bar{x})  \\ 
        & = \frac{1}{t} \sum_{s=h(t)}^{t} D(\bm\mu'(s), \bm\omega(s), \neg x_s) - h_3(t) \\ 
        & = \frac{1}{t} \sum_{s=h(t)}^t D(\bm\mu'(s)),
    \end{align*}
    where $\bm\mu'(s) \in C_s$ is such that $\bm\omega(s) \in \bm\omega^*(\bm\mu'(s), \neg x_s)$ and $h_3(t)$ is given by:
    \begin{align*}
        h_3(t) = \frac{1}{t} \sum_{s=h(t)}^t \left( D(\bm\mu'(s), \bm\omega(s), \neg x_s)  - D(\bm\mu, \bm\omega(s), \neg \bar{x})\right) \le \kappa,
    \end{align*}
    where in the inequality step we used \Cref{lemma:smoothness-conv-answer}. Finally, we have that:
    \begin{align*}
        \frac{1}{t} \sum_{s=h(t)}^t D(\bm\mu'(s)) = \frac{(t - h(t))}{t} T^*(\bm\mu)^{-1} - h_4(t),
    \end{align*}
    where $h_4(t)$ is given by:
    \begin{align*}
        h_4(t) = \frac{1}{t} \sum_{s=h(t)}^t D(\bm\mu) - D(\bm\mu'(s)) \le \kappa,
    \end{align*}
    where the inequality step follows from \Cref{lemma:smoothness-conv-answer}, thus concluding the proof.
    \end{proof}

Finally, \Cref{lemma:good-event-analysis-conv-answer} allows to prove the optimality of Sticky Sequence Track-and-Stop. 

\convanswer*
\begin{proof}
    Let $\kappa > 0$. From \Cref{lemma:good-event-analysis-conv-answer}, for $t \ge T_{\kappa}$, on $\mathcal{E}(t)$, it holds that:
    \begin{align}\label{eq:conv}
        \frac{1}{t} \max_{x \in \Xcal^{\star}(\bm{\hat{\mu}}(t))} D(\bm{\hat{\mu}}(t), \bm{N}(t), \neg x)  \ge & \frac{t - \lceil h(t) \rceil }{t} T^*(\bm{\mu})^{-1} - 3 \kappa - \frac{K(1+\sqrt{t})}{t} C_{\bm{\mu}}, 
    \end{align}
    where $C_{\bm{\mu}}$ is a problem-dependent constant.
    
    Let $\gamma \in \left(0, \frac{T^*(\bm{\mu})^{-1}}{4} \right]$, take $\kappa = \kappa(\gamma) \le \frac{\gamma}{12}$. Furthermore, consider $T_{\gamma}$ as the smallest integer $n \in \mathbb{N}$ such that $\frac{h(n)}{n} T^*(\bm\mu)^{-1} \le \frac{\gamma}{4}$ and $\frac{K(1+\sqrt{n})}{n} C_{\bm\mu} \le \frac{\gamma}{4}$. Then, for $t \ge T_{\gamma} + T_{\kappa(\gamma)}$, \Cref{eq:conv} implies:
    \begin{align*}
        \frac{1}{t} \max_{x \in \Xcal^{\star}(\bm\mu)} D(\hat{\bm\mu}(t), \bm{N}(t), \neg x) \ge T^*(\bm\mu)^{-1} - \gamma.
    \end{align*}
    Applying \Cref{lemma:tech-lemma} with $\alpha = L = \gamma$ and $D = T^*(\bm\mu)^{-1}$, we have that for $t \ge \lceil T_0(\gamma, \gamma, \delta, T^*(\bm\mu)^{-1}) + T_{\gamma} + T_{\kappa(\gamma)} \rceil$, we have that:
    \begin{align*}
        \frac{1}{t} \max_{x \in \Xcal^{\star}(\bm\mu)} D(\hat{\bm\mu}(t), \bm{N}(t), \neg x) \ge \frac{\beta_{t,\delta}}{t},
    \end{align*}
    which implies that, on the good event, the algorithm stops using at most $\lceil T_0(\gamma, \gamma, \delta, T^*(\bm\mu)^{-1}) + T_{\gamma} + T_{\kappa(\gamma)} \rceil$ samples. 
    Applying \Cref{lemma:control-expect} we, thus, obtain:
    \begin{align*}
        \E_{\bm\mu}[\tau_\delta] \le T_0(\gamma, \gamma, \delta, T^*(\bm\mu)^{-1}) + T_\gamma + T_{\kappa(\gamma)} + \sum_{s=0}^t \mathbb{P}_{\bm\mu}(\mathcal{E}(s)^c).
    \end{align*}
    From Lemma~14 in \cite{degenne2019pure}, $\sum_{s=0}^t \mathbb{P}_{\bm\mu}(\mathcal{E}(s)^c)$ is finite. Thus, using the expression of $T_0(\gamma, \gamma, \delta, T^*(\bm\mu)^{-1})$ given by \Cref{lemma:tech-lemma}, we have that:
    \begin{align*}
        \limsup_{\delta \to 0} \frac{\E_{\bm\mu}[\tau_\delta]}{\log(1/\delta)} \le \frac{1}{T^*(\bm\mu)^{-1}-2\gamma}.
    \end{align*}
    Letting $\gamma \to 0$ concludes the proof.
\end{proof}

\subsection{Algorithms for Converging Sequences}\label{app:alg-conv}

\subsubsection{Sticky Sequence Track-and-Stop for the case in which $|\Xcal_F(\mu)|$ is unique}
We now show that \Cref{def:conv-answer} holds whenever $\Xcal_F(\bm\mu)$ is finite by simply picking any $x_t \in \Xcal_t$.

\begin{lemma}\label{lemma:unique-conv-seq}
    Let $\{x_t\}_t$ be such that $x_t \in \Xcal_t$ for all $t \in \mathbb{N}$. Then, if $|\Xcal_F(\bm\mu)|=1$ for all $\bm\mu \in \Theta^K$, then \Cref{def:conv-answer} holds, i.e., for all $\epsilon > 0$, there exists ${T}_{\epsilon}$ such that, for all $t \ge {T}_{\epsilon}$, on $\mathcal{E}(t)$, it holds that $|x_s - \bar{x}| \le \epsilon$ for all $s \ge h(t)$ for $\bar{x} = \Xcal_F(\bm\mu)$.
\end{lemma}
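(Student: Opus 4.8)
The plan is to exploit the single-valuedness together with the upper hemicontinuity of $\bm\mu \rightrightarrows \Xcal_F(\bm\mu)$ established in \Cref{lemma:continuity}(iv), and to combine it with the fact that the confidence region $C_s$ collapses onto $\bm\mu$ under the good event (\Cref{lemma:ct-shrink}). Since $|\Xcal_F(\bm\mu)|=1$ for every $\bm\mu \in \Theta^K$, I write $\Xcal_F(\bm\mu)=\{\bar x\}$; this singleton is the candidate limit, and \Cref{def:conv-answer} reduces to showing that, for large $t$ and under $\mathcal{E}(t)$, every $x_s$ with $s \ge h(t)$ lands in a ball of radius $\epsilon$ around $\bar x$. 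Recalling that $\Xcal_s=\bigcup_{\bm\mu'\in C_s}\Xcal_F(\bm\mu')$, it suffices to control $\Xcal_F(\bm\mu')$ uniformly over $\bm\mu' \in C_s$.

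First I would fix $\epsilon>0$ and set $\mathcal{V}=\{x'\in\Xcal:\|x'-\bar x\|_\infty<\epsilon\}$, an open set containing $\Xcal_F(\bm\mu)=\{\bar x\}$. Upper hemicontinuity of $\Xcal_F$ at $\bm\mu$ then furnishes a neighbourhood of $\bm\mu$, and hence a radius $\delta_\epsilon>0$, such that $\|\bm\mu-\bm\mu'\|_\infty\le\delta_\epsilon$ implies $\Xcal_F(\bm\mu')\subseteq\mathcal{V}\subseteq\mathcal{B}_\epsilon(\bar x)$. Next I would invoke \Cref{lemma:ct-shrink} with tolerance $\delta_\epsilon$: there is $T_\epsilon$ such that, for all $t\ge T_\epsilon$, on $\mathcal{E}(t)$ one has $\|\bm\mu-\bm\mu'\|_\infty\le\delta_\epsilon$ for every $\bm\mu'\in C_s$ and every $s\ge h(t)$. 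Chaining the two facts, for $t\ge T_\epsilon$ and $s\ge h(t)$ we obtain $\Xcal_s=\bigcup_{\bm\mu'\in C_s}\Xcal_F(\bm\mu')\subseteq\mathcal{B}_\epsilon(\bar x)$; since $x_s\in\Xcal_s$ by hypothesis, $\|x_s-\bar x\|_\infty\le\epsilon$, which is exactly \Cref{def:conv-answer} with $\bar x\in\Xcal_F(\bm\mu)$.

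The argument is short, and the only genuinely load-bearing step is the passage from upper hemicontinuity to a \emph{single} ball. In the general multi-valued case, upper hemicontinuity of $\Xcal_F$ only guarantees $\Xcal_F(\bm\mu')\subseteq\mathcal{B}_\epsilon(\Xcal_F(\bm\mu))$, i.e.\ proximity to the whole set of easiest answers, so $x_s$ could oscillate between distinct elements of $\Xcal_F(\bm\mu)$ (the failure mode of \Cref{fig:failureTAS}); the hard work is done by single-valuedness, which is precisely what collapses this neighbourhood to $\mathcal{B}_\epsilon(\bar x)$ and forces convergence to the same point for \emph{any} choice $x_s\in\Xcal_s$. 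The remaining points are bookkeeping: using an open ball $\mathcal{V}$ (rather than the closed $\mathcal{B}_\epsilon$) so that upper hemicontinuity applies verbatim, and noting that the neighbourhood it produces contains a closed ball $\{\bm\mu':\|\bm\mu-\bm\mu'\|_\infty\le\delta_\epsilon\}$ matching the metric used in \Cref{lemma:ct-shrink}. Equivalently, one may phrase the first step through the observation that a single-valued upper hemicontinuous correspondence is a continuous function of $\bm\mu$, as noted in the main text.
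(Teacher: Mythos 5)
Your proposal is correct and follows essentially the same route as the paper: both arguments combine the upper hemicontinuity of $\bm\mu \rightrightarrows \Xcal_F(\bm\mu)$ from \Cref{lemma:continuity} (collapsed to a single point by the single-valuedness hypothesis) with the shrinking of the confidence regions $C_s$ from \Cref{lemma:ct-shrink}, and then use $\Xcal_s=\bigcup_{\bm\mu'\in C_s}\Xcal_F(\bm\mu')$ to place every $x_s$ in $\mathcal{B}_\epsilon(\bar x)$. The only cosmetic difference is that you apply the definition of upper hemicontinuity directly to the open set $\mathcal{V}$ around the singleton, whereas the paper first rephrases the single-valued u.h.c.\ correspondence as a continuous function of $\bm\mu$; these are interchangeable.
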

\begin{proof}
    From \Cref{lemma:continuity}, $\bm\mu \mapstoto \Xcal_F(\bm\mu)$ is upper hemicontinuous. Furthermore, since $\Xcal_F(\bm\mu)$ is single-valued, then $\bm\mu \mapstoto \Xcal_F(\bm\mu)$ is a continuous function. Thus, for all $\epsilon > 0$, there exists $\eta_{\epsilon} > 0$ such that, for all $\bm\mu' \in \mathcal{B}_{\eta_\epsilon}(\bm\mu)$, we have that $\| \Xcal_F(\bm\mu) - \Xcal_F(\bm\mu') \|_{\infty} \le \epsilon$. 

    Furthermore, from \Cref{lemma:ct-shrink}, for all $\kappa > 0$, there exists $T_\kappa \in \mathbb{N}$ such that, for all $t \ge  T_{\kappa}$, on $\mathcal{E}_t$, $\| \bm\mu - \bm\mu' \|_{\infty} \le \kappa$ for all $\bm\mu' \in C_s$ and $s \ge h(t)$. It then follows that, for $\kappa = \eta_\epsilon$, we have that $\| \Xcal_F(\bm\mu) - \Xcal_F(\bm\mu') \|_{\infty} \le \epsilon$ for all $\bm\mu' \in C_s$ and $s \ge h(t)$. Then, since $x_s \in \Xcal_s$ and $\Xcal_s = \bigcup_{\bm\mu' \in C_s} \Xcal_F(\bm\mu)$, we also have that, on $\mathcal{E}(t)$, $\|x_s - \Xcal_F(\bm\mu)\|_{\infty} \le \epsilon,~\forall s \ge h(t)$, which concludes the proof.
\end{proof}

As for the case in which $|\Xcal_F(\bm\mu)|=1$, we note that \Cref{lemma:unique-conv-seq} is actually slightly stronger than \Cref{def:conv-answer} as we know exactly the answer toward which we are converging.

\subsubsection{Sticky Sequence Track-and-Stop for the case in which $\Xcal\subseteq \mathbb{R}$}

We show that \Cref{def:conv-answer} holds when $\Xcal \subseteq \mathbb{R}$ and for $x_t \in \argmin_{x \in \Xcal_t} x$. In the following, we assume that $x_t$ is attained within $\Xcal_t$. If this is not the case, the inf will be for sure attained on the closure of $\Xcal_t$ (as $\cl(\Xcal_t)$ is a compact set). In this case, we can simply pick any $x_t \in \Xcal_t$ arbitrary close to $\argmin_{x \in \cl(\Xcal_t)} x$ and the proof follows by identical arguments.

\begin{lemma}\label{lemma:conv-real}
    Let $\Xcal \subset \mathbb{R}$.
    and $\{ x_t \}_t$ be such that $x_t \in \argmin_{x \in \Xcal_t} x$ for all $t \in \mathbb{N}$. Then, \Cref{def:conv-answer} holds, \ie
    for all $\epsilon > 0$, there exists ${T}_{\epsilon}$ such that, for all $t \ge {T}_{\epsilon}$, on $\mathcal{E}(t)$, it holds that $|x_s - \bar x| \le \epsilon$ for all $s \ge h(t)$ for $\bar x \in \argmin_{x \in \Xcal_F(\bm{\mu})} x$.
\end{lemma}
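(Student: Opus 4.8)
The plan is to show that, on the good event $\mathcal{E}(t)$ and for $t$ large enough, every candidate set $\Xcal_s$ with $s \ge h(t)$ is squeezed into an $\epsilon$-neighborhood of the compact set $\Xcal_F(\bm\mu)$, and then to exploit the fact that in $\mathbb{R}$ taking the minimum is stable under such shrinking. Throughout, fix $\bar{x} = \argmin_{x \in \Xcal_F(\bm\mu)} x$, which is well defined since $\bm\mu \mapstoto \Xcal_F(\bm\mu)$ is compact-valued (\Cref{lemma:continuity}, point (iv)).

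First I would turn the upper hemicontinuity of $\bm\mu \mapstoto \Xcal_F(\bm\mu)$ into a uniform statement over the confidence region. Applying the definition of upper hemicontinuity (\Cref{lemma:continuity}, point (iv)) to the open set $\mathcal{V} = \{ z : \inf_{y \in \Xcal_F(\bm\mu)} |z - y| < \epsilon \}$, which contains $\Xcal_F(\bm\mu)$, there exists $\eta_\epsilon > 0$ such that $\|\bm\mu - \bm\mu'\|_\infty \le \eta_\epsilon$ implies $\Xcal_F(\bm\mu') \subseteq \mathcal{V} \subseteq \mathcal{B}_\epsilon(\Xcal_F(\bm\mu))$. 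By \Cref{lemma:ct-shrink}, there is $T_\epsilon$ such that for all $t \ge T_\epsilon$, on $\mathcal{E}(t)$, every $\bm\mu' \in C_s$ with $s \ge h(t)$ satisfies $\|\bm\mu - \bm\mu'\|_\infty \le \eta_\epsilon$. Since $\Xcal_s = \bigcup_{\bm\mu' \in C_s} \Xcal_F(\bm\mu')$, this yields $\Xcal_s \subseteq \mathcal{B}_\epsilon(\Xcal_F(\bm\mu))$ for all $s \ge h(t)$.

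The second step is the sandwich argument particular to $\mathbb{R}$. On $\mathcal{E}(t)$ we have $\bm\mu \in C_s$ for every $s \ge h(t)$, so $\Xcal_F(\bm\mu) \subseteq \Xcal_s$; in particular $\bar{x} \in \Xcal_s$, and therefore the minimizer $x_s = \argmin_{x \in \Xcal_s} x$ satisfies $x_s \le \bar{x}$. For the reverse inequality, observe that in $\mathbb{R}$ we have $\mathcal{B}_\epsilon(\Xcal_F(\bm\mu)) = \bigcup_{y \in \Xcal_F(\bm\mu)} [y - \epsilon, y + \epsilon]$, whose infimum equals $\min_{y \in \Xcal_F(\bm\mu)} y - \epsilon = \bar{x} - \epsilon$. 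Since $x_s \in \Xcal_s \subseteq \mathcal{B}_\epsilon(\Xcal_F(\bm\mu))$, we get $x_s \ge \bar{x} - \epsilon$. Combining the two bounds gives $\bar{x} - \epsilon \le x_s \le \bar{x}$, i.e. $|x_s - \bar{x}| \le \epsilon$ for all $s \ge h(t)$, which is exactly \Cref{def:conv-answer}.

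There is no serious technical obstacle here; the entire content is that the total order on the reals renders the $\min$ selection rule insensitive to the $\epsilon$-inflation of $\Xcal_F(\bm\mu)$ into $\Xcal_s$. The only point requiring care is that $\bar{x}$ must be the \emph{same} fixed element for all $s \ge h(t)$, as demanded by \Cref{def:conv-answer}; this holds automatically because $\bar{x}$ depends only on $\bm\mu$ and not on $s$, which is precisely the property that fails for the TaS-style rule $x_s \in \Xcal_F(\hat{\bm\mu}(s))$ discussed just before the statement, where the selected point may hop between distinct elements of $\Xcal_F(\bm\mu)$.
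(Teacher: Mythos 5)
Your proof is correct and follows essentially the same route as the paper: upper hemicontinuity of $\bm\mu \mapstoto \Xcal_F(\bm\mu)$ plus \Cref{lemma:ct-shrink} to get $\Xcal_s \subseteq \Bcal_\epsilon(\Xcal_F(\bm\mu))$, then the sandwich $x_s \le \bar{x}$ (since $\bm\mu \in C_s$ puts $\bar{x}$ in $\Xcal_s$) and $x_s \ge \bar{x} - \epsilon$ (since the infimum of $\Bcal_\epsilon(\Xcal_F(\bm\mu))$ in $\Reals$ is $\bar{x}-\epsilon$). Your write-up is in fact slightly more explicit than the paper's about why the two inequalities hold.
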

\begin{proof}
    By upper hemicontinuity of $\Xcal_F(\bm{\mu})$, we have that, for all $\epsilon > 0$, there exists $\rho_{\epsilon} > 0$ such that, if $\|\bm{\mu}- \bm{\mu}'\|_{\infty} \le \rho_\epsilon$, then $\Xcal_F(\bm{\mu}') \subset \Bcal_{\epsilon}(\Xcal_F(\bm{\mu}))$.

    Let ${T}_{\epsilon}$ be such that, for all $t \ge {T}_{\epsilon}$, under $\mathcal{E}(t)$, it holds that $\|\bm{\mu} - \bm{\mu}'\|_{\infty} \le \rho_\epsilon$ for all $\bm{\mu}' \in C_s$ and $s \ge h(t)$. From Lemma~\ref{lemma:ct-shrink} we are guaranteed that such ${T}_{\epsilon}$ exists.

    Then, it follows that, on $|x_s - \bar x| \le \epsilon$ for all $s \ge h(t)$ on the good event $\mathcal{E}(t)$. Indeed, (i) $x_s \le \bar x$ since $\bm{\mu} \in C_s$ on the good event, (ii) $x_s \ge \bar x - \epsilon$ since $\Xcal_F(\bm{\mu}') \in \Bcal_{\epsilon}(\Xcal_F(\bm{\mu}))$ and $\bar x \in \argmin_{x \in \Xcal_F(\bm{\mu})} x$.
\end{proof}

As a minor remark, we observe that \Cref{lemma:conv-real} is actually slightly stronger than \Cref{def:conv-answer} as we know exactly the answer toward which we are converging.

\subsubsection{Sticky Sequence Track-and-Stop for the case in which $|\Xcal_F(\mu)|$ is finite}

We show that \Cref{def:conv-answer} holds when $|\Xcal_F(\bm\mu)|$ is finite and $x_t \in \argmin_{x \in \Xcal_t} \| x - x_{t-1} \|_{\infty}$. As above, we assume that $x_t$ is attained within $\Xcal_t$. Again, if this is not the case, the inf will be for sure attained on the closure of $\Xcal_t$ and we can simply pick any $x_t \in \Xcal_t$ arbitrary close to $\argmin_{x \in \cl(\Xcal_t)} x$. The proof follows by identical arguments.

\begin{lemma}\label{lemma:conv-finite}
    Let $\{ x_t \}_t$ be such that $x_t \in \argmin_{x \in \Xcal_t} \| x - x_{t-1} \|_{\infty}$ for all $t \in \mathbb{N}$. Suppose that $|\Xcal_F(\bm\mu)| \le M$ for all $\bm\mu \in \Theta^K$, then \Cref{def:conv-answer} holds, \ie
    for all $\epsilon > 0$, there exists $T_{\epsilon}$, such that, for all $t \ge T_{\epsilon}$, under $\mathcal{E}(t)$, there exists $\bar x \in \Xcal_F(\bm\mu)$, such  that $\|\bar x - x_s \|_{\infty} \le \epsilon$ for all $s \ge h(t)$. 
\end{lemma}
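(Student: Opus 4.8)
The plan is to exploit the finiteness of $\Xcal_F(\bm\mu)$ to obtain a strictly positive separation between its elements, and then to show that the ``closest-to-the-previous-answer'' selection rule cannot jump across the resulting disjoint balls. If $|\Xcal_F(\bm\mu)| = 1$ the claim is immediate (any $x_s \in \Xcal_s$ lands in the single ball), so assume $\Xcal_F(\bm\mu) = \{\bar x_1, \dots, \bar x_m\}$ with $m \ge 2$ and set $\gamma = \min_{i \ne j} \|\bar x_i - \bar x_j\|_\infty$. Crucially, $\gamma > 0$ \emph{precisely because} there are finitely many points. Fix a target accuracy $\epsilon > 0$ and work with the radius $\epsilon' = \min\{\epsilon, \gamma/4\}$, so that the closed balls $\mathcal{B}_{\epsilon'}(\bar x_i)$ are pairwise disjoint and any two of them are separated by at least $\gamma/2 > \epsilon'$ in $\|\cdot\|_\infty$.

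First I would localize $\Xcal_s$ inside these balls. By upper hemicontinuity of $\bm\mu \mapstoto \Xcal_F(\bm\mu)$ (\Cref{lemma:continuity}, point (iv)), there is $\rho_{\epsilon'} > 0$ such that $\|\bm\mu - \bm\mu'\|_\infty \le \rho_{\epsilon'}$ implies $\Xcal_F(\bm\mu') \subseteq \bigcup_{i=1}^m \mathcal{B}_{\epsilon'}(\bar x_i)$. Then \Cref{lemma:ct-shrink} supplies $T_\epsilon$ such that, for all $t \ge T_\epsilon$, on $\mathcal{E}(t)$ one has $\|\bm\mu - \bm\mu'\|_\infty \le \rho_{\epsilon'}$ for every $\bm\mu' \in C_s$ and every relevant $s \ge h(t)$. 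Taking the union over $\bm\mu' \in C_s$ gives $\Xcal_s = \bigcup_{\bm\mu' \in C_s} \Xcal_F(\bm\mu') \subseteq \bigcup_{i=1}^m \mathcal{B}_{\epsilon'}(\bar x_i)$. Moreover, since $\bm\mu \in C_s$ on $\mathcal{E}(t)$, each center satisfies $\bar x_i \in \Xcal_F(\bm\mu) \subseteq \Xcal_s$; hence every ball is ``occupied'' by a point of $\Xcal_s$.

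The core of the argument is then a short induction on $s$ showing that once the sequence enters one ball it never leaves. For the base case, $x_{h(t)} \in \Xcal_{h(t)}$ lies in some $\mathcal{B}_{\epsilon'}(\bar x_{i^\star})$. For the inductive step, suppose $x_{s-1} \in \mathcal{B}_{\epsilon'}(\bar x_{i^\star})$ with $s-1 \ge h(t)$. Because $\bar x_{i^\star} \in \Xcal_s$ and $\|x_{s-1} - \bar x_{i^\star}\|_\infty \le \epsilon'$, the value defining $x_s$ satisfies $\|x_s - x_{s-1}\|_\infty = \min_{x \in \Xcal_s}\|x - x_{s-1}\|_\infty \le \epsilon'$. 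On the other hand, if $x_s$ belonged to a different ball $\mathcal{B}_{\epsilon'}(\bar x_j)$ with $j \ne i^\star$, the triangle inequality would give $\|x_s - x_{s-1}\|_\infty \ge \|\bar x_j - \bar x_{i^\star}\|_\infty - \|x_s - \bar x_j\|_\infty - \|x_{s-1} - \bar x_{i^\star}\|_\infty \ge \gamma - 2\epsilon' \ge \gamma/2 > \epsilon'$, a contradiction. Hence $x_s \in \mathcal{B}_{\epsilon'}(\bar x_{i^\star})$, and by induction $\|x_s - \bar x_{i^\star}\|_\infty \le \epsilon' \le \epsilon$ for all $s \ge h(t)$, which is exactly \Cref{def:conv-answer} with $\bar x = \bar x_{i^\star} \in \Xcal_F(\bm\mu)$.

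I expect the main obstacle to be the bookkeeping of the radius $\epsilon'$: it must simultaneously make the balls disjoint, keep the across-ball gap strictly larger than the within-ball distance $\epsilon'$ (so that the argmin is forced to stay put), and still meet the target accuracy $\epsilon$; the choice $\epsilon' = \min\{\epsilon, \gamma/4\}$ is what reconciles these requirements. The positivity of $\gamma$ --- and hence this whole scheme --- hinges on $|\Xcal_F(\bm\mu)| < \infty$, which is exactly why a fixed total order (as in Sticky-TaS) fails in $\Reals^2$: it can oscillate between the disjoint balls, whereas anchoring to the previously selected answer cannot. A minor technical point to dispatch is the attainment of the minimizer in $\Xcal_t$, handled as in the surrounding lemmas by passing to $\cl(\Xcal_t)$ and selecting a nearby point of $\Xcal_t$ when the infimum is not attained.
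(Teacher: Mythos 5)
Your proof is correct and follows essentially the same route as the paper's: the finiteness of $\Xcal_F(\bm\mu)$ yields a positive pairwise separation, upper hemicontinuity together with \Cref{lemma:ct-shrink} localizes $\Xcal_s$ inside disjoint balls around the points of $\Xcal_F(\bm\mu)$ (each of which is occupied since $\bm\mu \in C_s$ on the good event), and the closest-to-previous selection rule cannot jump between balls. If anything, your explicit induction with $\epsilon' = \min\{\epsilon, \gamma/4\}$ and the triangle-inequality contradiction is tidier than the paper's version, which only requires $\rho$ below the minimum pairwise distance and then asserts disjointness of the $\rho$-balls (strictly, that needs $\rho$ below half that distance, exactly the bookkeeping your choice of $\epsilon'$ handles).
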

\begin{proof}
    Let $\rho$ be such that $\| x_1 - x_2 \|_{\infty} > \rho$ for all $x_1, x_2 \in \Xcal_{F}(\bm{\mu})$, $x_1 \ne x_2$. Then, since $\Xcal_F(\bm{\mu})$ is upper hemicontinuous (\Cref{lemma:continuity}), there exists $\eta_{\rho} > 0$ such that, for all $\bm{\mu}': \|\bm{\mu} - \bm{\mu}'\|_{\infty} \le \eta_{\rho} \implies \Xcal_F(\bm{\mu}') \subset \Bcal_{\rho}(\Xcal_F(\bm{\mu}))$. Since $|\Xcal_F(\bm{\mu})|<M$, it follows that:
    \begin{align*}
        \bigcup_{\bm{\mu}:\| \bm{\mu} - \bm{\mu}' \|_{\infty} \le \eta_{\rho}} \Xcal_F(\bm{\mu}') \subseteq \bigcup_{x \in \Xcal_F(\bm{\mu})} \Bcal_{\rho}(x),
    \end{align*}
    and, moreover, $\Bcal_{\rho}(x_1) \cap \Bcal_{\rho}(x_2) = \emptyset$ for all $x_1, x_2 \in \Xcal_F(\bm{\mu})$, $x_1 \ne x_2$.

    Let $t \ge \widetilde T_{\eta_\rho}$, where $\widetilde T_{\eta_\rho}$ is such that, for all $t \ge \widetilde T_{\eta_\rho}$, on $\mathcal{E}(t)$, $\|\bm{\mu} - \bm{\mu}' \|_{\infty} \le \eta_\rho$ for all $\bm{\mu}' \in C_s$ and all $s \ge h(t)$. Such $\widetilde T_{\eta_{\rho}}$ is guaranteed to exists due to \Cref{lemma:ct-shrink}. Then, on $\mathcal{E}(t)$, $\|\bm{\mu} - \bm{\mu}' \|_{\infty} \le \eta_\rho$ for all $\bm{\mu}' \in C_s$ and all $s \ge h(t)$.

    Now, we observe that, for $t \ge \widetilde T_{\eta_{\rho}}$ and $s \ge h(t)$, on $\mathcal{E}(t)$, $\|x_s - \bar x\|_{\infty} \le {\rho}$ for some $\bar x \in \Xcal_F(\bm{\mu})$, and, furthermore, $\|x_s - x\|_{\infty} > \rho$ for all $x \in \Xcal_{F}(\bm{\mu}) \setminus \bar{x}$. 
    This is due to the fact that $\bm{\mu} \in C_s$ for all $s \ge h(t)$ (and, consequently, $\Xcal_F(\bm{\mu}) \in \Xcal_s$),  $\| x_1 - x_2 \|_{\infty} > \rho$ for all $x_1, x_2 \in \Xcal_{F}(\bm{\mu})$, and thanks to the fact that the selection rule of $x_s$ is such that it always select the feasible solution which is closest to the previous point.

    Thus, \Cref{def:conv-answer} directly follows by considering any $t \ge T_{\epsilon} \coloneqq \max\{\widetilde T_{\epsilon}, \widetilde T_{\rho} \}$. Indeed, by the reasoning above, we have that there exists $\bar{x} \in \Xcal_F(\bm\mu)$ such that $\|x_s - \bar{x} \| \le \min\{ \rho, \epsilon \} \le \epsilon$ for all $s \ge h(t)$ under the good event $\mathcal{E}(t)$.
    
\end{proof}

\subsubsection{Sticky Sequence Track and Stop with Adaptive Discretization}\label{app:adaptive}

\begin{algorithm}[t]
	\caption{A general procedure for selecting answers.}
    \label{alg:general}
	\begin{algorithmic}[1]
		\REQUIRE{Radius function $\rho: \mathbb{N} \rightarrow \mathbb{R}$, answer space $\mathcal{X}$}
        \STATE{\textbf{Initialization:} Pick any $\bar{x}_1 \in \mathcal{P}_{1}: \mathcal{B}_{\rho_1}(\bar{x}_1) \cap \Xcal_1 \ne \emptyset$. Set $x_1 \in \mathcal{B}_{\rho_1}(\bar{x}_1) \cap \Xcal_1$. Set $\mathcal{H}_1 = \{ (\bar{x}_1, \rho_1)\}$}
        \STATE{\textbf{Selection Rule at step $t$}:}
        \STATE{Let $\mathcal{S}_t = \{ ({x}, {\rho}) \in \mathcal{H}_{t-1}: \forall (\bar{x}_s, \rho_s) \in \mathcal{H}_{t-1},\, \rho_s > {\rho}\, \textnormal{ and } \mathcal{B}_{\rho_s}(\bar{x}_s) \cap \Xcal_t \ne \emptyset \}$ }\label{alg:selection rule1}
        \STATE{%
        Let $\bar{s}$ be the time corresponding to the element $(\bar x_s,\rho_s)\in \mathcal{S}_t$ with the smallest radius $\rho_s$\label{alg:selection rule2}}
        \STATE{Let $\overline{\Hcal}_t = \emptyset$}
        \FOR{$s=\bar{s}+1,\dots, t$}
        \STATE{Pick any $\bar{x}_s \in \mathcal{P}_s$ such that $\mathcal{B}_{\rho_s}(\bar{x}_s) \cap \mathcal{B}_{{\rho}_{s-1}}(\bar{x}_{s-1}) \cap \Xcal_t \ne \emptyset$ \label{alg:selection rule3}}
        \STATE{$\overline{\mathcal{H}}_t = \overline{\mathcal{H}}_t \cup (\bar{x}_s, \rho_s)$\label{alg:selection rule4}}
        \ENDFOR
        \STATE{Pick any $x_t \in \mathcal{B}_{\rho_t}(\bar{x}_t) \cap \Xcal_t$\label{alg:selection rule5}}
        \STATE{Update History $\mathcal{H}_t = \{ (x,\rho) \in \mathcal{H}_{t-1}: \rho \ge \rho_{\bar{s}}\} \cup \overline{\mathcal{H}}_t$\label{alg:selection rule6}}
	\end{algorithmic}
\end{algorithm}

We now show that there exists a general algorithm for ensuring \Cref{def:conv-answer} in arbitrary compact spaces. The pseudocode of the selection rule for the answer $x_t$ can be found in Algorithm~\ref{alg:general}. Before detailing the algorithm we introduce some notation.

\paragraph{Preliminary definitions}
We consider a radius function $\rho: \mathbb{N} \to \mathbb{R}_{>0}$ such that $\rho(t+1) < \rho(t)$ and $\lim_{t \to \infty} \rho(t) = 0$. Specifically, we choose $\rho(t) = 2^{-t}$. In the following, with some abuse of notation, we write $\rho(t) = \rho_t$. Note that, for any $\rho_t$, the corresponding time $t$ is uniquely identified by $\rho_t$ due to the fact that $\rho(t)$ is invertible.  

For the answer set $\mathcal{X}$ and any $t \in \mathbb{N}$, we write write $\mathcal{P}_t = \{ x_i \}_{i=1}^{n_{\rho_t}}$ (or, with some abuse of notation, $\mathcal{P}_{\rho_t}$) to denote the centers of a cover of $\Xcal$ that uses balls of radius $\rho_t$ centered in points $x_i \in \mathcal{P}_t$.

\paragraph{Explanation of the algorithm}
\Cref{alg:general} works by combining the progressive discretization of $\Xcal$ together with a ``history mechanism''.

In the first turn, the algorithm selects any point $\bar{x}_1 \in \mathcal{P}_1$ such that $\mathcal{B}_{\rho_1}(\bar{x}_1) \cap \Xcal_1\neq\emptyset$ and it picks any answer $x_1 \in \mathcal{B}_{\rho_1}(\bar{x}_1) \cap \Xcal_1$. Then, it initializes a ``history'' $\Hcal_1=\{(\bar x_1,\rho_1)\}$.

Assume now that a history $\Hcal_{t-1}$ of $t-1$ tuples $\{(\bar x_s,\rho_s)\}_{s=1}^{t-1}$ is given to the algorithm (we will explain shortly how $\Hcal_t$ is defined and updated by the algorithm). The algorithm decides the answer $x_t$ to play as follows.
Among all elements of $\Hcal_{t-1}$, the algorithm first selects an index $\bar s$, which is the one with the smallest radius (\ie the greatest index $\bar s\le t-1$) that guarantees that the intersection of $\Bcal_{\rho_{s}}(\bar x_{s})$ and $\Xcal_t$ is non-empty for all $s \le \bar s$  (\Cref{alg:selection rule1} and \Cref{alg:selection rule2}). 
Intuitively, the region $\Bcal_{\rho_{\bar s}}(\bar x_{\bar s})$ can be seen as an anchoring mechanism that constrains the search for a good answer in $\Xcal_F(\bm\mu)$ towards previously selected points (for further explanation, see the remark of $\bar{s}$ below).
When such $(\bar x_{\bar s},\rho_{\bar s})$ is selected, then the tuples $(x_s,\rho_s)$ with $s>\bar s$ are discarded from the history set, and $\mathcal{H}_t$ is ``repopulated'' by picking some $(\bar x_s, \rho_s)$ such that $\bar x_s$ is in the centers of the cover at time $s$ and such that $\mathcal{B}_{\rho_s}(\bar{x}_s) \cap \mathcal{B}_{{\rho}_{s-1}}(\bar{x}_{s-1}) \cap \Xcal_t \ne \emptyset$ (\Cref{alg:selection rule3}). 
In \Cref{alg:general} the ``repopulation'' of $\Hcal_t$ is formalized trough $\overline{\Hcal}_t$ (\Cref{alg:selection rule4} and \Cref{alg:selection rule6}).

Finally, \Cref{alg:general} simply selects any $x_t\in\mathcal{B}_{\rho_t}(\bar{x}_t) \cap \Xcal_t$ (\Cref{alg:selection rule5}).

\paragraph{Remark on $\bar s$}
We observe that, for all $s > \bar{s}$, there exists an element in position $s' \in \{s+1, s\}$ such that $\mathcal{B}_{\rho_{s'}}(\bar{x}_{s'}) \cap \Xcal_t = \emptyset$. We recall that:
\begin{itemize}
    \item $\Xcal_t$ represents the set of candidate answers that correspond to models within a confidence region around $\hat{\bm\mu}_t$;
    \item \Cref{alg:general} keeps in $\Hcal_t$ only elements of $\Hcal_{t-1}$ whose index is less than $\bar{s}$, \ie for all $s \le \bar{s}$, it holds that $\Bcal_{\rho_s}(\bar{x}_s) \cap \Xcal_t \ne \emptyset$.
\end{itemize}
These two facts can be interpreted as a \emph{backtracking} operation that is needed to guide the search of the algorithm toward some $\bar{x} \in \Xcal_F(\bm\mu)$. Indeed, for all $s > \bar{s}$, there is a point within the sequence $\{\bar s, \dots, s \}$ whose ball does not intersect the set of candidate answers $\Xcal_t$ (which, on the good event, contains $\Xcal_F(\bm\mu)$).

\paragraph{Remark on notation}
In the rest of this section, for the sake of clarity, we will use the following convention:
\begin{itemize}
    \item $\bar{x} \in \Xcal$ denotes the answer in $\Xcal_F(\bm\mu)$ towards which \Cref{alg:general} is converging (or, whenever needed, answers in $\Xcal_F(\bm\mu)$);
    \item $x_t$ denotes the answer selected by \Cref{alg:general} at step $t$;
    \item $\bar{x}_{s}^{(t)}$ denotes the center of the cover of radius $\rho_s$ (\ie $\bar{x}_s \in \mathcal{P}_s$) within the history set at step $t$, \ie $(\bar{x}_s^{(t)}, \rho_s) \in \Hcal_t$. Note that here, we are expanding the notation introduced above since the elements within $\Hcal_t$ can change throughout the execution of the algorithm. This is to avoid potential ambiguities.
\end{itemize}

Given this convention, we observe that for all $t \in \Naturals$, the selection rule of \Cref{alg:general} can be rewritten as $x_t \in \Bcal_{\rho_t}(\bar{x}_t^{(t)})$ for some $\bar{x}_t^{(t)} \in \mathcal{P}_{\rho_t}$. Similarly, the history set $\Hcal_t$ can be rewritten as $\Hcal_t = \{ (x_s^{(t)}, \rho_s) \}_{s=1}^t$. Observe that, due to the backtracking operation highlighted above, in principle it can happen that $x_s^{(t)} \ne x_s^{(t+1)}$.

\paragraph{Proof of convergence}

In the following, we prove that \Cref{alg:general} generates a converging sequence.
We first give a proof outline and then we dive into the formal arguments.

The main idea is showing that for any $n \in \mathbb{N}$, and for sufficiently large $t$, under the good event $\mathcal{E}(t)$, there exists an element $(\bar x_n, \rho_n)$ that remains in $\Hcal_s$ for all $s \ge h(t)$, \ie $\bar{x}_n = \bar{x}_n^{(s)}$ for all $s \ge h(t)$. This first result (which is formally stated in \Cref{lemma:new}) intuitively says that, when enough information has been collected, then the algorithm is able to fix a region of arbitrary radius within its history set $\Hcal_s$. As \Cref{lemma:conv-general} then shows, this is enough to ensure that the algorithm is converging to some $\bar{x} \in \Xcal_F(\bm\mu)$. Indeed, we will show that the aforementioned $\bar{x}_n$ is close to some $\bar{x} \in \Xcal_F(\bm\mu)$, and, by the design of the algorithm (\ie \Cref{alg:selection rule4}), the answer selected at step $s$ (\ie $x_s$) will be close to $\bar{x}_n$ for all $s \ge h(t)$. By a triangular inequality argument, this implies that $x_s$ remains close to some $\bar{x} \in \Xcal_F(\bm\mu)$, thus leading to the desired convergence property. 

Now, before moving to \Cref{lemma:new}, we first prove a technical result which will be used within the proof of \Cref{lemma:new}.

\begin{lemma}\label{lemma:conv-general-value}
    Consider $\bm\mu \in \mathcal{M}$, $n \in \mathbb{N}$ and $\rho = \rho_n$. Suppose that there exists ${x} \in \mathcal{P}_{\rho}$ such that $ \Bcal_{\rho}({x}) \cap \Xcal_F(\bm\mu) = \emptyset$. Then, it holds that:
    \begin{align*}
        \eta_{\rho} = \min_{\bar{x} \in \Xcal_F(\bm\mu)} \min_{\substack{{x} \in \mathcal{P}_\rho: \\ \mathcal{B}_{\rho}({x}) \cap \Xcal_F(\bm\mu) = \emptyset}} \min_{x' \in \mathcal{B}_{\rho}({x})} \| x' - \bar{x} \|_{\infty} > 0.
    \end{align*}
    where $\eta_\rho$ is well defined and $\eta_\rho>0$.
\end{lemma}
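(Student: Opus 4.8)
The plan is to verify in turn that (a) every minimum in the definition of $\eta_\rho$ is attained, so that $\eta_\rho$ is a well-defined real number, and then (b) that this attained value cannot be zero. Both parts are compactness/continuity arguments, and the strict positivity will follow from a short contradiction.

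First I would establish well-definedness by working from the innermost minimum outward. The inner minimum $\min_{x' \in \mathcal{B}_\rho(x)} \|x'-\bar x\|_\infty$ is taken over $\mathcal{B}_\rho(x) = \{x' \in \Xcal : \|x-x'\|\le\rho\}$, which is the intersection of a closed ball with the compact set $\Xcal$ (\Cref{ass:compact}), hence compact; since the centers of the cover $\mathcal{P}_\rho$ lie in $\Xcal$ (cf.\ the construction in \Cref{lemma:covering}), we have $x \in \mathcal{B}_\rho(x)$, so the set is nonempty, and the continuous map $x' \mapsto \|x'-\bar x\|_\infty$ attains its minimum on it. The next minimum ranges over $\{x \in \mathcal{P}_\rho : \mathcal{B}_\rho(x) \cap \Xcal_F(\bm\mu) = \emptyset\}$, a subset of the \emph{finite} set $\mathcal{P}_\rho=\{x_i\}_{i=1}^{n_{\rho}}$ which is nonempty precisely by the hypothesis that some ball misses $\Xcal_F(\bm\mu)$; a minimum of finitely many reals is attained. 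Finally, $\Xcal_F(\bm\mu)$ is nonempty and compact-valued by point \emph{(iv)} of \Cref{lemma:continuity}, so the outermost minimum over $\bar x \in \Xcal_F(\bm\mu)$ of the (continuous, by the attained inner minima and a standard Berge-type argument) objective is attained as well. Thus $\eta_\rho$ is well defined and is realized by some triple $(\bar x, x, x')$ with $\bar x \in \Xcal_F(\bm\mu)$, $\mathcal{B}_\rho(x)\cap\Xcal_F(\bm\mu)=\emptyset$, and $x' \in \mathcal{B}_\rho(x)$.

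Next I would prove $\eta_\rho > 0$ by contradiction. Since each summand is a norm, $\eta_\rho \ge 0$, so it suffices to rule out $\eta_\rho = 0$. If $\eta_\rho = 0$, then the attaining triple $(\bar x, x, x')$ satisfies $\|x' - \bar x\|_\infty = 0$, i.e.\ $x' = \bar x$. But $x' \in \mathcal{B}_\rho(x)$ and $\bar x \in \Xcal_F(\bm\mu)$, so $\bar x = x' \in \mathcal{B}_\rho(x) \cap \Xcal_F(\bm\mu)$, contradicting the defining property $\mathcal{B}_\rho(x)\cap\Xcal_F(\bm\mu)=\emptyset$ of the index $x$ in the middle minimization. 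Hence $\eta_\rho \ne 0$, and combined with $\eta_\rho \ge 0$ this yields $\eta_\rho > 0$.

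The only genuinely delicate point is the \emph{attainment} of the minima, which relies on chaining compactness (of $\Xcal$ and of $\Xcal_F(\bm\mu)$), finiteness of the cover $\mathcal{P}_\rho$, and continuity of the distance function; the strict positivity itself is then an immediate consequence of the fact that a zero value would force a point of $\Xcal_F(\bm\mu)$ into a ball that by construction avoids $\Xcal_F(\bm\mu)$. I expect the main obstacle to be stating the attainment of the outer minimum cleanly—one should note that the inner two minima define a function of $\bar x$ that is continuous on the compact set $\Xcal_F(\bm\mu)$ (a minimum over a fixed finite index set of functions $\bar x \mapsto \min_{x'\in\mathcal{B}_\rho(x)}\|x'-\bar x\|_\infty$, each continuous by Berge's theorem since $\mathcal{B}_\rho(x)$ is a fixed compact set), so that the Weierstrass extreme value theorem applies.
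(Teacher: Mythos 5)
Your proposal is correct and follows essentially the same route as the paper's proof: attainment of each minimum is argued inside-out via compactness of $\mathcal{B}_\rho(x)$, finiteness of $\mathcal{P}_\rho$, continuity of the resulting objective, and compactness of $\Xcal_F(\bm\mu)$, and strict positivity follows because an attaining triple would otherwise place $\bar x = x'$ inside a ball that by construction avoids $\Xcal_F(\bm\mu)$. The only cosmetic difference is that you phrase the positivity step as a contradiction with $\eta_\rho = 0$, whereas the paper directly argues $\bar x \ne x'$; the content is identical.
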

\begin{proof}
    Consider
    \[
    \eta_\rho\coloneqq \inf_{\bar x \in \Xcal_F(\bm\mu)} \inf_{\substack{{x} \in \mathcal{P}_\rho: \\ \mathcal{B}_{\rho}({x}) \cap \Xcal_F(\bm\mu) = \emptyset}} \inf_{x' \in \mathcal{B}_{\rho}({x})} \| x' - \bar x \|_{\infty}.
    \]
    Then, fix any ${x} \in \mathcal{P}_\rho$ and $\bar x \in \Xcal_F(\bm\mu)$ and let
    \(
        \phi(\bar x,x)=\inf_{x' \in \mathcal{B}_{\rho}({x})} \| x'-\bar x\|_{\infty},
    \)
    where the $\inf$ can be replaced by $\min$ (indeed, $\Bcal_{\rho}(x)$ is compact and the inner infinite-norm function is continuous).

    Now, let $\bar x \in \Xcal_F(\bm\mu)$ and consider:
    \begin{align*}
        \inf_{\substack{{x} \in \mathcal{P}_\rho: \\ \mathcal{B}_{\rho}({x}) \cap \Xcal_F(\bm\mu) = \emptyset}} \phi(\bar x, x).
    \end{align*}
    We observe that, by assumption, the  set over which we are optimizing is non-empty. Furthermore, it holds that:
    \begin{align*}
        \inf_{\substack{{x} \in \mathcal{P}_\rho: \\ \mathcal{B}_{\rho}({x}) \cap \Xcal_F(\bm\mu) = \emptyset}} \phi(\bar x, x) = \min_{\substack{{x} \in \mathcal{P}_\rho: \\ \mathcal{B}_{\rho}({x}) \cap \Xcal_F(\bm\mu) = \emptyset}} \phi(\bar x, x) \coloneqq g(\bar x).
    \end{align*}
    Indeed, the cardinality of the set ${x} \in \mathcal{P}_\rho: \mathcal{B}_{\rho}({x}) \cap \Xcal_F(\bm\mu) = \emptyset$  is finite since $\mathcal{P}_\rho$ is finite. %
    Furthermore, $g(\bar x)$ is continuous in $\bar x$ since it is a minimum over a finite set of continuous functions.

    At this point, we note that the outer $\inf$ can be replaced by a $\min$ since that objective function is continuous in $\bar x$ and $\Xcal_F(\bm\mu)$ is compact. This shows that $\eta_\rho$ is well defined.

    It remains to prove that the value of the optimization problem is strictly greater than $0$. Consider a triplet $(\bar x, {x}, x')$ that attains all the minimums of the optimization problem. Then, we need to prove that $\|\bar x-x'\|_{\infty} > 0$, i.e., that $\bar x \ne x'$. However, $\bar x \in \Xcal_F(\bm\mu)$ and $x' \notin\Xcal_F(\bm\mu)$ (since $x' \in \mathcal{B}_{\rho}({x})$ and $\mathcal{B}_{\rho}({x}) \cap \Xcal_F(\bm\mu) = \emptyset$), thus concluding the proof.
\end{proof}

We are now ready to show that, under the good event, the algorithm is able to identify good regions in which it will conduct the search.

\begin{lemma}\label{lemma:new}
    For all $n \in \Naturals$, there exists $T_n \in \Naturals$ such that, for all $t \ge T_n$, under $\mathcal{E}(t)$, there exists $\bar x_n \in \mathcal{P}_n$ and:
    \begin{align}\label{eq:lemma-new-1}
        (\bar x_n, \rho_n) \in \Hcal_s, \quad \forall s \ge h(t),
    \end{align}
    \ie for all $s \ge h(t)$, $\bar x_n = \bar x_n^{(s)}$.
    Furthermore, under $\mathcal{E}(t)$,
    there exists $\bar{x} \in \Xcal$ such that
    \begin{align}\label{eq:lemma-new-2}
    \bar{x} \in \Xcal_F(\bm\mu) \cap \Bcal_{\rho_n}(\bar x_n).
    \end{align}
\end{lemma}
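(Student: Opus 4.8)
The plan is to combine a purely deterministic invariant of \Cref{alg:general} with the good-event concentration of the candidate sets $\Xcal_s$ onto $\Xcal_F(\bm\mu)$. The invariant I will use is that \emph{every} tuple $(\bar x_m,\rho_m)$ in the history $\Hcal_s$ has a ball meeting the current candidate set, i.e.\ $\Bcal_{\rho_m}(\bar x_m)\cap\Xcal_s\ne\emptyset$. For the elements kept from $\Hcal_{s-1}$ (indices $\le\bar s$) this is exactly the defining property of $\bar s$ recorded in the remark on $\bar s$ preceding the lemma; for the elements repopulated in $\overline{\Hcal}_s$ it follows from \Cref{alg:selection rule3}, since $\Bcal_{\rho_{s'}}(\bar x_{s'})\cap\Bcal_{\rho_{s'-1}}(\bar x_{s'-1})\cap\Xcal_s\ne\emptyset$ forces $\Bcal_{\rho_{s'}}(\bar x_{s'})\cap\Xcal_s\ne\emptyset$. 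This invariant holds without any reference to the good event.

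For the stochastic side, I fix the scale $n$ and set $\eta:=\min_{1\le m\le n}\eta_{\rho_m}$, where $\eta_{\rho_m}>0$ is the separation constant of \Cref{lemma:conv-general-value}, the minimum being taken over those scales $m\le n$ for which some ball of radius $\rho_m$ misses $\Xcal_F(\bm\mu)$ (if there is no such scale the key implication below is trivial). As the minimum is over finitely many scales, $\eta>0$; I then choose $\epsilon<\eta$. Using upper hemicontinuity of $\bm\mu\mapstoto\Xcal_F(\bm\mu)$ (\Cref{lemma:continuity}, point \emph{(iv)}) together with \Cref{lemma:ct-shrink}, exactly as in \Cref{lemma:conv-finite}, I obtain a time $T$ such that for all $t\ge T$, on $\mathcal{E}(t)$, one has $\Xcal_s\subseteq\Bcal_\epsilon(\Xcal_F(\bm\mu))$ for all $s$ in the window $[h(t),t]$. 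The choice $\epsilon<\eta$ yields the \emph{key implication}: for every $m\le n$, any $\Bcal_{\rho_m}(x)$ with $x\in\mathcal{P}_m$ that meets $\Xcal_s$ must also meet $\Xcal_F(\bm\mu)$, since otherwise all its points would lie at distance $\ge\eta_{\rho_m}\ge\eta>\epsilon$ from $\Xcal_F(\bm\mu)$, contradicting $\Xcal_s\subseteq\Bcal_\epsilon(\Xcal_F(\bm\mu))$. In the opposite direction, $\bm\mu\in C_s$ on $\mathcal{E}(t)$ gives $\Xcal_F(\bm\mu)\subseteq\Xcal_s$ for all $s\in[h(t),t]$, because $\Xcal_s=\bigcup_{\bm\mu'\in C_s}\Xcal_F(\bm\mu')$.

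Combining the two sides proves both claims. Taking $T_n\ge T$ large enough that $h(t)\ge n$, the scale-$n$ element of $\Hcal_s$ meets $\Xcal_s$ by the invariant, hence meets $\Xcal_F(\bm\mu)$ by the key implication, which gives \Cref{eq:lemma-new-2}. For the fixedness statement \Cref{eq:lemma-new-1} I will show that the backtracking depth $\bar s$ computed at each transition step $s+1\in(h(t),t]$ satisfies $\bar s\ge n$, so scale $n$ is never discarded. Indeed, by the invariant every ball at scales $1,\dots,n$ in $\Hcal_s$ meets $\Xcal_s$, hence (key implication) meets $\Xcal_F(\bm\mu)$, hence (since $\Xcal_F(\bm\mu)\subseteq\Xcal_{s+1}$) meets $\Xcal_{s+1}$; this is precisely the condition making $\bar s\ge n$ at step $s+1$, so the update rule of \Cref{alg:selection rule6} keeps the scale-$n$ tuple untouched, $\bar x_n^{(s+1)}=\bar x_n^{(s)}$. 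Iterating over $s\in[h(t),t]$ shows the scale-$n$ center is constant on this window, so setting $\bar x_n:=\bar x_n^{(h(t))}$ yields $(\bar x_n,\rho_n)\in\Hcal_s$ for all $s\ge h(t)$, with $\Bcal_{\rho_n}(\bar x_n)$ meeting $\Xcal_F(\bm\mu)$ as above.

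The main obstacle is precisely the fixedness argument: controlling the backtracking so that it never reaches below scale $n$. The delicate point is that $\Xcal_s$ varies with $s$, so a ball that meets $\Xcal_s$ need not, a priori, meet $\Xcal_{s+1}$, and a naive argument would allow scale $n$ to be repopulated repeatedly. The argument is rescued only by routing through $\Xcal_F(\bm\mu)$, using \emph{both} inclusions $\Xcal_s\subseteq\Bcal_\epsilon(\Xcal_F(\bm\mu))$ and $\Xcal_F(\bm\mu)\subseteq\Xcal_{s+1}$ valid on the good event, together with the separation constant $\eta$ that makes ``meets $\Xcal_s$'' equivalent to ``meets $\Xcal_F(\bm\mu)$'' for all balls at scales $\le n$. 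Getting $T_n$ uniform requires only that $h(t)\ge n$ and $t\ge T$, both of which hold for $t$ large since $h(t)=\lceil\sqrt t\,\rceil\to\infty$.
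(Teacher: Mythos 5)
Your proposal is correct and follows essentially the same route as the paper's proof: both rest on the separation constants $\eta_{\rho_m}$ of \Cref{lemma:conv-general-value} minimized over scales $m\le n$, the good-event inclusion $\Xcal_s\subseteq\Bcal_\epsilon(\Xcal_F(\bm\mu))$ from \Cref{lemma:ct-shrink} with $\epsilon$ below that minimum, and the two-way argument that balls at scales $\le n$ in the history meet $\Xcal_F(\bm\mu)$ (so they keep meeting $\Xcal_{s+1}$ since $\bm\mu\in C_{s+1}$, and backtracking never reaches scale $n$). Your explicit invariant plus ``key implication'' is just a cleaner packaging of the paper's contradiction argument with its case split on whether the scale admits a ball missing $\Xcal_F(\bm\mu)$.
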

\begin{proof}

    We start with some preliminary definitions. 
    
    Let $n \in \Naturals$, and let us define the following set:
    \begin{align}\label{eq:bad-set}
        \Naturals(n) = \{ j \in \Naturals: j \le n \textup{ and } \exists {x} \in \mathcal{P}_{\rho_j}: \Bcal_{\rho_j}({x}) \cap \Xcal_F(\bm\mu) = \emptyset \}.
    \end{align}
    The set $\Naturals(n)$ represents the subset of $[n]$ for which we can apply \Cref{lemma:conv-general-value}. Furthermore, let us define $r_{n}$ as follows:
    \begin{align*}
        r_n = \frac{1}{2}\min_{j \in \Naturals(n)} \eta_{\rho_j},
    \end{align*}
    where $\eta_{\rho_j}$ is as in \Cref{lemma:conv-general-value}. 
    
    Next, we recall that, by \Cref{lemma:ct-shrink} and the upper hemicontinuity of $\Xcal_F$ (\ie \Cref{lemma:continuity}), for all $\Delta > 0$, there exists $\widetilde T_{\Delta}$ such that, for all $t \ge \widetilde T_{\Delta}$, under $\mathcal{E}(t)$, $\Xcal_s \subseteq \Bcal_{\Delta}(\Xcal_F(\bm\mu))$ for all $s \ge h(t)$.

    In the following, we will show that picking $\Delta = r_{n}$ leads to the desired result.
    Specifically, we will consider $T_n = \max\{ \widetilde T_{r_n}, \bar{T}_n \}$), where $\bar{T}_n$ is such that $\rho_{h(\bar{T}_n)} \le \rho_n$. Here, the requirement that $t \ge \bar{T}_n$ is a technicality that ensures that elements of the form $(\cdot, \rho_n)$ are already within the history set $\Hcal_{h(t)}$. Indeed, at step $h(t)$, the history set $\Hcal_{h(t)}$ 
    only contains elements of the form $(\cdot, \rho)$ for $\rho \ge \rho_{h(t)}$. Intuitively, to ensure that $(\bar{x}_n^{(h(t)}, \rho_n) \in \Hcal_{h(t)}$ for some $\bar{x}_n^{(h(t)} \in \mathcal{P}_{h(t)}$ we need $t \ge \bar{T}_n$. 
    We now proceed with the crucial part of the proof, that is exploiting $t \ge \widetilde T_{r_n}$.

    Observe that, to prove \Cref{eq:lemma-new-1}, we need to show that there exists an element $(\bar x_{n}, \rho_n)$ that will not be removed from $\Hcal_s$ in any $s \ge h(t)$, \ie $\exists \bar{x}_n: \bar{x}_n = \bar{x}_n^{(s)}$ for all $s \ge h(t)$.
    As discussed above, for $t \ge \bar{T}_n$, there is an element of the form $(\bar{x}_n^{(h(t))}, \rho_n)$ within $\Hcal_{h(t)}$. In the following, we prove that such an element will not be eliminated from $\Hcal_s$ as the execution proceeds.
    
    By definition of \Cref{alg:general}, an element $(\bar x_n^{(s)}, \rho_n)$ can fail to belong to $\Hcal_{s+1}$ if and only if there exists $(\bar{x}^{(s)}_j, \rho_j) \in \Hcal_s$ such that $j \le n$ and $\Bcal_{\rho_j}(\bar x_j^{(s)}) \cap \Xcal_{s+1} = \emptyset$. 

    We proceed by contradiction. Suppose that there exists a $(\bar x^{(s)}_j, \rho_j) \in \Hcal_s$ for which $\Bcal_{\rho_j}(\bar x_j^{(s)}) \cap \Xcal_{s+1} = \emptyset$ for some $j \le n$.
    In the following, we will refer to the index $j$ as the index of the element that \emph{triggers} the elimination of $(\bar x^{(s)}_n, \rho_n)$ from $\Hcal_{s+1}$. Now, we proceed by cases.

    \paragraph{Case one: $j \notin \Naturals(n)$}
    If $j \notin \Naturals(n)$, then, we know by \Cref{eq:bad-set} that $\mathcal{B}_{\rho_j}(\bar x_j^{(s)}) \cap \Xcal_F(\bm\mu) \ne \emptyset$. Hence, since on the good event $\Xcal_F(\bm\mu) \subseteq \Xcal_s$ 
    for all $s \ge h(t)$,\footnote{Recall that $\bm\mu \in C_s$ by definition of $\mathcal{E}(t)$.} we also have that $\mathcal{B}_{\rho_j}(\bar x_j^{(s)}) \cap \Xcal_s \ne \emptyset$, and, as a consequence, the elimination condition of $(\bar x^{(s)}_n, \rho_n)$ from $\Hcal_{s+1}$ cannot be triggered by the element in position $j$, thus leading to the contradiction.

    \paragraph{Case two: $j \in \Naturals(n)$}
    If $j \in \Naturals(n)$, instead, we prove that for $t \ge T_{n}$, $\Bcal_{\rho_j}(\bar x_j^{(s)}) \cap \Xcal_F(\bm\mu) \ne \emptyset$ and (as we have shown in the previous case), this in turn implies that the elimination condition of $(\bar x^{(s)}_n, \rho_n)$ from $\Hcal_{s+1}$  cannot be triggered by the element in position $j$. To this end, the following argument shows that, for $t \ge \widetilde T_{r_n}$, all the elements $x \in \mathcal{P}_{\rho_j}$ such that $\mathcal{B}_{\rho_j}(x) \cap \Xcal_F(\bm\mu) = \emptyset$ cannot satisfy $\mathcal{B}_{\rho_j}(x) \cap \Xcal_s \ne \emptyset$ at any $s \ge h(t)$, and therefore, they cannot belong to $\Hcal_s$. Specifically, suppose that there exists $x \in \mathcal{P}_{\rho_j}$ such that the following holds for $s \ge h(t)$:
     \begin{align}
         & \mathcal{B}_{\rho_j}(x) \cap \Xcal_F(\bm\mu) = \emptyset \label{eq:cont-1} \\
         & \mathcal{B}_{\rho_j}(x) \cap \Xcal_s \ne \emptyset. \label{eq:cont-2}
     \end{align}
     Then, under $\mathcal{E}(t)$, for all $s \ge h(t)$, and any $y \in \mathcal{B}_{{\rho_j}}({x})\cap \mathcal{X}_s$ (such $y$ exists due to \Cref{eq:cont-2}), there exists $\bar{x} \in \Xcal_F(\bm\mu)$ such that $\|y - \bar{x} \|_{\infty}  \le r_n$ (by definition of $\widetilde T_{r_n}$ and $t \ge T_{r_n}$). However:
    \begin{align*}
        \|y - \bar{x} \|_{\infty} & \le r_n \\ 
        & < \eta_{\rho_j} \tag{$r_n = \frac{1}{2}\min_{i \in \Naturals(n)} \eta_{\rho_i}$ and $j \in \Naturals(n)$} \\
        & \le \min_{\substack{{x}' \in \mathcal{P}_{\rho_j}: \\ \mathcal{B}_{\rho_j}({x}') \cap \Xcal_F(\bm\mu) = \emptyset}} \min_{x'' \in \mathcal{B}_{\rho_j}({x}')} \| x'' - \bar{x} \|_{\infty} \tag{Def.~of $\eta_{\rho_j}$} \\
        & \le \min_{x'' \in \mathcal{B}_{\rho_j}({x})} \| x'' - \bar{x} \|_{\infty} \tag{$x \in \mathcal{P}_{\rho_j}: \Bcal_{\rho_j}(x) \cap \Xcal_F(\bm\mu) = \emptyset$, \ie \Cref{eq:cont-1}}  \\
        & \le \|y-\bar{x} \|_{\infty} \tag{$y \in \Bcal_{\rho_j}(x)$}.
    \end{align*}
    Thus, we have shown that $\|y - \bar{x} \|_{\infty} < \|y - \bar{x} \|_{\infty}$ which leads to a contradiction. This concludes the proof of \Cref{eq:lemma-new-1} since we have shown that for all $s \ge h(t)$ all the conditions that can trigger the elimination of $(\bar{x}^{(s)}_n, \rho_n)$ from $\Hcal_{s+1}$ cannot be triggered.   

    Now, concerning \Cref{eq:lemma-new-2}, we observe that the proof of \Cref{eq:lemma-new-1} already used in its argument the existence, for all $j \le n$, of $\Xcal_F(\bm\mu) \cap \Bcal_{\rho_{j}}(\bar{x}_j^{(s)}) \ne \emptyset$. Thus it holds also for $j = n$ and $\bar{x}_n^{(s)} = \bar{x}_n$ for all $s \ge h(t)$. 
\end{proof}

We are now ready to prove that \Cref{def:conv-answer} is satisfied for the sequence $\{x_t\}_t$ generated by \Cref{alg:general}.

\begin{lemma}\label{lemma:conv-general}
    Consider that $\{ x_t \}$ is given by \Cref{alg:general}. Then, \Cref{def:conv-answer} holds, i.e., for all $\epsilon > 0$, there exists $T_{\epsilon} \in \mathbb{N}$ such that, for all $t \ge T_{\epsilon}$, on $\mathcal{E}(t)$, there exists $\bar{x} \in \Xcal_F(\bm\mu)$ such that $\|\bar{x} - x_s\|_{\infty} \le \epsilon$ for all $s \ge h(t)$.
\end{lemma}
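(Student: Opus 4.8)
The plan is to reduce everything to the anchoring result of \Cref{lemma:new} together with a telescoping estimate along the chain of overlapping balls stored in the history set. Fix $\epsilon > 0$ and choose a resolution level $n \in \Naturals$ large enough that $5\rho_n \le \epsilon$ (recall $\rho_n = 2^{-n}$), and set $T_\epsilon = T_n$ as provided by \Cref{lemma:new}. For any $t \ge T_\epsilon$, on $\mathcal{E}(t)$, \Cref{lemma:new} furnishes a fixed center $\bar{x}_n \in \mathcal{P}_n$ with $(\bar{x}_n, \rho_n) \in \Hcal_s$ for all $s \ge h(t)$ (so $\bar{x}_n = \bar{x}_n^{(s)}$), together with an answer $\bar{x} \in \Xcal_F(\bm\mu) \cap \Bcal_{\rho_n}(\bar{x}_n)$. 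This $\bar{x}$ is the candidate limit point, and it only remains to show $\|x_s - \bar{x}\|_\infty \le \epsilon$ for every $s \ge h(t)$.

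The key structural fact I would establish first is a deterministic chain invariant of the history: for every $t$ and every $2 \le j \le t$,
\begin{align}\label{eq:chain-invariant}
    \Bcal_{\rho_j}(\bar{x}_j^{(t)}) \cap \Bcal_{\rho_{j-1}}(\bar{x}_{j-1}^{(t)}) \ne \emptyset,
\end{align}
which I would prove by induction over the algorithm's updates. The base case is trivial; in the inductive step the portion of $\Hcal_t$ retained from $\Hcal_{t-1}$ (the levels up to $\bar s$) inherits \eqref{eq:chain-invariant} because its centers are left unchanged, while the repopulated portion $\overline{\Hcal}_t$ satisfies $\Bcal_{\rho_s}(\bar{x}_s) \cap \Bcal_{\rho_{s-1}}(\bar{x}_{s-1}) \cap \Xcal_t \ne \emptyset$ by \Cref{alg:selection rule3} (including the junction between level $\bar s$ and level $\bar s + 1$), a fortiori giving \eqref{eq:chain-invariant}. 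Since two $\ell_\infty$-balls of radii $\rho_j, \rho_{j-1}$ intersect only if their centers lie within $\rho_j + \rho_{j-1}$, \eqref{eq:chain-invariant} yields the per-edge bound $\|\bar{x}_j^{(t)} - \bar{x}_{j-1}^{(t)}\|_\infty \le \rho_j + \rho_{j-1}$.

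Now I would telescope along the chain of $\Hcal_s$ from level $n$ to the finest level $s$. Since $\bar{x}_n^{(s)} = \bar{x}_n$ is fixed and $x_s \in \Bcal_{\rho_s}(\bar{x}_s^{(s)}) \cap \Xcal_s$ by \Cref{alg:selection rule5}, the triangle inequality together with the per-edge bound gives
\begin{align*}
    \|x_s - \bar{x}_n\|_\infty &\le \|x_s - \bar{x}_s^{(s)}\|_\infty + \sum_{j=n+1}^{s} \|\bar{x}_j^{(s)} - \bar{x}_{j-1}^{(s)}\|_\infty \\
    &\le \rho_s + \sum_{j=n+1}^{s}(\rho_j + \rho_{j-1}) \le \rho_n + 3\rho_n = 4\rho_n,
\end{align*}
where I used $\rho_s \le \rho_n$ (guaranteed since $T_n$ in \Cref{lemma:new} is chosen large enough that $h(t) \ge n$, hence $s \ge h(t) \ge n$) and the geometric sums $\sum_{j \ge n}\rho_j = 2\rho_n$ and $\sum_{j \ge n+1}\rho_j = \rho_n$. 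Adding $\|\bar{x}_n - \bar{x}\|_\infty \le \rho_n$ from \Cref{lemma:new} yields $\|x_s - \bar{x}\|_\infty \le 5\rho_n \le \epsilon$ for all $s \ge h(t)$, which is precisely \Cref{def:conv-answer}.

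The main obstacle is not the telescoping, which is mechanical, but the backtracking logic underlying \eqref{eq:chain-invariant} and \Cref{lemma:new}: one must ensure that the anchored tuple $(\bar{x}_n, \rho_n)$ is never evicted from the history under the good event (already handled by \Cref{lemma:new}) and that the ``retained-plus-repopulated'' update splices the kept levels $\le \bar s$ and the rebuilt levels $> \bar s$ into a \emph{single} connected chain of overlapping balls rather than fragmenting it. Verifying that the junction at $\bar s$ overlaps and that this delicate bookkeeping is preserved inductively is the crux; once it is in place, the convergence estimate above follows immediately.
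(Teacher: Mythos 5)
Your proposal is correct and follows essentially the same route as the paper's proof: anchor via \Cref{lemma:new}, use the overlap condition of \Cref{alg:selection rule3} to get the per-edge bound $\|\bar{x}_j^{(s)} - \bar{x}_{j-1}^{(s)}\|_\infty \le \rho_j + \rho_{j-1}$, telescope the geometric series from level $n$ down to level $s$, and add the $\rho_n$ term for $\|\bar{x}_n - \bar{x}\|_\infty$; the only differences are cosmetic (your $5\rho_n \le \epsilon$ budget versus the paper's $\epsilon/4$-splitting, and your explicit induction for the chain invariant, which the paper asserts directly by construction).
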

\begin{proof}
    From \Cref{lemma:new} we know that, for all $n \in \Naturals$, there exists $\widetilde T_n$ such that, for all $t \ge \widetilde T_n$, under $\mathcal{E}(t)$, there exists $\bar x_n \in \mathcal{P}_n$ and $(\bar x_n, \rho_n) \in \Hcal_s, \forall s \ge h(t)$.  Let $\epsilon > 0$ and consider $T_{\epsilon} = \widetilde T_{\bar{n}}$ where $\bar{n}$ is the smallest integer that verifies $\rho_{\bar n - 1} \le \epsilon / 4$.

    It follows that for all $t \ge T_{\epsilon}$ the following properties hold under $\mathcal{E}_t$:
    \begin{align}
        & \exists \bar x_{\bar n} \in \mathcal{P}_{\bar{n}}: (\bar x_{\bar n}, \rho_{\bar{n}}) \in \Hcal_s~ \forall s \ge h(t), \textup{ and } \rho_{\bar{n}} \le \epsilon/4.  \label{eq:prop-1}  
    \end{align}
    Observe that, as a direct consequence of \Cref{eq:prop-1}, we have that:
    \begin{align}\label{eq:prop-4}
        \rho_s \le \rho_{\bar n} \quad \forall s \ge h(t).
    \end{align}
    Indeed, from \Cref{lemma:new}, we have that $(\bar x_{\bar n}, \rho_{\bar{n}}) \in \Hcal_s$ for all $s \ge h(t)$. Then, it follows by definition of \Cref{alg:general} that $\rho_{\bar n}$ is related to some step $\bar n$ which is at most $h(t)$ (since that element need to be in the history set already at step $h(t)$). In other words, $h(t) \ge \bar n$.
    
    Furthermore, from \Cref{lemma:new} and $\bar x_{\bar n}$ as in \Cref{eq:prop-1}, we have that:
    \begin{align}
        \exists \bar{x} \in \Xcal_F(\bm\mu)\cap\bar{x} \in \mathcal{B}_{\rho_{\bar{n}}}(\bar x_{\bar n}).\label{eq:prop-3}
    \end{align}

    Now, by definition of \Cref{alg:general}, $x_s \in \Bcal_{\rho_s}(\bar{x}_s^{(s)})$ for all $s \ge 1$ for some $\bar{x}_s^{(s)} \in \mathcal{P}_{\rho_s}$. Let $\bar x_{\bar n}$ be as in \Cref{eq:prop-1} and $\bar{x}$ as in \Cref{eq:prop-3}. In the following, we will prove that under $\mathcal{E}(t)$, $\|x_s - \bar{x} \|_{\infty} \le \epsilon$ for all $s \ge h(t)$, which is exactly \Cref{def:conv-answer}. 
    Specifically, we have that:
    \begin{align*}
        \|x_s - \bar{x} \|_{\infty} & \le \|x_s - \bar x_{\bar n} \|_{\infty} + \|\bar x_{\bar n} - \bar{x} \|_{\infty} \\
        & \le \|x_s - \bar x_{\bar n} \|_{\infty} + \rho_{\bar{n}} \tag{\Cref{eq:prop-3}} \\
        & \le \|x_s - \bar x_{\bar n} \|_{\infty} + \frac{\epsilon}{4} \tag{Def. of $\bar{n}$} \\
        & \le \|x_s - \bar{x}_s^{(s)} \|_{\infty} + \| \bar{x}_s^{(s)} - \bar x_{\bar n } \|_{\infty} + \frac{\epsilon}{4} \\
        & \le \rho_{s} + \|\bar{x}_s^{(s)} - \bar x_{\bar n} \|_{\infty} + \frac{\epsilon}{4} \tag{Def. of $x_s$} \\
        & \le \rho_{\bar{n}} + \|\bar{x}_s^{(s)} - \bar x_{\bar n} \|_{\infty} + \frac{\epsilon}{4}\tag{\Cref{eq:prop-4}} \\
        & \le \|\bar{x}_s^{(s)} - \bar x_{\bar n} \|_{\infty} + \frac{\epsilon}{2} \tag{Def. of $\bar{n}$}
    \end{align*}
    At this point, it remains to upper bound $\|\bar{x}_s^{(s)} - \bar x_{\bar n} \|_{\infty}$, which we analyze with a telescoping argument. Recall that $\bar n \le s$ for all $s \ge h(t)$. Then,
    \begin{align*}
        \|\bar{x}_s^{(s)} - \bar x_{\bar n} \|_{\infty} \le \sum_{j = \bar{n}}^s \|\bar{x}_{j}^{(s)} - \bar{x}_{j+1}^{(s)} \|_{\infty},
    \end{align*}
    where we introduced all the elements in the history $\Hcal_s$ from step $\bar{n}$ to $s$ and we have used that, due to \Cref{lemma:new}, $\bar{x}_{\bar n} = \bar{x}_{\bar n}^{(s)}$ for all $s \ge h(t)$.
    Then, by construction (\ie \Cref{alg:selection rule3} in \Cref{alg:general}), we have that $\mathcal{B}_{\rho_j}(\bar{x}_j^{(s)}) \cap \mathcal{B}_{\rho_{j+1}}(\bar{x}_{j+1}^{(s)}) \ne \emptyset$. Thus, $ \|\bar{x}_{j}^{(s)} - \bar{x}_{j+1}^{(s)} \|_{\infty} \le \rho_{j} + \rho_{j+1} \le 2 \rho_j$. It then follows that:
    \begin{align*}
        \|\bar{x}_s^{(s)} - \bar x_{\bar n} \|_{\infty} \le 2 \sum_{j = \bar{n}}^s \rho_j \le 2 \sum_{j=\bar{n}}^{+\infty} \frac{1}{2^{j}} \le 2 \frac{1}{2^{\bar{n}-1}} = 2 \rho_{\bar{n}-1} \le \frac{\epsilon}{2}.
    \end{align*}
    Combining these results, we have obtained that $\| x_s - \bar{x} \|_{\infty} \le \epsilon$ under $\mathcal{E}(t)$ for sufficiently large $t$. This concludes the proof.
    \end{proof}

\subsection{What happens when $\{x_t\}$ is not converging}\label{app:not-conv}

In this section, we discuss what happens when the underlying sequence is not a converging one. Specifically, what kind of theoretical guarantees can we obtain?
Note that answering this question also provides the theoretical guarantees of TaS and Sticky-TaS whenever they fail to generate a converging sequence. Indeed, as we discussed in \Cref{sec:algo}, our framework actually generalizes both TaS and Sticky-TaS (according to the rule with which $x_t$ is selected). 

On a theoretical level, the main result of this section is \Cref{thm:not-conv}. Specifically, \Cref{thm:not-conv} shows that, when the sequence $x_t$ is not a converging one, we are only able to guarantee a results of the following form:
\begin{align}\label{eq:app-not-conv-temp}
\limsup_{\delta \to 0} \frac{\E_{\bm\mu}[\tau_\delta]}{\log(1/\delta)} \le \max_{\bm\omega \in \textup{conv}(\bm\omega^{\star}(\bm\mu))} \min_{x \in \Xcal^{\star}(\bm\mu)} \frac{1}{D(\bm\mu, \bm\omega, \neg x)}.
\end{align}

The rest of this section is structured as follows. First, in \Cref{app-subsec-not-conv}, we provide the formal statement and the proof of \Cref{eq:app-not-conv-temp}. The formal result is given in \Cref{thm:not-conv}. Then, in \Cref{app:not-conv-example}, we provide, as curiosity, a theoretical example of an application for which we can quantify the gap between the result of \Cref{theo:sticky-seq} and \Cref{thm:not-conv}.

\subsubsection{Theoretical Results}\label{app-subsec-not-conv}
To prove \Cref{thm:not-conv}, we first show a result that combines \Cref{lemma:continuity} and the good event.

\begin{lemma}\label{lemma:smoothness-on-good-event}
    For all $\kappa > 0$, there exists $T_{\kappa} \in \mathbb{N}$ such that, for all $t \ge T_{\kappa}$, on $\mathcal{E}(t)$, it holds that, for all $s \ge h(t)$, and all $\bm\mu' \in C_s$:
    \begin{align*}
    & \Big|\max_{x \in \Xcal_F(\bm\mu)} D(\bm\mu,\bm\omega, \neg x) - \max_{x \in \Xcal_F(\bm\mu')} D(\bm\mu',\bm\omega, \neg x) \Big| \le \kappa ,~\forall \bm\omega \in \Delta_K, \\ 
    &  \inf_{\bm\omega \in \bm\omega^*(\bm\mu)} \| \bm\omega - \bm\omega' \|_{\infty} \le \kappa,~\forall \bm\omega' \in \bm\omega^*(\bm\mu').
    \end{align*}
\end{lemma}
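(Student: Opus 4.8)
The plan is to establish both displayed bounds through the same two-step template used in \Cref{lemma:smoothness-conv-answer}: first show that each bound holds whenever $\bm\mu'$ lies within a suitable radius $\beta>0$ of the true model $\bm\mu$, and then invoke \Cref{lemma:ct-shrink} to guarantee that, on $\mathcal{E}(t)$ for $t$ large enough, every $\bm\mu'\in C_s$ with $s\ge h(t)$ is within that radius. Crucially, neither bound depends on the selection rule, so \Cref{def:conv-answer} is never needed here; both are purely continuity/hemicontinuity statements in the model argument, combined with the concentration of the confidence region $C_s$ around $\bm\mu$ supplied by \Cref{lemma:ct-shrink}.

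For the first bound --- which is literally the first estimate proved inside \Cref{lemma:smoothness-conv-answer} --- I would reuse that derivation verbatim, observing that it nowhere invokes the convergent selection rule and hence remains valid in the present (possibly non-convergent) setting. In detail, \Cref{corollary:unif-cont} supplies uniform continuity of the relevant max-divergence on $\mathcal{C}\times\Delta_K$ for a compact neighborhood $\mathcal{C}\subseteq\Theta^K$ of $\bm\mu$, which produces a radius $\beta_1>0$ such that $\|\bm\mu-\bm\mu'\|_\infty\le\beta_1$ yields the first inequality simultaneously for every $\bm\omega\in\Delta_K$; the uniformity over $\bm\omega$ is exactly what uniform continuity (together with compactness of $\Delta_K$) delivers, so no separate argument per $\bm\omega$ is required.

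For the second bound, the key observation is that the assertion ``$\inf_{\bm\omega\in\bm\omega^*(\bm\mu)}\|\bm\omega-\bm\omega'\|_\infty\le\kappa$ for every $\bm\omega'\in\bm\omega^*(\bm\mu')$'' is precisely the quantitative form of upper hemicontinuity of the oracle-weight correspondence $\bm\mu\rightrightarrows\bm\omega^*(\bm\mu)$, which is available from point \emph{(iv)} of \Cref{lemma:continuity}. Concretely, since $\bm\omega^*(\bm\mu)$ is compact, the set $V_\kappa=\{\bm\omega\in\Delta_K:\inf_{\tilde{\bm\omega}\in\bm\omega^*(\bm\mu)}\|\bm\omega-\tilde{\bm\omega}\|_\infty<\kappa\}$ is open and contains $\bm\omega^*(\bm\mu)$; upper hemicontinuity at $\bm\mu$ then furnishes $\beta_2>0$ with $\bm\omega^*(\bm\mu')\subseteq V_\kappa$ whenever $\|\bm\mu-\bm\mu'\|_\infty<\beta_2$, which is the desired estimate. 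Setting $\beta=\min\{\beta_1,\beta_2\}$, applying \Cref{lemma:ct-shrink} with any $\epsilon<\beta$ to obtain a threshold, and taking $T_\kappa$ equal to that threshold closes the argument: on $\mathcal{E}(t)$ for $t\ge T_\kappa$ every $\bm\mu'\in C_s$ with $s\ge h(t)$ obeys $\|\bm\mu-\bm\mu'\|_\infty\le\epsilon<\beta$, so both inequalities hold. The only genuine subtlety --- and the step I expect to require the most care --- is this translation of the qualitative upper hemicontinuity of $\bm\omega^*$ into the quantitative $\kappa$-neighborhood inclusion; it is standard for compact-valued correspondences but must be handled with correct open/closed-ball bookkeeping so that the strict inequality defining $V_\kappa$ and the non-strict bound in the conclusion line up.
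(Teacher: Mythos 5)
Your proposal is correct and follows essentially the same route as the paper: derive a radius $\beta_\kappa$ from \Cref{lemma:continuity} and \Cref{corollary:unif-cont} so that both bounds hold whenever $\|\bm\mu-\bm\mu'\|_\infty\le\beta_\kappa$, then apply \Cref{lemma:ct-shrink} with $\epsilon=\beta_\kappa$ to conclude on the good event. The only difference is that you spell out the translation of upper hemicontinuity of $\bm\omega^*$ into the quantitative $\kappa$-neighborhood inclusion via the open set $V_\kappa$, a step the paper leaves implicit.
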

\begin{proof}
Due to \Cref{lemma:continuity} and \Cref{corollary:unif-cont}, we have that, for all $\kappa = \kappa(\bm\mu) > 0$, there exists $\beta_\kappa$ such that:
\begin{align*}
    & \|\bm\mu - \bm\mu' \|_{\infty} \le \beta_\kappa \implies \Big|\max_{x \in \Xcal_F(\bm\mu)} D(\bm\mu,\bm\omega, \neg x) - \max_{x \in \Xcal_F(\bm\mu')} D(\bm\mu',\bm\omega, \neg x) \Big| \le \kappa ,~\forall \bm\omega \in \Delta_K, \\ 
    & \|\bm\mu - \bm\mu' \|_{\infty} \le \beta_\kappa \implies \inf_{\bm\omega \in \bm\omega^*(\bm\mu)} \| \bm\omega - \bm\omega' \|_{\infty} \le \kappa,~\forall \bm\omega' \in \bm\omega^*(\bm\mu').
\end{align*}
Recall that, as a consequence of \Cref{lemma:ct-shrink}, it holds that, for all $\epsilon > 0$, there exists $T_{\epsilon}:\forall t \ge T_{\epsilon}$, then, on $\mathcal{E}(t)$, $\|\bm\mu - \bm\mu' \|_{\infty} \le \epsilon$ for all $s \ge h(t)$ and $\bm\mu' \in C_s$.

Picking $\epsilon = \beta_{\kappa}$ concluds the proof.
\end{proof}

The following lemma analyzes the behavior of the framework when the sequence of answers is not a converging one.

\begin{lemma}\label{lemma:stas-tas-subopt}
    Let $\kappa > 0$ and let $T_{\kappa} \in \mathbb{N}$ as in \Cref{lemma:smoothness-on-good-event}. Then, for $t \ge T_{\kappa}$, under $\mathcal{E}(t)$, it holds that:
    \begin{align*}
        \frac{1}{t} \max_{x \in \Xcal^{\star}(\hat{\bm\mu}(t))} & D(\hat{\bm\mu}(t), \bm{N}(t), \neg x)   \ge \frac{(t-\lceil h(t) \rceil)}{t} \min_{\bm\omega \in \textup{conv}(\bm\omega^*(\bm\mu))} \max_{x \in \Xcal^{\star}(\bm\mu)} D(\bm\mu, \bm\omega, \neg x) - \\ & \hspace{2.5cm}-\kappa(1+C_{\bm\mu})  - \frac{K(1+\sqrt{t})}{t}C_{\bm\mu},
    \end{align*}
    where $C_{\bm\mu} \ge 0$ is a problem dependent constant.
\end{lemma}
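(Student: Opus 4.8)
The plan is to follow the same skeleton as the proof of \Cref{lemma:good-event-analysis-conv-answer}, reducing the left-hand side to the cumulative tracked weights, and to deviate only at the final step, where the convergence of $\{x_s\}_s$ is no longer available. First I would use homogeneity of $D$ in its weight argument to write $\frac1t\max_{x\in\Xcal^\star(\hat{\bm\mu}(t))}D(\hat{\bm\mu}(t),\bm N(t),\neg x)=\max_{x\in\Xcal^\star(\hat{\bm\mu}(t))}D(\hat{\bm\mu}(t),\bm N(t)/t,\neg x)$ with $\bm N(t)/t\in\Delta_K$, and invoke \Cref{lemma:ct-shrink} together with the uniform continuity of $(\bm\mu,\bm\omega)\mapsto\max_{x\in\Xcal^\star(\bm\mu)}D(\bm\mu,\bm\omega,\neg x)$ (\Cref{corollary:unif-cont}) to replace $\hat{\bm\mu}(t)$ by $\bm\mu$ at the cost of an additive $\kappa$, exactly as the term $h_1(t)$ in \Cref{lemma:good-event-analysis-conv-answer}. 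Next I would pass from $\bm N(t)$ to $\sum_{s=1}^t\bm\omega(s)$ using the C-tracking guarantee $|N_k(t)-\sum_{s=1}^t\omega_k(s)|\le K(1+\sqrt t)$ (\Cref{lemma:tracking}); bounding the resulting perturbation of each inner infimum by $C_{\bm\mu}=\sup_{x\in\Xcal^\star(\bm\mu)}\inf_{\bm\lambda\in\neg x}\sum_k d(\mu_k,\lambda_k)$ yields the error $\tfrac{K(1+\sqrt t)}{t}C_{\bm\mu}$. Finally, since $D(\bm\mu,\cdot,\neg x)$ is non-decreasing and super-additive in $\bm\omega$ and $D\ge0$, I can discard the terms $s<\lceil h(t)\rceil$ and keep only $\sum_{s=h(t)}^t\bm\omega(s)$.

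The crucial and genuinely new step is the treatment of $\frac1t\max_{x\in\Xcal^\star(\bm\mu)}D(\bm\mu,\sum_{s=h(t)}^t\bm\omega(s),\neg x)$. Writing $m=t-\lceil h(t)\rceil$ and $\bar{\bm\omega}(t)=\tfrac1m\sum_{s=h(t)}^t\bm\omega(s)\in\Delta_K$, homogeneity gives $\tfrac{m}{t}\max_{x\in\Xcal^\star(\bm\mu)}D(\bm\mu,\bar{\bm\omega}(t),\neg x)$, which is the source of the factor $\tfrac{t-\lceil h(t)\rceil}{t}$. Unlike the convergent case, I cannot pin down a single $\bar x\in\Xcal_F(\bm\mu)$ and collapse the maximum onto it; instead I keep the maximum over answers and control only the \emph{allocation}. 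Under $\mathcal{E}(t)$ we have $\hat{\bm\mu}(s)\approx\bm\mu$ and $x_s\in\Xcal_s\subseteq\Bcal_\epsilon(\Xcal_F(\bm\mu))$ for $s\ge h(t)$, so by the upper hemicontinuity and compactness of the correspondence $(\bm\mu,x)\mapsto\bm\omega^*(\bm\mu,\neg x)$ (\Cref{lemma:continuity}, point \emph{(ii)}) each $\bm\omega(s)\in\bm\omega^*(\hat{\bm\mu}(s),\neg x_s)$ lies within $\kappa$ of $\bm\omega^*(\bm\mu)$; this is exactly the content of \Cref{lemma:smoothness-on-good-event}. Hence $\bar{\bm\omega}(t)$, an average of such points, lies within $\kappa$ of $\mathrm{conv}(\bm\omega^*(\bm\mu))$, and a Lipschitz bound $|D(\bm\mu,\bm\omega,\neg x)-D(\bm\mu,\bm\omega',\neg x)|\le C_{\bm\mu}\|\bm\omega-\bm\omega'\|_\infty$ (uniform in $x$, finite since $\Theta$ is bounded) transfers this to
\[
\max_{x\in\Xcal^\star(\bm\mu)}D(\bm\mu,\bar{\bm\omega}(t),\neg x)\;\ge\;\inf_{\bm\omega\in\bm\omega^*(\bm\mu)}\max_{x\in\Xcal^\star(\bm\mu)}D(\bm\mu,\bm\omega,\neg x)-\kappa\,C_{\bm\mu},
\]
the infimum effectively ranging over the convex hull of the oracle weights, i.e.\ the region into which a non-converging allocation can drift.

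Collecting the three additive errors ($\kappa$ from the $\hat{\bm\mu}\to\bm\mu$ step, $\kappa C_{\bm\mu}$ from the weight perturbation, and $\tfrac{K(1+\sqrt t)}{t}C_{\bm\mu}$ from tracking) reproduces the stated bound, with $T_\kappa$ taken as the maximum of the thresholds supplied by \Cref{lemma:ct-shrink} and \Cref{lemma:smoothness-on-good-event}. The main obstacle, and the whole point of the statement, is the last displayed inequality: because $\bm\omega\mapsto\max_{x\in\Xcal^\star(\bm\mu)}D(\bm\mu,\bm\omega,\neg x)$ is a pointwise supremum of functions that are \emph{concave} in $\bm\omega$, it need not be concave, and it can fall strictly below $D(\bm\mu)=T^*(\bm\mu)^{-1}$ in the interior of $\mathrm{conv}(\bm\omega^*(\bm\mu))$; one therefore \emph{cannot} replace the infimum-over-weights term by $T^*(\bm\mu)^{-1}$, and this irreducible gap is precisely the quantitative manifestation of the suboptimality of TaS and Sticky-TaS when $\{x_t\}_t$ fails to converge. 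The delicate point in the argument is thus verifying that $\bar{\bm\omega}(t)$ genuinely lands in a $\kappa$-neighbourhood of this convex hull (rather than near a single oracle weight), which is where the absence of a convergent selection rule enters and which replaces the role played by \Cref{def:conv-answer} in \Cref{lemma:good-event-analysis-conv-answer}.
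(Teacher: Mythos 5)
Your first two reductions (replacing $\hat{\bm\mu}(t)$ by $\bm\mu$ at cost $\kappa$, and $\bm N(t)$ by $\sum_{s}\bm\omega(s)$ at cost $\tfrac{K(1+\sqrt t)}{t}C_{\bm\mu}$) coincide with the paper's terms $h_1(t)$ and $h_2(t)$. The gap is in your final step. You average the tracked weights into $\bar{\bm\omega}(t)=\tfrac1m\sum_{s=h(t)}^t\bm\omega(s)$ and argue that, since each $\bm\omega(s)$ lies within $\kappa$ of $\bm\omega^*(\bm\mu)$, the average lies within $\kappa$ of $\mathrm{conv}(\bm\omega^*(\bm\mu))$. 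That is true, but it only yields
\[
\max_{x\in\Xcal^{\star}(\bm\mu)}D(\bm\mu,\bar{\bm\omega}(t),\neg x)\ \ge\ \inf_{\bm\omega\in\mathrm{conv}(\bm\omega^*(\bm\mu))}\ \max_{x\in\Xcal^{\star}(\bm\mu)}D(\bm\mu,\bm\omega,\neg x)-\kappa C_{\bm\mu},
\]
with the infimum over the \emph{convex hull}, whereas your displayed inequality (and the lemma) takes the infimum over $\bm\omega^*(\bm\mu)$ itself. Since $\bm\omega^*(\bm\mu)=\bigcup_{x\in\Xcal_F(\bm\mu)}\bm\omega^*(\bm\mu,\neg x)$ is in general not convex, and since, as you yourself point out, $\bm\omega\mapsto\max_x D(\bm\mu,\bm\omega,\neg x)$ is a pointwise maximum of concave functions and hence not concave, the infimum over the hull can be strictly smaller than the infimum over the set. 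Your own observation that the function can dip in the interior of $\mathrm{conv}(\bm\omega^*(\bm\mu))$ is exactly the reason your claimed inequality does not follow from what you established: the argument as written proves a strictly weaker bound than the one stated.

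The paper never forms an average. It fixes a single $x\in\Xcal^{\star}(\bm\mu)$, uses superadditivity of $\bm\omega\mapsto D(\bm\mu,\bm\omega,\neg x)$ to write $D(\bm\mu,\sum_s\bm\omega(s),\neg x)\ge\sum_{s\ge h(t)}D(\bm\mu,\bm\omega(s),\neg x)$, and then replaces each $\bm\omega(s)$ \emph{individually} by its nearest point $\bar{\bm\omega}(s)\in\bm\omega^*(\bm\mu)$ (total cost $\kappa C_{\bm\mu}$ via the second claim of \Cref{lemma:smoothness-on-good-event}), so that every term is lower bounded by $\inf_{\bm\omega\in\bm\omega^*(\bm\mu)}D(\bm\mu,\bm\omega,\neg x)$ --- an infimum over the set itself, never over its hull. (Note, incidentally, that the paper's last line, which passes from this per-$x$ bound to the stated $\inf_{\bm\omega}\max_x$ form, itself implicitly swaps a max--min into a min--max; but that is a separate issue from the one in your argument.) To repair your proof you would need to abandon the averaged weight and perform the substitution term by term as the paper does, or else restrict to instances where $\bm\omega^*(\bm\mu)$ is convex.
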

\begin{proof}
    Consider $\frac{1}{t} \max_{x \in \Xcal^\star(\bm{\hat{\mu}}(t))} D(\bm{\hat{\mu}}(t), \bm{N}(t), \neg x)$. By adding and subtracting $\frac{1}{t} \max_{x \in \Xcal^\star(\bm{\mu})} D(\bm{\mu}, \bm{N}(t), \neg x)$ we have that
    \begin{align*}
        \frac{1}{t} \max_{x \in \Xcal^\star(\bm{\hat{\mu}}(t))} D(\bm{\hat{\mu}}(t), \bm{N}(t), \neg x)  = \frac{1}{t} \max_{x \in \Xcal^\star(\bm{\mu})} D(\bm{\mu}, \bm{N}(t), \neg x) - h_1(t)
    \end{align*}
    where $h_1(t)$ is given by:
    \begin{align*}
        h_1(t) & \coloneqq \max_{x \in \Xcal^\star(\bm{\mu})} D\left(\bm{\mu}, \frac{\bm{N}(t)}{t}, \neg x\right) - \max_{x \in \Xcal^\star(\bm{\hat{\mu}}(t))} D\left(\bm{\hat{\mu}}(t), \frac{\bm{N}(t)}{t}, \neg x\right). 
    \end{align*}
    Now, using \Cref{lemma:smoothness-on-good-event} and noticing that $\bm\mu \in C_t$ on $\mathcal{E}(t)$, we have that $h_1(t)\le \kappa$, thus leading to:
    \begin{align*}
         \frac{1}{t} \max_{x \in \Xcal^\star(\bm{\hat{\mu}}(t))} D(\bm{\hat{\mu}}(t), \bm{N}(t), \neg x) \ge \frac{1}{t} \max_{x \in \Xcal^\star(\bm{\mu})} D(\bm{\mu}, \bm{N}(t), \neg x) - \kappa.
    \end{align*}
    
    We now continue by lower bounding $\frac{1}{t} \max_{x \in \Xcal^\star(\bm{\mu})} D(\bm{\mu}, \bm{N}(t), \neg x)$. Specifically, for all $x \in \Xcal^{\star}(\bm\mu)$, we have that:
    \begin{align*}
        \frac{1}{t} &\max_{x\in \Xcal^\star(\bm{\mu})} D(\bm{\mu}, \bm{N}(t), \neg x)  \ge \frac{1}{t} D(\bm{\mu}, \bm{N}(t), \neg x)) \\ 
        & = \frac{1}{t}\inf_{\bm\lambda\in\lnot x}\sum_{k\in K}N_k(t)d(\mu_k,\lambda_k)\\
        & = \frac{1}{t}\inf_{\bm\lambda\in\lnot x}\left\{\sum_{k\in K}\sum_{s\in[t]}\omega_k(s)d(\mu_k,\lambda_k)+\sum_{k\in K}\left(N_k(t)-\sum_{s\in[t]}\omega_k(s)\right)d(\mu_k,\lambda_k)\right\}\\
        &\ge \frac{1}{t}\inf_{\bm\lambda\in\lnot x}\sum_{k\in K}\sum_{s\in[t]}\omega_k(s)d(\mu_k,\lambda_k)+\frac{1}{t}\inf_{\bm\lambda\in\lnot x}\sum_{k\in K}\left(N_k(t)-\sum_{s\in[t]}\omega_k(s)\right)d(\mu_k,\lambda_k)\\
        & = \frac{1}{t} D \left( \bm{\mu}, \sum_{s=1}^t \bm{\omega}(s), \neg x \right) - h_2(t) 
    \end{align*}
    where $h_2(t)$ is given by:
    \begin{align*}
        h_2(t) & \coloneqq \frac{1}{t} \inf_{\bm{\lambda} \in \neg x} \sum_{k \in [K]} \left(\sum_{s=1}^t \omega_k(s) - N_k(t) \right) d(\mu_k, \lambda_k) \\ 
        & \le \frac{{K(1+\sqrt{t})}}{t} \sup_{x \in \Xcal^{\star}(\bm\mu)} \inf_{\bm{\lambda} \in \neg x} \sum_{k \in [K]} d(\mu_{k}, \lambda_k) \\ 
        &\le \frac{K(1+\sqrt{t})}{t} C_{\bm\mu},
    \end{align*}
    where in the last step we have used the fact that the exponential family is regular and bounded.

    At this point, focus on $\frac{1}{t} D \left( \bm{\mu}, \sum_{s=1}^t \bm{\omega}(s), \neg x \right)$ and let us analyze $\bm\omega(s)$. We know that, $\bm\omega(s) \in \bm\omega^*(\bm\mu'_s)$ for some $\bm\mu'_s \in C_s$ (indeed, $x_s$ is such that $x_s \in \Xcal_F(\bm\mu'_s)$). Now, from \Cref{lemma:smoothness-on-good-event}, we know that, on $\mathcal{E}(t)$, there exists $\{ \bar{\bm\omega}(s) \}_{s\ge h(t)}$ such that (i) $\|\bar{\bm\omega}(s) - \bm\omega(s) \|_{\infty} \le \kappa$ and (ii) $\bar{\bm\omega}(s) \in \bm\omega^*(\bm\mu)$. This is direct by taking $\bar{\bm\omega}(s) \in \argmin_{\bm\omega \in \bm\omega^*(\bm\mu)} \| \bm\omega(s) - \bm\omega \|_{\infty}$. Therefore, we obtain:
    \begin{align*}
        \frac{1}{t} D\left(\bm\mu, \sum_{s=1}^t \bm\omega(s), \neg x \right) & \ge \frac{1}{t} D\left(\bm\mu, \sum_{s=\lceil h(t) \rceil }^t \bm\omega(s), \neg x\right) \\ 
        & \ge \frac{t - \lceil h(t) \rceil}{t} D\left(\bm\mu, \frac{\sum_{s=\lceil h(t) \rceil}^t \bar{\bm\omega}(s)}{t-\lceil h(t) \rceil}, \neg x\right) + \frac{1}{t}  D\left(\bm\mu, \sum_{s=\lceil h(t) \rceil}^t \bm\omega(s) - \bar{\bm\omega}(s), \neg x\right) \\
        & \ge \frac{t - \lceil h(t) \rceil}{t} D\left(\bm\mu, \frac{\sum_{s=\lceil h(t) \rceil}^t \bar{\bm\omega}(s)}{t-\lceil h(t) \rceil}, \neg x\right) - h_3(t)
    \end{align*}
    where $h_3(t)$ is given by:
    \begin{align*}
        h_3(t) & \coloneqq \frac{1}{t} \min_{x \in \Xcal^{\star}(\bm\mu)} \inf_{\bm\lambda \in \neg x} \sum_{s=\lceil h(t) \rceil}^t \sum_{k \in [K]} (\bar{\omega}_k(s) - \omega_k(s)) d(\mu_k, \lambda_k) \\ & \le \kappa \min_{x \in \Xcal^{\star}(\bm\mu)} \inf_{\bm\lambda \in \neg x} d(\mu_k, \lambda_k) \\ & \le \kappa C_{\bm\mu},
    \end{align*}
    Therefore, for all $x \in \Xcal^{\star}(\bm\mu)$ we have that:
    \begin{align*}
        \frac{1}{t} \max_{x \in \Xcal^{\star}(\hat{\bm\mu}(t))} D(\hat{\bm\mu}(t), \bm{N}(t), \neg x) \ge \frac{t - \lceil h(t) \rceil}{t} D\left(\bm\mu, \frac{\sum_{s=\lceil h(t) \rceil}^t \bar{\bm\omega}(s)}{t-\lceil h(t) \rceil}, \neg x\right) - \kappa(1+C_{\bm\mu})  - \frac{K(1+\sqrt{t})}{t}C_{\bm\mu}.
    \end{align*}
    Hence:
    \begin{align*}
        \frac{1}{t} \max_{x \in \Xcal^{\star}(\hat{\bm\mu}(t))} D(\hat{\bm\mu}(t), \bm{N}(t), \neg x) \ge \max_{x \in \Xcal^{\star}(\bm\mu)}\frac{t - \lceil h(t) \rceil}{t} D\left(\bm\mu, \frac{\sum_{s=\lceil h(t) \rceil}^t \bar{\bm\omega}(s)}{t-\lceil h(t) \rceil}, \neg x\right) - \kappa(1+C_{\bm\mu})  - \frac{K(1+\sqrt{t})}{t}C_{\bm\mu}.
    \end{align*}
    At this point, we note that $\frac{1}{t - \lceil h(t) \rceil }\sum_{s=\lceil h(t) \rceil}^t \bar{\bm\omega}(s)$ is an average of points that belongs to $\bm\omega^{\star}(\bm\mu)$. Thus, this average belong to convex hull of $\bm\omega^{\star}(\bm\mu)$. Consequently, we obtain:
    \begin{align*}
        \frac{1}{t} \max_{x \in \Xcal^{\star}(\hat{\bm\mu}(t))} D(\hat{\bm\mu}(t), \bm{N}(t), \neg x) \ge \frac{t - \lceil h(t) \rceil}{t} \min_{\bm\omega \in \textup{conv}(\bm\omega^{\star}(\bm\mu))}\max_{x \in \Xcal^{\star}(\bm\mu)} D\left(\bm\mu, \bm\omega, \neg x\right) - \kappa(1+C_{\bm\mu})  - \frac{K(1+\sqrt{t})}{t}C_{\bm\mu}.
    \end{align*}
    thus concluding the proof.
\end{proof}

Finally, we are able to prove to prove the following result, \Cref{thm:not-conv}, on the performance of the presented framework whenever $\{x_t\}$ is not a converging sequence.

\begin{theorem}\label{thm:not-conv}
    Suppose that $\bm\mu \mapstoto \Xcal_F(\bm\mu)$ is not single-valued. Then, the presented framework $\delta$-correct, and it always holds that:
    \begin{align*}
        \limsup_{\delta \to 0} \frac{\E_{\bm\mu}[\tau_\delta]}{\log(1/\delta)} \le \max_{\bm\omega \in \textup{conv}(\bm\omega^{\star}(\bm\mu))} \min_{x \in \Xcal^{\star}(\bm\mu)} \frac{1}{D(\bm\mu, \bm\omega, \neg x)}.
    \end{align*}
\end{theorem}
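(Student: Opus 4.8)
The plan is to mirror the proof of \Cref{theo:sticky-seq} almost verbatim, since the algorithmic skeleton and the good event $\mathcal{E}(t)$ are identical; the only change is that, lacking a convergent selection rule, the lower bound on the stopping statistic is the weaker one supplied by \Cref{lemma:stas-tas-subopt} rather than the one in \Cref{lemma:good-event-analysis-conv-answer}. First, $\delta$-correctness is immediate: \Cref{lemma:correctness} establishes it for \emph{any} sampling rule, so it transfers unchanged. It therefore remains only to control the expected stopping time.

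The first substantive step is a purely algebraic reconciliation of the two forms of the rate. Writing $R \coloneqq \max_{x \in \Xcal^{\star}(\bm\mu)} \inf_{\bm\omega \in \bm\omega^*(\bm\mu)} D(\bm\mu, \bm\omega, \neg x)$, I would observe that $\max_{\bm\omega \in \bm\omega^*(\bm\mu)} D(\bm\mu, \bm\omega, \neg x)^{-1} = (\inf_{\bm\omega \in \bm\omega^*(\bm\mu)} D(\bm\mu, \bm\omega, \neg x))^{-1}$ and that $\min_x(\cdot)^{-1} = (\max_x \cdot)^{-1}$, so the claimed bound $\min_x \max_{\bm\omega} D(\bm\mu, \bm\omega, \neg x)^{-1}$ equals $1/R$. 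The target is thus to show $\limsup_{\delta\to0}\E_{\bm\mu}[\tau_\delta]/\log(1/\delta) \le 1/R$.

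Next I would feed \Cref{lemma:stas-tas-subopt} into the standard machinery. That lemma gives, for $t \ge T_\kappa$ on $\mathcal{E}(t)$ and for \emph{each fixed} $x \in \Xcal^{\star}(\bm\mu)$, a lower bound on $\frac1t \max_{x'\in\Xcal^\star(\hat{\bm\mu}(t))} D(\hat{\bm\mu}(t),\bm N(t),\neg x')$ whose leading term is $\frac{t-\lceil h(t)\rceil}{t}\inf_{\bm\omega\in\bm\omega^*(\bm\mu)}D(\bm\mu,\bm\omega,\neg x)$, with additive error terms $\kappa(1+C_{\bm\mu})$ and $\frac{K(1+\sqrt t)}{t}C_{\bm\mu}$ that do \emph{not} depend on $x$. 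Because the error is uniform in $x$, I can take the maximum over $x\in\Xcal^\star(\bm\mu)$ on both sides to replace the leading term by $\frac{t-\lceil h(t)\rceil}{t}R$. From here the remainder is identical to \Cref{theo:sticky-seq}: fix $\gamma\in(0,R/4]$, pick $\kappa(\gamma)$ small and $T_\gamma$ large enough to absorb the vanishing terms so that $\frac1t\max_{x}D(\hat{\bm\mu}(t),\bm N(t),\neg x)\ge R-\gamma$ on $\mathcal{E}(t)$; apply \Cref{lemma:tech-lemma} with $D=R$ to obtain a deterministic time $T_0(\gamma,\gamma,\delta,R)$ after which the stopping rule fires on $\mathcal{E}(t)$; and combine \Cref{lemma:control-expect} with the finiteness of $\sum_t\Prob_{\bm\mu}(\mathcal{E}(t)^c)$ (Lemma~14 of \cite{degenne2019pure}) to get $\E_{\bm\mu}[\tau_\delta]\le \lceil T_0+T_\gamma+T_{\kappa(\gamma)}\rceil + \sum_t\Prob_{\bm\mu}(\mathcal{E}(t)^c)$. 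Dividing by $\log(1/\delta)$, sending $\delta\to0$ using the form of $T_0$, and finally $\gamma\to0$ gives $\le 1/R$.

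There is no genuine obstacle once \Cref{lemma:stas-tas-subopt} and \Cref{lemma:smoothness-on-good-event} are in hand, since they already carry the non-convergent analysis. The only point demanding care is the interchange in the third step: one must verify that the max over $x\in\Xcal^\star(\bm\mu)$ can legitimately be pulled inside the leading term, which hinges precisely on the $x$-independence of the error terms in \Cref{lemma:stas-tas-subopt} — and, correspondingly, that one keeps the \emph{inf} over $\bm\omega\in\bm\omega^*(\bm\mu)$ (not a sup), so that the resulting rate $R$ is a max-over-$x$ of an inf-over-$\bm\omega$, matching the claimed reciprocal expression rather than the optimal $D(\bm\mu)$. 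Since $\bm\omega^*(\bm\mu)$ is tuned to the easiest answers in $\Xcal_F(\bm\mu)$ while $x$ ranges over all of $\Xcal^\star(\bm\mu)$, this $R$ satisfies $R\le D(\bm\mu)$, so the bound is consistent with (and generally strictly weaker than) the optimal $T^*(\bm\mu)$, as expected when convergence fails.
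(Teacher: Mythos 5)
Your proposal is correct and follows essentially the same route as the paper: the paper likewise defines $\tilde{T}^*(\bm\mu)^{-1}=\max_{x\in\Xcal^\star(\bm\mu)}\inf_{\bm\omega\in\bm\omega^*(\bm\mu)}D(\bm\mu,\bm\omega,\neg x)$ (your $R$), feeds the uniform-in-$x$ bound of \Cref{lemma:stas-tas-subopt} into the same $\gamma$/$\kappa(\gamma)$/$T_\gamma$ machinery with \Cref{lemma:tech-lemma} and \Cref{lemma:control-expect}, and concludes by letting $\delta\to0$ and then $\gamma\to0$. Your observation that the $x$-independence of the error terms is what licenses taking the max over $x$, and that one must retain the inf over $\bm\omega^*(\bm\mu)$ so the rate is $\max_x\inf_{\bm\omega}$ rather than $D(\bm\mu)$, is exactly the point on which the paper's argument turns.
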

\begin{proof}
    Let us define $\tilde{T}^*(\bm\mu)$ as follows:
    \begin{align*}
        \tilde{T}^*(\bm\mu) = \max_{\bm\omega \in \textup{conv}(\bm\omega^{\star}(\bm\mu))} \min_{x \in \Xcal^{\star}(\bm\mu)} \frac{1}{D(\bm\mu, \bm\omega, \neg x)}.
    \end{align*}

    Let $\kappa > 0$. From \Cref{lemma:stas-tas-subopt}, we have that, for $t \ge T_{\kappa}$, on $\mathcal{E}(t)$:
    \begin{align}\label{eq:tas-uniq-theo-1}
        \frac{1}{t} \max_{x \in \Xcal^\star(\bm{\hat{\mu}}(t))} D(\bm{\hat{\mu}}(t), \bm{N}(t), \neg x) \ge \frac{(t-\lceil h(t) \rceil)}{t} \tilde{T}^*(\bm\mu)^{-1} - \kappa(1+C_{\bm\mu})  - \frac{K(1+\sqrt{t})}{t}C_{\bm\mu},
    \end{align}
    where $C_{\bm{\mu}}$ is a problem-dependent constant.
    
    Let $\gamma \in \left(0, \frac{\tilde{T}^*(\bm{\mu})^{-1}}{4} \right]$, and take $\kappa = \kappa(\gamma) \le \min\left(\frac{\gamma}{4},\frac{\gamma}{4 C_{\bm\mu}}\right)$. Furthermore, consider ${T}_{\gamma}$ as the smallest $n \in \mathbb{N}$ such that $\frac{h(n)}{n} \tilde{T}^*(\bm{\mu})^{-1} \le \frac{\gamma}{4}$ and $\frac{K(1+\sqrt{n})}{n}  C_{\bm{\mu}} \le \frac{\gamma}{4}$. Then, for $t \ge T_{\gamma} + T_{\kappa(\gamma)}$, Equation~\eqref{eq:tas-uniq-theo-1} implies that
    \begin{align*}
        \frac{1}{t} \max_{x\in \Xcal^\star(\bm{\hat{\mu}}(t))} D(\bm{\hat{\mu}}(t), \bm{N}(t), \neg x) \ge \tilde{T}^*(\bm{\mu})^{-1} - \gamma.
    \end{align*}
    Applying \Cref{lemma:tech-lemma} with $\alpha = L =\gamma$ and $D = \tilde{T}^*(\bm\mu)$, we have that, for $t \ge T_0(\gamma, \gamma, \delta, \tilde{T}^*(\bm\mu)^{-1}) + T_{\gamma} + T_{\kappa(\gamma)}$:
    \begin{align*}
        \max_{x \in \Xcal^\star(\bm{\hat{\mu}}(t))} D(\bm{\hat{\mu}}(t), \bm{N}(t), \neg x) \ge \beta_{t,\delta},
    \end{align*}
    which implies that, on the good event, the algorithm stops at most at time $\lceil T_0(\gamma,\gamma,\delta, \tilde{T}^{*}(\bm\mu)^{-1})+ T_{\gamma} + T_{\kappa(\gamma)} \rceil$. 
    
    Moreover, from \Cref{lemma:control-expect}, we obtain that:
    \begin{align*}
        \mathbb{E}_{\bm{\mu}}[\tau_\delta] \le T_0(\gamma, \gamma, \delta, \tilde{T}^*(\bm\mu)^{-1}) + T_{\gamma} + T_{\kappa(\gamma)} + 1 + \sum_{s=0}^{+\infty} \mathbb{P}_{\bm{\mu}}(\mathcal{E}(s)^c).
    \end{align*}
    From Lemma 19 in \cite{garivier2016optimal}, $\sum_{s=0}^{+\infty} \mathbb{P}_{\bm{\mu}}(\mathcal{E}(s)^c)$ is finite. Thus, using the expression of $T_0(\gamma,\gamma,\delta,\tilde{T}^*(\bm\mu)^{-1})$ given by \Cref{lemma:tech-lemma}, we have that:
    \begin{align*}
        \limsup_{\delta \rightarrow0} \frac{\mathbb{E}_{\bm{\mu}}[\tau_\delta]}{\log(1 / \delta)} \le \frac{1}{\tilde{T}^*(\bm{\mu})^{-1}- 2\gamma}.
    \end{align*}
    Letting $\gamma \rightarrow 0$ concludes the proof.
\end{proof}

\subsubsection{An example of the gap between \Cref{theo:sticky-seq} and \Cref{thm:not-conv}}\label{app:not-conv-example}

In the following, we provide a simple example where we can precisely evaluate the difference between \Cref{theo:sticky-seq} and \Cref{thm:not-conv}.

Consider a problem with $K=2$, $\bm\mu=(\mu_1, \mu_2)$ with Bernoulli distributions over $\mathcal{M} = [\alpha, 1-\alpha]$, and the task is to regress any of the two arms.
Note that the task is non-trivial as it is intuitively more convenient to sample the arm with less variance.
In this case, the set of correct answers is $\mathcal{X}^{\star}(\bm\mu) = \{ (x,i): |\mu_i - x| \le \epsilon \}$. 
Consider $\alpha, \epsilon\in (0, 1/10)$ and $\bm\mu$ such that $\mu_1 = \alpha$ and $\mu_2 = 1-\alpha$. By symmetry of the KL divergence at $\alpha$ and $1-\alpha$, we have that the set of optimal weights are trivially $(1,0)$ for the answer $(\mu_1, 1)$ and $(0,1)$ for the answer $(\mu_2, 2)$. 

In this case, thus, we have that: $$T^{\star}(\bm\mu) = \frac{1}{d(\alpha, \alpha+\epsilon))}=\frac{1}{d(1-\alpha, 1-\alpha-\epsilon)}.$$ 

On the other hand, $\tilde{T}^{\star}(\bm\mu)$ actually evaluates to $2T^{\star}(\bm\mu)$ (\ie as the worst weights over the convex hull are clearly $(0.5, 0.5)$).

\section{On simple solutions}\label{app:experiments}

This section is dedicated to show why some simple solution to the infinite answer problems fails, \ie they incur in sub-optimal sample complexity.

More precisely:
\begin{itemize}
    \item In \Cref{app:alpha-neta}, we discuss why the intuitive approach of discretizing the answer space and to apply Sticky Track-and-Stop on the resulting finite answer problem is not a good option to achieve statistical efficiency.
    \item In \Cref{app:empirical-failure-stas}, instead, we provide empirical evidence for the problem that Sticky Track-and-Stop faces when applied to an infinite answer space. This complements the discussion that we presented in \Cref{sec:property}, where we argued that the current proof of optimality of Sticky Track-and-Stop cannot be applied to the infinite answer setting. Indeed, the presence of infinite answers undermine the core argument behind optimality of Sticky Track-and-Stop (\ie the fact that it can stick to a correct answer in $\Xcal_F(\bm\mu)$). 
\end{itemize}

\subsection{Why discretizing the answer space does not work}\label{app:alpha-neta}
In this section, we comment upon a simple approach that one might want to take when solving infinite answer problems on a compact, yet infinite, answer space. The simplest idea that one might have is to construct an $\alpha$-net over the answer space and to apply Sticky Track-and-Stop to the resulting finite-answer problem. This solves the issue of sticking to a single-answer that we identified and discussed in \Cref{sec:property}. Nonetheless, as we now show, this automatically implies a loss in the statistical efficiency.

\subsubsection{A simple example that explains the failure}
We highlight the limits of this approach with a simple example. Consider the case where $K=1$ and the target is the $\epsilon$-regression of $\mu_1$. In the following, we simply write $\mu$ since $K=1$. Assume the distribution to be Gaussian with variance $1/2$ and let $\mathcal{M}=\mathcal{X}=[-1,1]$ . Then, it is easy to verify that \Cref{theo:lb} leads to $T^{\star}(\mu) = \frac{1}{\epsilon^2}$, and that this lower bound is only attained at $\mathcal{X}_F(\mu)=\mu$. 

Now, denote by $\bar{\mathcal{X}}_{\alpha}$ the points in the $\alpha$-net and suppose that $\alpha < \epsilon$. Then, for any $\mu \notin \bar{\mathcal{X}}_{\alpha}$, Sticky Track-and-Stop applied on the discretized answer space cannot select the statistically convenient answer $\mu \in \mathcal{X}_F(\mu)$ but can only select answers within $\bar{\mathcal{X}}_{\alpha}$. This implies that the algorithm can only attain an asymptotic upper bound of the form of $\frac{1}{(\epsilon - \alpha)^2}$, therefore failing to achieve optimality. 

\subsubsection{Abstracting away from the example}
We observe that the previous scenario is only a very simple example and that discretizing "blindly" with an accuracy $\alpha$ can lead to more severe issues. 

More generally, to apply an $\alpha$-net and choose $\alpha$ so as to control the statistical error introduced by the discretization, one would need a Lipschitz continuity bound on the function $\frac{1}{D(\mu, \neg x)}$, and then take the minimum of such bound over all the possible $\bm\mu \in \mathcal{M}$. This is highly non-trivial: even in $D(\bm\mu, \neg x)$ (for which we can only prove a uniformly continuous result) the dependence in $\neg x$ enters through the (possibly non-convex) constraints of the underlying optimization problem. 

Finally, any discretization must ensure that, for every $\bm\mu \in \mathcal{M}$, there exists some $x \in \bar{\mathcal{X}}_{\alpha} \cap \mathcal{X}^{\star}(\bm\mu)$ such that $\bm\mu \notin \cl(\neg x)$. If this condition fails, \Cref{ass:identify} no longer holds in the resulting finite-answer problem, and the upper bounds for the discretized version would have infinite sample complexity.

\subsection{Empirical Failure of Sticky Track-and-Stop}\label{app:empirical-failure-stas}

\begin{figure*}[t]
\centering
\begin{tikzpicture}[x=0.65pt,y=0.75pt,xscale=1]

  \newcommand{\cross}[4]{%
    \draw[#4, line width=1.5pt] (#1-#3,#2-#3) -- (#1+#3,#2+#3);
    \draw[#4, line width=1.5pt] (#1-#3,#2+#3) -- (#1+#3,#2-#3);
  }

  \newcommand{\plus}[4]{%
  \draw[#4, line width=1.5pt] (#1,#2-#3) -- (#1,#2+#3);
  \draw[#4, line width=1.5pt] (#1-#3,#2) -- (#1+#3,#2);
  }

  \def\panelsep{240}

  \begin{scope}[shift={(0,0)}]
    \draw[step=10, very thin, gray!40] (-100,-100) grid (100,100);
    \draw[thick] (-100,-100) rectangle (100,100);
    \draw[->] (-100,0)--(100,0) node[right] {$x$};
    \draw[->] (0,-100)--(0,100) node[above] {$y$};
    \foreach \x in {-75,0,75} {
      \node[below] at (\x,-104) {\small \x};
    }
    \foreach \y in {-75,0,75} {
      \draw (-2,\y) -- (2,\y);
      \node[left]  at (-104,\y) {\small \y};
    }

    \def\cLx{3}\def\cLy{-72}\def\muLx{30}\def\muLy{18}
    \filldraw[fill=typ_blue, draw=black] (\cLx-\muLx,\cLy-\muLy) rectangle (\cLx+\muLx,\cLy+\muLy);
    \def\cUx{-4}\def\cUy{77}\def\muUx{26}\def\muUy{20}
    \filldraw[fill=typ_blue, draw=black] (\cUx-\muUx,\cUy-\muUy) rectangle (\cUx+\muUx,\cUy+\muUy);

    \plus{\cLx}{\cLy}{5}{typ_fuchsia,line cap=round}
    \plus{\cUx}{\cUy}{5}{typ_fuchsia,line cap=round}
    \cross{0}{-75}{5}{typ_yellow!85!black,line cap=round}
    \cross{0}{75}{5}{typ_yellow!85!black,line cap=round}

    \fill (33,-54) circle[radius=2.6]; %
  \end{scope}

  \begin{scope}[shift={(\panelsep,0)}]
    \draw[step=10, very thin, gray!40] (-100,-100) grid (100,100);
    \draw[thick] (-100,-100) rectangle (100,100);
    \draw[->] (-100,0)--(100,0) node[right] {$x$};
    \draw[->] (0,-100)--(0,100) node[above] {$y$};
    \foreach \x in {-75,0,75} {
      \node[below] at (\x,-104) {\small \x};
    }
    \foreach \y in {-75,0,75} {
      \draw (-2,\y) -- (2,\y);
      \node[left]  at (-104,\y) {\small \y};
    }

    \def\cLx{2}\def\cLy{-73.5}\def\muLx{22}\def\muLy{13}
    \filldraw[fill=typ_blue, draw=black] (\cLx-\muLx,\cLy-\muLy) rectangle (\cLx+\muLx,\cLy+\muLy);
    \def\cUx{3}\def\cUy{76}\def\muUx{24}\def\muUy{16}
    \filldraw[fill=typ_blue, draw=black] (\cUx-\muUx,\cUy-\muUy) rectangle (\cUx+\muUx,\cUy+\muUy);

    \plus{\cLx}{\cLy}{5}{typ_fuchsia,line cap=round}
    \plus{\cUx}{\cUy}{5}{typ_fuchsia,line cap=round}
    \cross{0}{-75}{5}{typ_yellow!85!black,line cap=round}
    \cross{0}{75}{5}{typ_yellow!85!black,line cap=round}

    \fill (27,92) circle[radius=2.6]; %
  \end{scope}

  \begin{scope}[shift={(2*\panelsep,0)}]
    \draw[step=10, very thin, gray!40] (-100,-100) grid (100,100);
    \draw[thick] (-100,-100) rectangle (100,100);
    \draw[->] (-100,0)--(100,0) node[right] {$x$};
    \draw[->] (0,-100)--(0,100) node[above] {$y$};
    \foreach \x in {-75,0,75} {
      \node[below] at (\x,-104) {\small \x};
    }
    \foreach \y in {-75,0,75} {
      \draw (-2,\y) -- (2,\y);
      \node[left]  at (-104,\y) {\small \y};
    }

    \def\cLx{1}\def\cLy{-74.5}\def\muLx{16}\def\muLy{10}
    \filldraw[fill=typ_blue, draw=black] (\cLx-\muLx,\cLy-\muLy) rectangle (\cLx+\muLx,\cLy+\muLy);
    \def\cUx{0.5}\def\cUy{75.5}\def\muUx{12}\def\muUy{9}
    \filldraw[fill=typ_blue, draw=black] (\cUx-\muUx,\cUy-\muUy) rectangle (\cUx+\muUx,\cUy+\muUy);

    \plus{\cLx}{\cLy}{5}{typ_fuchsia,line cap=round}
    \plus{\cUx}{\cUy}{5}{typ_fuchsia,line cap=round}
    \cross{0}{-75}{5}{typ_yellow!85!black,line cap=round}
    \cross{0}{75}{5}{typ_yellow!85!black,line cap=round}

    \fill (17,-64.5) circle[radius=2.6]; %

  \end{scope}

\def\yline{-140} %

\draw[thick,->] (-100,\yline) -- (2*\panelsep + 100,\yline);

\foreach \i/\lab in {0/$t_1$,1/$t_2$,2/$t_3$} {
  \draw (\i*\panelsep,\yline-6) -- (\i*\panelsep,\yline+6); %
  \node[below] at (\i*\panelsep,\yline-10) {\large \lab};   %
}

\end{tikzpicture}
\caption{In \textbf{\textcolor{typ_blue}{blue}} we are visualizing how the set $\mathcal{X}_t$ is (approximately) behaving. The plus signs in \textbf{\textcolor{typ_fuchsia}{fuchsia}} represents the empirical estimates of the first two arms and the last two arms that are generating $\mathcal{X}_t$. The \textbf{\textcolor{myellow!90!}{yellow}} crosses represents the two unique answers in $\mathcal{X}_F(\bm\mu)$. Finally, the \textbf{black} dot is the answer $x_t$ selected by Sticky Track-and-Stop as it first maximizes the x-axis and then maximizes the y-axis. As a result, the algorithm oscillates between picking answers whose oracle weights are different.} 
\label{fig:new}
\end{figure*}

In the following, we present a simple example which highlights the core issue of Sticky Track-and-Stop in an infinite answer setting.

\subsubsection{A simple example that explains the failure}
Consider a bandit problem with $K=4$  arms and Gaussian rewards with unitary variance. Let $\mathcal{M} = [-100, 100]^4$ and consider the problem of regressing up to an $\epsilon$-accuracy (e.g., $\epsilon=0.1$) either $(\mu_1, \mu_2)$ or $(\mu_3, \mu_4)$. Then, $\mathcal{X} = [-100, 100]^2$ and $$\mathcal{X}^{\star}(\mu) = \{ (x,y) \in \mathcal{X}: (|\mu_1 - x| \le \epsilon \land |\mu_2-y|\le \epsilon) \lor (|\mu_3 - x| \le \epsilon \land |\mu_4 - y| \le \epsilon ) \}.$$ 

Here, it is easy to verify that, for all $\bm\lambda$ such that $\mu_2 \ll \mu_4$, $\mathcal{X}_F(\bm\lambda)$ directly evaluates to $\{(\lambda_1, \lambda_2)\} \cup \{(\lambda_3, \lambda_4)\}$ with corresponding oracle weights $(0.5, 0.5, 0, 0)$ and $(0, 0, 0.5, 0.5)$. 

Consider the instance $\bm\mu = (0, -75, 0, 75)$ and the total order that first minimizes/maximizes the x-axis. Then, answers in $\mathcal{X}_t$ have approximately the following shape $[\hat{\mu}_1 \pm \textup{CI}_1, \hat{\mu}_2 \pm \textup{CI}_2]$ and $[\hat{\mu}_3 \pm \textup{CI}_3, \hat{\mu}_4 \pm \textup{CI}_4]$. Now, we note that:
\begin{itemize}
    \item These two sets are well-separated by assumption on the y-axis but almost overlapping on the x-axis. 
    \item Answers of the first set are related to weights that samples the first two arms, while answers in the second set leads to sampling arms 3 and 4.
\end{itemize}

As a consequence,  since Sticky-TaS is using the total order that first minimizes over the x-axis, it will switch between picking answers in $x_t$ from the two sets, and the corresponding expected pull proportions for small $\delta$ lies on the convex hull of $\bm\omega^{\star}(\bm\mu)$. We have provided a visualization of this behavior in \Cref{fig:new}.

\subsubsection{Numerical Simulations}
\begin{figure*}[t]
\centering
  \includegraphics[width=0.49\textwidth]{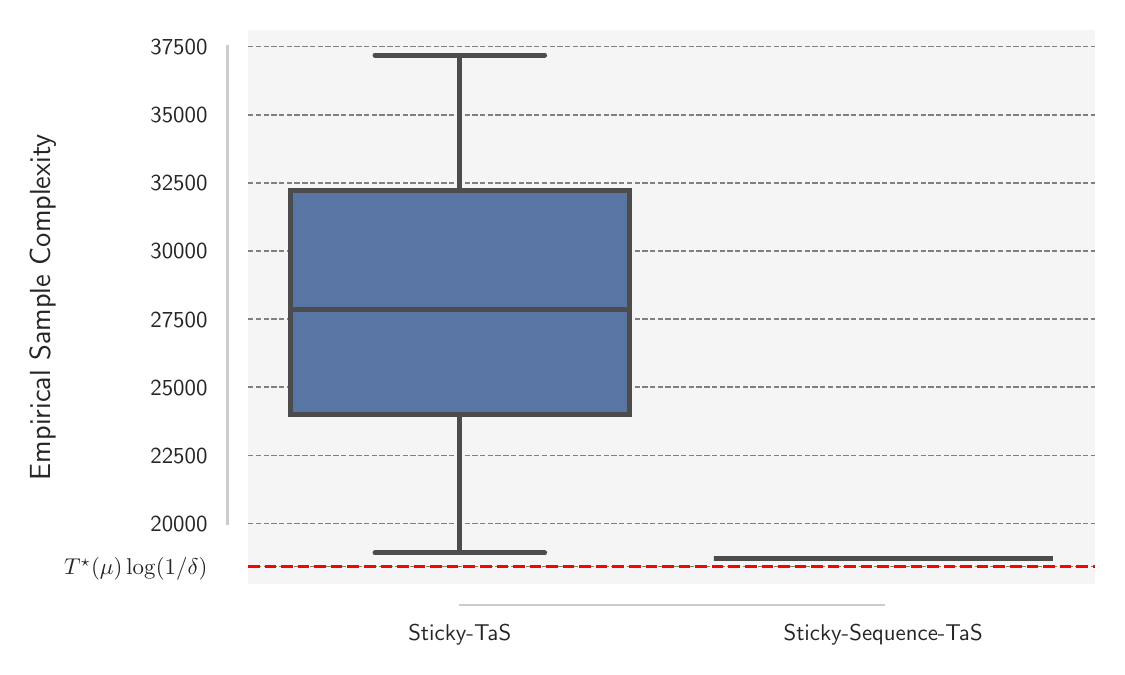}
  \includegraphics[width=0.49\textwidth]{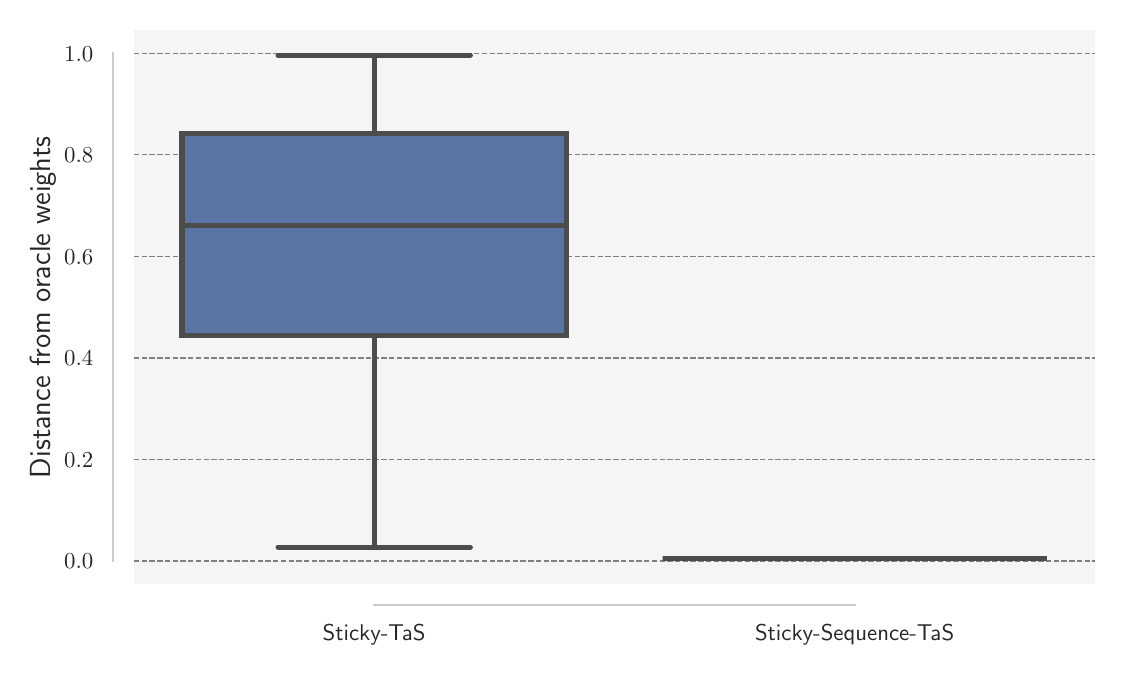}
  \caption{Empirical sample complexity of Sticky Track-and-Stop and Sticky Sequence Track-and-Stop on the presented regression task (\emph{left}) and empirical $l_1$ distance of the empirical pull proportions at stopping from the closest $\bm\omega \in \bm\omega^{\star}(\bm\mu)$; more formally $\inf_{\bm\omega \in \bm\omega^{\star}(\bm\mu)} \| \bm{N}(\tau_\delta)/\tau_\delta - \bm\omega \|_1$ (\emph{right}).}
  \label{fig:exp}
\end{figure*}

A part from the intuition that we provided above, we have also verified this issue of Sticky Track-and-Stop numerically. We have run the experiments on $8$ CPUs Intel(R) Core(TM) i5-8250U CPU @ 1.60GHz and $8$ GB of RAM. In the simulation, we used $\delta=10^{-80}$. We repeated the experiment $100$ times and we compared the results with the ones of Sticky Sequence Track-and-Stop. Since $\mathcal{X}_F(\bm\mu)$ is finite, we implemented the version that picks the next answer as $x_{t} \in \argmin_{x \in \mathcal{X}_t} \|x - x_{t-1} \|_{\infty}$. 

The results are presented in \Cref{fig:exp}. As one can see from \Cref{fig:exp} (\emph{left}), the empirical sample complexity of Sticky Track-and-Stop is way higher than the one of Sticky Sequence Track-and-Stop, which, instead, is approaching the lower bound at $T^{\star}(\bm\mu) \log(1/\delta)$ (the red line in the plot). The issue, as we anticipated above, is that Sticky Track-and-Stop ends up in playing in the convex hull of $\bm\omega^{\star}(\bm\mu)$. To verify this, in \Cref{fig:exp} (\emph{right}), we report the $l_1$ distance of the empirical pull proportions at the stopping time $\tau_\delta$ from the closest $\bm\omega \in \bm\omega^{\star}(\bm\mu)$; more formally $\inf_{\bm\omega \in \bm\omega^{\star}(\bm\mu)} \| \bm{N}(\tau_\delta)/\tau_\delta - \bm\omega \|_1$. As one can see, Sticky Sequence Track-and-Stop ends up playing in expectation with proportions that are very close to  $(0.5, 0.5, 0, 0)$ or $(0, 0, 0.5, 0.5)$. Sticky Track-and-Stop, on the other hand, will select answers in an oscillating fashion that leads to sub-optimal behavior.

\subsubsection{Concluding Remarks}
We conclude with two comments:
\begin{itemize}
    \item The issue on the instance $\bm\mu$ that we considered might be solved by considering the total order that first minimizes/maximizes over $y$. However, one could create an instance where $\mu_2 = \mu_4$ while $\mu_1$ and $\mu_3$ are well-separated. 
    \item We presented the problem in the simple setting of Gaussian distributions, for which an algorithm that samples arms 1 and 2 in equal proportion would suffice. Nonetheless, in the Bernoulli case no such fixed-proportion oracle exists and an adaptive algorithm becomes necessary (as sampling is more convenient where the variance is lower). Although the extension incurs additional mathematical complexity, our example can be generalized to cover this Bernoulli scenario. Specifically, we can pick $\mu_1$ and $\mu_3$ equal to $\alpha$, for some small $\alpha$, and $\mu_3 = 0.5 + \alpha$ and $\mu_4 = 0.5 - \alpha$. Let $\epsilon \ll \alpha$. Here, due to the symmetry of the KL divergence around $0.5$, we have that regressing arm 1 and 2 is equivalent to regressing 3 and 4. Moreover, answers in $\mathcal{X}_t$ will still have the property of splitting into two distinct regions over the y-axis and the total order used by Sticky Track-and-Stop will lead to the same issue.
\end{itemize}

\section{Helper Lemmas}

\subsection{Tracking}
The following lemma is a standard result related to the C-Tracking sampling rule. Tighter constant dependencies can be obtained using more fine-grained analysis \cite{degenne2020structure}. 

\begin{lemma}[Lemma 7 in \cite{garivier2016optimal}]\label{lemma:tracking}
    For all $k \in [K]$ and all $t \ge 1$ it holds that $N_k(t) \ge \sqrt{t+K^2} - 2K$ and $\max_{k \in [K]}|N_k(t) - \sum_{s=1}^t \omega_k(s) |\le K(1+\sqrt{t})$.
\end{lemma}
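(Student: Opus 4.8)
The plan is to reproduce the standard C-Tracking argument of \cite{garivier2016optimal}, applied to the projected weights $\tilde{\bm\omega}(s)$ that the sampling rule actually tracks (the $\ell_\infty$-projection of $\bm\omega(s)$ onto $\Delta_K^{\epsilon_s}$ with floor $\epsilon_s=(4(s+K^2))^{-1/2}$; cf.\ \Cref{alg:TAS}). The key observation is that the stated bound is expressed in terms of the \emph{unprojected} weights $\omega_k(s)$, so I would split the deviation as $N_k(t)-\sum_{s=1}^t\omega_k(s)=\big(N_k(t)-\sum_{s=1}^t\tilde\omega_k(s)\big)+\sum_{s=1}^t\big(\tilde\omega_k(s)-\omega_k(s)\big)$, bound a \emph{cumulative-tracking error} and a \emph{projection error} separately, and recombine by the triangle inequality. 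Both stated inequalities then follow from these two pieces together with the projection floor.

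First I would establish the cumulative-tracking bound, which is the main obstacle: for every $k$ and $t$, the deficit $\phi_k(t):=\sum_{s=1}^t\tilde\omega_k(s)-N_k(t)$ stays in a $t$-independent, $O(K)$ band, say $\phi_k(t)\in[-(K-1),1]$ (the constants of \cite{garivier2016optimal}), so that $|N_k(t)-\sum_{s=1}^t\tilde\omega_k(s)|\le K-1$. The structural fact driving this is that $\tilde{\bm\omega}(s)\in\Delta_K$ gives $\sum_k\sum_{s\le t}\tilde\omega_k(s)=t=\sum_kN_k(t)$, hence $\sum_k\phi_k(t)=0$: the deficits always average to zero. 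The greedy rule $A_t\in\argmax_k\phi_k(t)$ increments the count of the arm with the largest deficit, so no single deficit can drift far from the mean. I would carry this out by induction on $t$: incrementing the top arm lowers the maximal deficit by at most $1$, and since $\tilde\omega_k(s)\le1$ no deficit rises by more than $1$ per step, which preserves the band. This is the delicate step; everything else is estimates.

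Next I would control the projection error coordinate-wise. Because $\tilde{\bm\omega}(s)$ is the $\ell_\infty$-projection of $\bm\omega(s)$ onto $\Delta_K\cap[\epsilon_s,1]^K$, at most $K-1$ coordinates are lifted to the floor $\epsilon_s$ and the mass redistributed, so $\|\tilde{\bm\omega}(s)-\bm\omega(s)\|_\infty\le(K-1)\epsilon_s$. Summing and bounding the decreasing summand by an integral, $\sum_{s=1}^t(K-1)\epsilon_s=(K-1)\sum_{s=1}^t\tfrac{1}{2\sqrt{s+K^2}}\le(K-1)\sqrt t+O(1)$, using $\int_1^t\tfrac{du}{2\sqrt{u+K^2}}=\sqrt{t+K^2}-\sqrt{1+K^2}\le\sqrt t$. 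Combining with the cumulative-tracking bound of the previous paragraph gives $|N_k(t)-\sum_{s=1}^t\omega_k(s)|\le(K-1)+(K-1)\sqrt t+O(1)\le K(1+\sqrt t)$ after collecting the constants, which is the second claim.

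Finally, for the forced-exploration lower bound I would use only the upper side $\phi_k(t)\le1$ of the tracking band together with the floor. This yields $N_k(t)\ge\sum_{s=1}^t\tilde\omega_k(s)-1\ge\sum_{s=1}^t\epsilon_s-1$, and since $s\mapsto\epsilon_s$ is decreasing, $\sum_{s=1}^t\epsilon_s\ge\int_1^{t+1}\tfrac{du}{2\sqrt{u+K^2}}=\sqrt{t+1+K^2}-\sqrt{1+K^2}\ge\sqrt{t+K^2}-(K+1)$, where I use $\sqrt{1+K^2}\le K+1$ and $\sqrt{t+1+K^2}\ge\sqrt{t+K^2}$. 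Hence $N_k(t)\ge\sqrt{t+K^2}-(K+2)\ge\sqrt{t+K^2}-2K$ for $K\ge2$ (the case $K=1$ being immediate), establishing the first claim. As noted, the only genuinely nontrivial ingredient is the inductive cumulative-tracking band in the second paragraph; the remaining steps are integral comparisons and the triangle inequality.
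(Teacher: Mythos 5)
You should first note that the paper does not actually prove this lemma: it imports it verbatim as Lemma~7 of \cite{garivier2016optimal}, so your reconstruction is measured against the standard C-Tracking argument, whose architecture you reproduce correctly (deficit band for the projected weights, plus cumulative projection error, plus integral comparisons, recombined by the triangle inequality). The genuine gap is at the step you yourself flag as the crux: the claimed invariant $\phi_k(t)\in[-(K-1),1]$, with $\phi_k(t)=\sum_{s\le t}\tilde\omega_k(s)-N_k(t)$, has its two sides swapped, and the induction you sketch does not close. ``No deficit rises by more than $1$ per step'' does not preserve an upper bound of $1$: a \emph{non-sampled} arm with deficit $0.9$ and current weight $0.8$ moves to $1.7$. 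Concretely, for $K=4$, from deficits $(0.9,\,0.9,\,-0.9,\,-0.9)$ and weight vector $(0.2,\,0.8,\,0,\,0)$ the greedy rule pulls arm $2$ and leaves arm $1$ at deficit $1.1>1$. What is true --- and what the greedy rule actually gives --- is the opposite band $\phi_k(t)\in[-1,\,K-1]$: the pulled arm is the argmax of deficits summing to zero, hence has deficit $\ge 0$ before the pull and $\ge -1$ after, while unpulled deficits only increase, so $\phi_k(t)\ge -1$ is the inductive invariant; the upper side $\phi_k(t)\le K-1$ then follows from the zero-sum identity $\sum_k \phi_k(t)=0$ --- precisely the structural fact you identified, but deployed on the wrong side.

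The damage is local. Your second claim only uses $|\phi_k(t)|\le K-1$, which holds under either orientation, so that derivation stands (and your integral bound can be sharpened to $\sum_{s=1}^t \epsilon_s \le \sqrt{t+K^2}-K\le \sqrt{t}$, removing the $O(1)$ and giving $(K-1)(1+\sqrt t)\le K(1+\sqrt t)$ cleanly). But your first claim leans on the false side $\phi_k(t)\le 1$ via $N_k(t)\ge\sum_{s\le t}\tilde\omega_k(s)-1$; replacing it with the correct side gives
\begin{align*}
N_k(t)\;\ge\;\sum_{s=1}^t\tilde\omega_k(s)-(K-1)\;\ge\;\sum_{s=1}^t\epsilon_s-(K-1)\;\ge\;\Bigl(\sqrt{t+1+K^2}-\sqrt{1+K^2}\Bigr)-(K-1)\;\ge\;\sqrt{t+K^2}-2K,
\end{align*}
which reproduces the stated constant exactly. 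Indeed, your orientation would yield the strictly stronger $\sqrt{t+K^2}-(K+2)$ --- better than the constant in \cite{garivier2016optimal} itself, which is a warning sign that the band was inverted. (Your per-step projection bound $\|\tilde\omega(s)-\omega(s)\|_\infty\le (K-1)\epsilon_s$ is fine at the order needed.)
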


\subsection{Stopping time}
Similarly, the following lemma is useful in controlling the stopping time of the proposed algorithms. This result generalizes Lemma 1 in \cite{menard2019gradient}.

\begin{lemma}\label{lemma:tech-lemma}
    Consider $D \ge 0$.
    Consider $\alpha > 0$ and $L \in \mathbb{R}$ such that $D - \alpha - L > 0$. There exists $C_{\alpha} > 0$ such that, for:
    \begin{align*}
        T \ge \max \left\{  C_{\alpha}, \frac{\log\left( 1/\delta \right) + K \log\left( 4 \log(1 / \delta) + 1 \right) }{D - \alpha - L} \right\} \coloneqq T_0(\alpha,L,\delta, D)
    \end{align*}
    it holds that $D-L \ge \frac{\beta_{T,\delta}}{T}$.
\end{lemma}
\begin{proof}
    Let $C_\alpha$ be such that, for $T \ge C_{\alpha}$ it holds that $6K \log(\log(T)+3) + K\tilde{C} \le \alpha T$.
Then, for $T \ge T_0(\alpha,L,\delta,D)$, we have that:
\begin{align*}
    \frac{\beta_{T,\delta}}{T} & = \frac{\log\left( \frac{1}{\delta} \right) + K \log \left( 4 \log\left( \frac{1}{\delta} \right) + 1 \right) + 6K \log(\log(T)+3) + K\tilde{C}}{T} \\ & \le \frac{\log\left( \frac{1}{\delta} \right) + K \log \left( 4 \log\left( \frac{1}{\delta} \right) + 1 \right)}{T} + \alpha \\ & \le D - L.
\end{align*}
where (i) in the first step we have used the definition of $\beta_{t,\delta}$, i.e., Equation~\eqref{eq:beta-thr}, (ii) in the second one the definition of $C_\alpha$, and (iii) in the third one the definition of $T_0(\alpha,L,\delta,D)$.
\end{proof}

\section{Mathematical Background}

This section contains mathematical background that can be helpful throughout the document. Specifically, \Cref{app:set-theory} shows that is possible to have an exact covering of a compact set. \Cref{sec:exponential_fam} contains useful information on canonical exponential families. \Cref{app:inf} provides simple results on the infimum of optimization problems. Finally, \Cref{app:corresp} presentes auxiliary results on correspondences and set-valued analysis.

\subsection{Set Theory}\label{app:set-theory}

\begin{lemma}[Exact covering of a compact set]\label{lemma:covering}
    Let $\mathcal{X} \subset \mathbb{R}^d$ be a compact set. For all $\rho > 0$, there exists $n_{\rho} \in \mathbb{N}$ finite and $\{ \mathcal{X}_i \}_{i=1}^{n_{\rho}}$ such that $\mathcal{X}_i \subset \mathbb{R}^d$ is compact, $\mathcal{X} = \bigcup_{i=1}^{n_{\rho}} \mathcal{X}_i$, and for all $i\in [n_\rho]$, $\mathcal{X}_i \subseteq {\Bcal}_{\rho}(x_i)$ for some $x_i \in \mathcal{X}$. 
\end{lemma}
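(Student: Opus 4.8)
Let $\mathcal{X} \subset \mathbb{R}^d$ be compact. For all $\rho > 0$, there exists a finite $n_\rho$ and compact sets $\{\mathcal{X}_i\}_{i=1}^{n_\rho}$ with $\mathcal{X} = \bigcup_i \mathcal{X}_i$ and each $\mathcal{X}_i \subseteq \mathcal{B}_\rho(x_i)$ for some $x_i \in \mathcal{X}$.

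Let me sketch a proof.
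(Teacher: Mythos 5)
There is no proof here: after restating the lemma you write ``Let me sketch a proof'' and stop, so every step of the argument is missing. To be concrete about what is required, the paper's own proof runs as follows: cover $\mathcal{X}$ by the \emph{open} balls $\{y : \|y - x\|_\infty < \rho\}$ centered at every $x \in \mathcal{X}$; since $\mathcal{X}$ is compact, extract a finite subcover indexed by centers $x_1,\dots,x_{n_\rho} \in \mathcal{X}$; pass to the closed balls $\mathcal{B}_\rho(x_i)$ and set $\mathcal{X}_i = \mathcal{B}_\rho(x_i) \cap \mathcal{X}$. Each $\mathcal{X}_i$ is compact as the intersection of two compact subsets of $\mathbb{R}^d$ (equivalently, a closed subset of the compact set $\mathcal{X}$), the union of the $\mathcal{X}_i$ recovers $\mathcal{X}$ because the finite subcover already covered it, and $\mathcal{X}_i \subseteq \mathcal{B}_\rho(x_i)$ with $x_i \in \mathcal{X}$ by construction. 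The two points your submission would need to supply, and currently does not, are (i) the passage from the tautological open cover to a finite subcover via compactness, and (ii) the verification that intersecting with $\mathcal{X}$ both preserves compactness and yields an \emph{exact} cover (equality, not just containment), which is the content of the identity $\mathcal{X} = \bigcup_i \bigl( \mathcal{B}_\rho(x_i) \cap \mathcal{X} \bigr)$.
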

\begin{proof}
    Let $\bar{\Bcal}_{\rho}(x) = \{ y \in \mathbb{R}^d: || x - y ||_{\infty} < \rho \}$ be an open ball of radius $\rho$ centered in $\rho$. Then, for all $\rho > 0$, it holds that $\mathcal{X} \subset \bigcup_{x \in \mathcal{X}} \bar{\Bcal}_{\rho}(x_i)$.
    Furthermore, since $\mathcal{X}$ is compact, every open cover of a compact set admits a finite subcover, that is, there exists  $n_{\rho} \in \mathbb{N}$ finite and a collection of points $\{ x_i \}_{i=1}^{n_{\rho}}$ such that $
        \mathcal{X} \subset \bigcup_{i=1}^{n_{\rho}} \bar{\Bcal}_{\rho}(x_i)$, where each $x_i \in \mathcal{X}$ by construction. Thus, by taking the closure of each ball, we obtain, 
        $\mathcal{X} \subset \bigcup_{i=1}^{n_{\rho}} {\Bcal}_{\rho}(x_i)$. Finally, we have that:
    \begin{align*}
        \mathcal{X} = \bigcup_{i=1}^{n_{\rho}} \left( {\Bcal}_{\rho}(x_i)  \cap \mathcal{X} \right) \coloneqq \bigcup_{i=1}^{n_{\rho}} \mathcal{X}_i.
    \end{align*}
    To conclude the proof, we note that the intersections of compact euclidean subsets is compact. Hence, $\mathcal{X}_i =  {\Bcal}_{\rho}(x_i)  \cap \mathcal{X}$ is compact.
\end{proof}

\subsection{Canonical Exponential Family}\label{sec:exponential_fam}

In a canonical and one-parameter exponential family, distributions are indexed according to a parameter $\phi \in \Phi$, and each distribution $\nu_{\phi}$ is absolutely continuous with respect to a reference measure $\rho$ on $\mathbb{R}$ such that:
\begin{align*}
    \frac{\text{d} \nu_{\eta}}{\text{d}\rho}(x) = \exp(x\eta - b(\eta)),
\end{align*}
where $b: \Phi \rightarrow \mathbb{R}$ is a twice differentiable convex function. Each distribution $\nu_{\phi}$ can be uniquely identified with its mean $\mu$, which is given by $\dot{b}(\eta)$. Given an interval of open means (i.e., the family is regular), $b$ is strictly convex on that interval, and the distribution is non-degenerate, meaning that its variance is strictly positive. The KL distribution between two distributions $\nu_{\eta}, \nu_{\eta'}$ with means $\mu, \mu'$, is given by:
\begin{align*}
    \text{KL}(\nu_{\eta}, \nu_{\eta'}) = d(\mu, \mu') = b(\eta') - b(\eta) - \dot{b}(\eta)(\eta' - \eta).
\end{align*}

Now, consider two bandits $\bm{\mu}$ and $\bm{\lambda}$. After $t$ rounds it holds that 
\begin{align*}
    \ln \frac{\text{d}\mathbb{P}_{\bm{\mu}}}{\text{d}\mathbb{P}_{\bm{\lambda}}} & = \ln \frac{\text{d}\mathbb{P}_{\hat{\bm{\mu}}(t)}}{\text{d}\mathbb{P}_{\bm{\lambda}}} - \ln \frac{\text{d}\mathbb{P}_{\hat{\bm{\mu}}(t)}}{\text{d}\mathbb{P}_{\bm{\mu}}} \\ & = \sum_{k \in [K]} N_k(t) \left(d(\hat{\mu}_k(t),\lambda_k) - d(\hat{\mu}_k(t), \mu_k)\right) \\ & = \sum_{k \in [K]} N_k(t) \left(d(\mu_k, \lambda_k) + (\tilde{\lambda}_k - \tilde{\mu}_k)(\mu_k-\hat{\mu}_k(t)) \right),
\end{align*}
where $\tilde{\mu}_k=\dot b^{-1}(\mu_k)$ and $\tilde{\lambda}_k=\dot b^{-1}(\lambda_k)$ represents the natural parameter of the distributions with mean $\lambda_k$ and $\mu_k$ respectively. We also recall that $\sum_{k \in [K]} N_k(t)(\tilde{\lambda}_k - \tilde{\mu}_k)(\mu_k-\hat{\mu}_k(t)))$ is a martingale. We also recall that in the third equality above, we used the following property of canonical exponential family.

\begin{lemma}[KL difference in canonical exponential families]\label{lemma:kl-diff}
    Consider three distributions in a one-dimensional exponential family with means $a,b,c$. Then, it holds that:
    \begin{align*}
        d(a,b) = d(a,c) + d(c,b) + (\tilde{b} - \tilde{c})(c-a).
    \end{align*}
\end{lemma}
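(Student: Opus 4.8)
The plan is to reduce the identity to pure algebra by substituting the closed-form expression for the KL divergence in a canonical exponential family. Recall from \Cref{sec:exponential_fam} that if $A$ denotes the convex, twice-differentiable log-partition function, so that a mean $\mu$ corresponds to the natural parameter $\tilde\mu$ via $\mu=\dot A(\tilde\mu)$, then for any two means $p,q$ one has
\begin{align*}
    d(p,q) = A(\tilde q) - A(\tilde p) - \dot A(\tilde p)(\tilde q - \tilde p) = A(\tilde q) - A(\tilde p) - p\,(\tilde q - \tilde p),
\end{align*}
where the last equality uses $\dot A(\tilde p)=p$. (In the excerpt the log-partition function is written $b$; I rename it $A$ here only to avoid a clash with the mean $b$ appearing in the statement of the lemma.)

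First I would expand each of the three divergences with this formula, writing
\begin{align*}
    d(a,b) &= A(\tilde b) - A(\tilde a) - a(\tilde b - \tilde a),\\
    d(a,c) &= A(\tilde c) - A(\tilde a) - a(\tilde c - \tilde a),\\
    d(c,b) &= A(\tilde b) - A(\tilde c) - c(\tilde b - \tilde c).
\end{align*}
Then I would add the last two lines: the $A(\tilde c)$ terms telescope, giving $d(a,c)+d(c,b)=A(\tilde b)-A(\tilde a)-a(\tilde c-\tilde a)-c(\tilde b-\tilde c)$. Subtracting this from the expression for $d(a,b)$ cancels the $A(\tilde b)$ and $A(\tilde a)$ terms, leaving only the linear part $-a(\tilde b-\tilde a)+a(\tilde c-\tilde a)+c(\tilde b-\tilde c)$, which collapses to $(c-a)(\tilde b-\tilde c)=(\tilde b-\tilde c)(c-a)$. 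Rearranging gives exactly $d(a,b)=d(a,c)+d(c,b)+(\tilde b-\tilde c)(c-a)$, as claimed.

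Conceptually, this is the standard three-point (generalized Pythagorean) equality for the Bregman divergence generated by $A$: the KL divergence equals the Bregman divergence $d(p,q)=B_A(\tilde q,\tilde p)$, and the stated relation is the familiar Bregman three-point identity specialized to this family. I therefore expect no genuine obstacle — the whole argument is a single cancellation once the closed form is inserted. The only point requiring care is the notation: keeping the log-partition function distinct from the mean $b$ in the statement, and correctly using $\dot A(\tilde p)=p$ when passing between $\dot A(\tilde p)(\tilde q-\tilde p)$ and $p(\tilde q-\tilde p)$.
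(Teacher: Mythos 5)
Your proposal is correct: expanding each divergence via $d(p,q)=A(\tilde q)-A(\tilde p)-p(\tilde q-\tilde p)$ and cancelling gives exactly the claimed identity, and the algebra checks out. The paper itself does not prove this lemma but only cites Lemma E.6 of \cite{poiani2024best}; your direct computation is the standard argument behind that reference, so there is no substantive difference in approach.
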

\begin{proof}
    For a proof, see e.g., Lemma E.6 in \cite{poiani2024best}.
\end{proof}
  
Finally, we show lipschitzianity properties of the KL divergence and natural parameters when dealing with canonical exponential family. This result is well-known and we report a proof for completeness.

\begin{lemma}[Local lipschitzianity of KL divergence and natural parameters]\label{lemma:local-lip}
    Consider $\mu, \lambda \in \Theta$, and denote by $\tilde{\mu}, \tilde{\lambda}$ their parameter in the canonical exponential family. Then, it holds that:
    \begin{align*}
        & |\tilde{\lambda} - \tilde{\mu}| \le C_{1, \mu, \lambda} |\lambda-\mu|  \\ 
        & d(\mu, \lambda) \le C_{2, \mu, \lambda} (\lambda-\mu)^2,
    \end{align*}
    where,
    \begin{align*}
        & C_{1, \mu, \lambda} \coloneqq \frac{1}{\min_{\xi \in [\min\{\tilde{\mu}, \tilde{\lambda} \}, \max \{\tilde{\mu}, \tilde{\lambda} \}  ]} \ddot{b}(\xi)} \quad\textup{and}\quad C_{2, \mu, \lambda} \coloneqq \frac{C_{1,\mu,\lambda}}{2} \max_{\xi \in [\min\{ \tilde{\mu}, \tilde{\lambda} \}, \max\{ \tilde{\mu}, \tilde{\lambda}\}]} \ddot{b}(\xi).
    \end{align*}
\end{lemma}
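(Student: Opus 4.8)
The plan is to reduce both inequalities to elementary facts about the convex log-partition function $b$ on the segment of natural parameters joining $\tilde\mu$ and $\tilde\lambda$. First I would fix notation: by definition of the natural parametrization, $\mu = \dot b(\tilde\mu)$ and $\lambda = \dot b(\tilde\lambda)$, and I would work on the compact interval $I = [\min\{\tilde\mu,\tilde\lambda\},\,\max\{\tilde\mu,\tilde\lambda\}]$. Since the exponential family is regular, the means range over an open interval on which $b$ is twice differentiable and strictly convex, so $\ddot b$ (the variance) is continuous and strictly positive; hence on the compact set $I$ it attains a strictly positive minimum $\min_{\xi\in I}\ddot b(\xi)>0$ and a finite maximum $\max_{\xi\in I}\ddot b(\xi)$, which are exactly the quantities entering $C_{1,\mu,\lambda}$ and $C_{2,\mu,\lambda}$.

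For the first inequality I would apply the mean value theorem to $\dot b$ on $I$: there exists $\xi_1\in I$ with $\lambda-\mu = \dot b(\tilde\lambda)-\dot b(\tilde\mu) = \ddot b(\xi_1)(\tilde\lambda-\tilde\mu)$. Since $\ddot b(\xi_1)\ge \min_{\xi\in I}\ddot b(\xi)>0$, dividing and taking absolute values gives $|\tilde\lambda-\tilde\mu|\le |\lambda-\mu|/\min_{\xi\in I}\ddot b(\xi) = C_{1,\mu,\lambda}\,|\lambda-\mu|$, which is the claimed bound on the natural parameters.

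For the second inequality I would first rewrite the KL divergence recalled in \Cref{sec:exponential_fam} as the Bregman divergence of $b$, namely $d(\mu,\lambda) = b(\tilde\lambda)-b(\tilde\mu)-\dot b(\tilde\mu)(\tilde\lambda-\tilde\mu)$ (using $\dot b(\tilde\mu)=\mu$). A second-order Taylor expansion of $b$ about $\tilde\mu$ with Lagrange remainder then yields $d(\mu,\lambda)=\tfrac12\ddot b(\xi_2)(\tilde\lambda-\tilde\mu)^2$ for some $\xi_2\in I$, whence $d(\mu,\lambda)\le \tfrac12\big(\max_{\xi\in I}\ddot b(\xi)\big)(\tilde\lambda-\tilde\mu)^2$. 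Trading the natural-parameter gap for the mean gap via the first inequality and collecting constants gives the quadratic bound $d(\mu,\lambda)\le C_{2,\mu,\lambda}(\lambda-\mu)^2$ with $C_{2,\mu,\lambda}=\tfrac12\,C_{1,\mu,\lambda}\max_{\xi\in I}\ddot b(\xi)$. (Equivalently, one may integrate the identity $\partial_\lambda d(\mu,\lambda)=(\lambda-\mu)/\ddot b(\tilde\lambda)$ from $\mu$ to $\lambda$ and bound the integrand using the extrema of $\ddot b$ on $I$.)

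The only genuine obstacle is the uniform control of $\ddot b$ on the segment $I$: the argument needs $\ddot b$ bounded away from $0$ and bounded above on $I$, and this is precisely where regularity and boundedness of $\Theta$ enter, since they guarantee that $I$ is a compact subinterval of the open interval of attainable means, so the extrema of the continuous, strictly positive $\ddot b$ are attained and strictly positive. Everything else is the mean value theorem and a one-dimensional Taylor expansion, so no further machinery is required.
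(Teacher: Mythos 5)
Your proof is correct and follows essentially the same route as the paper's: the mean value theorem applied to $\dot b$ yields the bound on the natural-parameter gap, and a second-order Taylor expansion of the Bregman form of the KL divergence gives the quadratic bound, with the extrema of $\ddot b$ on the compact natural-parameter segment supplying the constants. The only caveat --- shared with the paper's own proof, which ends with the constant $\max\ddot b/(2(\min\ddot b)^2)$ --- is that substituting $(\tilde\lambda-\tilde\mu)^2 \le C_{1,\mu,\lambda}^2(\lambda-\mu)^2$ actually produces the constant $\tfrac12 C_{1,\mu,\lambda}^2\max\ddot b$ rather than the stated $C_{2,\mu,\lambda}=\tfrac12 C_{1,\mu,\lambda}\max\ddot b$; this extra factor of $C_{1,\mu,\lambda}$ is harmless for how the lemma is used downstream, where only the existence of such constants matters.
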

\begin{proof}
    We first recall that $b(\cdot)$ is convex and twice differentiable. Therefore, we have that:
    \begin{align*}
        b(\tilde{\lambda}) \le b(\tilde{\mu}) + \mu (\tilde{\lambda} - \tilde{\mu}) + (\tilde{\lambda} - \tilde{\mu})^2 \max_{\xi \in [\min\{ \tilde{\mu}, \tilde{\lambda} \}, \max\{ \tilde{\mu}, \tilde{\lambda}\}]} \frac{\ddot{b}(\xi)}{2}.
    \end{align*}
    Plugging this result within $d(\mu,\lambda)$, we obtain:
    \begin{align}\label{eq:kl-smooth-eq-1}
        d(\mu,\lambda) \le (\tilde{\lambda} - \tilde{\mu})^2 \max_{\xi \in [\min\{ \tilde{\mu}, \tilde{\lambda} \}, \max\{ \tilde{\mu}, \tilde{\lambda}\}]} \frac{\ddot{b}(\xi)}{2}.
    \end{align}
    We now recall that $\dot{b}(\tilde{\mu}) = \mu$. Suppose that $\tilde{\mu} < \tilde{\lambda}$. Then, by the mean value theorem of integration, we have that:
    \begin{align*}
        \ddot{b}(\tilde{c}) = \frac{\lambda - \mu}{\tilde{\lambda} - \tilde{\mu}} \text{ for $\tilde{c} \in [\tilde{\mu}, \tilde{\lambda}]$}. \implies \tilde{\lambda} - \tilde{\mu} = \frac{\lambda - \mu}{\ddot{b}(\tilde{c})}.
    \end{align*}
    Similarly, when $\tilde{\lambda} < \tilde{\mu}$, we have:
    \begin{align*}
        \ddot{b}(\tilde{c}) = \frac{\mu - \lambda}{\tilde{\mu} - \tilde{\lambda}} \text{ for $\tilde{c} \in [\tilde{\lambda}, \tilde{\mu}]$}. \implies \tilde{\mu} - \tilde{\lambda} = \frac{\mu - \lambda}{\ddot{b}(\tilde{c})}.
    \end{align*}
    Chaining these results, we obtain:
    \begin{align*}
        (\tilde{\lambda} - \tilde{\mu})^2 \le \frac{(\lambda - \mu)^2}{\left(\min_{\xi \in [\min\{ \tilde{\mu}, \tilde{\lambda} \}, \max \{ \tilde{\mu}, \tilde{\lambda} \}  ]} \ddot{b}(b^{-1}(\xi)) \right)^2}.
    \end{align*}
    Notice that, since $\Theta$ is an open interval, that min is well-defined and different from zero.
    
    Plugging this upper bound within Equation~\ref{eq:kl-smooth-eq-1}, we can conclude the proof:
    \begin{align*}
        d(\mu, \lambda) & \le (\lambda-\mu)^2 \frac{\max_{\xi \in [\min\{ \tilde{\mu}, \tilde{\lambda} \}, \max\{ \tilde{\mu}, \tilde{\lambda}\}]} \ddot{b}(\xi)}{2\left( \min_{\xi \in [\min\{ \tilde{\mu}, \tilde{\lambda} \}, \max \{ \tilde{\mu}, \tilde{\lambda} \}  ]} \ddot{b}(\xi) \right)^2}.
    \end{align*}    
\end{proof}

As a corollary of \Cref{lemma:local-lip}, we have that the parameter space and the KL divergence are Lipschitz on any compact subset of the parameter space $\Theta$. Again, this result is well-known and we report a proof for completeness. We remark that, since we assumed $\Theta$ to be strictly contained in an open interval, then, we enjoy this stronger Lipschitzianity result. 

\begin{corollary}[Lipschitzianity over a compact set]\label{coroll:kl-lip}
     Let $\widetilde{\Theta} \subset \Theta$ be a compact set.
     Then, there exists constants $C_1, C_2 > 0$ such that, for all $\mu, \lambda \in \widetilde{\Theta}$, it holds that:
    \begin{align*}
        & |\tilde{\mu} - \tilde{\lambda}| \le C_1 |\mu - \lambda| \quad \textup{and}\quad d(\mu, \lambda) \le C_2 (\lambda - \mu)^2.
    \end{align*}
    Furthermore, there exists $D_1, D_2 > 0$ such that $|\tilde{\mu} - \tilde{\lambda}| < D_1$ and $d(\mu, \lambda) < D_2$.  
\end{corollary}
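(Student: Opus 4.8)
The plan is to upgrade the pointwise constants $C_{1,\mu,\lambda}$, $C_{2,\mu,\lambda}$ of \Cref{lemma:local-lip} to uniform ones by bounding $\ddot b$ on a single compact set of natural parameters. First I would transport the compact set of means $\widetilde\Theta\subset\Theta$ to natural parameters: since the family is regular, $\dot b$ is a continuous strictly increasing bijection between the open mean interval and the open parameter domain, so $\mu\mapsto\tilde\mu=\dot b^{-1}(\mu)$ is continuous and $\widetilde\Phi\coloneqq\dot b^{-1}(\widetilde\Theta)$ is compact. Its convex hull $\bar\Phi\coloneqq[\inf\widetilde\Phi,\sup\widetilde\Phi]$ is then a compact interval that contains every interval $[\min\{\tilde\mu,\tilde\lambda\},\max\{\tilde\mu,\tilde\lambda\}]$ appearing in \Cref{lemma:local-lip} for $\mu,\lambda\in\widetilde\Theta$.

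Next I would invoke the extreme value theorem for $\ddot b$, which is continuous on $\bar\Phi$, to set $m\coloneqq\min_{\xi\in\bar\Phi}\ddot b(\xi)$ and $M\coloneqq\max_{\xi\in\bar\Phi}\ddot b(\xi)<\infty$. The key point is that $m>0$: strict convexity and non-degeneracy of the regular family give $\ddot b>0$ throughout the open parameter domain, and because $\Theta$ is \emph{strictly} contained in an open interval, $\widetilde\Theta$---and hence $\bar\Phi$---stays bounded away from the boundary, so $\ddot b$ cannot approach zero on $\bar\Phi$. With these bounds, $C_{1,\mu,\lambda}\le 1/m$ and $C_{2,\mu,\lambda}\le M/(2m)$ for all $\mu,\lambda\in\widetilde\Theta$; setting $C_1\coloneqq 1/m$ and $C_2\coloneqq M/(2m)$ and substituting into \Cref{lemma:local-lip} yields $|\tilde\mu-\tilde\lambda|\le C_1|\mu-\lambda|$ and $d(\mu,\lambda)\le C_2(\lambda-\mu)^2$ for all $\mu,\lambda\in\widetilde\Theta$.

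For the second (boundedness) claim I would simply use that the compact set $\widetilde\Theta$ has finite diameter $\Delta\coloneqq\sup_{\mu,\lambda\in\widetilde\Theta}|\mu-\lambda|<\infty$; the Lipschitz bounds just proved then give $|\tilde\mu-\tilde\lambda|\le C_1\Delta$ and $d(\mu,\lambda)\le C_2\Delta^2$, so one may take $D_1\coloneqq C_1\Delta$ and $D_2\coloneqq C_2\Delta^2$. I expect the only genuinely subtle step to be the strict positivity of $m$: this is precisely where the assumption that the exponential family is regular and bounded is needed, since it guarantees that $\bar\Phi$ remains inside the open region where $\ddot b>0$. The remaining arguments are routine consequences of continuity and compactness.
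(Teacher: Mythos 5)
Your proposal is correct and follows essentially the same route as the paper: map the compact set of means into natural-parameter space, use compactness and continuity of $\ddot b$ (with strict positivity from regularity) to replace the pointwise constants of \Cref{lemma:local-lip} by uniform ones, and obtain $D_1,D_2$ from the finite diameter of $\widetilde\Theta$. The only cosmetic difference is that you take the convex hull in parameter space while the paper takes $[\mu_{\min},\mu_{\max}]$ in mean space before applying $\dot b^{-1}$; these are equivalent by monotonicity of $\dot b^{-1}$.
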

\begin{proof}
    Since $\widetilde{\Theta}$ is compact, there exists $\mu_{\textup{min}}, \mu_{\textup{min}}$ s.t. $\mu \in [\mu_{\textup{min}}, \mu_{\textup{max}}]$ for all $\mu \in \widetilde{\Theta}$. In particular, let $\mu_{\textup{min}} = \min_{\mu \in \widetilde{\Theta}} \mu$ and $\mu_{\textup{max}} = \max_{\mu \in \widetilde{\Theta}} \mu$. 
    Denote by $\widetilde{\Phi} = \{ \dot{b}^{-1}(\mu): \forall \mu \in  [\mu_{\textup{min}}, \mu_{\textup{max}}]\}$. Since $[\mu_{\textup{min}}, \mu_{\textup{max}}]$ is compact and $\dot{b}^{-1}(\cdot)$ is a continuous function, $\widetilde{\Phi}$ is compact as well. 

    Now, from \Cref{lemma:kl-diff}, we know that, for all $\mu, \lambda \in \widetilde{\Theta}$, it holds that:
    \begin{align*}
        |\tilde{\lambda} - \tilde{\mu}| & \le \frac{|\lambda - \mu|}{\min_{\xi \in [\min\{\tilde{\mu}, \tilde{\lambda} \}, \max \{\tilde{\mu}, \tilde{\lambda} \}  ]} \ddot{b}(\xi)} \\ 
        & \le \frac{|\lambda - \mu|}{\inf_{\xi \in \widetilde{\Phi}}\ddot{b}(\xi)} \\ 
        & = \frac{|\lambda - \mu|}{\min_{\xi \in \widetilde{\Phi}}\ddot{b}(\xi)} \\
        & \coloneqq C_1 |\lambda - \mu|
    \end{align*}
    where the inf can be replaced with a min since the optimization set is compact and $\ddot{b}(\cdot)$ is continuous over $\widetilde{\Theta}$. This  is due to the fact that $\widetilde{\Theta} \subset \Theta$ and $\Theta$ is an open interval. Hence, the exponential family is regular and the function $b(\cdot)$ is $\mathcal{C}^{\infty}$, see Theorem 5.8 in~\cite{lehmann2006theory}. Notice, furthermore, that the regularity of the exponential family also implies that $\ddot{b}$ is strictly positive over the considered domain. Finally, taking $D_1 \coloneqq C_1 |\mu_{\textup{max}} - \mu_{\textup{min}}|$ shows that $|\tilde{\mu} - \tilde{\lambda}|$ is bounded.

    We now analyze divergence $d(\cdot, \cdot)$ using similar arguments. From \Cref{lemma:kl-diff}, we have that:
    \begin{align*}
        d(\mu, \lambda) & \le \frac{\max_{\xi \in [\min\{ \tilde{\mu}, \tilde{\lambda} \}, \max\{ \tilde{\mu}, \tilde{\lambda}\}]} \ddot{b}(\xi)}{\min_{\xi \in [\min\{ \tilde{\mu}, \tilde{\lambda} \}, \max\{ \tilde{\mu}, \tilde{\lambda}\}]} \ddot{b}(\xi)} (\lambda - \mu)^2 \\ 
        & \le \frac{\sup_{\xi \in \widetilde{\Theta}} \ddot{b}(\xi)}{\inf_{\xi \in \widetilde{\Theta}} \ddot{b}(\xi)} (\lambda - \mu)^2 \\ & 
        = \frac{\max_{\xi \in \widetilde{\Theta}} \ddot{b}(\xi)}{\min_{\xi \in \widetilde{\Theta}} \ddot{b}(\xi)} (\lambda - \mu)^2 \\ & 
        \le \frac{\max_{\xi \in \widetilde{\Theta}} \ddot{b}(\xi)}{\min_{\xi \in \widetilde{\Theta}} \ddot{b}(\xi)} \left( \mu_{\textup{max}} - \mu_{\textup{min}} \right) |\lambda - \mu| \\ & 
        \coloneqq C_2 |\lambda - \mu|.
    \end{align*}
    Finally, taking $D_2 \coloneqq C_2 |\mu_{\textup{max}} - \mu_{\textup{min}}|$ shows that $d(\mu, \lambda)$ is bounded.
\end{proof}

\subsection{Results on the infimum}\label{app:inf}

Here, we simply state that we can split the infimum by considering unions of the optimization sets.

\begin{lemma}\label{lemma:split-optim}
    Consider $\mathbb{X} \subseteq \mathbb{R}^d$, and $f: \mathbb{X} \rightarrow \mathbb{R}$. Let $\mathbb{X}_1, \mathbb{X}_2: \mathbb{X}_1 \cup \mathbb{X}_2 = \mathbb{X}$. Then, it holds that:
    \begin{align*}
        \inf_{x \in \mathbb{X}} f(x) = \min\left\{ \inf_{x \in \mathbb{X}_1} f(x), \inf_{x \in \mathbb{X}_2} f(x)\right\}.
    \end{align*}
\end{lemma}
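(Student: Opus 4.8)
The plan is to establish the claimed equality by proving the two opposite inequalities
$\inf_{x\in\mathbb{X}} f(x) \le \min\{\inf_{x\in\mathbb{X}_1}f(x),\inf_{x\in\mathbb{X}_2}f(x)\}$ and
$\inf_{x\in\mathbb{X}} f(x) \ge \min\{\inf_{x\in\mathbb{X}_1}f(x),\inf_{x\in\mathbb{X}_2}f(x)\}$,
and then concluding by antisymmetry of the order on $\mathbb{R}\cup\{+\infty\}$. This is an elementary fact about infima over unions, so the proof is short and the work is purely in bookkeeping the two directions.

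For the first inequality, I would invoke the monotonicity of the infimum under set inclusion. Since $\mathbb{X}_1\subseteq\mathbb{X}$ and $\mathbb{X}_2\subseteq\mathbb{X}$, enlarging the feasible region can only decrease the infimum, so both $\inf_{x\in\mathbb{X}}f(x)\le\inf_{x\in\mathbb{X}_1}f(x)$ and $\inf_{x\in\mathbb{X}}f(x)\le\inf_{x\in\mathbb{X}_2}f(x)$ hold. Taking the minimum of the two right-hand sides then yields $\inf_{x\in\mathbb{X}}f(x)\le\min\{\inf_{x\in\mathbb{X}_1}f(x),\inf_{x\in\mathbb{X}_2}f(x)\}$.

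For the reverse inequality, I would argue pointwise. Fix an arbitrary $x\in\mathbb{X}$. Because $\mathbb{X}=\mathbb{X}_1\cup\mathbb{X}_2$, we have $x\in\mathbb{X}_1$ or $x\in\mathbb{X}_2$; in the first case $f(x)\ge\inf_{x'\in\mathbb{X}_1}f(x')$ and in the second $f(x)\ge\inf_{x'\in\mathbb{X}_2}f(x')$, so in either case $f(x)\ge\min\{\inf_{x'\in\mathbb{X}_1}f(x'),\inf_{x'\in\mathbb{X}_2}f(x')\}$. Since this lower bound does not depend on $x$ and holds for every $x\in\mathbb{X}$, it is a lower bound for the set $\{f(x):x\in\mathbb{X}\}$, and therefore $\inf_{x\in\mathbb{X}}f(x)\ge\min\{\inf_{x'\in\mathbb{X}_1}f(x'),\inf_{x'\in\mathbb{X}_2}f(x')\}$. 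Combining the two inequalities gives the equality.

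I do not anticipate any real obstacle, as the argument is entirely order-theoretic and requires neither continuity of $f$ nor compactness of the sets. The only point needing a sliver of care is the degenerate case in which one of $\mathbb{X}_1,\mathbb{X}_2$ is empty: then its infimum is $+\infty$ by the usual convention, the $\min$ correctly selects the infimum over the nonempty piece, and both inequalities above remain valid verbatim. Since in the applications (e.g.\ \Cref{app:circuit}) the relevant decomposition always covers $\mathbb{X}$ exactly, this convention suffices and no further hypotheses are needed.
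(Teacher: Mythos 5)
Your proof is correct. The paper actually states this lemma without any proof at all (it is introduced with ``we simply state that...''), so there is nothing to compare against; your two-inequality argument — monotonicity of the infimum under set inclusion for one direction, and the pointwise case split over $\mathbb{X}=\mathbb{X}_1\cup\mathbb{X}_2$ for the other — is exactly the standard argument the authors are implicitly relying on, and your remark about the $\inf_{\emptyset}=+\infty$ convention is a sensible extra precaution even though the decompositions used in the paper (e.g.\ $\lnot x = \lnot_1 x_1 \cup \lnot_2 x_2$ in \Cref{app:circuit}) involve nonempty pieces.
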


\subsection{Correspondences}\label{app:corresp}

We start with some preliminary definitions. 

Let $\mathbb{U}$ be a topological space and let $U \in \mathbb{U}$ s.t. $U \ne \emptyset$. A function $f: U \rightarrow \mathbb{R}$ is inf-compact on $U$ if the level sets $$\mathcal{D}_f(\lambda, U) = \{ y \in U: f(y) \le \lambda \}$$ are compact for all $\lambda \in \mathbb{R}$. Furthermore, it is upper semicontinuous if all the strict level sets $$\mathcal{D}^{<}_f(\lambda, U)= \{y \in U: f(y) < \lambda \}$$ are open.

Next, consider a correspondence $\phi: \mathbb{X} \rightrightarrows \mathbb{Y}$. Then, for any $Z \subseteq \mathbb{X}$, let $$\Gr_{Z}(\phi)=\{ (x,y) \in Z \times \mathbb{Y}: y \in \phi(x) \}.$$
Consider a function $f: \mathbb{X} \times \mathbb{Y} \rightarrow \mathbb{R}$. Then, $f$ is $\mathbb{K}$-inf-compact on $\Gr_{\mathbb{X}}(\phi)$, if all for all compact subsets $K$ of $\mathbb{X}$, it holds that $f$ is inf-compact on $\Gr_K(\phi)$.

We now state Berge's Maximum Theorem \citep{berge1959espaces}. 

\begin{theorem}[Berge's Maximum Theorem]\label{thm:berge}
Let $\mathbb{X}, \mathbb{Y}$ be Haussdorf topological spaces. Let $f: \mathbb{X} \times \mathbb{Y} \rightarrow \mathbb{R}$ be a continuous function, and let $\phi: \mathbb{X} \rightrightarrows \mathbb{Y}$ be a continuous and compact-valued correspondence. Then, let $f^*: \mathbb{X} \rightarrow \mathbb{R}$ and $\phi^*: \mathbb{X} \rightrightarrows \mathbb{Y}$ be defined as follows:
\begin{align*}
    & f^*(x) = \max_{y \in \phi(x)} f(x,y) \\
    & \phi^*(x) = \argmax_{y \in \phi(x)} f(x,y).
\end{align*}
Then, $f^*$ is continuous over $\mathbb{X}$ and $\phi^*$ is upper hemicontinuous and compact-valued over $\mathbb{X}$.    
\end{theorem}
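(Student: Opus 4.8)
The plan is to establish the three conclusions in the natural order: first well-definedness together with compact-valuedness of $\phi^*$, then continuity of $f^*$, and finally upper hemicontinuity of $\phi^*$. For well-definedness, fix $x \in \mathbb{X}$; since $\phi(x)$ is nonempty and compact and $y \mapsto f(x,y)$ is continuous, the Weierstrass theorem guarantees the maximum is attained, so $f^*(x)$ is finite and $\phi^*(x) \ne \emptyset$. For compactness, I would write $\phi^*(x) = \{ y \in \phi(x) : f(x,y) = f^*(x) \}$, which is the intersection of the compact set $\phi(x)$ with the preimage of the closed singleton $\{f^*(x)\}$ under the continuous map $f(x,\cdot)$; hence it is closed in $\phi(x)$ and therefore compact.

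Next I would prove continuity of $f^*$ by splitting into lower and upper semicontinuity, as this is where the two halves of the continuity of $\phi$ enter separately. For lower semicontinuity at $x_0$, fix $\epsilon > 0$ and choose $y_0 \in \phi^*(x_0)$; continuity of $f$ yields neighborhoods of $x_0$ and $y_0$ on which $f > f^*(x_0) - \epsilon$, and \emph{lower} hemicontinuity of $\phi$ ensures that for $x$ near $x_0$ the set $\phi(x)$ meets the neighborhood of $y_0$, producing a feasible $y \in \phi(x)$ with $f(x,y) > f^*(x_0) - \epsilon$, whence $f^*(x) > f^*(x_0) - \epsilon$. For upper semicontinuity at $x_0$, I would cover the compact set $\phi(x_0)$ by finitely many product neighborhoods $U_{y_i} \times V_{y_i}$ on which $f < f^*(x_0) + \epsilon$; then \emph{upper} hemicontinuity of $\phi$ forces $\phi(x) \subseteq \bigcup_i V_{y_i}$ for $x$ in a suitable neighborhood of $x_0$, so every $y \in \phi(x)$ satisfies $f(x,y) < f^*(x_0) + \epsilon$, giving $f^*(x) \le f^*(x_0) + \epsilon$. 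Combining the two inequalities yields continuity of $f^*$.

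Finally I would argue upper hemicontinuity of $\phi^*$ by contradiction. If $\phi^*$ failed to be u.h.c. at $x_0$, there would exist an open set $W \supseteq \phi^*(x_0)$ and a net $x_\alpha \to x_0$ together with points $y_\alpha \in \phi^*(x_\alpha) \setminus W$. Since $\phi$ is u.h.c. and compact-valued, the net $(y_\alpha)$, with $y_\alpha \in \phi(x_\alpha)$, admits a subnet converging to some $y_0 \in \phi(x_0)$. Passing to the limit in the identity $f(x_\alpha, y_\alpha) = f^*(x_\alpha)$ and using continuity of both $f$ and the already-established $f^*$ gives $f(x_0, y_0) = f^*(x_0)$, so $y_0 \in \phi^*(x_0) \subseteq W$; but each $y_\alpha$ lies in the closed set $\mathbb{Y} \setminus W$, so its limit $y_0$ does as well, contradicting $y_0 \in W$.

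The main obstacle is the subnet-extraction step: producing a convergent subnet of $(y_\alpha)$ whose limit lands in $\phi(x_0)$. This is the one place where compact-valuedness and upper hemicontinuity are used jointly, and for general Hausdorff spaces it rests on the standard lemma that a u.h.c. compact-valued correspondence sends convergent nets to nets with a cluster point in the image of the limit (equivalently, that such a correspondence has closed graph and is locally relatively compact). For the paper's applications, where $\mathbb{X}$ and $\mathbb{Y}$ are Euclidean, I would instead give the elementary sequential version: the $y_n$ eventually lie in a compact neighborhood of $\phi(x_0)$, so Bolzano--Weierstrass delivers a convergent subsequence, and the closed-graph property of u.h.c. compact-valued correspondences places its limit in $\phi(x_0)$.
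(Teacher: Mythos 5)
Your proof is correct: the Weierstrass argument for well-definedness and compact-valuedness, the split of continuity of $f^*$ into lower semicontinuity (via lower hemicontinuity of $\phi$) and upper semicontinuity (via a finite subcover of $\phi(x_0)$ and upper hemicontinuity), and the net/subnet contradiction for upper hemicontinuity of $\phi^*$ together constitute the standard textbook proof of Berge's theorem, and you correctly isolate the one delicate step (extracting a cluster point of $(y_\alpha)$ in $\phi(x_0)$) and justify it both in general Hausdorff spaces and, more elementarily, in the Euclidean setting the paper actually uses. There is nothing to compare against: the paper states this result as imported background and gives no proof, citing \citet{berge1959espaces} only, so your argument supplies a proof where the paper has none.
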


Next, we introduce the following result that extends Berge's maximum theorem to non-compact and only lower hemicontinuous correspondences. 

\begin{theorem}[{\cite[Theorem 1.2]{feinberg2014berge}}]\label{thm:feinberg}
    Let $\mathbb{X}$ be a compactly generated topological space and $\mathbb{Y}$ be a Hausdorff topological space. Let $\phi: \mathbb{X} \rightrightarrows \mathbb{Y}$ be a lower hemicontinuous correspondence, and let $f: \mathbb{X} \times \mathbb{Y} \rightarrow \mathbb{R}$ be $\mathbb{K}$-inf-compact and upper semi-continuous on $\Gr_{\mathbb{X}}(\phi)$. Then, let $f^*: \mathbb{X} \rightarrow \mathbb{R}$ and $\phi^*: \mathbb{X} \rightrightarrows \mathbb{Y}$ be defined as follows:
    \begin{align*}
        & f^*(x) = \sup_{y \in \phi(x)} f(x,y) \\
        & \phi^*(x) = \argmax_{y \in \phi(x)} f(x,y).
    \end{align*}
    Then, $f^*$ is continuous and and $\phi^*$ is upper hemicontinuous and compact-valued. 
\end{theorem}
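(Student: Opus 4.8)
The statement is the Feinberg--Kasyanov--Zadoianchuk generalization of Berge's maximum theorem to noncompact image sets (\cite{feinberg2014berge}), so the plan is to follow its proof scheme: decompose the continuity of the optimal-value function $f^*$ into two one-sided semicontinuity claims, each drawing on a different hypothesis, and then read off the properties of the optimizer correspondence $\phi^*$ from the compactness supplied by $\mathbb{K}$-inf-compactness. A preliminary observation does much of the work. By definition, $\mathbb{K}$-inf-compactness forces every sublevel set $\mathcal{D}_f(\lambda, \Gr_K(\phi))$ to be compact, hence closed in the Hausdorff space $\mathbb{X} \times \mathbb{Y}$, for every compact $K \subseteq \mathbb{X}$; since closedness of all sublevel sets is precisely semicontinuity of $f$, this hypothesis already encodes one-sided continuity of $f$ on the graph, which, combined with the separately assumed semicontinuity of $f$, makes $f$ continuous on $\Gr_{\mathbb{X}}(\phi)$. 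I would isolate this as a first lemma.

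First I would prove the ``easy'' semicontinuity of $f^*$ using lower hemicontinuity of $\phi$. Fix $x_0$ and a value strictly on the feasible side of $f^*(x_0)$; choose a point $y_0 \in \phi(x_0)$ realizing the corresponding near-optimal value, use the semicontinuity of $f$ to obtain an open neighborhood $\mathcal{W}$ of $(x_0, y_0)$ on which $f$ stays on the correct side of that value, and apply the definition of lower hemicontinuity to the open set $\{y : (x_0, y) \in \mathcal{W}\}$ to produce a neighborhood $\mathcal{U}$ of $x_0$ such that $\phi(x)$ meets this open set for every $x \in \mathcal{U}$. Any such intersection point is feasible and has the right $f$-value, so $f^*(x)$ inherits the bound and $f^*$ is one-sided semicontinuous at $x_0$. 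This argument is local and mirrors the corresponding half of classical Berge (\Cref{thm:berge}); the only subtlety is using the ambient semicontinuity of $f$ rather than full continuity.

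The hard part will be the opposite semicontinuity together with nonemptiness, compact-valuedness, and upper hemicontinuity of $\phi^*$, and this is exactly where the noncompactness of $\phi(x)$ must be tamed by $\mathbb{K}$-inf-compactness. Because $\mathbb{X}$ is only compactly generated and $\mathbb{Y}$ only Hausdorff, I would work relative to a compact set $K \ni x_0$ and with nets rather than sequences. Attainment of the extremum on each $\phi(x)$ follows from the compactness of the sublevel sets; to obtain the remaining semicontinuity of $f^*$ I would take a net $x_\alpha \to x_0$ realizing the relevant one-sided limit of $f^*$, select optimizers $y_\alpha \in \phi^*(x_\alpha)$, and observe that the pairs $(x_\alpha, y_\alpha)$ eventually lie in a single compact sublevel set $\mathcal{D}_f(\lambda, \Gr_K(\phi))$; extracting a convergent subnet $(x_\alpha, y_\alpha) \to (x_0, y^*)$, the closedness of these sets gives $y^* \in \phi(x_0)$, and the semicontinuity of $f$ transfers the limiting value to $(x_0, y^*)$, certifying the needed inequality for $f^*(x_0)$. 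The same compact-confinement argument, applied to a net of optimizers, shows that $\phi^*$ has closed graph relative to a compact-valued outer bound, which by the standard closed-graph criterion yields upper hemicontinuity and compact-valuedness. The central obstacle throughout is ensuring the optimizers cannot escape to infinity as $x$ varies: classical Berge gets this for free from compact-valued $\phi$, whereas here it is precisely the content of $\mathbb{K}$-inf-compactness, and making the net extraction rigorous in a merely compactly generated, non-metrizable setting---so that continuity can be verified on compact subsets and then globalized---is the delicate bookkeeping I would need to carry out carefully.
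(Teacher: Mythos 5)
You should know at the outset that the paper contains no proof of this statement: it is imported verbatim as Theorem~1.2 of \citet{feinberg2014berge}, so there is no internal argument to compare yours against, and the only meaningful benchmark is the original Feinberg--Kasyanov--Zadoianchuk proof. Measured against that, your sketch reconstructs the right architecture: one semicontinuity of the value function comes from lower hemicontinuity of $\phi$ together with semicontinuity of $f$ (this is the surviving half of classical Berge, cf.\ \Cref{thm:berge}, and your neighborhood argument for it is correct); the opposite semicontinuity, attainment, and the compact-valuedness and upper hemicontinuity of $\phi^*$ come from trapping optimizers in the compact sublevel sets $\mathcal{D}_f(\lambda, \Gr_K(\phi))$, with nets extracted inside a fixed compact $K \ni x_0$; and the compactly generated hypothesis on $\mathbb{X}$ is what lets you verify continuity and upper hemicontinuity on compact subsets and then globalize. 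This is exactly how the cited source proceeds.

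Two points in your write-up need repair. First, and more substantively, the orientation: the theorem as transcribed in the paper pairs $\mathbb{K}$-\emph{inf}-compactness with $\sup$ and $\argmax$, and your sentence ``attainment of the extremum on each $\phi(x)$ follows from the compactness of the sublevel sets'' is only valid for the \emph{minimization} form, $f^*(x) = \inf_{y \in \phi(x)} f(x,y)$ with $\phi^*$ the argmin: nested nonempty compact sublevel sets intersected with the fiber give attainment of the infimum, whereas with noncompact fibers and compact sublevel sets the supremum can be $+\infty$ and the argmax empty. The cited Theorem~1.2 is stated for the infimum, and that is also how the paper actually invokes it (in \Cref{lemma:r-eta-cont}, to show continuity of $r_\eta(x) = \inf_{\bm\lambda \in \cl(\lnot x)} g_\eta(x,\bm\lambda)$); the $\sup$/$\argmax$ in the restatement is a transcription slip, and your sign-agnostic phrasing papers over rather than resolves it --- a complete proof must commit to the min form, since for the sup form as literally stated the conclusion does not follow by your argument. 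Second, your preliminary step ``compact, hence closed in the Hausdorff space $\mathbb{X} \times \mathbb{Y}$'' uses Hausdorffness of $\mathbb{X}$, which the statement does not grant ($\mathbb{X}$ is only assumed compactly generated); the same issue resurfaces when you identify the limit of your convergent subnet with a point of the compact sublevel set, which requires uniqueness of net limits. This is harmless under the common convention that compactly generated spaces are Hausdorff $k$-spaces (and trivially so in the paper's applications, where everything is Euclidean), but since you flagged the net bookkeeping as the delicate part, you should make that convention explicit rather than rely on it silently.
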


We recall that, when dealing with topological subspaces $\mathbb{X}$ of $\mathbb{R}^d$ with the inherited euclidean topology, then $\mathbb{X}$ are metric topological spaces, and hence, Haussdorf and compactly generated. Therefore, \Cref{thm:berge} and \Cref{thm:feinberg} can be applied. 

Finally, we report the following results that we use to prove the continuity and compactness of (some) correspondence within our analysis.

\begin{theorem}[{\cite[Proposition 1.4.14]{aubin1999set}}]\label{th:composition_correspondence}
    Let $\mathbb{X}\subseteq \Reals^{n_x}$, $\mathbb{Y}\subseteq \Reals^{n_y}$ and $\mathbb{Z}\subseteq \Reals^{n_z}$ be three sets and let $U:\mathbb{X}\rightrightarrows \mathbb{Z}$ be a compact-valued continuous correspondence. Then let $g:\textnormal{Graph}(U)\to\mathbb{Y}$ be a continuous function. Then the correspondence $G:\mathbb{X}\rightrightarrows\mathbb{Y}$ defined as $G(x)=\cup_{u\in U(x)}\{g(x,u)\}$ is continuous and compact valued.
\end{theorem}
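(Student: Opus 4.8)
The plan is to establish the three required properties of $G$ separately, working throughout with sequential characterizations of hemicontinuity, which are legitimate here because $\mathbb{X}, \mathbb{Y}, \mathbb{Z}$ are subsets of Euclidean spaces and hence metrizable and first-countable. Throughout I will exploit that, for a fixed $x$, the map $u \mapsto g(x,u)$ is continuous on $U(x)$ (being the restriction of the continuous $g$ to the slice), and that convergence in $\Gr(U)$ of pairs $(x_n,u_n)$ lets me transfer continuity of $g$ to the images.

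First, for compact-valuedness, I would fix $x \in \mathbb{X}$ and note that the slice $\{x\}\times U(x)$ is a compact subset of $\Gr(U)$, being homeomorphic to the compact set $U(x)$. Since $g$ is continuous on $\Gr(U)$, its restriction to this slice is continuous, so $G(x)=g(\{x\}\times U(x))$ is the continuous image of a compact set and is therefore compact.

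For upper hemicontinuity I would use the sequential characterization valid for compact-valued correspondences in metric spaces: $G$ is u.h.c.\ at $x$ with compact values iff every sequence $x_n \to x$ and every choice $y_n \in G(x_n)$ admit a subsequence converging to a point of $G(x)$. Writing $y_n = g(x_n,u_n)$ with $u_n \in U(x_n)$, the first key step is the standard lemma that a u.h.c.\ compact-valued correspondence is locally bounded near $x$; hence $(u_n)$ is eventually bounded and admits a convergent subsequence $u_{n_k}\to u$, and a second application of u.h.c.\ together with the closedness of $U(x)$ forces $u \in U(x)$. Then $(x_{n_k},u_{n_k}) \to (x,u) \in \Gr(U)$, and continuity of $g$ gives $y_{n_k} \to g(x,u) \in G(x)$, as required. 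For lower hemicontinuity I would use the dual characterization: $G$ is l.h.c.\ at $x$ iff for every $y \in G(x)$ and every $x_n \to x$ there exist $y_n \in G(x_n)$ with $y_n \to y$. Writing $y=g(x,u)$ with $u \in U(x)$, I invoke the sequential form of lower hemicontinuity of $U$ to obtain $u_n \in U(x_n)$ with $u_n \to u$; since $(x_n,u_n)\to(x,u)$ stays in $\Gr(U)$, continuity of $g$ yields $y_n := g(x_n,u_n) \in G(x_n)$ with $y_n \to g(x,u)=y$. Combining all three parts shows that $G$ is both u.h.c.\ and l.h.c., hence continuous, and compact-valued.

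The main obstacle is the upper-hemicontinuity argument, and specifically the subsequence-extraction step: I must justify that $(u_n)$ is eventually bounded (via local boundedness, which rests on the compactness of $U(x)$ together with u.h.c.) and that the subsequential limit lands in $U(x)$ rather than merely near it (which again couples u.h.c.\ with closedness of $U(x)$). A related subtlety is that $g$ is defined only on $\Gr(U)$, so before applying continuity of $g$ I must verify that the relevant convergences $(x_n,u_n)\to(x,u)$ take place within the graph; this is exactly what the limit membership $u\in U(x)$ (for u.h.c.) and $u_n\in U(x_n)$ (for l.h.c.) guarantee.
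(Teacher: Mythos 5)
Your proof is correct. There is, however, no internal proof in the paper to compare it against: the paper imports this statement verbatim as Proposition 1.4.14 of \cite{aubin1999set} and uses it as a black box, so what you have produced is a self-contained derivation of the cited result. Your route is the standard metric-space specialization of the textbook argument, and it is sound in every delicate spot: compactness of $G(x)$ as the continuous image of the slice $\{x\}\times U(x)$; the sequential characterizations of upper and lower hemicontinuity, which are legitimate here because $\mathbb{X},\mathbb{Y},\mathbb{Z}$ are Euclidean and hence metrizable; the extraction of a convergent subsequence of $(u_n)$ via local boundedness (u.h.c.\ of $U$ applied to a bounded open neighborhood of the compact set $U(x)$, then Bolzano--Weierstrass), with the subsequential limit forced into $U(x)$ by a second u.h.c.\ application and closedness of the compact value; and the verification, which many blind attempts would omit, that all convergences $(x_n,u_n)\to(x,u)$ occur inside $\Gr(U)$ before continuity of $g$ (defined only there) is invoked. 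Two small observations on what each approach buys: the citation keeps the paper short and inherits the result at the generality of \cite{aubin1999set} (general metric spaces), while your argument is elementary and additionally makes visible a slightly finer decomposition --- upper hemicontinuity of $U$ alone already transfers to $G$, and lower hemicontinuity alone likewise --- which the packaged statement obscures. The only detail you leave implicit is inside the invoked sequential form of lower hemicontinuity: to get $u_n\in U(x_n)$ for \emph{all} $n$ (not just large $n$) one uses that the correspondence is nonempty-valued, which matches the paper's standing convention, so this is a presentational remark rather than a gap.
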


\begin{theorem}[{\cite[Proposition 1.4.8]{aubin1999set}}]\label{th:uhc-closed-graph}
    The graph of an upper hemicontinuous set-valued map $F: \mathbb{X} \mapstoto \mathbb{Y}$ with closed domain and closed values is closed. The converse is true if we assume that $\mathbb{Y}$ is compact.
\end{theorem}

\end{document}